%%%%%%%%%%%%%%%%%%%%%%%%%%%%%%%%%%%%%%%%%%%%%%%%%%%%%%%%%%%%%%%%%%%%%%%%%%%
%
% Template for a LaTex article in English.
%
%%%%%%%%%%%%%%%%%%%%%%%%%%%%%%%%%%%%%%%%%%%%%%%%%%%%%%%%%%%%%%%%%%%%%%%%%%%
\documentclass[11pt]{article}
\usepackage[top=1in, bottom=1in, left=1in, right=1in]{geometry}

\usepackage{amsmath,amssymb,amsbsy,amsfonts,amscd,bm,url,color,latexsym}
\usepackage[hidelinks]{hyperref}
%amsthm, graphicx,amsmath,,amssymb,
\usepackage{paralist}
\usepackage[dvipsnames]{xcolor}
\usepackage{color}
\usepackage{graphicx}
\graphicspath{{./figs/}}
\usepackage{algorithm}
\usepackage{algorithmic}
\usepackage{comment}
\usepackage{multirow}
\usepackage{enumitem}
\usepackage{fancyhdr}
\usepackage{cite}
\usepackage{cleveref}
\usepackage{subcaption}
\usepackage{authblk}
% Theorems
%\def \remarks {\noindent {\bf Remarks.}\ \ }
\newtheorem{thm}{Theorem}%[section] %(If you want theorem numbered
%\newtheorem{lemma}{Lemma}%[section] %%    with section number.
%[section]
%[section]
%\newtheorem{definition}{Definition}%[section]
%[section]

%\newtheorem*{goal}{Plan}
%\newtheorem{theorem}{Theorem}

%\theoremstyle{remark}
%\newtheorem{remark}{Remark}
%\theoremstyle{problem}
%\newtheorem{problem}{Problem}

% fields

\newcommand{\R}{\mathbb{R}}

\newcommand{\e}{\begin{equation}}
\newcommand{\ee}{\end{equation}}
\newcommand{\en}{\begin{equation*}}
\newcommand{\een}{\end{equation*}}
\newcommand{\eqn}{\begin{eqnarray}}
\newcommand{\eeqn}{\end{eqnarray}}
\newcommand{\bmat}{\begin{bmatrix}}
\newcommand{\emat}{\end{bmatrix}}
% complex numbers

\DeclareMathAlphabet\mathbfcal{OMS}{cmsy}{b}{n}
% probability and stat

% calculus

% constants (written in roman, if wanted)
%\newcommand{\e}{\mathrm{e}}

\newcommand{\mb}{\mathbf}
\newcommand{\mc}{\mathcal}

\newcommand{\bb}{\mathbb}

% linear algebra
% 	vector notation
\newcommand{\vct}[1]{\boldsymbol{#1}}
%   matrices
\newcommand{\mtx}[1]{\boldsymbol{#1}}
%   block vector

%   block matrix

%	inner products

% 	transpose, Hermitian, pseudo-inverse

%	fundamental subspaces

%	operators
\newcommand{\trace}{\operatorname{trace}}

%

% sets and topology

	% closure

% functional analysis

	% general linear operator

% optimization

% other
%\newcommand{\sgn}{\text{sgn}}

\newcommand{\wt}{\widetilde}
\newcommand{\ol}{\overline}
\newcommand{\NC}{$\mc {NC}$}

\newcommand{\norm}[2]{\left\| #1 \right\|_{#2}}

\newcommand{\abs}[1]{\left| #1 \right|}
\newcommand{\bracket}[1]{\left( #1 \right)}
\newcommand{\parans}[1]{\left(#1\right)}
\newcommand{\innerprod}[2]{\left\langle #1,  #2 \right\rangle}

\newcommand*\samethanks[1][\value{footnote}]{\footnotemark[#1]}

%--------------------------------------------------------------------------

\newcommand{\calC}{\mathcal{C}}

\newcommand{\calL}{\mathcal{L}}

\newcommand{\va}{\vct{a}}
\newcommand{\vb}{\vct{b}}

\newcommand{\vh}{\vct{h}}

\newcommand{\vu}{\vct{u}}
\newcommand{\vv}{\vct{v}}
\newcommand{\vw}{\vct{w}}
\newcommand{\vx}{\vct{x}}
\newcommand{\vy}{\vct{y}}
\newcommand{\vz}{\vct{z}}
\newcommand{\valpha}{\vct{\alpha}}

\newcommand{\vtheta}{\vct{\theta}}

\newcommand{\vzero}{\vct{0}}
\newcommand{\vone}{\vct{1}}

\newcommand{\mA}{\mtx{A}}
\newcommand{\mB}{\mtx{B}}

\newcommand{\mH}{\mtx{H}}

\newcommand{\mP}{\mtx{P}}

\newcommand{\mU}{\mtx{U}}
\newcommand{\mV}{\mtx{V}}
\newcommand{\mW}{\mtx{W}}

\newcommand{\mZ}{\mtx{Z}}

\newcommand{\mDelta}{\mtx{\Delta}}

\newcommand{\mSigma}{\mtx{\Sigma}}

\setcounter{MaxMatrixCols}{20}

\graphicspath{{./figs/}}

\newlength{\imgwidth}
\setlength{\imgwidth}{3.125in}

\newboolean{twoColVersion}
\setboolean{twoColVersion}{false}
\newcommand{\twoCol}[2]{\ifthenelse{\boolean{twoColVersion}} {#1} {#2} }

\usepackage{tikz}
\usepackage{tikz-3dplot}
\usepackage{pgfplots}
\usepgfplotslibrary{patchplots}
\usetikzlibrary{patterns, positioning, arrows}
\pgfplotsset{compat=1.15}

\usepackage{tgpagella}
\usepackage[utf8]{inputenc} % allow utf-8 input
\usepackage[T1]{fontenc}    % use 8-bit T1 fonts
\usepackage{url}
%\usepackage{hyperref}       % hyperlinks
%\usepackage{url}            % simple URL typesetting
%\usepackage{booktabs}       % professional-quality tables
%\usepackage{nicefrac}       % compact symbols for 1/2, etc.
%\usepackage{microtype}      % microtypography
%\usepackage{authblk}
%\usepackage{multirow}
%\usepackage{mathabx}

%\usepackage{graphicx,amsmath,amsfonts,amscd,amssymb,bm,url,color,latexsym}
%\usepackage[margin=0.95in]{geometry}

%\usepackage[inline]{enumitem}
%\usepackage{mwe}
%\usepackage{subfigure}
%\usepackage{subfig}
%\usepackage{subcaption}
%\usepackage{bbm}
%\usepackage{bm}
%\usepackage{microtype}
%\setlength{\captionmargin}{30pt}
%\usepackage{algorithm,algorithmicx}
%\usepackage{algpseudocode}
%\usepackage{hyperref}
%\usepackage[nameinlink]{cleveref}
%\usepackage{verbatim}
%\usepackage{framed}
%\usepackage{comment}
%\usepackage{wrapfig}
%\usepackage{tikz}
%\usepackage{pgfplots}

%\usepackage{cite}
%\usepackage{tikz}
%\usepackage{multirow}
%\usepackage{cleveref}

%\usepackage{tablefootnote}
%\usepackage{tcolorbox}
%\usepackage{pifont}
%\usepackage{authblk}
%\usepackage[margin=0.95in]{geometry}

% pagebreak for long equations
%\allowdisplaybreaks

\hypersetup{
    colorlinks=true,%
    citecolor=blue,%
    filecolor=blue,%
    linkcolor=blue,%
    urlcolor=blue
}
\usepackage[toc, page]{appendix}
%%%%%%%%%%%%%%

\setcounter{totalnumber}{50}
\setcounter{topnumber}{50}
\setcounter{bottomnumber}{50}
%%%%%%%%%%%%%%
%--------------
\newtheorem{theorem}{Theorem}[section]
\newtheorem{lemma}[theorem]{Lemma}
\newtheorem{corollary}[theorem]{Corollary}

\newtheorem{definition}[theorem]{Definition}

\renewcommand{\mathbf}{\boldsymbol}

\newcommand{ \Brac }[1]{\left\lbrace #1 \right\rbrace}
\newcommand{ \brac }[1]{\left[ #1 \right]}
\newcommand{ \paren }[1]{ \left( #1 \right) }

\newcommand{\Lce}{\mc L_{\mathrm{CE}}}

% to declare new operator
% \DeclareMathOperator{\xxx}{xxx}
%\DeclareMathOperator{\dist}{dist}

%\DeclareMathOperator{\rank}{rank}
%\DeclareMathOperator{\trace}{tr}

%\DeclareMathOperator{\diag}{diag}
%\DeclareMathOperator{\sign}{sign}

%\DeclareMathOperator{\st}{subject\; to}

%% Other definitions

%\newcommand{\wh}{\widehat}
%\newcommand{\wt}{\widetilde}
%\newcommand{\ol}{\overline}
%\newcommand{\wc}{\widecheck}

%\newcommand{\norm}[2]{\left\| #1 \right\|_{#2}}
%\newcommand{\abs}[1]{\left| #1 \right|}
%\newcommand{\innerprod}[2]{\left\langle #1,  #2 \right\rangle}
%\newcommand{\prob}[1]{\bb P\left[ #1 \right]}
%\newcommand{\expect}[1]{\bb E\left[ #1 \right]}
%\newcommand{\function}[2]{#1 \left(#2\right)}
%\newcommand{\integral}[4]{\int_{#1}^{#2}\; #3\; #4}

%\newcommand{ \vphi }[1]{ \varphi( #1 ) }
%\newcommand{ \vphiT }[1]{ \varphi_{\mathrm{T}}( #1 ) }
%\newcommand{ \vphiDL }[1]{ \varphi_{\mathrm{DL}}( #1 ) }
%\newcommand{ \vphiCDL }[1]{ \varphi_{\mathrm{CDL}}( #1 ) }
%\newcommand{ \TvphiCDL }[1]{ \ol{\varphi}_{\mathrm{CDL}}( #1 ) }
%\newcommand{ \HvphiCDL }[1]{ \wh{\varphi}_{\mathrm{CDL}}( #1 ) }
%\newcommand{ \vphiICA }[1]{ \varphi_{\mathrm{ICA}}( #1 ) }

\newcommand{\zz}[1]{{\color{blue}{\bf Zhihui: #1}}}

\newcommand{\cy}[1]{{\color{blue}{\bf Chong: #1}}}

%\newcommand{\edit}[1]{{\color{black}{#1}}}

%\numberwithin{equation}{section}

\def \endprf{\hfill {\vrule height6pt width6pt depth0pt}\medskip}
\newenvironment{proof}{\noindent {\bf Proof} }{\endprf\par}

\newcommand{\lambdaW}{\lambda_{\mW}}
\newcommand{\lambdaH}{\lambda_{\mH}}
\newcommand{\lambdab}{\lambda_{\vb}}

\hyphenation{op-tical net-works semi-conduc-tor}

%-----------------------------------------------------------------
%\title{A Geometric Analysis of Neural Collapse:\\ Optimality, Global Landscape, and Implications}
\title{A Geometric Analysis of Neural Collapse\\ with Unconstrained Features}

\begin{document}

% make the title area
\author[$\sharp$]{Zhihui Zhu\thanks{The first two authors contributed to this work equally.}$^,$}
\author[$\Diamond$]{Tianyu Ding\samethanks$^,$}
\author[$\sharp$,$\dagger$]{Jinxin Zhou}
\author[$\dagger$,$\ddagger$]{Xiao Li}
\author[$\clubsuit$]{Chong You}
\author[$\spadesuit$]{\\Jeremias Sulam}
\author[$\dagger$]{Qing Qu}

\affil[$\sharp$]{Department of Electrical \& Computer Engineering, University of Denver}
\affil[$\Diamond$]{Department of Applied Mathematics \& Statistics, Johns Hopkins University}
\affil[$\ddagger$]{Center for Data Science, New York University}
\affil[$\clubsuit$]{Google Research, New York City}
\affil[$\spadesuit$]{Department of Biomedical Engineering \& MINDS, Johns Hopkins University}
\affil[$\dagger$]{Department of Electrical Engineering \& Computer Science, University of Michigan}
%\date{\today}
\maketitle
%\IEEEpeerreviewmaketitle

\begin{abstract}
{\normalsize

We provide the first global optimization landscape analysis of \emph{Neural Collapse}
-- an intriguing empirical phenomenon that arises in the last-layer classifiers and features of neural networks during the terminal phase of training. %\js{ i just modified the previous sentence a little. See if OK.}
%\cy{a bit weird to have ``landscape analysis'' of a ``phenomenon''? Maybe ``analysis of neural networks for understanding Neural Collapse''}
As recently reported in \cite{papyan2020prevalence}, this phenomenon implies that \emph{(i)} the class means and the last-layer classifiers all collapse to the vertices of a Simplex Equiangular Tight Frame (ETF) up to scaling, and \emph{(ii)} cross-example within-class variability of last-layer activations collapses to zero. We study the problem based on a simplified \emph{unconstrained feature model}, which isolates the topmost layers from the classifier of the neural network. In this context, we show that the classical cross-entropy loss with weight decay has a benign global landscape, in the sense that the only global minimizers are the Simplex ETFs while all other critical points are strict saddles whose Hessian exhibit negative curvature directions. 
In contrast to existing landscape analysis for deep neural networks which is often disconnected from practice, 
%cy{``mostly focuses on the optimization perspective'' may not be precise, maybe simply remove it. }
our analysis of the simplified model not only does it explain what kind of features are learned in the last layer, but it also shows why they can be efficiently optimized in the simplified settings, matching the empirical observations in practical deep network architectures. These findings could have profound implications for optimization, generalization, and robustness of broad interests. For example, our experiments demonstrate that one may set the feature dimension equal to the number of classes and fix the last-layer classifier to be a Simplex ETF for network training, which reduces memory cost by over 20\% on ResNet18 without sacrificing the generalization performance.%\xiao{around $19\%$?}
}
\end{abstract}

%\keywords{deep neural network, layer-peeled model, neural collapse, nonconvex optimization, global optimality, strict saddle function, second-order geometry.}

%\newpage 
%\js{Why the new page?}
\section{Introduction}\label{sec:intro}

%first paragraph: motivations for studying Neural Collapse in deep neural network

In the past decade, the revival of deep neural networks has led to dramatic success in numerous applications ranging from computer vision, to natural language processing, to scientific discovery and beyond \cite{krizhevsky2012imagenet,lecun2015deep,goodfellow2016deep,senior2020improved}. Nevertheless, the practice of deep networks has been shrouded with mystery as our theoretical understanding for the success of deep learning remains elusive. There are many intriguing phenomena, such as implicit algorithmic bias in training \cite{neyshabur2014search,soudry2018implicit,arora2019implicit,moroshko2020implicit,razin2020implicit}, and good generalization of highly-overparameterized networks \cite{zhang2016understanding,soudry2018implicit,belkin2019reconciling,mei2019generalization,nakkiran2019deep,yang2020rethinking}, that  %\js{maybe "seem", instead of "are" is better}  
seem often contradictory to or cannot be explained by classical optimization and learning theory.

\begin{figure}[t]
% 	\centering
%     \includegraphics[width = 0.7\textwidth]{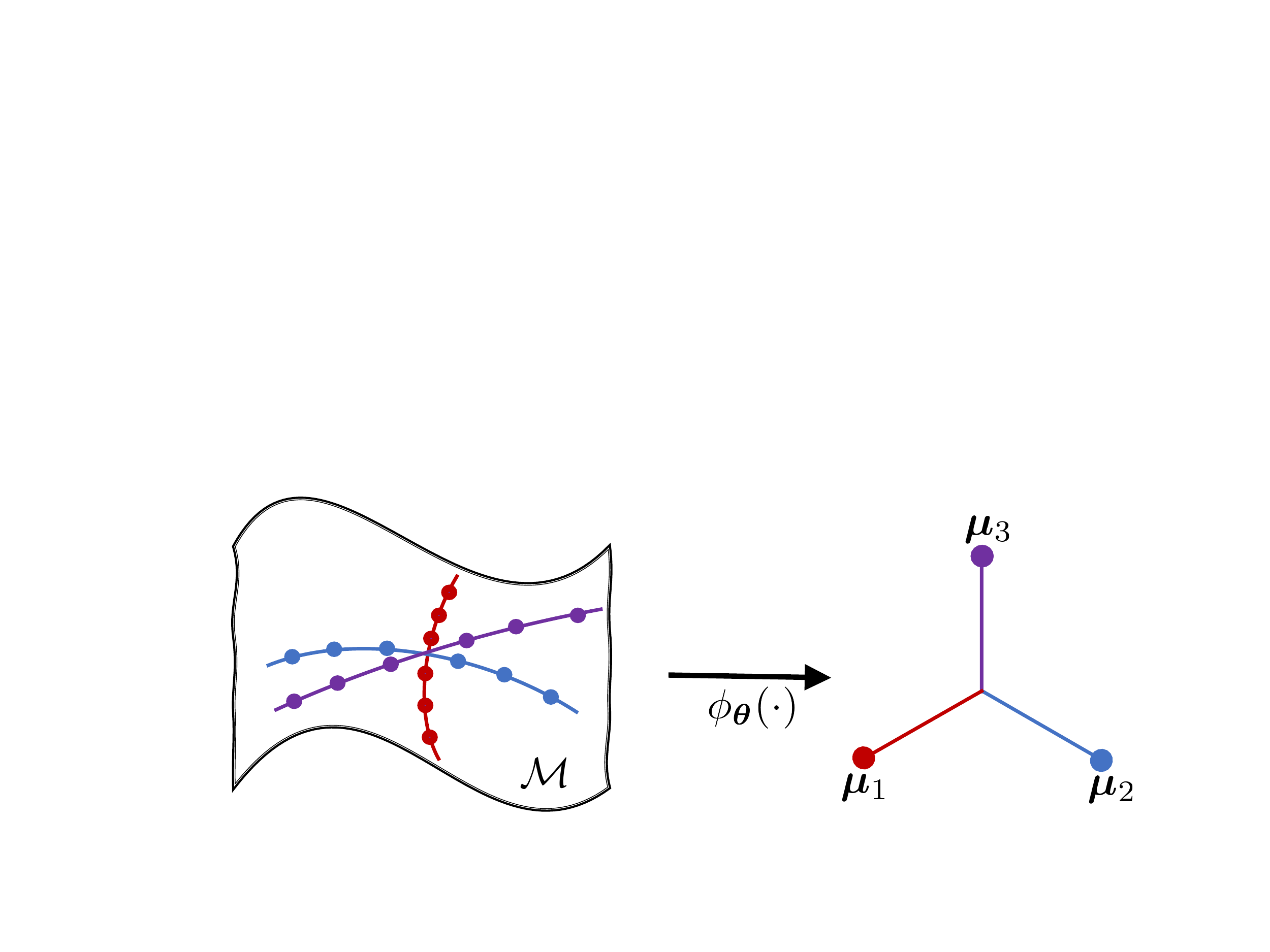}
    \begin{center}
    \tdplotsetmaincoords{60}{110}
    \begin{tikzpicture}[scale=2.4]
      \coordinate (o1) at (0,0);
      \coordinate (o2) at (5,1.1);
      
      % figure transitions
      \draw[very thick,->] (3.2, 1.0) -- (3.8,1.0);
      \draw (3.5,1.0) node[anchor=north]{$\phi_{\bm \theta}(\cdot)$};
      
      % left figure
      \tdplotsetrotatedcoords{100}{0}{0}
      \tdplotsetrotatedcoordsorigin{(o1)}
      \begin{scope}[tdplot_rotated_coords]
    
      % data manifold
      \draw[thick] (0.6,0.5,0.5) .. controls (1.3,1.4,0.55) and (1.8,0.1,0.55) .. (2.7,0.5,0.6);
      \draw[thick] (2.7,0.5,0.6) .. controls (2.6,1.5,0.55) .. (2.5,2.7,0.5);
      \draw[thick] (2.5,2.7,0.5) .. controls (1.8,1.9,0.65) and (1.2,3.4,0.65) .. (0.5,2.3,0.8);
      \draw[thick] (0.5,2.3,0.8) .. controls (0.55,1.5,0.65) .. (0.6,0.5,0.5);
      \draw (2.7,0.5,0.6) node[anchor=south east]{\small $\mathcal{M}$};
      
      % data subspaces
      \draw[red] (0.85,0.85,0.75) .. controls (1.6,1.5,0.8) .. (2.6,1.5,0.8);
      \draw[black!30!green] (0.9,1.8,0.75) .. controls (1.6,1.5,0.8) .. (2.4,0.8,0.8);
      \draw[black!10!blue] (1.6,0.6,0.75) .. controls (1.7,1.5,0.8) .. (1.7,2.1,0.8);
      
      % data points on subspaces
      \def\points{(0.95,0.95,0.75), (1.05,1.05,0.75), (1.4,1.35,0.75), (1.75,1.53,0.75), (2.0,1.56,0.75), (2.3,1.58,0.75)}
      \foreach \p in \points {
        \draw plot [mark=*, mark size=0.8, mark options={draw=red, fill=red}] coordinates{\p}; 
      }
      \def\points{(0.95,1.78,0.75), (1.15,1.72,0.75), (1.4,1.62,0.75), (1.75,1.41,0.75), (2.0,1.21,0.75), (2.3,0.95,0.75)}
      \foreach \p in \points {
        \draw plot [mark=*, mark size=0.8, mark options={draw=black!30!green, fill=black!30!green}] coordinates{\p};
      }
      \def\points{(1.61,0.7,0.75), (1.62,0.8,0.75), (1.64,1.1,0.75), (1.67,1.4,0.75), (1.68,1.8,0.75), (1.68,2.1,0.75)}
      \foreach \p in \points {
        \draw plot [mark=*, mark size=0.8, mark options={draw=black!10!blue, fill=black!10!blue}] coordinates{\p};
      }
      \end{scope}
      
      % right figure
      \tdplotsetrotatedcoords{-25}{0}{30}
      \tdplotsetrotatedcoordsorigin{(o2)}
      \begin{scope}[tdplot_rotated_coords]
        % one-hots
        \draw [thick,red] (0,0,0) -- (2/3,-1/3,-1/3) node[anchor=north east]{$\bm\mu_1$};
        \draw plot [mark=*, mark size=1.2, mark options={draw=red, fill=red}] coordinates{(2/3,-1/3,-1/3)};
        \draw [thick,black!30!green] (0,0,0) -- (-1/3,2/3,-1/3) node[anchor=north west]{$\bm\mu_2$};
        \draw plot [mark=*, mark size=1.2, mark options={draw=black!30!green, fill=black!30!green}] coordinates{(-1/3,2/3,-1/3)};
        \draw [thick,black!10!blue] (0,0,0) -- (-1/3,-1/3,2/3) node[anchor=east]{$\bm\mu_3$};
        \draw plot [mark=*, mark size=1.2, mark options={draw=black!10!blue, fill=black!10!blue}] coordinates{(-1/3,-1/3,2/3)};
        % \draw [black!30!green, ->] (0,-1,0) -- (0,1,0) node[anchor=north west]{$\mathcal{S}^2$};
        % \draw [black!10!blue, ->] (0,0,-1) -- (0,0,1) node[anchor=east]{$\mathcal{S}^j$};
        
        % coordinate axis
        \draw [black!30!white,line width=0.3mm,dashed,->] (-1.2,0,0) -- (1.2,0,0);
        \draw [black!30!white,line width=0.3mm,dashed,->] (0,-0.9,0) -- (0,1,0);
        \draw [black!30!white,line width=0.3mm,dashed,->] (0,0,-1) -- (0,0,1);
        \draw [black!30!white] (0,0,0.25) -- (-0.25,0,0.25) -- (-0.25,0,0);
      \end{scope}
    \end{tikzpicture}
\end{center}
	\caption{Illustration of Neural Collapse. Here $\phi_{\vtheta}(\cdot)$ denotes the feature mapping of the network, i.e. the output of the penultimate layer; see \eqref{eq:func-NN} for the formal definition. %\cy{The right figure is not an ETF since it has $d = 2 < K = 3$. Maybe replace with a $d = K = 3$ figure?} \zz{What I want to plot is $d=3$, but seems right now it looks in 2D space.} \cy{Does this new figure look better or is it making it too complicated?} \zz{Great, thanks.}
	}
	\label{fig:nc}
\end{figure}

\paragraph{\emph{An Intriguing Phenomenon in Deep Network Training: Neural Collapse.}} Towards demystifying deep neural networks (DNN), recent seminal work \cite{papyan2020prevalence,papyan2020traces} empirically discovered an intriguing phenomenon that persists across a range of canonical classification problems during the terminal phase of training. As illustrated in \Cref{fig:nc}, it has been widely observed that last-layer features and classifiers of a trained DNN exhibit simple but elegant mathematical structures:

\begin{itemize}
      \item[(i)] \textbf{\emph{Variability Collapse}:} cross-example within-class variability of last-layer features collapses to zero, as the individual features of each class themselves concentrate to their isolated class-means.
    \item[(ii)] \textbf{\emph{Convergence to Simplex ETF}:} the class-means centered at their global mean are not only linearly separable, but are actually maximally distant and located on a sphere centered at the origin up to scaling (i.e., they form a Simplex Equiangular Tight Frame -- or Simplex ETF). 
    \item[(iii)] \textbf{\emph{Convergence to Self-duality}:} the last-layer linear classifiers, living in the dual vector space to that of the class-means, are 
    %almost \js{I'm confused by this "almost": either we explain these phenomena idealized (as done for the points (i and ii) above, or we say that all of these hold "almost" in all cases} 
    perfectly matched with their class-means.
    \item[(iv)] \textbf{\emph{Simple Decision Rule}:} the last-layer classifier is behaviorally equivalent to a Nearest Class-Center decision rule.
\end{itemize}
These results suggest that deep networks are learning maximally separable features between classes, and a max-margin classifier in the last layer upon these learned features, touching the ceiling in terms of the performance. This phenomenon is referred to as \emph{Neural Collapse} (\NC) \cite{papyan2020prevalence}, and it persists across the range of canonical classification problems, on different neural network architectures (e.g., VGG~\cite{simonyan2014very}, ResNet \cite{he2016deep}, and DenseNet \cite{huang2017densely}) and on a variety of standard datasets (such as MNIST \cite{lecun2010mnist}, CIFAR \cite{krizhevsky2009learning}, and ImageNet \cite{deng2009imagenet}). 

%The result implies that the deep network is learning a seemingly perfect classifier in terms of its generalization ability, touching the ceiling in terms of the performance. 
%\cy{I'd be careful with the phrase ``max-margin classifier'' since it may be confused as being max-margin in the input space rather than learned feature space. ``touching the ceiling in terms of the performance'' seems inappropriate since the result is not about generalization.  Maybe something like: The results suggests that deep networks are learning maximally separable features between classes before the last layer and a max-margin classifier in the last layer upon the learned features. }

As demonstrated in \cite{papyan2020prevalence,zarka2020separation,mixon2020neural,lu2020neural,weinan2020emergence,fang2021layer}, the symmetric and simple mathematical structure of the last-layer classifiers could potentially lead to a profound understanding of deep networks in terms of training, generalization, and robustness. 
For example, it has been conjectured that the benefits of the
interpolation regime of overparameterized networks might be directly related to \NC\ \cite{papyan2020prevalence}, as this behavior occurs simultaneously with the ``benign overfitting'' phenomenon (see \cite{ma2018power, belkin2019reconciling, liang2020just, bartlett2020benign, li2020benign}); that is, when the model perfectly interpolates the training data. 
In addition, the variability collapse of \NC\ aligns with \emph{information bottleneck} theory \cite{tishby2015deep} which hypothesizes that neural networks seek to preserve only the minimal set of information in the learned feature representations for predicting the label hence discourage any additional variabilities.
%\cy{In addition, the variability collapse of \NC\ aligns with \emph{information bottleneck} theory \cite{tishby2015deep} which hypothesizes that neural networks seek to preserve only the minimal set of information in the learned feature representations for predicting the label hence discourage any additional variabilities. }
On the other hand, a recent line of work \cite{yu2020learning,chan2020deep} raises controversial questions on whether \NC\;improves robustness against data corruptions by showing that diverse features that preserve the intrinsic structure of data can better handle label corruptions. %\js{I don't understand the previous sentence. What do you want to say?} \cy{Rewrote a bit. }
% On the other hand, a recent line of work \cite{yu2020learning,chan2020deep} raise controversial questions whether \NC\;improves robustness against noise corruptions, and in representation learning abundant empirical evidences showed that preventing \NC\;seems to be critical for the recent advances on self-supervised learning [].\zz{Not sure for the last argument about self-supervised learning.} \cy{This seems a bit far-fetched: self-supervised learning with CE loss (hence \NC) is actually not a problem, but without CE loss can cause a problem, so I'd say the issue is not really with neural collapse}
Therefore, an in-depth theoretical study of the \NC\;phenomenon could provide further insights for addressing all these fundamental questions (see \Cref{sec:conclusion} for a detailed discussion).
%[MBB18, BHMM19, LR20, BLLT20, LSS20]

%\qq{working here...}

% during the terminal phase of training 
\begin{figure}[t]
	\centering
    \includegraphics[width = 0.65\textwidth]{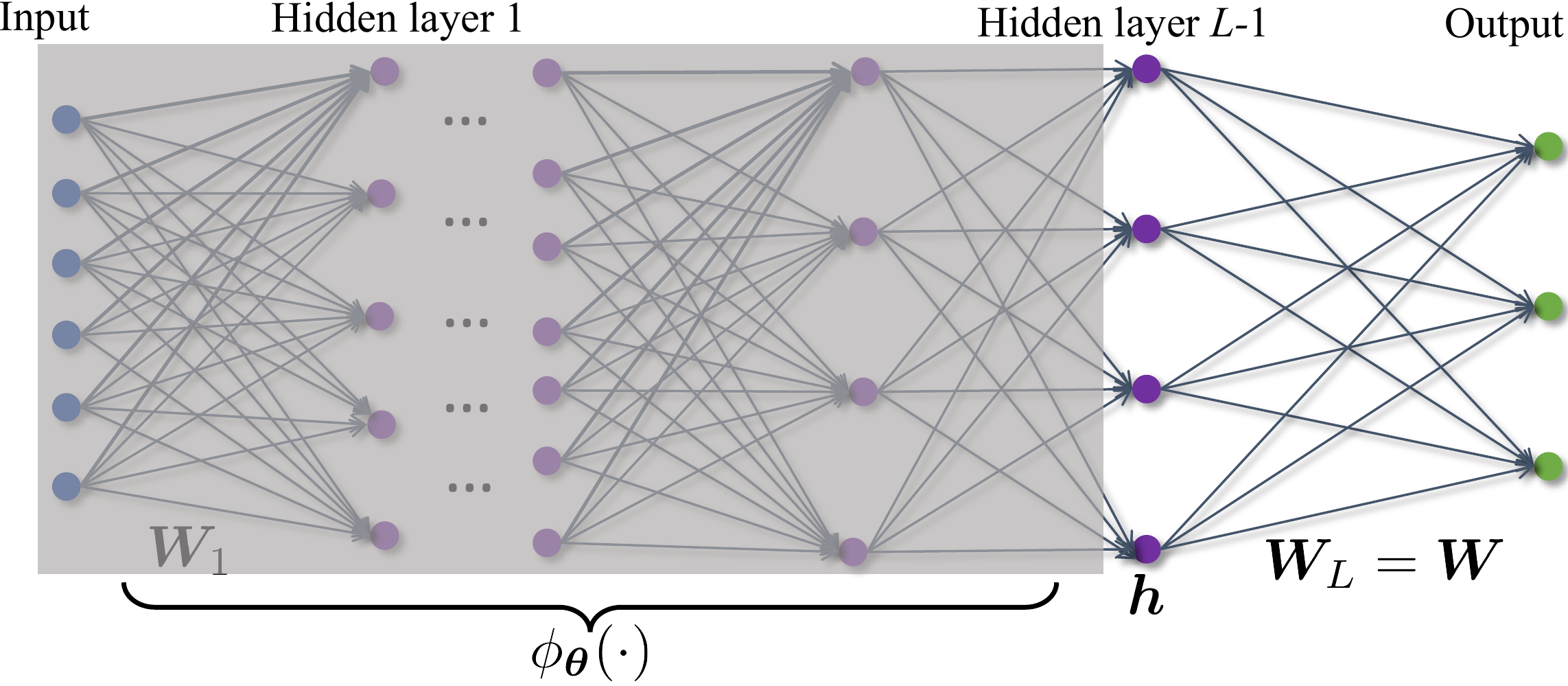}
	\caption{Illustration of the unconstrained feature model, where the gray box is peeled off so that the representation $\vh$ is modeled by a simple decision variable for every training sample.}
	\label{fig:layer-peeled}
\end{figure}

\paragraph{\emph{Towards Mathematical Analysis: Simplification via Unconstrained Feature Models.}} Fully demystifying the \NC\ phenomenon in theory can be very challenging. Perhaps the most difficult hurdle lies in the nonconvexity of the optimization problem for training neural networks, which, loosely speaking, stems from the nonlinear interaction across many different layers of neural networks. Towards this goal, a recent line of work~\cite{mixon2020neural,lu2020neural,weinan2020emergence,zarka2020separation,fang2021layer} studied the properties of last-layer classifiers and features based on the assumption of the so-called {\it unconstrained feature model} \cite{mixon2020neural} or {\it layer-peeled model} \cite{fang2021layer}. At a high level, the unconstrained feature model takes a \emph{top-down} approach to the analysis of deep neural networks \cite{mixon2020neural,lu2020neural,weinan2020emergence,zarka2020separation,fang2021layer,yu2020learning,chan2020deep}, wherein the last-layer features are modeled as \emph{free} optimization variables (hence we call them {\it unconstrained features}) along with the last-layer classifiers (see \Cref{fig:layer-peeled} for an illustration); this is in contrast to the conventional \emph{bottom-up} approach that studies the problem starting from the input \cite{haeffele2015global,baldi1989neural,kawaguchi2016deep,safran2018spurious,yun2018small,laurent2018deep,nouiehed2018learning,liang2018adding,zhu2019global,yun2018global,zhou2021local}.\footnote{Here, top-down means that we study the network starting from the last-layer down to the input layer, whereas bottom-up refers to an approach from the input up to the last layer.} The underlying reasoning is that modern deep networks are often highly overparameterized with the capacity of learning any representations \cite{cybenko1989approximation, hornik1991approximation,lu2017expressive,shaham2018provable}, so that the last-layer features can approximate, or interpolate, any point in the 
feature space. %\js{did u mean in the input space, or what is the optimization space?} \cy{Maybe ``feature space'' instead of ``optimization space''}. 
In this way, the model simplifies the study of last-layer features, enabling us to analyze the interaction between them and the last-layer classifiers.
%\xiao{Please check whether the citation is appropriate}

%\qq{talk about the benefit of top-down vs bottom-up?} \cy{Not entirely getting the wording \emph{top-down} and \emph{bottom-up}. e.g. why are the analysis of two-layer net, deep linear net, very wide net etc. a \emph{bottom-up} approach?}

Nonetheless, the simplified unconstrained feature model still leaves us a highly nonconvex training loss to be dealt with. Despite the nonconvexity, recent work \cite{mixon2020neural,fang2021layer,lu2020neural,weinan2020emergence} studied the global minimizers, proving that Simplex ETFs (i.e., \NC) are indeed global solutions to the unconstrained feature model. In particular, the work \cite{mixon2020neural} studied the training problem with the squared loss, proving that the gradient flow converges to \NC\;solutions with extra assumptions. Another line of work \cite{fang2021layer,lu2020neural,weinan2020emergence} considered the commonly used cross-entropy loss for classification, showing that the only global minimizers of the loss function are Simplex ETFs with different constraints on the weights and features.\footnote{The constraints on the features are mainly used to prevent it from approaching infinity since the objective function, with the cross-entropy loss, is not coercive. Note that we still refer to this model as an \emph{unconstrained feature model} even if they include norm constraints or regularization.} However, these results still suffer from several limitations: \emph{(i)} due to the nonconvex nature, only characterizing optimality conditions is not enough to explain the empirical global convergence of iterative algorithms to \NC, such as stochastic gradient descent (SGD); \emph{(ii)} the problem formulations differ from those typically used by practitioners, which deploy norm regularization on the network weights (i.e., weight decay), rather than enforcing constraints, for the ease of optimization.
 
% []), given spurious local minima or flat saddle points \cite{safran2018spurious}[]
 
\paragraph{Contributions of This Work.}
Inspired by these pioneering results \cite{papyan2020prevalence,zarka2020separation,fang2021layer,lu2020neural,weinan2020emergence,mixon2020neural}, in this work we take a step further by characterizing the global optimization landscape of the network training loss based on the unconstrained feature model. At a high level, our contributions can be summarized as follows.
\begin{itemize}[leftmargin=*]
    \item \textbf{\emph{Benign Global Landscape.}} For the unconstrained feature model, we provide the first result showing that a commonly used, regularized cross-entropy loss is a \emph{strict saddle function} \cite{ge2015escaping,sun2015nonconvex,zhang2020symmetry}. In other words, every critical point is either a global solution (corresponding to Simplex ETFs) or a \emph{strict saddle point}\footnote{Throughout the paper, for a minimization problem, we will not distinguish between local maxima and saddle points. We call a critical point {\it strict saddle} if the Hessian at this critical point has at least one negative eigenvalue.} with negative curvature, so that there is \emph{no} spurious local minimizer on the optimization landscape. As summarized in \Cref{table:overview}, this is in contrast to previous work \cite{fang2021layer,lu2020neural,weinan2020emergence,mixon2020neural} that only characterizes global minimizers.  
    \item \textbf{\emph{Efficient, Algorithmic Independent, Global Optimization.}} %For the simplified problem \cy{``simplified problem'' is a bit vague. Maybe remove ``For the simplified problem '' altogether}, 
    The benign global landscape implies that any method that can escape strict saddle points (e.g. stochastic gradient descent) converges to a global solution \cite{lee2016gradient} that exhibits \NC. This result supports our empirical observation, as shown in \Cref{sec:exp-NC}, that practical overparameterized networks always converge to ETF solutions with a diverse choice of optimization algorithms.
  %  This result supports the observation that, in practice, the training of deep overparameterized networks converge to ETF solutions regardless of the specific optimization algorithm used. 
  %  \cy{Make it sounds like sth already know. Perhaps ``This result supports our empirical observation, as shown in \Cref{sec:exp-NC}, that practical overparameterized networks always converge to ETF solutions with a diverse choice of optimization algorithms.''}
    %, it provides a clear theoretical explanation of \NC\;phenomenon on training modern overparameterized networks, that iterates of different type of optimization methods, ranging from first-order to second-order, \emph{all} collapse to Simplex ETFs on standard image datasets (see \Cref{sec:experiment}). %\cy{Again, ``max-margin classifier'' may be misleading. Maybe ``Efficient and Algorithm-Independent Optimization'' or simply ``Efficient Optimization''? }

 \item \textbf{\emph{Cost Reduction for Practical Network Training.}} Moreover, the universality of \NC\ implies that there is no need of training the last-layer classifiers since the weights can be simply fixed as a Simplex ETF throughout the training process. On the other hand, since \NC\ happens whenever $d\geq K$, it implies that we can choose the feature dimension $d$ comparable to the number of classes $K$, reducing the feature dimension for further computational benefits.
% there is no need of using a feature space with dimension larger than the number of classes, . 
 In \Cref{sec:exp-fix-classifier}, our experiments demonstrate that such a strategy achieves on par performance with classical training methods, leading to substantial 
cost reductions on memory and computation.
 
 %\zz{I feel like this point can be highlighted here, the abstract, and introduction.}
\end{itemize}
Our results shed new light on the question raised in the recent paper \cite{elad2020another} on the role of the optimization strategy (e.g., stochastic gradient descent) for achieving \NC\;in training practical deep networks. This question also relates to the recent highly influential work \cite{soudry2018implicit} on the implicit algorithmic bias. For multi-class classification problems with linearly separable data, this work \cite{soudry2018implicit} showed that linear predictors optimized by gradient descent converge to the max-margin classifiers even without adding any explicit regularization on the cross-entropy loss. Based on this result, a sequence of works \cite{gunasekar2018implicit,gidel2019implicit,nacson2019convergence,ji2020gradient,lyu2019gradient,shamir2021gradient,jagadeesan2021inductive,NEURIPS2020_cd42c963} laid great emphasis on the notion of ``inductive bias'' of particular optimization algorithms as a reason for the surprising success in training deep learning models.\footnote{While (stochastic) gradient descent and generic steepest descent methods converge to max-margin classifiers, it should be noted that other commonly used optimization algorithms in deep learning, such as AdaGrad \cite{duchi2011adaptive} and Adam \cite{kingma2014adam}, do not have max-margin properties in general and their solutions depend upon initialization, step-size and other algorithm hyper-parameters \cite{nacson2019stochastic,qian2019implicit,gunasekar2018characterizing}.}
In contrast, both our theoretical result on the global landscape for the unconstrained feature model and the empirical evidence on practical deep models demonstrate that \NC\;in network training is facilitated \emph{not only} by the choice of the optimization methods, but more importantly, by the choice of loss functions and the power of overparameterization in the network architecture.

\begin{table*}[t]
%\begin{wraptable}{r}{0.59\linewidth}
\centering
%\vspace{-9pt}
    \begin{footnotesize}
        \caption{\footnotesize \label{table:overview} Comparison of the setup and results under the unconstrained feature model with cross-entropy loss. }
        \begin{tabular}{@{\,}c|ccccc@{\,}}
        \hline
  &  \multicolumn{2}{c}{Regularizer}  & Bias term & \multicolumn{2}{c}{Results} \\ \hline
  & Constraint & Weight decay & &Global minimizer & Landscape \\ \hline
  \cite{lu2020neural} & $\checkmark$ & & & $\checkmark$ & \\ \hline
  \cite{weinan2020emergence} & $\checkmark$ & & & $\checkmark$ & \\ \hline
  \cite{fang2021layer}  & $\checkmark$ & & & $\checkmark$ &  \\ \hline
  This paper & & $\checkmark$ & $\checkmark$ & $\checkmark$ & $\checkmark$ \\ \hline
        \end{tabular}
    \end{footnotesize}
%\vspace{-8pt}
%\end{wraptable} 
\end{table*}

%\qq{make sure the citation comprehensive below, e.g. Rong Ge's recent paper on mildly overparameterized network}

\paragraph{Relationship to the Prior Arts.} 
Our work highly relates to recent advances on studying the optimization landscape in neural network training; see \cite{sun2020optimization,sun2020global} for a contemporary survey. Most of the existing work \cite{haeffele2015global,baldi1989neural,kawaguchi2016deep,safran2018spurious,yun2018small,laurent2018deep,nouiehed2018learning,liang2018adding,zhu2019global,yun2018global,zhou2021local} analyzes the problem based on a \emph{bottom-up} approach -- from the input to the output of neural networks -- ranging from two-layer linear network \cite{baldi1989neural,nouiehed2018learning,kunin2019loss,zhu2019global}, deep linear network \cite{kawaguchi2016deep,nouiehed2018learning,yun2018global,laurent2018deep}, to nonlinear network \cite{safran2018spurious,nouiehed2018learning,yun2018small,liang2018adding}. More specifically, the line of work \cite{baldi1989neural,kawaguchi2016deep,nouiehed2018learning,zhu2019global,yun2018global} studied the optimization landscape for linear two-layer networks and proved that the associated training loss is a strict saddle function. For deeper linear networks, it can be shown that flat saddle points exist at the origin, but there are no spurious local minima \cite{kawaguchi2016deep,laurent2018deep}. For nonlinear neural networks, it has been proved that there do exist spurious local minima \cite{safran2018spurious,yun2018small}, but such local minima may be eliminated, or the number can be significantly reduced, in the over-parameterization regime \cite{safran2018spurious,liang2018adding}. Additionally, the work \cite{haeffele2015global} proved that certain local minima (having an all-zero ``slice'') are also global solutions, but the analysis is crucially dependent on the sufficient condition of an all-zero slice in the weights, which is insufficient to characterize the landscape properties. At a high level, the differences between these results and ours can be summarized as follows.
\begin{itemize}[leftmargin=*]
 %   \item \textbf{\emph{Optimization vs. Learning Perspectives.}} Most of these results based on the bottom-up approach only focus on the optimization perspective of deep learning, explaining why the training loss can be efficiently minimized to optimality. However, they provided limited insights into the empirical success of learning in terms of learned representations, generalization, and robustness. In contrast, we take a \emph{top-down} approach to look at the network starting from the very last layer. The slight difference in the models can lead to a dramatic difference in the interpretability for deep learning. As we will show in this work, by starting from the last layer, our results not only provide valid reasons on why the training loss can be efficiently optimized, but also provide new insights on the learning perspective through the last-layer classifiers and features learned from the network. \js{I'm not super sure about this point on ``optimization vs learning''. There are works that look at optimization and can show max-min solutions, which naturally lead to learning (generalization) guarantees (see eg [Wei, Lee, Liu \& Ma, 2018]).} \zz{Agree with JS. We need to be careful about ``learning". I thinking ``learning perspective" is mainly referred to generalization and other learning issues, but this paper focuses more on the optimization part. What this paragraph expresses is that here our optimization problem looks at the representation or feature, while most of the others look at the loss function.}\cy{Agreed \& how about the following:}

    \item \textbf{\emph{A Feature Learning Perspective.}} While most of these results based on the bottom-up approach explain optimization and generalization of certain types of deep neural networks, they provided limited insights into the practice of deep learning. In contrast, we take a \emph{top-down} approach to look at the network starting from the very last layer. The slight difference in the models can lead to a dramatic difference in the interpretability for deep learning. By starting from the last layer, our results not only provide valid reasons on why the training loss can be efficiently optimized, but also provide a precise characterization of the last-layer features as well as the classifiers learned from the network. As we will show, such a feature learning perspective not only helps with network design (see \Cref{sec:exp-fix-classifier}), but may bear broadly on generalization and robustness of deep learning as well as the recent development of contrastive learning (see \Cref{sec:conclusion}).% \cy{Best I can come up with to replace the previous one. Hopefully this is better}

    \item \textbf{\emph{Connections to Empirical Phenomena.}} Moreover, most existing theoretical results on landscape analysis \cite{allen2019convergence,du2019gradient} are somewhat disconnected from practice due to unrealistic assumptions, providing limited guiding principles for modern network training or design. In contrast, our assumption on the last-layer features as optimization variables is naturally based on model overparameterization. Moreover, our results provide explanations for \NC, an empirical phenomenon that has been widely observed on convincing numerical evidence across many different practical network architectures and a variety of standard image datasets.
\end{itemize}
%\qq{cite Sam's work and discuss relationship to NTK?}

Our work also broadly relates to the recent theoretical study for deep learning based on Neural Tangent Kernels (NTKs) \cite{jacot2018neural}, where a neural network behaves like a \emph{linear} model on random features hence has a benign optimization landscape. 
However, the ``kernel regime'' that NTKs work in requires neural networks that are infinitely wide -- or at least so wide that is beyond the regime that practical neural networks work in \cite{zou2020gradient,montanari2020interpolation,buchanan2020deep,chizat2019lazy}. 
% Consequently, there has been controversy on whether NTKs provides insights for understanding the practice of deep learning at all. 
In contrast, we adopt the unconstrained feature model which does not directly impose requirements on the width of the neural network and, as shown in our experiments, well captures the behavior of practical neural networks. 
Hence, our result can provide a more practical understanding for deep learning. 

Moreover, from a boarder perspective our work is rooted in recent advances on global nonconvex optimization theory for signal processing and machine learning problems \cite{chi2019nonconvex,danilova2020recent,zhang2020symmetry,qu2020finding,10.1145/3418526}. In a sequence of works \cite{zhao2015nonconvex,sun2016complete1,sun2015complete,li2016,qu2017convolutional,sun2018geometric,zhu2018global,li2019symmetry,zhu2018distributed,chi2019nonconvex,qu2020geometric,qu2020finding,qu2020exact,lau2020short,kuo2019geometry,zhang2019structured}, the authors showed that many problems exhibit ``equivalently good'' global minimizers due to symmetries and intrinsic low-dimensional structures, and the loss functions are usually strict saddles \cite{ge2015escaping,sun2015nonconvex,zhang2020symmetry}. These problems include, but are not limited to, phase retrieval \cite{sun2018geometric,qu2017convolutional}, low-rank matrix recovery \cite{zhao2015nonconvex,li2016,zhu2018global,chi2019nonconvex,zhu2018distributed}, dictionary learning \cite{qu2014finding,sun2016complete1,sun2015complete,qu2020geometric,qu2020finding}, and sparse blind deconvolution \cite{kuo2019geometry,qu2020exact,lau2020short,zhang2019structured}. As we shall see, the global minimizers (i.e., simplex ETFs) of our problem here also exhibit a similar rotational symmetry, compared to low-rank matrix recovery. In fact, our proof techniques are inspired by recent results on low-rank matrix recovery \cite{zhu2018global,chi2019nonconvex}. Thus, our work establishes a new connection between recent advances on nonconvex optimization theory and deep learning.

%As we can see, our work establishes interconnections among communities of nonconvex optimization, theoretical deep learning, and practice. 

%\qq{the last sentence might need some rephrasing} \cy{Don't forget to cite Lerman.}\zz{Haha, I am thinking of any Learman's work on the global landscape analysis.}

% while  In comparison, 

%More specifically, 

%more about nonconvex optimization in general

%===============

%In seeking to better understand the associated optimization problems in training neural networks, theoretical efforts have been dedicated to analyze their geometric landscape. The work \cite{baldi1989neural,kawaguchi2016deep,laurent2018deep,nouiehed2018learning,zhu2019global,yun2018global} studied optimization landscape for linear neural networks and proved that the associated training problem  is a strict saddle function for shallow network, but only has no spurious local minima for deeper network. However, most of these results are about regression problems with least squares loss function, without using the weight decay and bias term, and thus cannot be directly applied to the problem considered in this paper.  See \cite{sun2020optimization} for a recent survey.

%\zz{Shall we also provide a very brief review of landscape analysis for other problems? Right now, I add a brief review right after \Cref{thm:global-geometry}.}

\paragraph{Notations, Organizations, and Reproducible Research.} 
Throughout the paper, we use bold upper and lowercase letters, such as $\mb X$ and $\mb x$, to denote matrices and vectors, respectively. For a vector $\mb a$ and a matrix $\mb A$, We use $\norm{\mb a}{2}$ and $\norm{\mb A}{F}$ to denote the their standard $\ell_2$-norm and Frobenius norm, respectively. Not-bold letters are reserved for scalars. The symbols $\mb I_K$ and $\mb 1_K$ respectively represent the identity matrix and the all-ones vector with an appropriate size of $K$, where $K$ is some positive integer. We use $[K]:=\Brac{1,2,\dots,K}$ to denote the set of all indices up to $K$. For any matrix $\mb A \in \bb R^{n_1 \times n_2}$, we write $\mb A = \begin{bmatrix} \mb a_1 & \dots & \mb a_{n_2} \end{bmatrix} = \begin{bmatrix}
(\mb a^1)^\top \\
\vdots \\
(\mb a^{n_1})^\top
\end{bmatrix}$, so that $\mb a_i$ ($1\leq i\leq n_2$) and $\mb a^j$ ($1\leq j\leq n_1$) denote a column vector and a row vector of $\mb A$, respectively. For any function $\varphi: \bb R^n \mapsto \bb R$, we use $\nabla \varphi$ and $\nabla^2 \varphi$ to denote its gradient and Hessian, respectively.

The rest of the paper is organized as follows. In \Cref{sec:problem}, we introduce the basic setup of the problem and the motivations behind our model. We present our main theoretical results in \Cref{sec:main} and discuss their implications. In \Cref{sec:experiment}, we provide numerical simulations on practical neural networks to validate our theoretical findings. Finally, we conclude the paper in \Cref{sec:conclusion}. We postpone all the detailed proofs to the Appendix. To reproduce the experimental results in \Cref{sec:experiment}, our code is publicly available on GitHub via the following link
\begin{center}
    \url{https://github.com/tding1/Neural-Collapse}.
\end{center}

\section{The Problem Setup}\label{sec:problem}

In this section, we start by reviewing the basics of deep neural networks in \Cref{subsec:basics}, and then move onto introducing the unconstrained feature model in \Cref{sec:layer-peeled-model}.

\subsection{Basics of Deep Neural Networks}\label{subsec:basics}
%Let us consider a deep neural network for multi-class classification problems (e.g., ) $K$-class classification problem (in logits), which

A deep neural network is essentially a \emph{nonlinear} mapping $\psi(\cdot): \R^{D} \mapsto \R^K$, %for $\mb x\in\mathcal X\subset \mathbb{R}^D$, %\js{the space of $\mb x$ was never defined. Should we say that we consider $\mb x\in\mathcal X\subset \mathbb{R}^d$?}
which can be modeled by a composition of simple maps: $\psi(\mb x) = \psi^{L} \circ \cdots \circ \psi^2\circ \psi^1(\mb x)$ for $\vx\in\R^D$, where $\psi^\ell(\cdot)$ ($1\leq \ell \leq L)$ are called ``layers''. Each layer is composed of an affine transform, represented by some weight matrix $\mb W_\ell$, and bias $\mb b_\ell$, followed by a simple \emph{nonlinear}\footnote{Here, the nonlinear operator may include activations such as ReLU \cite{nair2010rectified}, pooling, and normalization \cite{ioffe2015batch}%\js{Careful: adaptive ''normalization'' stuff isn't entry wise}
, etc.}
%\cy{Doesn't seem right, please make sure. Mean pooling is linear, max pooling is nonlinear but does not really function as a nonlinear layer... strides sounds linear to me as well. BN is linear at least in the test phase, though for training it is tricky due to batch statistics. If there is any doubts, maybe remove this footnote} 
activation function $\sigma(\cdot)$. More precisely, a vanilla $L$-layer neural network can be written as
\begin{align}\label{eq:func-NN}
    \psi_{\mb \Theta}(\mb x) \;=\;  \mb W_L \underbrace{\sigma\paren{ \mb W_{L-1} \cdots \sigma \paren{\mb W_1 \mb x + \mb b_1} + \mb b_{L-1} }}_{\phi_{\vtheta}(\vx)}  + \mb b_L.
\end{align}
%\js{Notation problem: here $f$ is defined with two input parameters (theta and x), but in the paragraph above it's just x. Consider perhaps writing $f_{\mb\Theta}(\mb x)$ instead (which is actually the notation for the map $\phi_\theta(\mb x)$).}
%\zz{Changed the network mapping to $y$, which looks a little bit wired as $y$ is usually used to denote the output of the network. If it looks good, we also need to change all the $\vy$ in the appendix.}

For convenience, we use $\mb \Theta = \Brac{ \mb W_k,\mb b_k }_{k=1}^L$ to denote \emph{all} the network parameters, and use $\mb \theta = \Brac{ \mb W_k,\mb b_k }_{k=1}^{L-1}$ to denote the network parameters up to the last layer. The output of the penultimate layer, denoted by $\phi_{\vtheta}(\vx)$, is usually referred to as the \emph{representation} or \emph{feature} of the input $\vx$ learned from the network. In this way, the function implemented by a neural network classifier can also be expressed as a linear classifier acting upon $\phi_{\vtheta}(\vx)$. %\js{In this way, the function implemented by a neural network can also be expressed as a linear classifier acting upon $\phi_{\vtheta}(\vx)$. }

%\js{I'm rewriting the following paragraph a bit}\\
The goal of deep learning is to fit the parameters $\mb \Theta$ so that the output of the model on an input samples $\mb x$ approximates the corresponding output $\vy$, i.e. so that $\psi_{\mb \Theta}(\mb x)\approx \mb y$, in expectation over a distribution of input-output pairs, $\mc D$. 
% The goal of deep learning is to fit the observation $\mb y$ with the output $f(\mb \Theta;\mb x)$ for any sample $\mb x$ from the distribution $\mc D$, by tuning the network parameters $\mb \Theta$. 
%The goal of deep learning is to fit the parameters $\mb \Theta$ so that the output of the model on input samples approximate the corresponding outputs over a distribution of input-output pairs, $\mc D$. 
This can be achieved by optimizing an appropriate loss function $\calL(\psi_{\mb \Theta}(\mb x),\vy)$ which quantifies this approximation. In this work, we focus on multi-class classification tasks (say, with $K$ classes), where the class label of a sample $\mb x$ is given by a one-hot vector $\vy \in \mathbb{R}^K$ representing its membership to one of the $K$ classes. In this setting, cross-entropy is one of the most popular choices for the loss function. Naturally, the distribution $\mc D$ is unknown, but we have access to training samples that are drawn i.i.d. from $\mc D$. In this way, one can minimize the empirical risk over these samples by optimizing the following problem
\begin{align}\label{eqn:dl-ce-loss}
    \min_{\mb \Theta} \; \sum_{k=1}^K \sum_{i=1}^{n_k} \Lce \paren{ \psi_{\mb \Theta}(\mb x_{k,i}),\vy_k } \;+\; \frac{\lambda}{2} \norm{\mb \Theta}{F}^2,
\end{align}
where $\vy_k \in \bb R^K $ is a one-hot vector with only the $k$th entry equal to unity ($1\leq k \leq K$), $\Brac{n_k}_{k=1}^K$ are the numbers of training samples in each class, and $\lambda>0$ is the regularization parameter (or weight decay penalty), and $\Lce(\cdot,\cdot)$ is the cross-entropy loss:
\begin{align}\label{eqn:ce}
    \Lce(\mb z,\pi_0(\vz)) \;:=\; - \log \paren{  \frac{ \exp(z_k) }{ \sum_{i=1}^K \exp(z_i) } },
\end{align}
where we assume $\vz$ belongs to the $k$th class.
As introduced in \Cref{sec:intro}, recent work \cite{papyan2020prevalence} showed that the features learned by minimizing the above objective (i.e. $\phi_{\mb \theta}(\mb x)$) showcase the \NC\;phenomenon: their within-class variability vanishes, and the features converge to a Simplex ETF.
%For simplicity, we assume that all the bias are zero, so that $\mb \theta = \mb \theta_{\mb W} = \Brac{ \mb W_k }_{k=1}^L$. 

\subsection{Problem Formulation Based on Unconstrained Feature Models}\label{sec:layer-peeled-model}
In deep network models, the nonlinearity and interaction between a large number of layers results in tremendous challenges for analyzing this learning problem. Since modern networks are often highly overparameterized to approximate any continuous function %\cite{hornik1991approximation}\js{This comment is conflicting: it mentions overparametrization in the sense that can fit any training set, but it cites a work on approximation of continuous functions, which is a different problem} 
and the characterization of \NC\; only involves the last-layer features $\phi_{\vtheta}(\vx)$, a natural idea to simplify the analysis is to treat these features as free optimization variables $\mb h = \phi_{\vtheta}(\vx)\in\mathbb R^d$, which motivates the name {\it unconstrained feature model}\footnote{This model is also called {\it layer-peeled model} in \cite{fang2021layer}, where one ``peels'' off the first $L-1$ layers. It has also been recently studied in \cite{lu2020neural,weinan2020emergence}. Throughout the paper, we will simply call it unconstrained feature model.} \cite{mixon2020neural} (see \Cref{fig:layer-peeled} for an illustration). In this way, we can rewrite the network output as $\psi_{\mb \Theta}(\mb x) = \mb W_L \mb h + \mb b_L  $. 

For simplicity, we consider the setting where the number of training samples in each class is balanced (i.e., $n = n_1 =\cdots = n_K$). We also write $\mb W = \mb W_L$ and $\mb b = \mb b_L$ for conciseness. Based on the unconstrained feature model, we consider a slight variant of \eqref{eqn:dl-ce-loss}, given by
\begin{align}\label{eq:obj}
     \min_{\mb W , \mb H,\mb b  } \; \underbrace{ \frac{1}{Kn} \sum_{k=1}^K \sum_{i=1}^{n} \Lce \paren{ \mb W \mb h_{k,i} + \mb b, \mb y_k } \;+\; \frac{\lambda_{\mb W} }{2} \norm{\mb W}{F}^2 + \frac{\lambda_{\mb H} }{2} \norm{\mb H}{F}^2 + \frac{\lambda_{\mb b} }{2} \norm{\mb b}{2}^2 }_{ := f(\mb W,\mb H,\mb b)},
\end{align}
%\js{Notation: just noticed that this choice of $f$ for the cost is problematic, given that $f$ is the network output.} \zz{Good catch. I changed the output of the network to $y$.}\\
with $\mb W \in \bb R^{ K \times d}$, $\mH = \begin{bmatrix}\vh_{1,1} \cdots \vh_{K,n} \end{bmatrix}\in \bb R^{d \times N}$ (here, we denote $N = nK$), $\mb b \in \bb R^K$, and $\lambdaW,\lambdaH,\lambdab>0$ are the penalty parameters for the weight decay.

%\js{note sure why the paragraph that follows is a remark. I would just include this by continuing the presentation (without making explicit the "remark").}
%\paragraph{Remark.} 
As summarized in \Cref{table:overview}, similar optimization problems have been considered in \cite{fang2021layer,lu2020neural,weinan2020emergence}. In contrast to these, our problem formulation here \eqref{eq:obj}, with bias and weight decay, is closer to the loss used in practice for training neural networks; existing work \cite{fang2021layer,lu2020neural,weinan2020emergence} considered constrained\footnote{For example, the work \cite{fang2021layer} considers inequality constraints such that the energy of $\mb W$ and $\mb H$ are bounded; the other work \cite{lu2020neural,weinan2020emergence} enforces $\mb W$ and $\mb H$ on the spheres up to scaling. } variants of \eqref{eq:obj} and without the bias term, which can be implemented but seldom used in practice due to the difficulty of optimization. In the following, we briefly discuss the differences between our simplification and practical settings for training neural networks.
\begin{itemize}[leftmargin=*]
    \item \textbf{Weight Decay on $\mb W$ and $\mb H$.} One simplification of our formulation is in the weight decay. In practice, people usually impose the weight decay on all the network parameters $\mb \Theta$, while we enforce weight decay on the last layer's classifier, $\mb W$, and features, $\mb H$. However, this idealization is reasonable since the energy of the features (i.e., $\|\mb H\|_F$) can indeed be upper bounded by the energy of the weights at every layer if the inputs are bounded (which holds in practice), implying that the norm of $\mb H$ is \emph{implicitly} penalized by penalizing the norm of $\mb \Theta$.  %\js{I would add: Note that $\|\mb H\|$ can indeed be upper bounded by the spectral norm of the weights at every layer if the inputs are bounded (which holds in practice).} 
    Our experiments in \Cref{subsec:exp-validity} demonstrate that both approaches exhibit similar \NC\;phenomena and comparable performance in practice. %\cy{Be careful as to whether we have this at the end}

%    since penalizing the energy of $\mb \Theta$ is implicitly penalizing 
    \item \textbf{Treating the Last-layer Features as Optimization Variables.} One may question that ``peeling off'' the $L-1$ layers might oversimplify the problem. Nonetheless, this simplification (which is also adopted in \cite{fang2021layer,lu2020neural,weinan2020emergence}) is based on the fact that neural networks with sufficient overparameterization can approximate any function -- in \Cref{subsec:exp-validity}, we numerically demonstrate that \NC\; persists even when we train overparametrized networks on randomly generated labels. Moreover, as we shall see in the following sections, both our theory and experiments demonstrate that our simplification preserves the core properties of last-layer classifiers and features during training -- the \NC\;phenomenon. More specifically, in \Cref{sec:main} we show that Simplex ETFs are the only global minimizers to our simplified loss function \eqref{eq:obj}, and the loss function is a strict saddle function with no other spurious local minimizers so that it can be optimized efficiently to global optimality.
\end{itemize}

\section{Main Theoretical Results}\label{sec:main}

%In next section, we will analyze the optimization landscape for the objective function in \eqref{eq:obj}.

In this section, we present our study on global optimality conditions as well as the optimization landscape of the nonconvex loss in \eqref{eq:obj}.
%\zz{Since each subsection only contains one result, do we still need to put them as subsections?}

%To ease the presentation, throughout the paper, denote by $\vw_k$ as the $k$-th row of $\mW$, i.e., $\mW = \begin{bmatrix}\vw_1 & \cdots & \vw_K \end{bmatrix}^\top$.

\subsection{Global Optimality Conditions for \eqref{eq:obj}}
We begin by characterizing the global solutions of \eqref{eq:obj}, showing that the Simplex ETFs are its only global minimizer. 

% in the following result.
%First of all, we can show that the Simplex ETF are the only global solutions. 
\begin{theorem}[Global Optimality Conditions]\label{thm:global-minima}
	Assume that the feature dimension $d$ is no smaller than the number of classes $K$, i.e. $d\ge K$, and the number of training samples in each class is balanced, $n = n_1 = \cdots = n_K$. Then any global minimizer $(\mW^\star, \mH^\star,\vb^\star)$ of $f(\mb W,\mb H,\mb b)$ in \eqref{eq:obj} satisfies 
\begin{align}
\begin{split}
        &  w^\star  :\;=\; \norm{\mb w^{\star 1}}{2} \;=\; \norm{\mb w^{\star 2} }{2} \;=\; \cdots \;=\; \norm{\mb w^{\star K} }{2} , \quad \text{and}\quad \mb b^\star = b^\star \mb 1, \\ 
    & \vh_{k,i}^\star \;=\;  \sqrt{ \frac{ \lambda_{\mb W}  }{ \lambda_{\mb H} n } } \vw^{\star k},\quad \forall \; k\in[K],\; i\in[n],  \quad \text{and} \quad  \ol{\mb h}_{i}^\star \;:=\; \frac{1}{K} \sum_{j=1}^K \mb h_{j,i}^\star \;=\; \mb 0, \quad \forall \; i \in [n],
\end{split}
\end{align}
where either $b^\star = 0$ or $\lambdab=0$, and the matrix $\mW^{\star\top} \in \bb R^{d \times K} $ forms a $K$-Simplex ETF (defined in Definition \ref{def:simplex-ETF}) up to some scaling and rotation, in the sense that for any $\mb U\in \bb R^{d \times d}$ with $\mb U^\top \mb U = \mb I_d$, the normalized matrix $\mb M := \frac{1}{w^\star} \mb U \mW^{\star\top} $ satisfies
\begin{align}\label{eqn:simplex-ETF-closeness}
    \mb M^\top \mb M \;=\; \frac{K}{K-1}  \paren{ \mb I_K - \frac{1}{K} \mb 1_K \mb 1_K^\top }.
\end{align}

\end{theorem}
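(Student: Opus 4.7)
The plan is to chain together a sequence of matching lower bounds whose equality conditions progressively pin down the Simplex ETF structure, until every remaining degree of freedom is fixed.

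\emph{Reduction to one feature per class.} Since $\Lce(\cdot,\vy_k)$ is convex in its first argument, Jensen's inequality yields $\tfrac{1}{n}\sum_i \Lce(\mb W\vh_{k,i}+\vb,\vy_k) \ge \Lce(\mb W\bar{\vh}_k+\vb,\vy_k)$ with $\bar{\vh}_k:=\tfrac{1}{n}\sum_i \vh_{k,i}$, with equality iff the within-class features agree; paired with $\tfrac{1}{n}\sum_i \|\vh_{k,i}\|_2^2\ge \|\bar{\vh}_k\|_2^2$ under the same condition, this already forces variability collapse $\vh_{k,1}^\star=\cdots=\vh_{k,n}^\star$ at any global minimizer. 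Setting $\widetilde{\mb H}:=[\bar{\vh}_1,\ldots,\bar{\vh}_K]\in\bb R^{d\times K}$ and invoking the matrix AM--GM inequality $\|\mb A\|_F^2+\|\mb B\|_F^2\ge 2\|\mb A^\top\mb B\|_*$ with $\mb A = \sqrt{\lambda_{\mb W}}\,\mb W^\top$, $\mb B = \sqrt{n\lambda_{\mb H}}\,\widetilde{\mb H}$, I obtain
\[
\tfrac{\lambda_{\mb W}}{2}\|\mb W\|_F^2+\tfrac{n\lambda_{\mb H}}{2}\|\widetilde{\mb H}\|_F^2 \;\ge\; \sqrt{n\lambda_{\mb W}\lambda_{\mb H}}\,\|\mb W\widetilde{\mb H}\|_*,
\]
with equality precisely when $\sqrt{\lambda_{\mb W}}\,\mb W^\top$ and $\sqrt{n\lambda_{\mb H}}\,\widetilde{\mb H}$ share aligned SVDs with matched singular values---this is the source of the self-dual relation $\vh_{k,i}^\star=\sqrt{\lambda_{\mb W}/(n\lambda_{\mb H})}\,\vw^{\star k}$.

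\emph{Reduced logit problem and symmetrization.} Set $\mb Z:=\mb W\widetilde{\mb H}\in\bb R^{K\times K}$; the rank constraint is vacuous since $d\ge K$. Shift-invariance of softmax implies $\Lce$ depends on $\vb$ only through its $\mb 1_K^\perp$-component, so optimizing over $\vb$ yields either $\lambda_{\mb b}=0$ (with $\vb$ free along $\mb 1_K$) or $\vb^\star=b^\star\mb 1_K$, leaving
\[
\min_{\mb Z,\,b}\;\tfrac{1}{K}\sum_{k=1}^K \Lce(\vz_k+b\mb 1_K,\vy_k)\;+\;\sqrt{n\lambda_{\mb W}\lambda_{\mb H}}\,\|\mb Z\|_*\;+\;\tfrac{K\lambda_{\mb b}}{2}b^2.
\]
For each $\pi\in\mathfrak{S}_K$ with permutation matrix $\mb P_\pi$, define $\overline{\mb Z}:=\tfrac{1}{K!}\sum_\pi \mb P_\pi \mb Z \mb P_\pi^\top$. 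Convexity of $\Lce$ and the equivariance $\Lce(\mb P_\pi \vz,\mb P_\pi\vy)=\Lce(\vz,\vy)$ bound the CE term at $\overline{\mb Z}$ by that at $\mb Z$; convexity and unitary invariance of $\|\cdot\|_*$ do the same for the nuclear norm. Hence one may restrict to $\mathfrak{S}_K$-invariant matrices, which are exactly of the form $\mb Z=\alpha \mb I_K+\beta\,\mb 1_K\mb 1_K^\top$.

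\emph{Scalar optimization and reading off the ETF.} For such $\mb Z$, the CE term depends only on $\alpha$ (shifts along $\mb 1_K$ cancel), whereas $\|\mb Z\|_*=|\alpha+K\beta|+(K-1)|\alpha|$ is minimized in $\beta$ at $\beta^\star=-\alpha^\star/K$, i.e., $\mb Z^\star=\alpha^\star(\mb I_K-\tfrac{1}{K}\mb 1_K\mb 1_K^\top)$. The residual one-dimensional problem $\log(1+(K-1)e^{-\alpha})+\sqrt{n\lambda_{\mb W}\lambda_{\mb H}}\,(K-1)\alpha$ is strictly convex and admits a unique minimizer $\alpha^\star>0$ in the non-degenerate regime. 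Chasing equality back through the AM--GM step yields $\mb W^\star \mb W^{\star\top} \propto \mb I_K-\tfrac{1}{K}\mb 1_K\mb 1_K^\top$, which is precisely \eqref{eqn:simplex-ETF-closeness}; reading off the diagonal gives $\norm{\vw^{\star k}}{2}=w^\star$, and since $\mb 1_K^\top(\mb I_K-\tfrac{1}{K}\mb 1_K\mb 1_K^\top)\mb 1_K=0$ one deduces $\mb W^{\star\top}\mb 1_K=\mb 0$. The self-dual identity then delivers $\vh_{k,i}^\star=\sqrt{\lambda_{\mb W}/(n\lambda_{\mb H})}\,\vw^{\star k}$ and consequently $\bar{\vh}_i^\star=\tfrac{\sqrt{\lambda_{\mb W}/(n\lambda_{\mb H})}}{K}\mb W^{\star\top}\mb 1_K=\mb 0$.

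The main obstacle I anticipate is the symmetrization step: although convexity and unitary invariance give the outer inequalities cleanly, ruling out non-$\mathfrak{S}_K$-invariant minimizers requires a \emph{strict} form of the averaging inequality, and verifying uniqueness of the scalar optimizer (in particular excluding degenerate $\alpha=0$ solutions) leans on the strict convexity of the log-sum-exp combined with the positivity of $\lambda_{\mb W},\lambda_{\mb H}$; a parallel care is needed to handle the gauge freedom in the matched-SVD equality condition so that self-duality is inherited rather than merely attained up to an arbitrary orthogonal rotation.
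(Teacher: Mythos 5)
Your proposal takes a genuinely different route from the paper. Where you chain Jensen (in the logits) $\rightarrow$ matrix AM--GM / nuclear-norm variational form $\rightarrow$ $\mathfrak{S}_K$-symmetrization of the convex $\mZ$-problem $\rightarrow$ a one-dimensional convex problem, the paper instead works with a \emph{scalar} cross-entropy lower bound on each sample (their Lemma~B.6, derived from Jensen applied to the exponentials plus a log-concavity trick), combines it with the scalar AM--GM inequality applied to $(\vh_{k,i}-\ol{\vh}_i)^\top\vw^k$, and uses the balance relation $\mW^\top\mW = (\lambda_{\mH}/\lambda_{\mW})\mH\mH^\top$ at critical points to close the bound. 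The paper's route is more elementary and produces the equality conditions by inspection; yours is more structural and makes the ETF form emerge transparently from the permutation symmetry of the convex surrogate. Both are legitimate.

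There are, however, gaps in the proposal that go a bit beyond what you flag as ``anticipated obstacles.'' \textbf{(i) Strictness of symmetrization.} Convexity plus $\mathfrak{S}_K$-equivariance only shows $F(\bar\mZ,\bar\vb)\le F(\mZ,\vb)$; it shows an invariant minimizer \emph{exists}, not that all minimizers are invariant. The nuclear norm is not strictly convex, and the CE term is only strictly convex modulo the shift direction $\mathbf{1}\vc^\top$, so turning this into ``all global minimizers are invariant'' requires a separate argument (for instance, showing that a non-invariant minimizer would have to differ from an invariant one by $\mathbf{1}\vc^\top$, and then using the nuclear-norm term to rule that out). The paper sidesteps this entirely by tracking equality conditions directly. \textbf{(ii) The bias term.} You claim ``optimizing over $\vb$ yields \ldots\ $\vb^\star=b^\star\mathbf{1}_K$,'' but direct optimization of $\tfrac{\lambda_{\vb}}{2}\|\vb\|_2^2$ over the shift-invariant $\mathbf{1}$-direction actually forces that component to \emph{zero} (if $\lambda_{\vb}>0$), and says nothing about the $\mathbf{1}^\perp$-component. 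What actually kills $\vb_\perp$ in your framework is the joint symmetrization $(\mZ,\vb)\mapsto(\mP_\pi\mZ\mP_\pi^\top,\mP_\pi\vb)$, since $\tfrac{1}{K!}\sum_\pi\mP_\pi\vb = \tfrac{\mathbf{1}^\top\vb}{K}\mathbf{1}$; but then this argument inherits the strictness issue of (i). In the paper, $\vb=b\mathbf{1}$ falls out of the equality condition $z_i=z_j$ for $i,j\neq k$ in Lemma~B.6, combined with the ETF structure of $\mW$. \textbf{(iii) Gauge freedom.} This one actually resolves cleanly in your framework, and deserves to be stated rather than flagged as a worry: once $\mZ^\star = \alpha^\star\paren{\mI_K - \tfrac{1}{K}\mathbf{1}_K\mathbf{1}_K^\top}$ with $\alpha^\star>0$ is symmetric PSD, the equality condition in the matrix AM--GM forces $\sqrt{\lambda_{\mW}}\,\mW^\top$ and $\sqrt{n\lambda_{\mH}}\,\widetilde{\mH}$ to share \emph{both} left and right singular factors of $\mZ^\star$, and since $\mU=\mV$ for a symmetric PSD matrix, the two factors are \emph{equal}, not merely related by a rotation. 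Self-duality is therefore forced, not chosen.

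Finally, note that both your scalar problem and the paper's $\xi(\rho)$ analysis admit a degenerate minimizer at $\alpha=0$ (resp.\ $\rho=0$) when $\sqrt{n\lambda_{\mW}\lambda_{\mH}}$ is large; the paper does not explicitly exclude this either, so you should not feel obligated to close that gap for the sake of matching the published statement, but do be aware that ``strict convexity plus positivity of $\lambda_{\mW},\lambda_{\mH}$'' alone does not deliver $\alpha^\star>0$.
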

%\zz{Here $\mb M \mb M^\top \;=\; \mb M^\top \mb M$ is not true. We only have $\bm M^\top \bm M = \frac{K}{K-1}  \paren{ \mb I_K - \frac{1}{K} \mb 1_K \mb 1_K^\top }$.}

% \cy{This seems to be suggesting that feature dimension $d$ should not affect the performance of the model once we have $d > K$. This is a bit contradicting with the belief that wider networks are better. Maybe we can do some experiments with varying $d$ and see if the acc curve is mostly flat with $d > K$ and drops quickly with $d < K$. Also interesting is that it requires $d > K$ but I wonder if there can be any comment for the case $d = K$.}

% \zz{Good point. \Cref{thm:global-minima} actually works when $d=K$, but \Cref{thm:global-geometry} requires $d> K$.}

% \cy{Thanks, maybe better to say $d \ge K$ in \Cref{thm:global-minima} since there is a discussion of the case $d = K$ for \Cref{thm:global-geometry}? Also maybe better to say $d \ge K$ is ``tight'' (see below)} \zz{Agree. I have uncolored the sentence. Thanks.}

At a high level, our proof technique finds lower bounds for the loss in \eqref{eq:obj} and studies the conditions for the lower bounds to be achieved, similar to \cite{lu2020neural,fang2021layer}. We postpone the details to Appendix \ref{app:thm-global}.

\paragraph{Remark.} As can be seen in this result, any global solution of the loss function \eqref{eq:obj} exhibits \NC\;in the sense that the variability of output features $\Brac{\vh_{k,i}^\star}_{i=1}^n$ of each class $k$ ($1\leq k\leq K$) collapses to zero, and any pair of features $\paren{ \vh_{k_1,i}^\star, \vh_{k_2,j}^\star }$ from different classes $k_1\not = k_2$ are the maximally separated, under the constraint of equal-sized angles between all classes. Similar results have been obtained in \cite{fang2021layer,lu2020neural,weinan2020emergence}, which considered different problem formulations, as we have discussed in \Cref{sec:layer-peeled-model}.
\begin{itemize}[leftmargin=*]
    \item \textbf{Relationship between Class Number $K$ and Feature Dimension $d$.} The requirement that $d \ge K$ is necessary for  \Cref{thm:global-minima} to hold, simply because $K$ vectors in $\bb R^d$ cannot form a $K$-Simplex ETF if $K>d$. However, the relationship $d \ge K$ is often true in practice. In general, and in overparameterized models in particular, the feature dimension, $d$, is significantly larger than the number of classes, $K$. For example, the dimension of the features of a ResNet \cite{he2016deep} is typically set to $d=512$ for CIFAR10 \cite{krizhevsky2009learning}, a dataset with $K = 10$ classes. This dimension grows to $d=2048$ for ImageNet \cite{deng2009imagenet}, a dataset with $K = 1000$ classes.
    \item \textbf{Interpretations on the Bias Term $\mb b^\star$.} In contrast to previous works \cite{fang2021layer,lu2020neural,weinan2020emergence}, we consider the bias term in the unconstrained feature model \eqref{eq:obj}. Our result indicates that a collapsing phenomenon also exists in the bias term $\vb^\star$, in the sense that all the elements of $\vb^\star$ are identical. When the features $\mb H$ are completely unconstrained, our result implies that removing the bias term $\vb$ has no influence on the performance of the classifier. However, it should be noted that the ReLU unit is often applied at the end of the penultimate layer, so that $\mb H$ should be constrained to be nonnegative, $\mb H \geq \mb 0$. In such cases, $\overline \vh_i^\star$ will no longer be zero, and neither will $\mb b^\star$. Here, the bias term $\mb b^\star$ will compensate for the global mean of the features, so that the globally-centered features still form a Simplex ETF \cite{papyan2020prevalence}.\footnote{Suppose that an optimal solution to \eqref{eq:obj} is $(\mW^\star,\mH^\star,\vb^\star)$, satisfying the conditions in \Cref{thm:global-minima}. There exists a nonzero vector $\mb \alpha \in \bb R^d$ such that $\widetilde \mH^\star = \mH^\star + \valpha \mb 1^\top \ge 0$. Here, $\valpha \mb $ can be viewed as the global mean of $\widetilde \mH^\star$ since $\mH^\star$ has mean zero. Then, let $\widetilde\vb^\star = - \mb W^\star \valpha$, so that $\wt{\mb W}^\star \widetilde \mH^\star + \widetilde\vb^\star \mb 1^\top \;=\; \mb W^\star \mH^\star + (\mb W^\star \valpha + \widetilde\vb^\star)\mb 1^\top \;=\; \mW^\star\mH^\star$. Therefore, we can see that $(\wt{\mW}^\star,\widetilde\mH^\star,\widetilde\vb^\star)$ achieves the same cross-entropy loss as $(\mW^\star,\mH^\star,\vb^\star)$. } %\js{i think it might be worthwhile to show this formally as a corollary. Otherwise people who don't read the footnote in detail might think "oh, their result doesn't apply to the case of nonnegative features".} \zz{Good point. This is what we did before, but then we realized we can't formally prove it because of the weight decay. Thus, we add a footnote and show the case for cross-entropy.}
    %in the nonnegative orthant. 
\end{itemize}

\subsection{Characterizations of the Benign Global Landscape for \eqref{eq:obj}}\label{subsec:main-geometry}

The global optimality condition in \Cref{thm:global-minima} does not necessarily mean that we can achieve these global solutions efficiently, as the problem is still nonconvex. We now investigate the global optimization landscape of \eqref{eq:obj} by characterizing all of its critical points. Our next result implies that the training loss is a strict saddle function, and every critical point is either a global minimizer or a strict saddle point that can be escaped using negative curvatures. As a consequence, this implies that the global solutions of the training problem in \eqref{eq:obj} can be efficiently found from random initializations.

\begin{theorem}[No Spurious Local Minima and Strict Saddle Property]\label{thm:global-geometry}
		Assume that the feature dimension is larger than the number of classes, $d> K$, and the number of training samples in each class is balanced $n = n_1 = \cdots = n_K$. Then the function $f(\mb W,\mb H,\mb b)$ in \eqref{eq:obj} is a strict saddle function with no spurious local minimum, in the sense that
	\begin{itemize}
	    \item Any local minimizer of \eqref{eq:obj} is a global minimizer of the form shown in \Cref{thm:global-minima}.
	    \item Any critical point $(\mb W,\mb H,\mb b)$ of \eqref{eq:obj} that is not a local minimizer is a strict saddle with negative curvature, i.e. the Hessian $\nabla^2 f(\mb W,\mb H,\mb b)$, at this critical point, is non-degenerate and has at least one negative eigenvalue, i.e. $\exists~ i : ~\lambda_i\paren{\nabla^2 f(\mb W,\mb H,\mb b)}<0$.
	\end{itemize}
\end{theorem}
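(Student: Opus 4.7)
My plan follows the two-step template used for strict-saddle analyses of nonconvex matrix factorization problems (see e.g.\ \cite{zhu2018global,chi2019nonconvex}): (i) exploit the first-order optimality conditions to force any critical point $(\mb W,\mb H,\vb)$ into a highly symmetric normal form, mirroring the structure of the global minimizers identified in \Cref{thm:global-minima}; (ii) at any critical point that fails to be a Simplex ETF configuration, either invoke second-order information (PSD Hessian) to show it cannot be a local minimizer, or exhibit an explicit direction $\Delta=(\Delta_{\mb W},\Delta_{\mb H},\Delta_{\vb})$ along which $\nabla^2 f[\Delta,\Delta]<0$. The strict gap $d>K$ is what will guarantee the extra degrees of freedom needed to construct such a direction.

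\textbf{First-order reduction.} Writing $\mb Z=\mb W\mb H+\vb\vone^\top$ and letting $\mb G=\nabla_{\mb Z}g(\mb Z)=\frac{1}{Kn}(\mb P-\mb Y)$, where $\mb P$ collects the softmax vectors and $\mb Y$ the one-hot labels, the stationarity equations read
\begin{align*}
\mb G\mb H^\top+\lambdaW\mb W=\mb 0,\qquad \mb W^\top\mb G+\lambdaH\mb H=\mb 0,\qquad \mb G\vone+\lambdab\vb=\mb 0.
\end{align*}
Left-multiplying the first by $\mb W^\top$ and right-multiplying the second by $\mb H^\top$ produces the \emph{balance condition} $\lambdaW\mb W^\top\mb W=\lambdaH\mb H\mb H^\top$, forcing $\mb H$ to lie in the row span of $\mb W^\top$. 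Because $\Lce$ depends on $\vh_{k,i}$ only through the class label $\vy_k$, the second equation also implies within-class collapse $\vh_{k,1}=\cdots=\vh_{k,n}$, reducing the feature variable to $K$ effective columns, while the third equation combined with the softmax equivariance enforces $\vb=b\vone$. These are exactly the structural constraints satisfied by the global minimizers in \Cref{thm:global-minima}, up to the remaining question of whether $\mb Z$ matches the ETF configuration.

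\textbf{Ruling out spurious local minima.} After the reduction, $f$ takes the form $g_0(\mb W\wt{\mb H}+\vb\vone^\top)+\frac{\lambdaW}{2}\|\mb W\|_F^2+\frac{n\lambdaH}{2}\|\wt{\mb H}\|_F^2+\frac{\lambdab}{2}\|\vb\|_2^2$ for a strictly convex data term $g_0$, and the variational identity $\min_{\mb W\wt{\mb H}=\mb Z}\tfrac{\lambdaW}{2}\|\mb W\|_F^2+\tfrac{n\lambdaH}{2}\|\wt{\mb H}\|_F^2=\sqrt{n\lambdaW\lambdaH}\,\|\mb Z\|_*$ produces a \emph{convex lift} $\Phi(\mb Z,\vb)=g_0(\mb Z+\vb\vone^\top)+\sqrt{n\lambdaW\lambdaH}\|\mb Z\|_*+\tfrac{\lambdab}{2}\|\vb\|_2^2$ whose minimum value coincides with that of $f$. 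Adapting the arguments of \cite{zhu2018global,chi2019nonconvex} to this setting, I would then show that any second-order critical point $(\mb W,\mb H,\vb)$ of $f$ must satisfy the subgradient optimality of $\Phi$ at the induced $(\mb Z,\vb)$, which combined with the balance condition promotes it to a global minimum of $f$, and by \Cref{thm:global-minima} to a Simplex ETF.

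\textbf{Negative curvature and main obstacle.} For a critical point that fails the ETF condition, the Hessian quadratic form expands as
\begin{align*}
\nabla^2 f[\Delta,\Delta]=\langle\nabla^2 g(\mb Z)[\Delta_{\mb Z}],\Delta_{\mb Z}\rangle+2\langle\mb G,\Delta_{\mb W}\Delta_{\mb H}\rangle+\lambdaW\|\Delta_{\mb W}\|_F^2+\lambdaH\|\Delta_{\mb H}\|_F^2+\lambdab\|\Delta_{\vb}\|_2^2,
\end{align*}
with $\Delta_{\mb Z}=\Delta_{\mb W}\mb H+\mb W\Delta_{\mb H}+\Delta_{\vb}\vone^\top$. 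Since $g$ is convex and the three weight-decay terms are nonnegative, the only possible source of negative curvature is the cross term $2\langle\mb G,\Delta_{\mb W}\Delta_{\mb H}\rangle$; my strategy is therefore to choose $(\Delta_{\mb W},\Delta_{\mb H},\Delta_{\vb})$ in the kernel of the linearization $(\Delta_{\mb W},\Delta_{\mb H},\Delta_{\vb})\mapsto\Delta_{\mb Z}$, so that the data-fidelity piece vanishes, while forcing $\Delta_{\mb W}\Delta_{\mb H}$ to align with $-\mb G$. The assumption $d>K$ ensures this kernel is nontrivial. The hard part, and in my view the main obstacle, is that $\Lce$ is only weakly convex in $\mb Z$ (the softmax has a nontrivial null direction along $\vone$), so the weight-decay terms must be used carefully to rule out flat directions; and one needs a quantitative handle on how far the residual $\mb G$ is from its ETF value to conclude that the cross term strictly dominates any quadratic remainder and thereby certifies a strictly negative eigenvalue rather than just a nonpositive one.
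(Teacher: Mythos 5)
Your overall architecture matches the paper's: a convex lift via the nuclear-norm variational identity, a subgradient optimality condition that upgrades certain critical points to global minimizers, and an explicit negative-curvature direction built inside the kernel of the linearization $(\Delta_{\mb W},\Delta_{\mb H},\Delta_{\vb})\mapsto\Delta_{\mb Z}$ using $d>K$. However, two of your intermediate claims either overshoot or misidentify the difficulty, and correcting them actually makes the argument cleaner.

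First, the ``first-order reduction'' overclaims. The balance condition $\lambdaW\mb W^\top\mb W=\lambdaH\mb H\mb H^\top$ does follow from stationarity, and the paper uses exactly that (its Lemma on critical-point balance). But the assertion that the stationarity equation $\mb H=-\tfrac{1}{\lambdaH}\mb W^\top\mb G$ ``implies within-class collapse $\vh_{k,1}=\cdots=\vh_{k,n}$'' at \emph{every} critical point does not follow: each $\vh_{k,i}$ satisfies a fixed-point relation $\vh_{k,i}=F_k(\vh_{k,i})$ with the \emph{same} map $F_k$, but a fixed-point equation can have several solutions, so there is no reason distinct $i$ must land on the same one. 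The paper never claims this structure at arbitrary critical points, and — crucially — it is not needed. Dropping this step removes the need to renormalize $\lambdaH\to n\lambdaH$ in your lift; the identity $\min_{\mb W\mb H=\mb Z}\tfrac{\lambdaW}{2}\|\mb W\|_F^2+\tfrac{\lambdaH}{2}\|\mb H\|_F^2=\sqrt{\lambdaW\lambdaH}\|\mb Z\|_*$ already works with the full $\mb H\in\R^{d\times N}$ and $\mb Z=\mb W\mb H\in\R^{K\times N}$.

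Second, you frame the ``main obstacle'' as the weak convexity of the softmax and the need for a quantitative bound on the remainder; this concern evaporates once you commit to a direction with $\Delta_{\mb Z}=\mb 0$. Picking any $\va$ in $\operatorname{Null}(\mb W)$ (which by balance is also in $\operatorname{Null}(\mb H^\top)$, nonempty since $d>K$), and letting $\vu,\vv$ be top singular vectors of $\mb G$, set
$\Delta_{\mb W}=(\lambdaH/\lambdaW)^{1/4}\vu\va^\top$, $\Delta_{\mb H}=-(\lambdaH/\lambdaW)^{-1/4}\va\vv^\top$, $\Delta_{\vb}=\mb 0$.
Then $\Delta_{\mb Z}=\mb W\Delta_{\mb H}+\Delta_{\mb W}\mb H+\Delta_{\vb}\vone^\top=\mb 0$ exactly, so the second-derivative data term vanishes identically — there is no quadratic remainder to control. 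The Hessian quadratic form collapses to
\begin{align*}
2\langle\mb G,\Delta_{\mb W}\Delta_{\mb H}\rangle+\lambdaW\|\Delta_{\mb W}\|_F^2+\lambdaH\|\Delta_{\mb H}\|_F^2
=-2\|\va\|_2^2\bigl(\|\mb G\|_2-\sqrt{\lambdaW\lambdaH}\bigr).
\end{align*}
The right dichotomy is therefore not ``how far is $\mb G$ from its ETF value'' but simply $\|\mb G\|_2\gtrless\sqrt{\lambdaW\lambdaH}$: if $\le$, the subgradient condition for the convex lift holds at $(\mb W\mb H,\vb)$ and the critical point is a global minimizer (this covers your second-order/spurious-minima step, and it requires only first-order information plus the balance condition and the uniqueness of the PSD matrix square root to identify $\mb H^\top\mb H$ with $\mb V\mb\Sigma\mb V^\top$ up to scaling); if $>$, the display above gives an explicit strictly negative eigenvalue of the Hessian. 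These two cases are exhaustive, so no additional structure on the critical point is needed. Once you replace the overclaimed reduction with the balance condition alone and replace the ``quantitative handle'' heuristic with the exact cancellation coming from $\Delta_{\mb Z}=\mb 0$, your outline becomes the paper's proof.
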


%\qq{intuition behind creating the negative curvature direction, cite the paper: Loss Landscapes of Regularized Linear Autoencoders, make some connection to low-rank sensing, implicit bias for low-rank matrix factorization }

In a nutshell, our proof relies on connecting the original nonconvex optimization problem to its corresponding low-rank convex counterpart, so that we can obtain the global optimality conditions for the original problem in \eqref{eq:obj} based on the convex one. With this, we can then characterize the properties of all critical points based on the optimality conditions. We defer all details of this proof to Appendix \ref{app:thm-global}.

\paragraph{Remark.} Existing results \cite{fang2021layer,lu2020neural,weinan2020emergence} have \emph{only} studied the global minimizers of the original problem, which has limited implication for optimization. In contrast, \Cref{thm:global-geometry} characterizes the properties for {\it all} critical points of the function in \eqref{eq:obj}. As a consequence of this result, many first-order and second-order optimization methods \cite{bottou2018optimization} optimizing $(\mb W,\mb H,\mb b)$ are guaranteed to converge to a global solution of \eqref{eq:obj}. In particular, the result in \cite{ge2015escaping,lee2016gradient} ensures that (stochastic) gradient descent with random initialization, the \emph{de facto} optimization algorithm used in deep learning, almost surely escapes strict saddles and converges to a second-order critical point -- which happens to be a global minimizer of form showed in \Cref{thm:global-minima} for our problem \eqref{eq:obj}.

\begin{itemize}[leftmargin=*]
    \item \textbf{Constructing the Negative Curvature Direction for Strict Saddles. }%\js{not a big fan of "cooking", how about "finding"?} 
     One of the major difficulties in our proof is to construct the negative curvature direction for strict saddle points. Here, we exploit the fact that the feature dimension $d$ is larger than the number of classes $K$, and construct the negative curvature direction within the null space of $\mW \in \bb R^{K \times d}$. This is also the main reason for the requirement $d>K$ in \Cref{thm:global-geometry}, but we conjecture the results also hold for $d= K$ and could be proved with more sophisticated analysis, which is left as future work.
    \item \textbf{Relationship to Low-Rank Matrix Recovery.} As discussed in \Cref{sec:intro}, it has been recently shown that the strict saddle property holds for a wide range of nonconvex problems in machine learning \cite{qu2014finding,zhao2015nonconvex,sun2016complete1,sun2015complete,li2016,qu2017convolutional,sun2018geometric,zhu2018global,li2019symmetry,zhu2018distributed,chi2019nonconvex,qu2020geometric,qu2020finding,qu2020exact}, including low-rank matrix recovery \cite{ge2016matrix,bhojanapalli2016global,ge2017no,li2019non,li2019symmetry,chi2019nonconvex}. 
   As we know that $\norm{\mb Z}{*} \;=\; \min_{ \mb Z = \mb W \mb H } \; \frac{1}{ 2  } \paren{ \norm{\mb W}{F}^2 +  \norm{\mb H}{F}^2  }$ (see \cite{haeffele2015global} for a proof), our formulation in \eqref{eq:obj} is closely related to low-rank matrix problems \cite{ge2016matrix,bhojanapalli2016global,ge2017no,li2019non,li2019symmetry,chi2019nonconvex} with the Burer-Moneirto factorization approach \cite{burer2003nonlinear}, by viewing $\mW$ and $\mH$ as two factors of a matrix $\mb Z = \mb W \mb H $. The differences lie in the loss functions and statistical properties of the problem.\footnote{We consider the cross-entropy loss rather than the least-squares loss due to the differences in the task -- we focus on classification instead of recovery problems. On the other hand, the results on low-rank matrix recovery are often based on certain statistical properties, such as the randomness in the measurements \cite{ge2016matrix,bhojanapalli2016global}, or restricted well-conditionedness property of the objective function~\cite{zhu2018global,li2019non}. In contrast, these statistical properties do not exist in our problem, where the model and analysis are purely deterministic.} Therefore, our result establishes a connection between the study of low-rank matrix factorization and neural networks under the unconstrained feature model.
    \item \textbf{Comparison to Existing Landscape Analysis on Neural Network.} \Cref{sec:intro} provided a comprehensive discussion on the relationship between our result and previous works on landscape analysis for deep neural networks. Although the unconstrained feature model can be viewed as a two-layer linear network with input being the columns of an identity matrix, as preluded in \Cref{sec:intro}, our result has much broader implications than the previous results \cite{baldi1989neural,kawaguchi2016deep,nouiehed2018learning,yun2018global,laurent2018deep,kunin2019loss,zhu2019global}. First, our problem formulation \eqref{eq:obj} is closer to practical settings for classification tasks, which considers the widely adopted cross-entropy loss while including weight decay and a bias term, while most existing results~\cite{baldi1989neural,kawaguchi2016deep,nouiehed2018learning,yun2018global,laurent2018deep,kunin2019loss,zhu2019global} either do not incorporate any regularization and bias, or focus on the squared loss for the regression problem. More importantly, our result characterizes the precise form of the global solutions (i.e., \NC) for the last layer features and classifiers, and shows that they can be efficiently achieved. Moreover, convincing numerical results in \cite{papyan2020prevalence} and the next section demonstrate that the global solutions do appear and can be achieved by practical networks on various standard image datasets. %\js{I'm a bit confused by the previous sentence: do you mean that that this has been extensively studied as in Papyan et al, or that this is what we will show in the next section?} 
    Our study of last-layer features could have profound implications for studying generalization and robustness of the deep networks. %In comparison, most of the existing results only show the loss function can be efficiently optimized to zero under overparamterized models, that these results have limited implications for practice. \js{I think we could remove the last three sentences, as we have already mentioned this enough (seems a bit repetitive).}

  %  in comparison to ours, the settings of the existing work~\cite{baldi1989neural,kawaguchi2016deep,nouiehed2018learning,yun2018global,laurent2018deep,kunin2019loss,zhu2019global} are far from practical interests, in the sense that  rather than the cross entropy loss we considered here. 
    
  %  The unconstrained feature model also highly relates to shallow linear neural network with input being the columns of the identity matrix. However, most of the existing work  \cite{baldi1989neural,kawaguchi2016deep,nouiehed2018learning,yun2018global,laurent2018deep,kunin2019loss,zhu2019global} on the global landscape analysis for shallow linear neural network either focus on the regression problem with the least-squares loss function, or do not incorporate any regularizer and bias term. Our result may shed lights on analyzing the landscape for shallow linear neural network with cross-entropy loss and weight decay regularizer, and we leave this as the subject of future work. 
\end{itemize}

 %Instead, we exploit the fact that the feature dimension is larger than the number of classes, and construct the negative curvature direction within the null space of $\mW$ for all the critical points that are not global solutions. We note that this is also the main reason for the requirement $d>K$ in \Cref{thm:global-geometry}, but we conjecture the results also hold for $d= k$ and can be proved with more sophisticated analysis, which is left as future work. %\zz{With this descrption, we may change the assumption in \Cref{thm:global-minima} to $d\ge K$?} 

%    because previous results are focused on recovery problems while we are solving classification tasks, the choice of loss functions is quite different -- they are based on the least-squares rather than the cross entropy loss we studied here. Additionally, they often studied the problem under certain statistical properties\footnote{} that 

%difference from shallow two-layer network, connections to nonconvex low-rank matrix recovery \zz{I think this can be discussed when we formally introduce the peel-layered model.} \qq{good point}

\section{Experiments}\label{sec:experiment}

% \begin{itemize}
%     \item dataset: MNIST, CIFAR 10, random data
%     \item Common neural networks
%     \item verify: algorithm independent: SGD, Adam, LBFGS
%     \item add new experiments with weight decay on the features
%     \item compare the test results with and without the bias term. Our theorem indicate there is no need to have the bias term in the last layer.
% \end{itemize}

% Figures to be added:
% \begin{itemize}
%     \item one figure for feature collapse on training data, one figure for bias term, one figure for $\mW$ close to ETF with the following measure
%     \begin{align}
%   \norm{ \frac{1}{w_t^{2}} \mW_t \mW_t^{\top}\;-\; \frac{K}{K-1}  \paren{ \mb I_K - \frac{1}{K} \mb 1_K \mb 1_K^\top } }{F}/\norm{\frac{K}{K-1}  \paren{ \mb I_K - \frac{1}{K} \mb 1_K \mb 1_K^\top }}{F}.
% \end{align}
% where $w_t^2 = \norm{\mW_t}{F}^2/K$ (or $w_t = \frac{1}{K}\sum_{k=1}^K \norm{\vw^k}{2}$)
% \item repeat the above for MNIST, CIFAR10, and a random case
% \end{itemize}

% \paragraph{exploration experiments}

% \begin{itemize}
%     \item fix the weights of last layer as Simplex ETF, compare generaliztaion performance on ResNet for CIFAR and ImageNet
%     \item Consider multilayer perception network, test neural collapse for each layer, conjecture: we observe less collapse for more shallow layers.
% \end{itemize}

In this section, we run extensive experiments not only verifying our theoretical discoveries on modern neural networks, but also demonstrating the potential practical benefits of understanding \NC. More specifically, while \Cref{thm:global-geometry} is for the simplified models, 
in \Cref{sec:exp-NC} we run experiments on practical network architectures and show that our analysis of simplified models captures the core \NC\;on practical network architectures that the prevalence of \NC\;is due to the geometry rather than the algorithmic bias, by showing that different type of optimization algorithms (e.g., SGD, Adam, and limited-memory BFGS (LBFGS) \cite{bottou2018optimization}) \emph{all} achieve \NC\ during the terminal phase of training.
%More specifically, in \Cref{sec:exp-NC} we corroborate \Cref{thm:global-geometry} via experiments on practical network architectures, showing that our analysis of simplified models captures the core \NC\; on practical network architectures that the prevalence of \NC\;is due to the geometry rather than the algorithmic bias, by showing that different type of optimization algorithms (e.g., SGD, Adam, and limited-memory BFGS (LBFGS) \cite{bottou2018optimization}) \emph{all} achieve \NC\ during the terminal phase of training. %\js{We should be careful with how this is stated -- we're not "demonstrating Theorem 3.1", because this result is based on a simplified model and does not apply to these real networks.} 
In \Cref{subsec:exp-validity}, we verify the validity of the simplification based on the unconstrained feature model in \Cref{sec:layer-peeled-model}. Moreover, the universality of \NC\ implies that there is no need for training the last-layer classifiers since the weights can be simply fixed as a Simplex ETF throughout the training process. In \Cref{sec:exp-fix-classifier}, we demonstrate that such a strategy achieves essentially the same generalization performance as classical training algorithms, while improving on memory and computation. We begin by describing the basic experimental setup, including the network architectures, evaluation datasets, training procedures, and metrics for measuring \NC.

\paragraph{Setup of Network Architectures, Dataset, and Training.} In \Cref{sec:exp-NC} and \Cref{subsec:exp-validity}, we train the cross-entropy loss \eqref{eqn:dl-ce-loss} on ResNet18 architecture~\cite{he2016deep} on MNIST~\cite{lecun1998mnist} and CIFAR10~\cite{krizhevsky2009learning} datasets for the classical image classification task. Without explicitly mentioning, the images are normalized (channel-wise) by their mean and standard deviation. We include no data augmentation in this section, because our focus is to study the behavior associated with \NC\ instead of obtaining state-of-the-art performance. In \Cref{sec:exp-fix-classifier}, We train the cross-entropy loss \eqref{eqn:dl-ce-loss} on the same ResNet18 architecture on MNIST and a modified version\footnote{Here, we use a modified version because the original ResNet50~\cite{he2016deep} is fine-tuned for ImageNet dataset~\cite{deng2009imagenet} which does not achieve performance on CIFAR10.} of ResNet50~\cite{res50} architecture on CIFAR10 datasets for the classical image classification task. For fair comparisons with the results reported on CIFAR10, we use the same data augmentation in \cite{shah2016deep}. We train the network for 200 epochs with three distinct optimizers: two first-order methods (SGD and Adam) and one second-order method (LBFGS). In particular, we use SGD with momentum $0.9$, Adam with $\beta_1=0.9, \beta_2=0.999$, and LBFGS with a memory size of $10$. The initial learning rates for SGD and Adam are set to $0.05$ and $0.001$, respectively, and decreased by a factor of $10$ for every $40$ epochs. For LBFGS, we use an initial learning rate of $0.1$ and employ a strong Wolfe line-search strategy for subsequent iterations. Except otherwise specified, the weight decay is set to $5\times 10^{-4}$ for all the experiments.
%}

\paragraph{Metrics for Measuring \NC\;During Network Training.} We measure \NC\; for the learned last-layer classifiers and features based on the properties presented in \Cref{sec:intro}. Some of the metrics are similar to those presented in \cite{papyan2020prevalence}. First, we define the global mean and class mean of the last-layer features $\Brac{ \mb h_{k,i} }  $ as 
%\zz{Not sure if it is a good idea to use $\star$ here since $\star$ means the global minimizer in \Cref{thm:global-minima}.}
\begin{align*}
    \mb h_G \;=\; \frac{1}{nK} \sum_{k=1}^K  \sum_{i=1}^n \vh_{k,i},\quad \ol{\mb h}_k \;=\; \frac{1}{n} \sum_{i=1}^n \vh_{k,i} \;(1\leq k\leq K).
\end{align*}

\begin{itemize}[leftmargin=*]
    \item \textbf{\emph{Within-class Variability Collapse for the Learned Features $\mb H$.}} We introduce the within-class and between-class covariance matrices as
\begin{align*}
    \mb \Sigma_W \;:=\; \frac{1}{nK} \sum_{k=1}^K \sum_{i=1}^n \paren{  \mb h_{k,i}  -  \ol{\mb h}_k } \paren{ \mb h_{k,i} - \ol{\mb h}_k }^\top   ,\quad \mb \Sigma_B \;:=\; \frac{1}{K} \sum_{k=1}^K \paren{ \ol{\mb h}_k - \mb h_G } \paren{ \ol{\mb h}_k - \mb h_G }^\top.
\end{align*}
Thus, we can measure the within-class variability collapse by measuring the magnitude of the between-class covariance $\mb \Sigma_B \in \bb R^{d \times d} $ compared to the within-class covariance $\mb \Sigma_W \in \bb R^{d \times d} $ of the learned features via
    \begin{align}\label{eq:NC1}
    \mc {NC}_1\;:=\;\frac{1}{K}\trace\parans{\mSigma_ W\mSigma_B^\dagger},
\end{align}
where $\mSigma_B^\dagger$ denotes the pseudo inverse of $\mSigma_B$.
\item \textbf{\emph{Convergence of the Learned Classifier $\mb W$ to a Simplex ETF.}} For the learned classifier $\mW \in\R^{K\times d}$, we quantify its closeness to a Simplex ETF up to scaling by % \begin{align}\label{eq:NC2}
% \mc {NC}_2\;:=\; \frac{\sqrt{K-1}}{K}  \norm{ \frac{K}{\norm{\mW }{F}^2} \mW \mW^{\top}\;-\; \frac{K}{K-1}  \paren{ \mb I_K - \frac{1}{K} \mb 1_K \mb 1_K^\top } }{F},
% \end{align}
\begin{align}\label{eq:NC2}
\mc {NC}_2\;:=\;  \norm{ \frac{\mW \mW^\top}{\norm{\mW \mW^\top}{F}} \;-\; \frac{1}{\sqrt{K-1}}  \paren{ \mb I_K - \frac{1}{K} \mb 1_K \mb 1_K^\top } }{F},
\end{align}
where we rescale the ETF in \eqref{eqn:simplex-ETF-closeness} so that $\frac{1}{\sqrt{K-1}}  \paren{ \mb I_K - \frac{1}{K} \mb 1_K \mb 1_K^\top } $ has unit energy (in Frobenius norm).
%where $\frac{\sqrt{K}}{\norm{\mW }{F}}\mW$ is the scaled version of $\mW$ such that it has the same energy (in Frobenius norm) with the Simplex ETF, as shown in \Cref{def:simplex-ETF} in the appendix.  
It should be noted that our metric $\mc {NC}_2$ combines two metrics used in \cite{papyan2020prevalence} to quantity to what extent the classifier approaches equiangularity and maximal-angle equiangularity.
\item \textbf{\emph{Convergence to Self-duality.}} Next, we measure the collapse of the learned features $\mb H$ to its dual $\mb W$. Let us define the centered class-mean matrix as
\begin{align*}
    \ol{\mb H} \;:=\; \begin{bmatrix}
\ol{\mb h}_1 - \mb h_G & \cdots & \ol{\mb h}_K - \mb h_G
\end{bmatrix} \in \bb R^{d \times K}.
\end{align*}
Thus, we measure the duality between the classifiers $\mb W$ and the centered class-means $\ol{\mb H}$ by
% \begin{align}\label{eq:NC3}
%     \mc {NC}_3\;:=\; \norm{ \frac{\mW^{\top}}{ \norm{\mW}{F}} - \frac{\ol{\mH} }{ \norm{\ol{\mH}}{F} } }{F}.
% \end{align}
%\zz{Maybe the following term}
\begin{align}\label{eq:NC3}
 \mc {NC}_3\;:=\; \norm{\frac{\mW \ol{\mH}}{\norm{\mW\ol{\mb H}}{F}}   \;-\; \frac{1}{\sqrt{K-1}}  \paren{ \mb I_K - \frac{1}{K} \mb 1_K \mb 1_K^\top} }{F}.
\end{align}
 
\item \textbf{\emph{Collapse of the Bias.}} In many cases, the global mean $\mb h_G$ of the features might not be zero,\footnote{For example, as discussed after \Cref{thm:global-minima} all the feature vectors in $\mb H$ would be nonnegative, because the nonnegative nonlinear operator ReLU has been applied at the end of the penultimate layer.} and the bias term $\vb$ would compensate for the global mean $\mb h_G$ in the sense that 
\begin{align*}
    \mb W \mb h_{k,i} + \mb b\;=\;  \mb W \paren{\mb h_{k,i} - \mb h_G} + \underbrace{ \mb W\mb h_G +  \mb b}_{= \mb 0}.
\end{align*}
Thus, we capture this collapsing phenomenon by measuring 
\begin{align}\label{eq:NC4}
    \mc {NC}_4 \;:=\; \norm{\vb + \mW \vh_G}{2}. 
\end{align}
\end{itemize}

%\js{$\Sigma_W$ is slightly confusing, because it seems to refer to the between-class covariance of the classifier, W, instead of the features. This is somewhat confusing when looking at the results in Sec 4.3 (where W is fixed). } \zz{It should be $\mSigma_W$ with $W$ (not bold) representing ``within-class". But we can change to other variables.} given that $\norm{\frac{K}{K-1}  \paren{ \mb I_K - \frac{1}{K} \mb 1_K \mb 1_K^\top }}{F} = \frac{K}{\sqrt{K-1}}$

%from the following perspectives. Let $\mSigma_W$ be the within-class covariance of the last-layer activations (of the training data), and let $\mSigma_B$ be the corresponding between-class covariance. Following , 

%For any $\mW\in\R^{K\times d}$, we quantify its closeness (the scaled version) to a Simplex ETF by

%where $\frac{\sqrt{K}}{\norm{\mW}{F}}\mW$ is the scaled version of $\mW$ such that it has the same energy (Frobenius norm) as the Simplex ETF, and $\norm{\frac{K}{K-1}  \paren{ \mb I_K - \frac{1}{K} \mb 1_K \mb 1_K^\top }}{F} = \frac{K}{\sqrt{K-1}}$. Our metric, $\mc {NC}_2$, combines the two measures that were used in \cite{papyan2020prevalence} to quantity to what extent the classifier approaches equiangularity and maximal-angle equiangularity. By denoting $\dot{\mH}\in\R^{d\times K}$ the matrix consisting of the centered \js{you mean zero-mean, right?} train class-means, we then measure the distance between the classifiers and the centered class-means as in \cite{papyan2020prevalence} by

% \[
% \|\mW\overline\mH + \vb - ETF\|
% \]

%(or $w_t = \frac{1}{K}\sum_{k=1}^K \norm{\vw^k}{2}$)
\begin{figure}[t]
    \centering
    \subfloat[$\mc {NC}_1$ (MNIST)]{\includegraphics[width=0.24\textwidth]{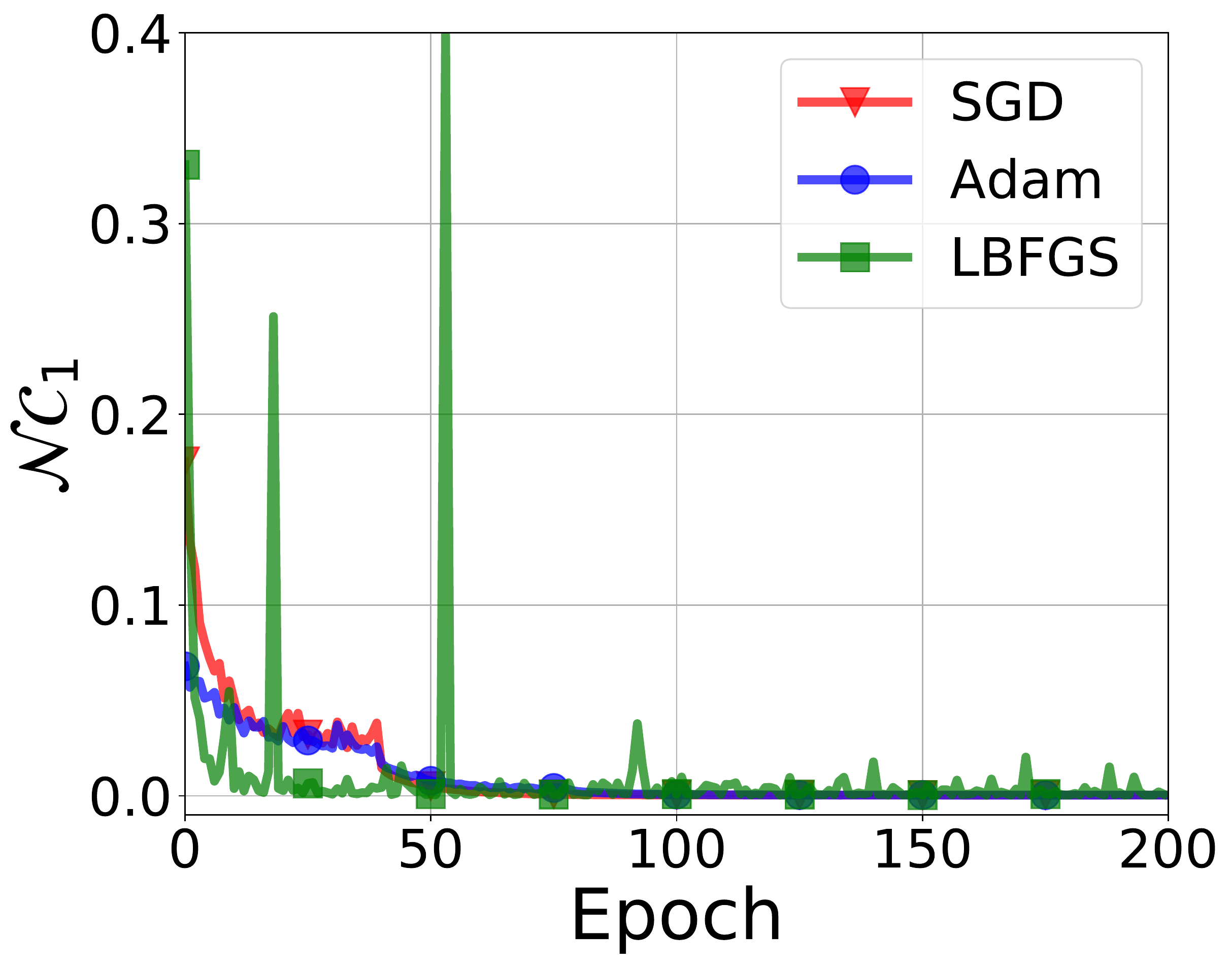}} \
    \subfloat[$\mc {NC}_2$ (MNIST)]{\includegraphics[width=0.24\textwidth]{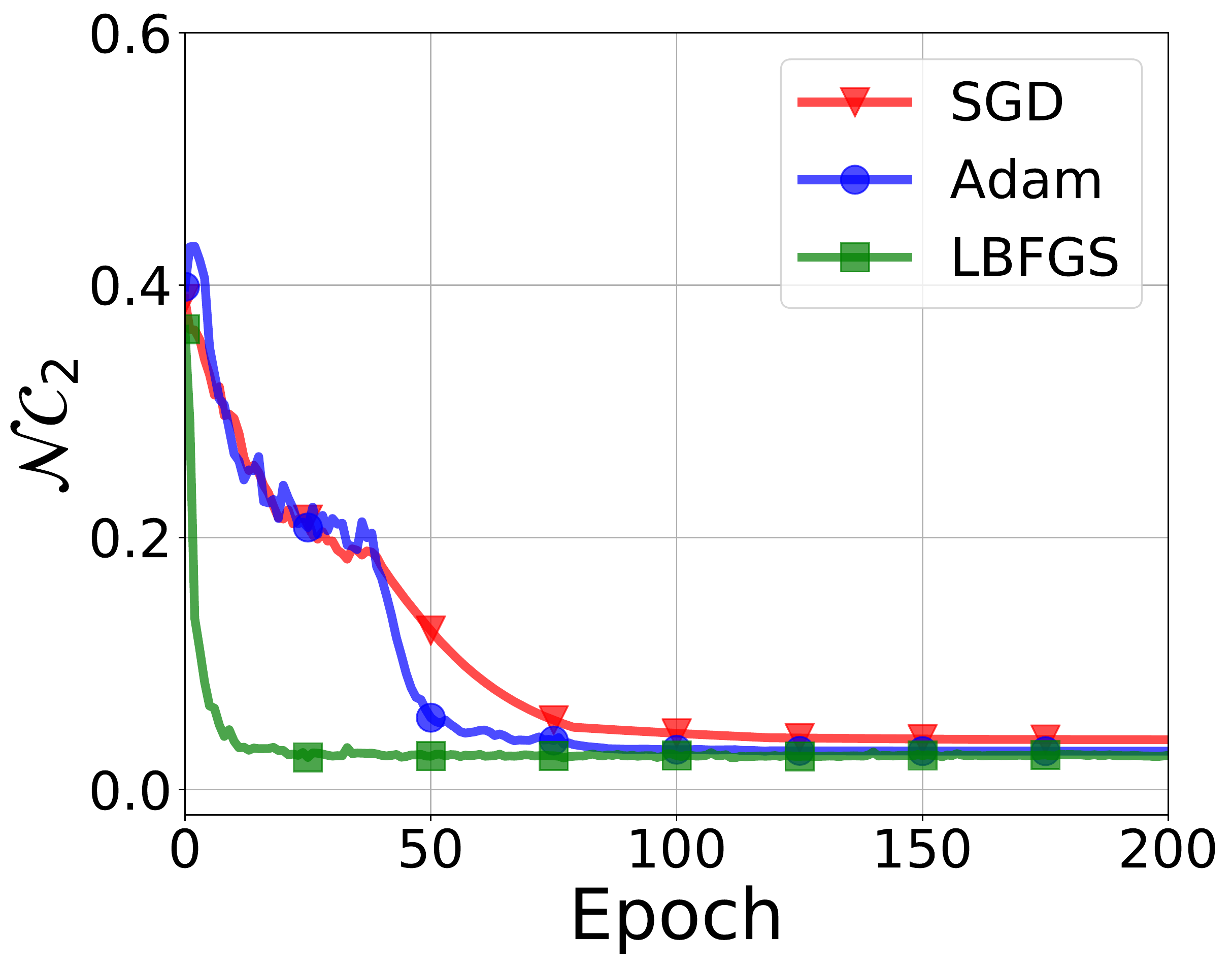}} \
    \subfloat[$\mc {NC}_3$ (MNIST)]{\includegraphics[width=0.24\textwidth]{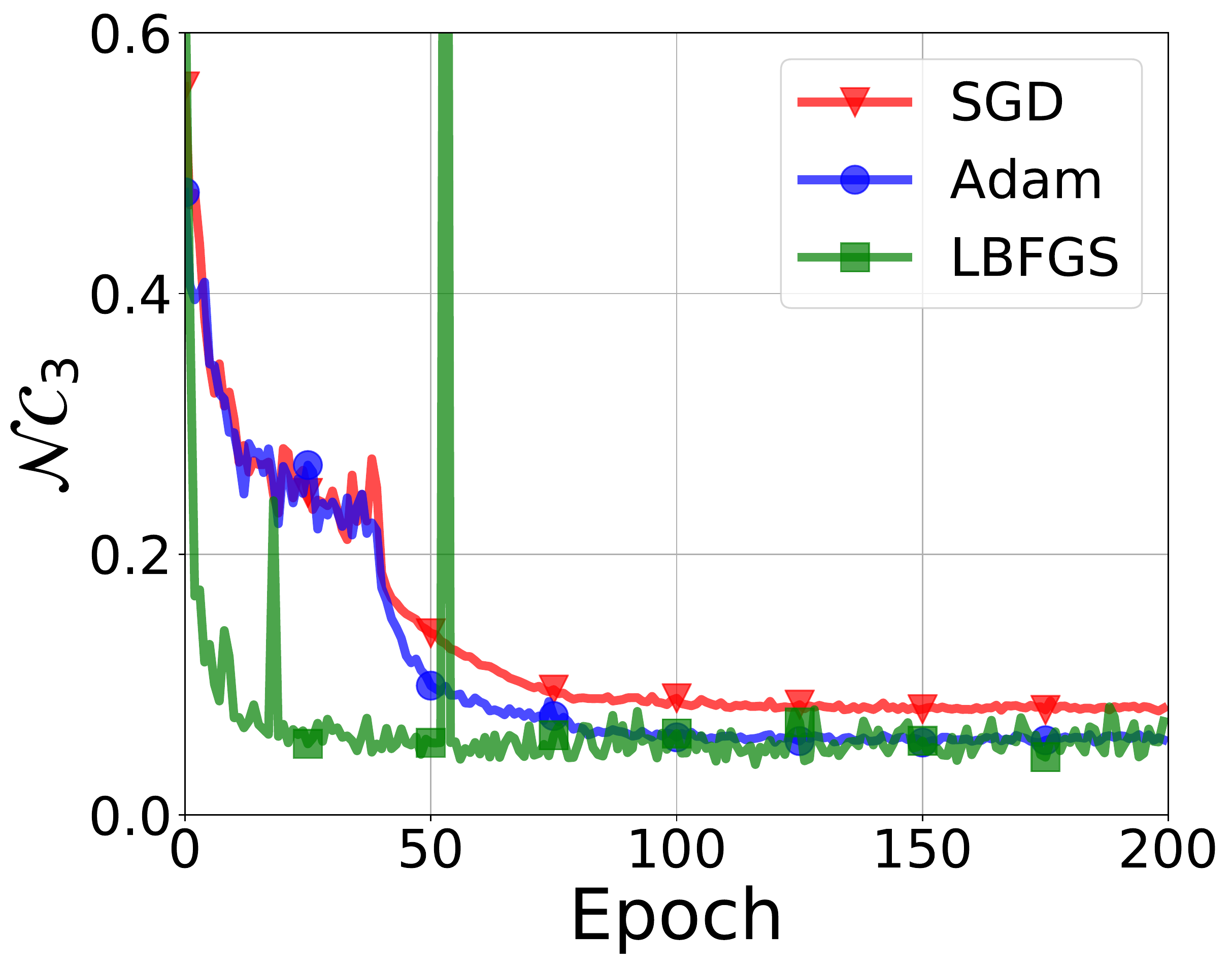}} \
    \subfloat[$\mc {NC}_4$ (MNIST)]{\includegraphics[width=0.24\textwidth]{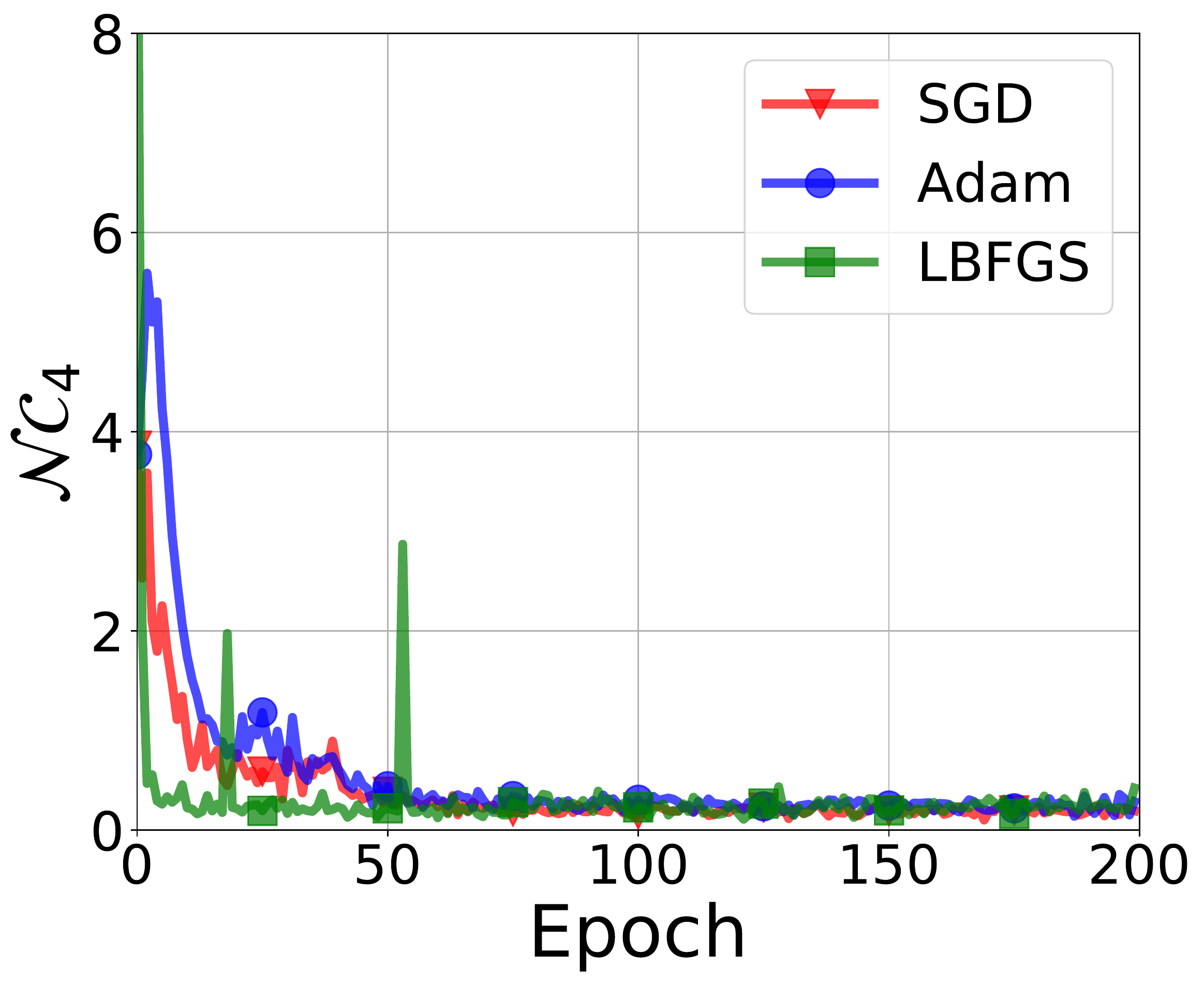}} \\
    \subfloat[$\mc {NC}_1$ (CIFAR10)]{\includegraphics[width=0.24\textwidth]{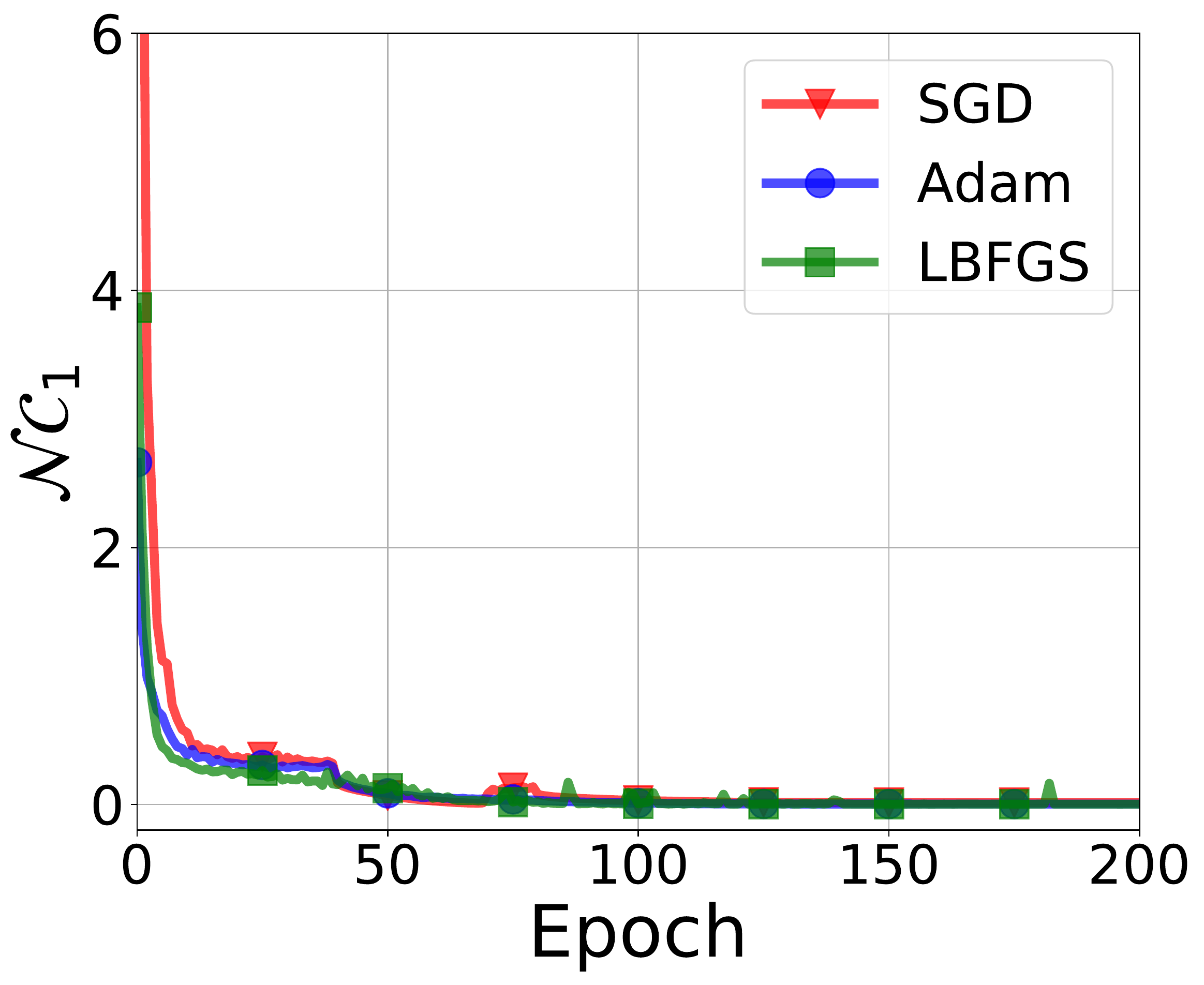}} \
    \subfloat[$\mc {NC}_2$ (CIFAR10)]{\includegraphics[width=0.24\textwidth]{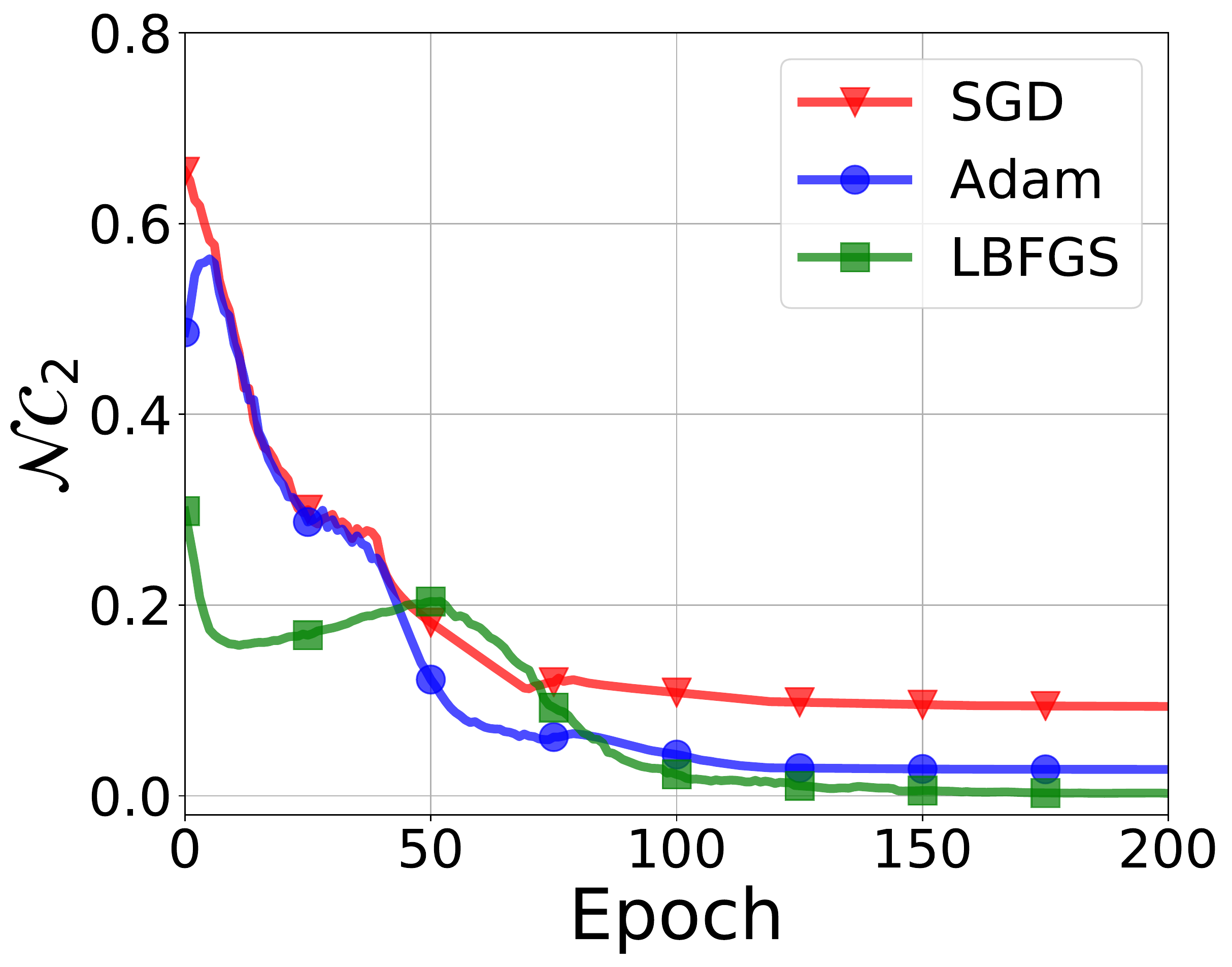}} \
    \subfloat[$\mc {NC}_3$ (CIFAR10)]{\includegraphics[width=0.24\textwidth]{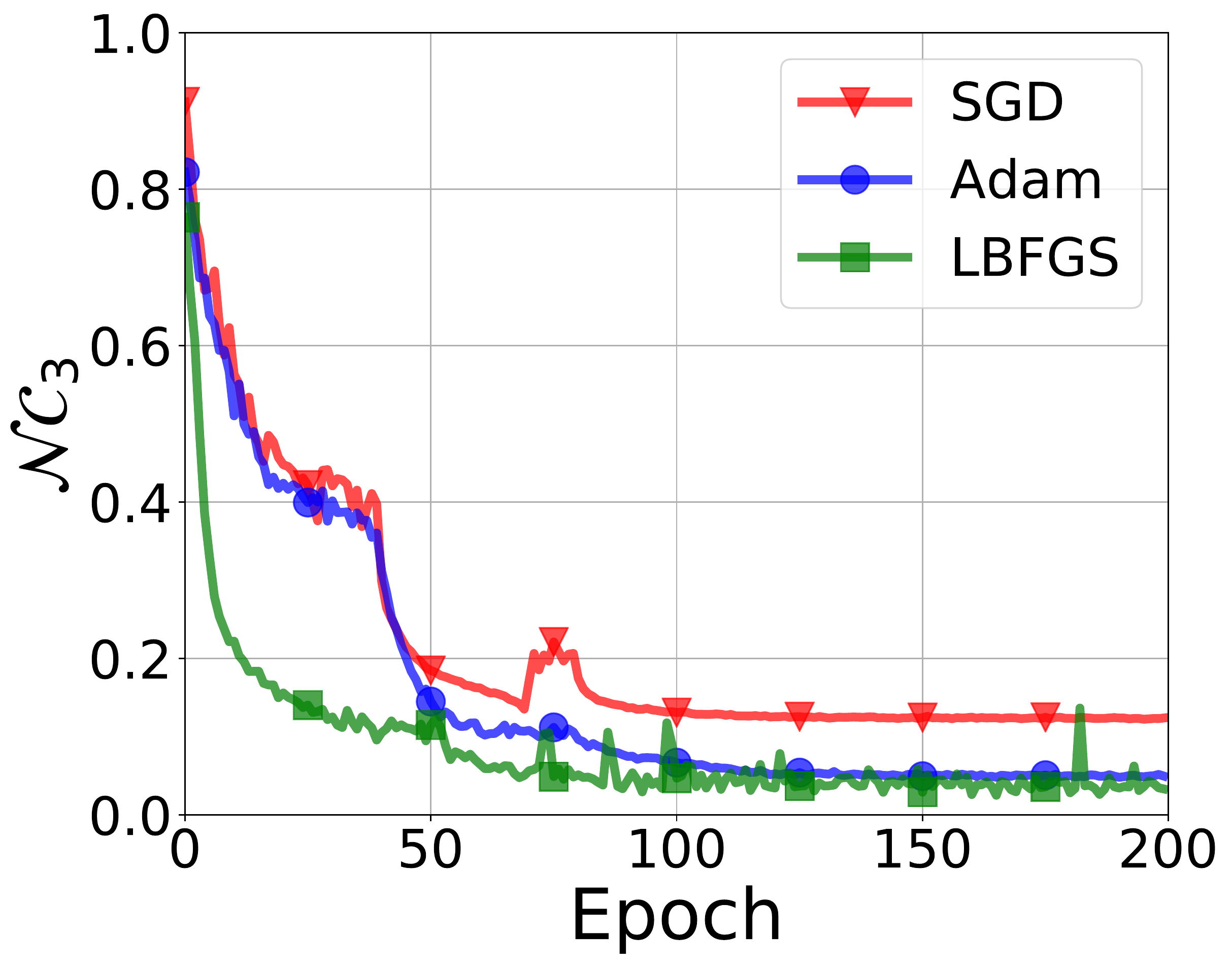}} \
    \subfloat[$\mc {NC}_4$ (CIFAR10)]{\includegraphics[width=0.24\textwidth]{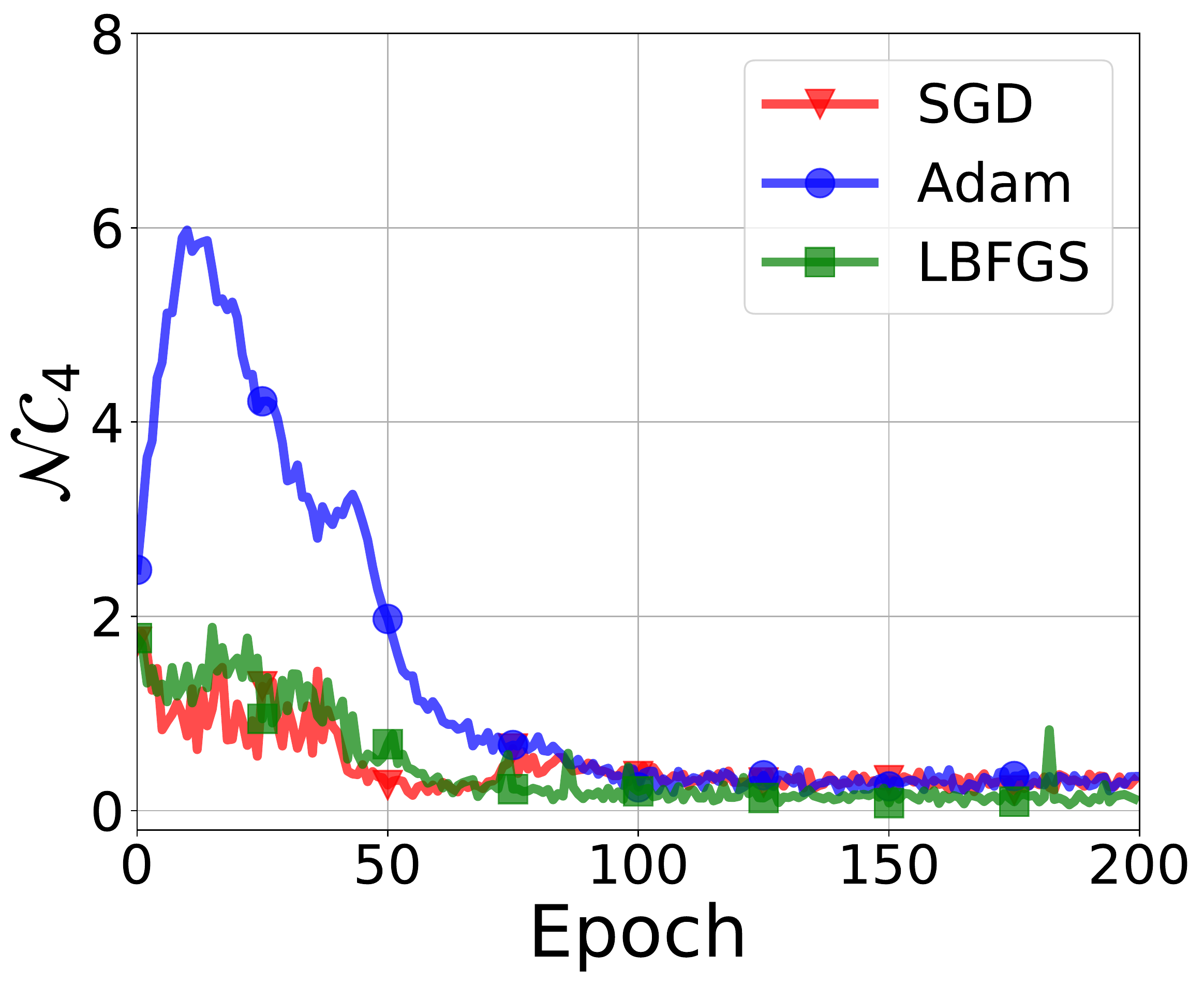}}
    \caption{\textbf{Illustration of \NC\;across different training algorithms} with ResNet18 on MNIST (top) and CIFAR10 (bottom). From the left to the right, the plots show the four metrics, $\mc {NC}_1, \mc {NC}_2, \mc {NC}_3$, and $\mc {NC}_4$, for measuring \NC,  defined in \eqref{eq:NC1}, \eqref{eq:NC2}, \eqref{eq:NC3}, and \eqref{eq:NC4}, respectively. %\zz{To add the figure for $\mc {NC}_3$. Also, is it possible to change the Ylabel to $\mc {NC}_1, \mc {NC}_2, \mc {NC}_3$, and $\mc {NC}_4$? Maybe it's better to use the same line styles (solid lines) for all the figures, if the vision quality is not that bad.} \revise{Tianyu: One figure to illustrate the relation between W and H will be added later. Jinxin tried a variant of LBFGS, which may be added later. } 
    }
    \label{fig:NC-MNIST-CIFAR10}
\end{figure}

\subsection{The Prevalence of \NC\;Across Different Optimization Algorithms}\label{sec:exp-NC}

%one figure to illustrate the relation between $W$ and $H$. Add the figure for $\|\vb + W\alpha\|_2$ 
%Add a figure for SGD with different learning rates 

Our result in \Cref{thm:global-geometry} reveals a benign global landscape for the optimization of neural networks under the unconstrained feature model, which further implies that the prevalence of \NC\ is independent of the choice of particular training methods. In this subsection, we validate our theoretical discovery on modern network architectures and standard datasets. We show different types of training methods (e.g., SGD, Adam, and LBFGS) all achieve \NC\ during the terminal phase of training. In \Cref{fig:NC-MNIST-CIFAR10} and \Cref{fig:Accuracy-MNIST-CIFAR10}, we run all the experiments with ResNet18 on MNIST and CIFAR10 without modification. The results lead to the following observations:
\begin{itemize}[leftmargin=*]
    \item {\textbf{\emph{\NC\;is Algorithm Independent}.}} 
    %\js{why the 'almost'? ie from these results, there's no clear difference between them, right?} 
    \Cref{fig:NC-MNIST-CIFAR10} shows the evolution of the four metrics $\mc {NC}_1, \mc {NC}_2, \mc {NC}_3$, and $\mc {NC}_4$, for measuring \NC\ as training progresses, defined in \eqref{eq:NC1}, \eqref{eq:NC2}, \eqref{eq:NC3}, and \eqref{eq:NC4}, respectively. We consistently observe that all four metrics collapse to zero, trained by different types of algorithms -- SGD, Adam, and LBFGS. This implies that \NC\ happens regardless of the choice of training methods. The last-layer features learned by the network are always maximally linearly separable, and correspondingly the last-layer classifier is a perfect linear classifier for the features.
    
  %  These results imply that the last-layer features learned by the network are maximally linearly separable, and the last-layer classifer  the last-layer classifiers of neural networks always converge These results imply that neural networks always converge to a classifier in which the peeled-off layers learn maximally separated features, and the final layer learns a max-margin linear classifier for the learned features.
    %from the first column that the within-class variation collapses, from the second column that the weights of the classifier converge to a Simplex ETF, from the third column that the classifiers converge to the training class-means, and from the forth column that the bias compensates for the global mean of the features. Moreover, such neural collapse phenomena occur for all the three learning algorithms, SGD, Adam and LBFGS. These results illustrate the fact that neural networks always converge to a classifier in which the peeled-off layers learn maximally separated features, and the final layer learns a max-margin linear classifier for the learned features.
    \item {\textbf{\emph{Relationship between \NC\;and Generalization.}}} %The results in \Cref{fig:NC-MNIST-CIFAR10} manifest the fact that neural networks always converge to a perfect linear classifier in which the peeled-off layers learn maximally separated features and the final layer learns a maximal margin linear classifier for the learned features.
    \Cref{fig:Accuracy-MNIST-CIFAR10} depicts the learning curves in terms of both the training and test accuracy for all three optimization algorithms (i.e., SGD, Adam, and LBFGS). These experimental results\footnote{Note that here we use the default version of LBFGS in PyTorch. %\zz{Please verify if this is correct}. 
    Other variants of quasi-Newton methods \cite{bollapragada2018progressive,ren2021kronecker} may give different or better generalization performance.} show that different training algorithms learn neural networks with notably different generalization performances, even though all of them exhibit \NC. Since \NC\;is only a characterization of the training data, it does not directly translate to unseen data. As the network is highly overparameterized, there are infinitely many networks that produce the same $\mb H$ with \NC\ for a particular training dataset, but with different generalization performance. This suggests that study generalization needs to consider the algorithmic bias and the learned weights for the feature $\mb H$. A thorough investigation between \NC\; and generalization is the subject of future work.
    %Therefore, the universality of \NC\;actually shows that such a phenomenon cannot fully explain network generalization. A pertinent study of generalization will necessarily require a scrutiny into the peeled-off layers, where different optimization method imposes different implicit bias on the learned network parameters.
\end{itemize}

%First, we corroborate \Cref{thm:global-geometry} 

%we confirm the \emph{prevalence} of \NC\;in network training, which is independent of 

%Our result in \Cref{thm:global-geometry} reveals a benign global landscape for the optimization of neural networks under the unconstrained feature model. This benign landscape implies that neural collapse is \emph{universal}, in the sense that it occurs not only with gradient descent but also with other local search algorithms in general. %This is in sharp contrast to work on implicit algorithmic bias, which emphasizes the pivotal role of optimization algorithm in determining the solution that the network converges to. In the following experiment, we validate the universality of neural collapse by showing that \NC~ indeed occurs for networks optimized with SGD, Adam and LBFGS. %We first use the original datasets MNIST and CIFAR10 without modification, and we train a ResNet18 with the three different learning algorithms mentioned above. The results lead us to make the following observations:

%\paragraph{True Labels} 

\begin{figure}[t]
    \centering
    \subfloat[MNIST: Training (left) vs. Testing (right)]{\includegraphics[width=0.24\textwidth]{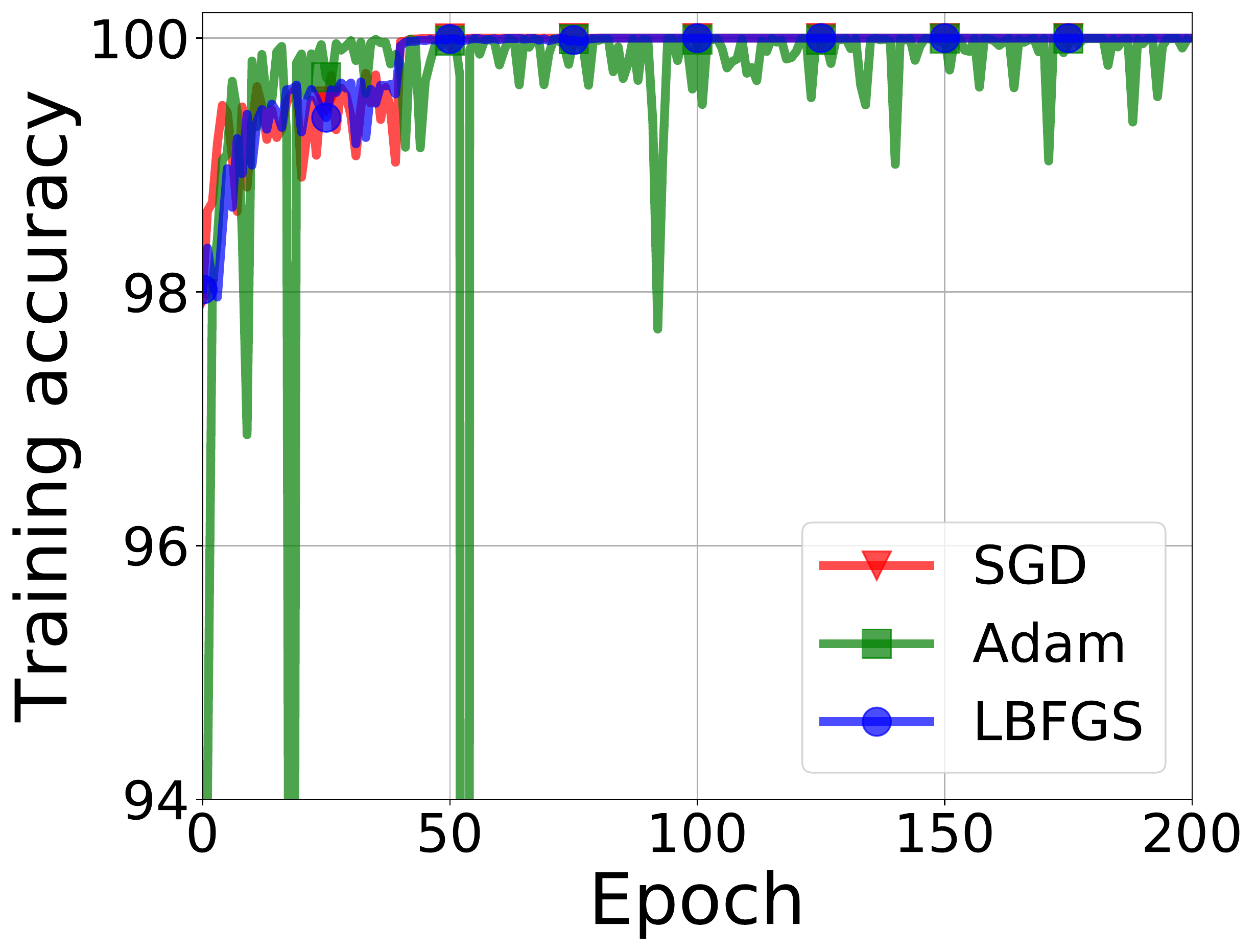} \includegraphics[width=0.24\textwidth]{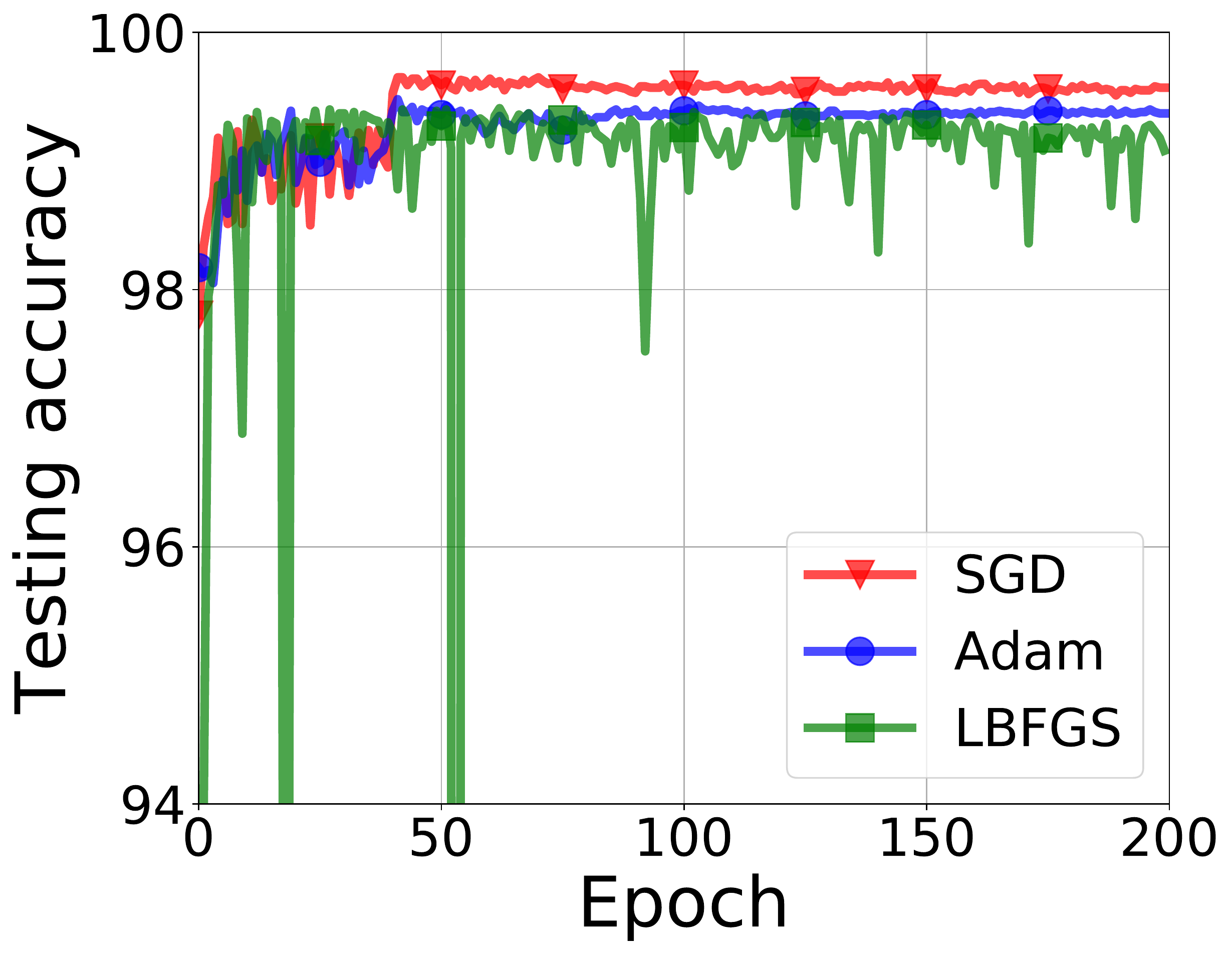}} \
    \subfloat[CIFAR10: Training (left) vs. Testing (right)]{\includegraphics[width=0.24\textwidth]{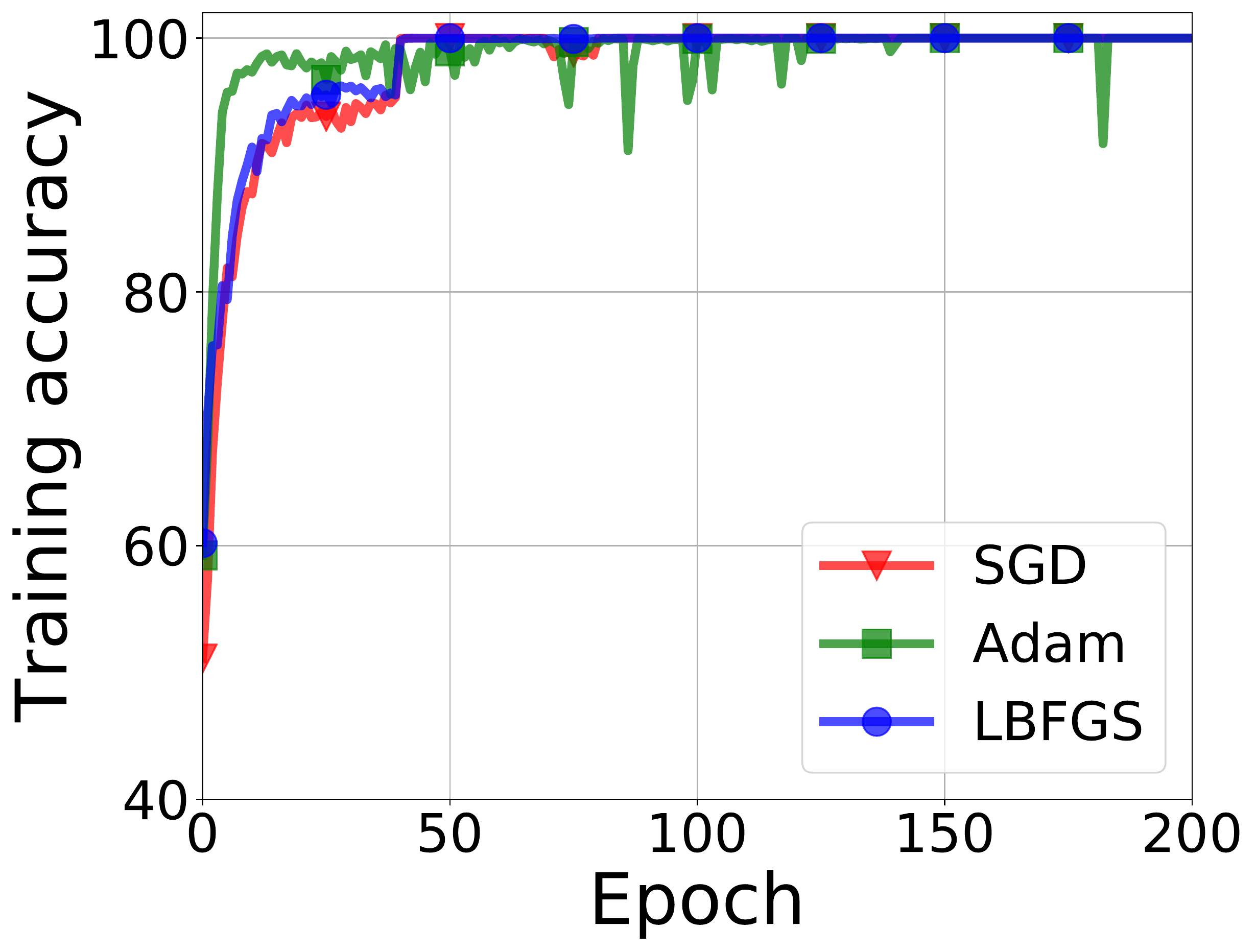}\includegraphics[width=0.24\textwidth]{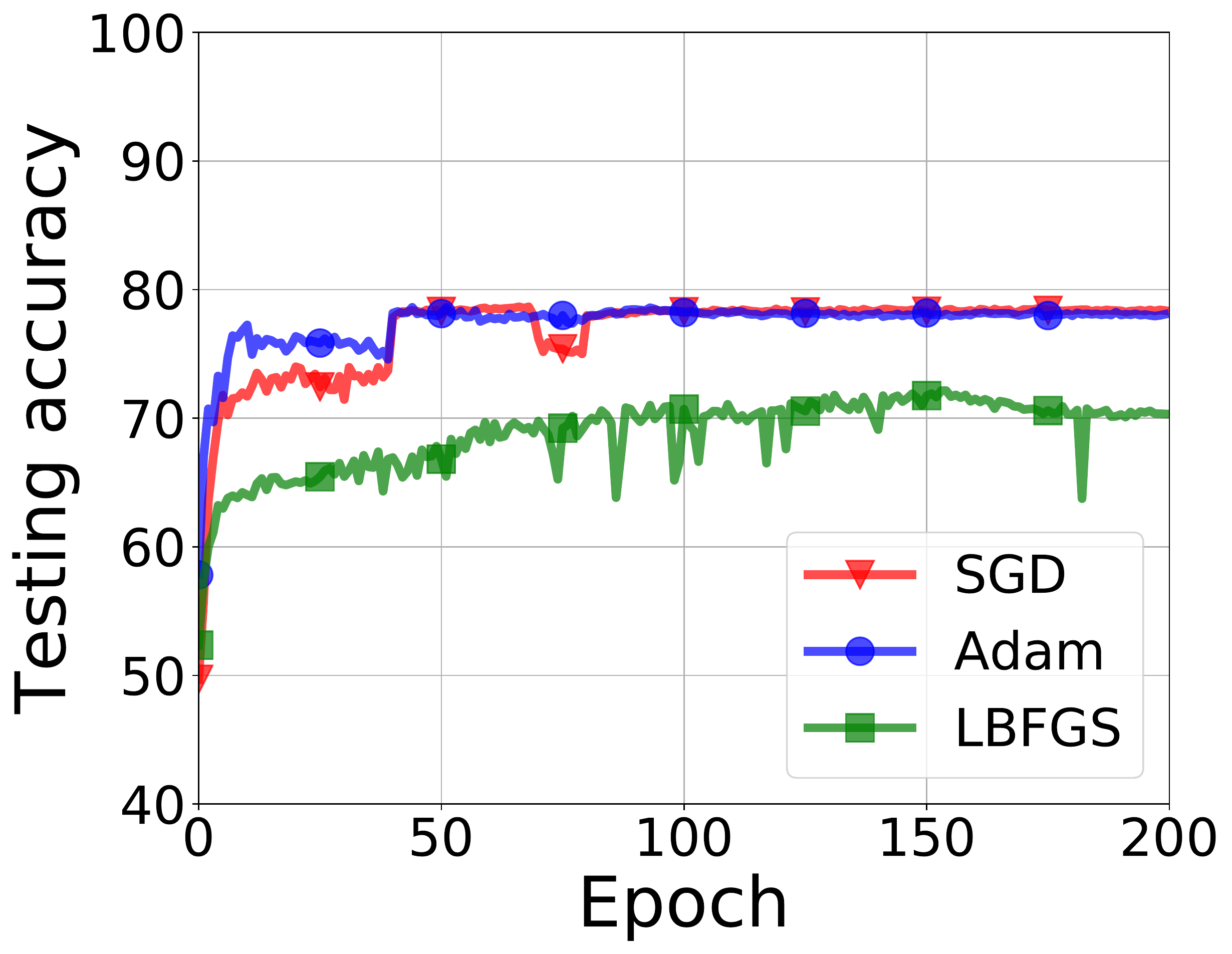}}
    \caption{\textbf{Illustrations of training and test accuracy} for three different training algorithms (i.e., SGD, Adam, and LBFGS) with ResNet18 on MNIST and CIFAR10.  %\zz{Also add the training accuracy to illustrate all the algorithms achieve (almost) zero training error?}
    %\zz{Remove the plot for $Std(b_k)$.}%\zz{Does this case have the ReLu at the end of the penultimate layer? If yes, for the bias term, could you also plot $\|\vb + \mW\valpha\|_2^2$, where $\valpha$ is the global mean of the entire features.}
    }
    \label{fig:Accuracy-MNIST-CIFAR10}
\end{figure}

\subsection{The Validity of \eqref{eq:obj} Based on Unconstrained Feature Models for \NC}\label{subsec:exp-validity}

%\paragraph{Random Labels.} 
The premise of our global landscape analysis of \eqref{eq:obj} for studying \NC\ in deep neural networks is based upon the unconstrained feature model introduced in \Cref{sec:layer-peeled-model}, which simplifies the network by synthesizing the first $L-1$ layers as a universal approximator that generates a simple decision variable for each training sample. Here, we demonstrate through experiments that such a simplification is reasonable for overparameterized networks, at least sufficient for characterizing \NC\ in practical network training.

\paragraph{The Validity of Unconstrained Feature Models.} First, we demonstrate that overparameterization is crucial for \NC\ phenomenon during network training, while the input plays minimal influence. These provide numerical supports on treating $\mb H$ as a free optimization variable for studying \NC. In particular, we modify the training dataset CIFAR10 by replacing {\em all} the correct label for each training sample with a \emph{random} counterpart. 
For the ease of studying the effects of model sizes (i.e., overparameterization) on \NC, besides ResNet18 networks\footnote{Here, for ResNet18 we adopt the method in \cite{yang2020rethinking} to change its network width.}, we train 4-layer multilayer perceptrons (MLP) of different network width using SGD with learning rate $0.01$ and weight decay $10^{-4}$. We report the corresponding \NC\ behaviors in \Cref{fig:random_label}, which shows how training misclassification rate and \NC\ evolve over epochs of training for networks with different widths. As the network is sufficiently large, it has enough capacity to memorize the training data and achieves zero training error, which is consistent with the observations in \cite{zhang2016understanding}. Moreover, we find from \Cref{fig:random_label} that the training accuracy is highly correlated with \NC\ in the sense that a larger network (i.e., larger width) tends to exhibit severe \NC\ and achieves smaller training error. In other words, while the emerging consensus is that the network can memorize any training data, our results show that such memorization happens in a particular way -- the features are maximally separated, followed by a max-margin linear classifier.

%\footnote{We also conducted experiments on ResNet18, where the experimental observations are similar and are left in the appendix.}
%Also, to control and study the effect of model sizes on \NC, we use here a multilayer perceptron (MLP) and ResNet classifiers. \js{I'm confused: I thought that these were going to explore results on both MLP and ResNet, but in Fig 6 I only see MLP} In this part, in contrast to the the previous experiments (that involved true labels), we choose to use SGD as an optimizer and a small weight decay term of $1e-4$, which converges faster than other optimization algorithms. We modify the network width and depth and report the corresponding  \NC\ behaviors in \Cref{fig:random_label}. %\zz{To describe the detail of the MLP.} \js{Figure 6 appears before Fig 5}

\begin{figure}[t]
    \centering
    % \subfloat[$\mc {NC}_1$ (MLP)]{\includegraphics[width=0.24\textwidth]{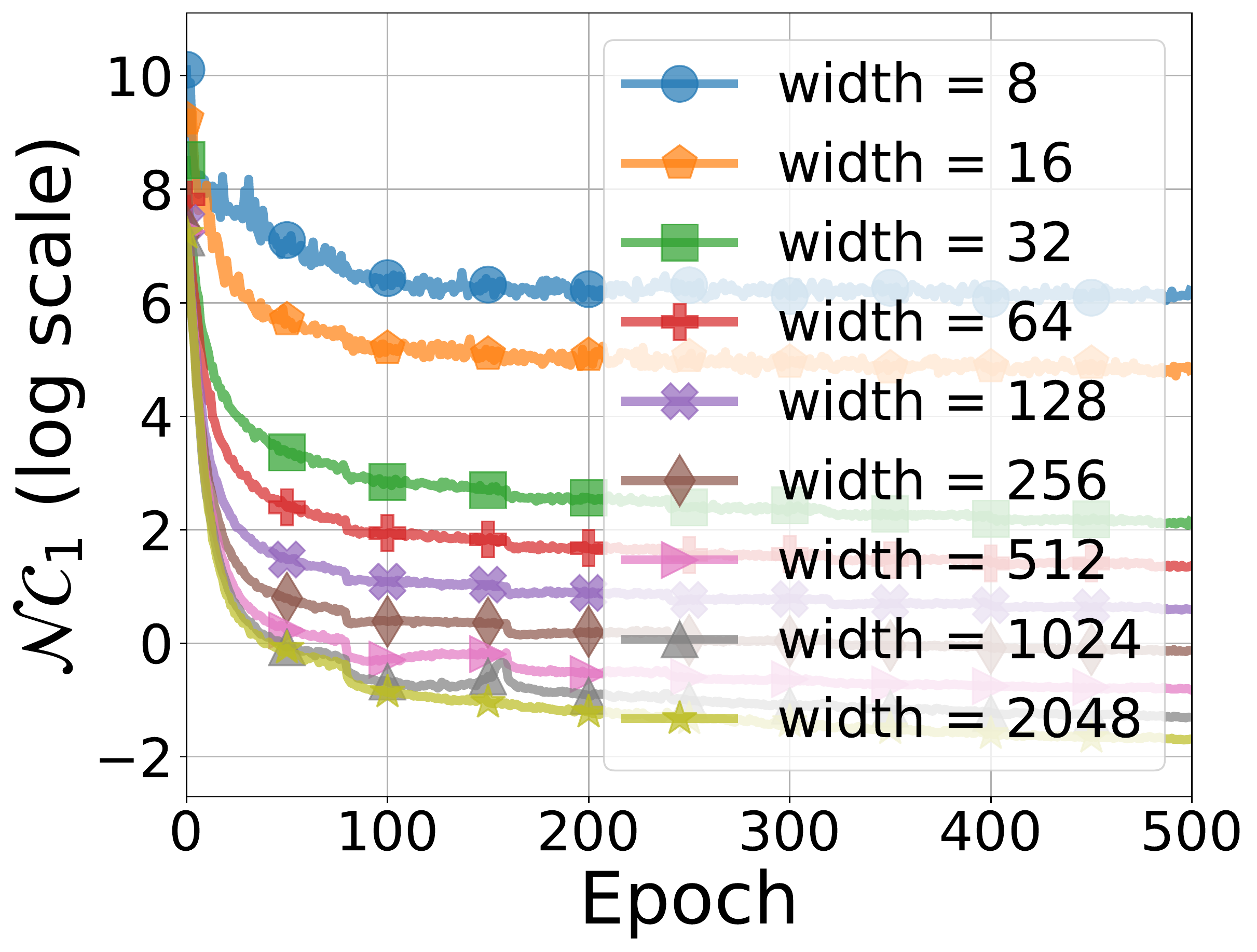}} \
    % \subfloat[$\mc {NC}_2$ (MLP)]{\includegraphics[width=0.24\textwidth]{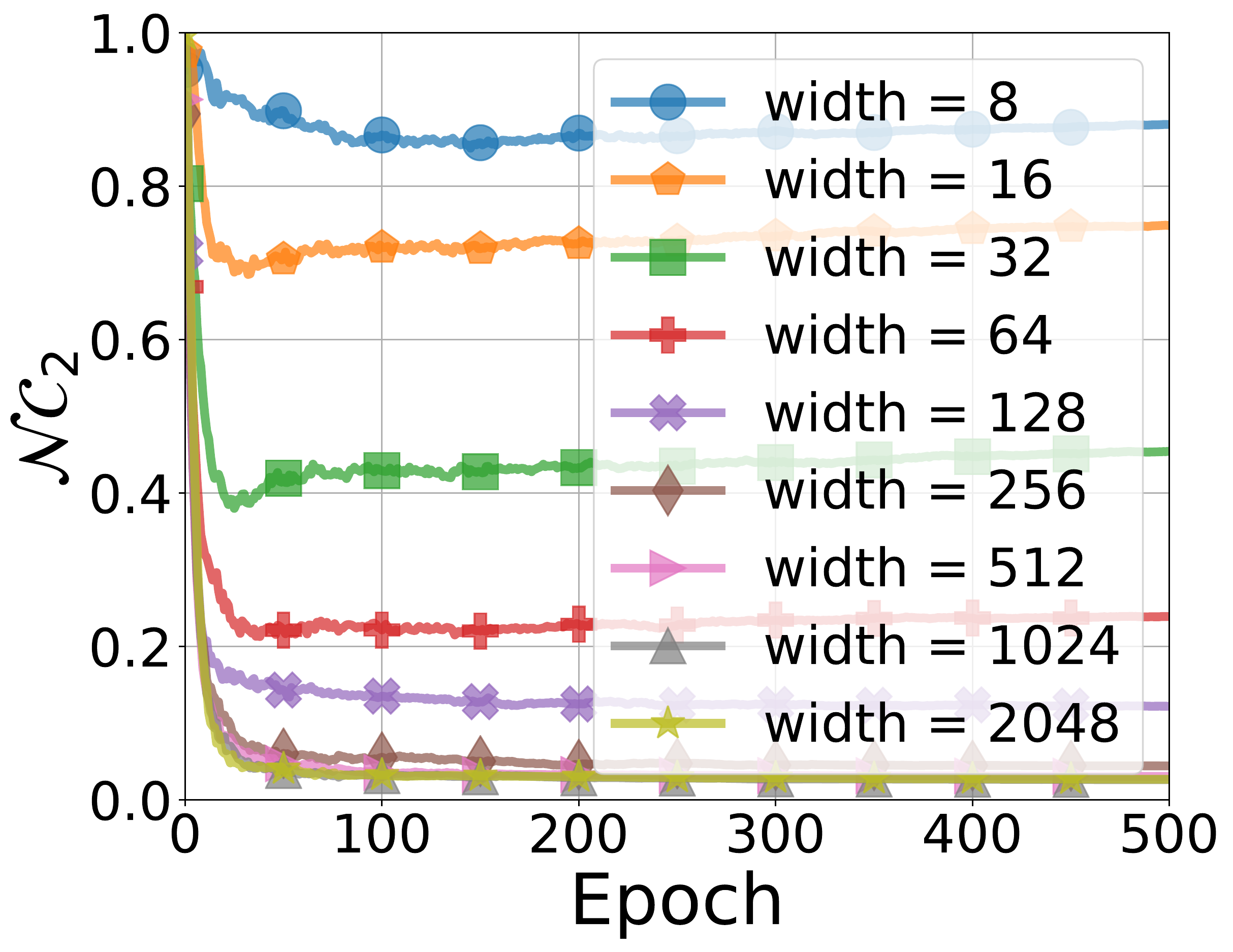}} \
    % \subfloat[$\mc {NC}_3$ (MLP)]{\includegraphics[width=0.24\textwidth]{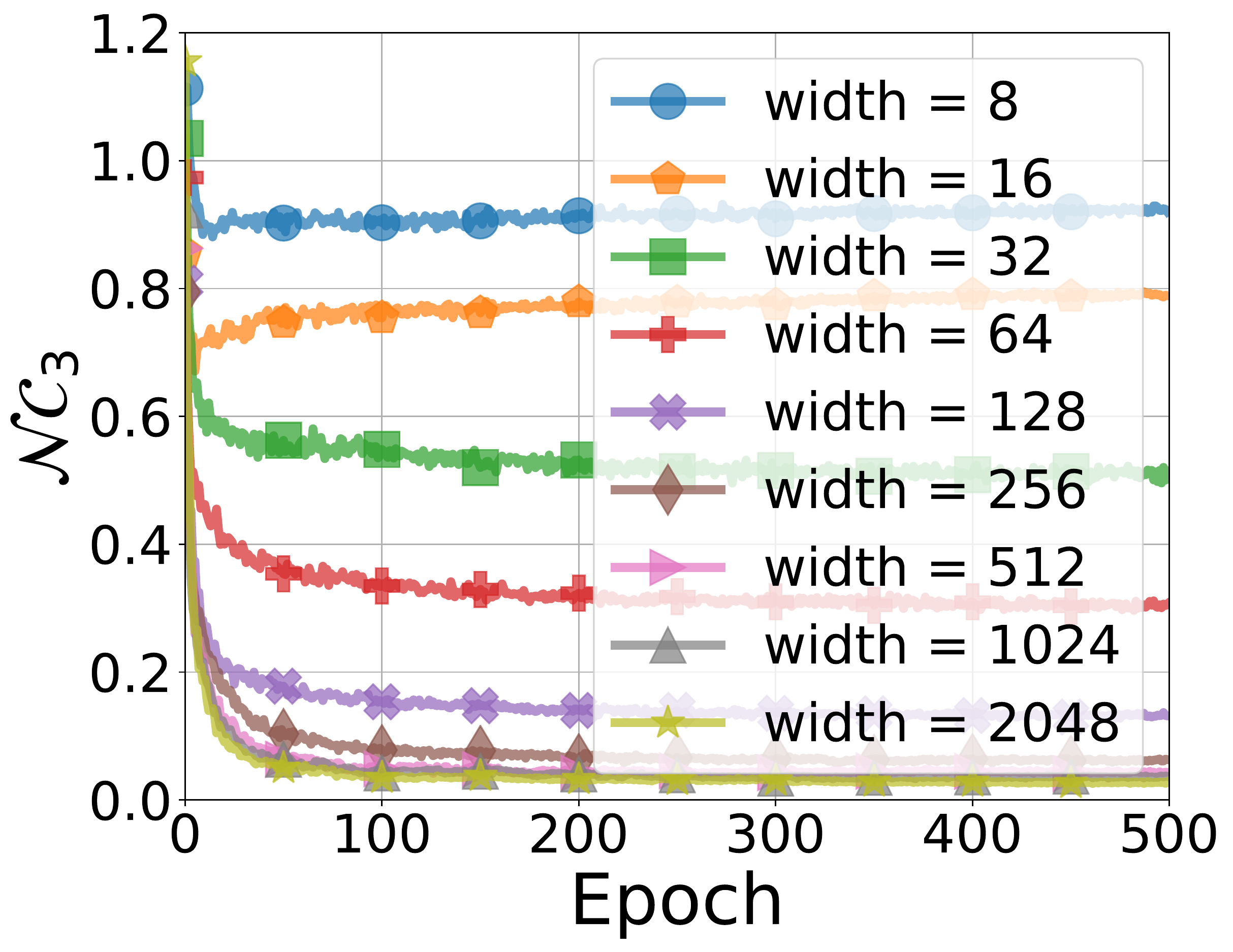}} \
    % \subfloat[Training Error]{\includegraphics[width=0.24\textwidth]{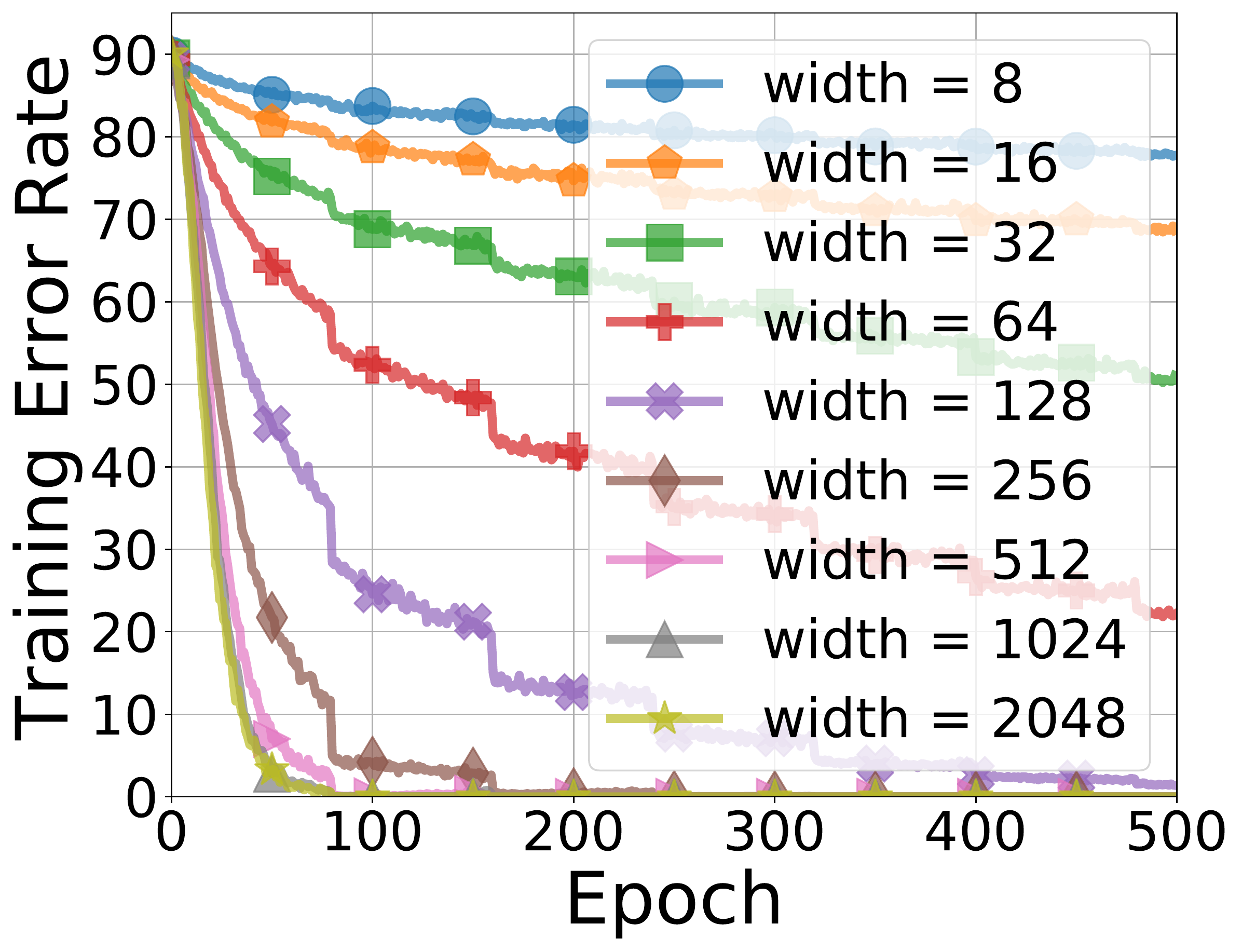}} \\
    \subfloat[Random Label CIFAR10-MLP (from left to right): $\mc {NC}_1$ (log scale), $\mc {NC}_2$, $\mc {NC}_3$, Training Error Rate]
        {\includegraphics[width=0.24\textwidth]{figs/random_label/rc_collapse_NC1.pdf} \
        \includegraphics[width=0.24\textwidth]{figs/random_label/rc_ETF_NC2.pdf} \
        \includegraphics[width=0.24\textwidth]{figs/random_label/rc_W-H_NC3.pdf} \
        \includegraphics[width=0.24\textwidth]{figs/random_label/rc_train_error.pdf}} \\
    
    \subfloat[Random Label CIFAR10-ResNet18 (from left to right): $\mc {NC}_1$ (log scale), $\mc {NC}_2$, $\mc {NC}_3$, Training Error Rate]
        {\includegraphics[width=0.24\textwidth]{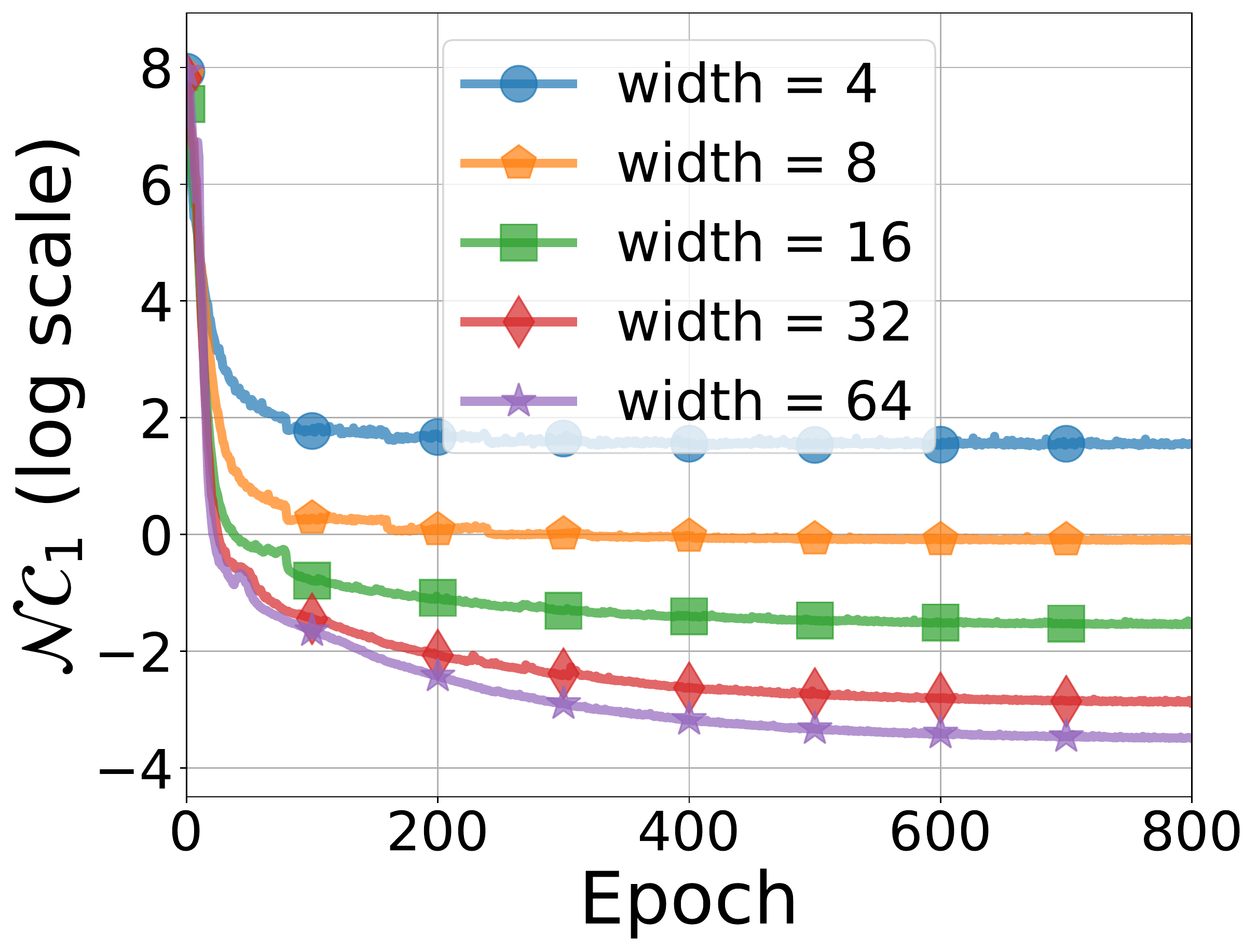} \
        \includegraphics[width=0.24\textwidth]{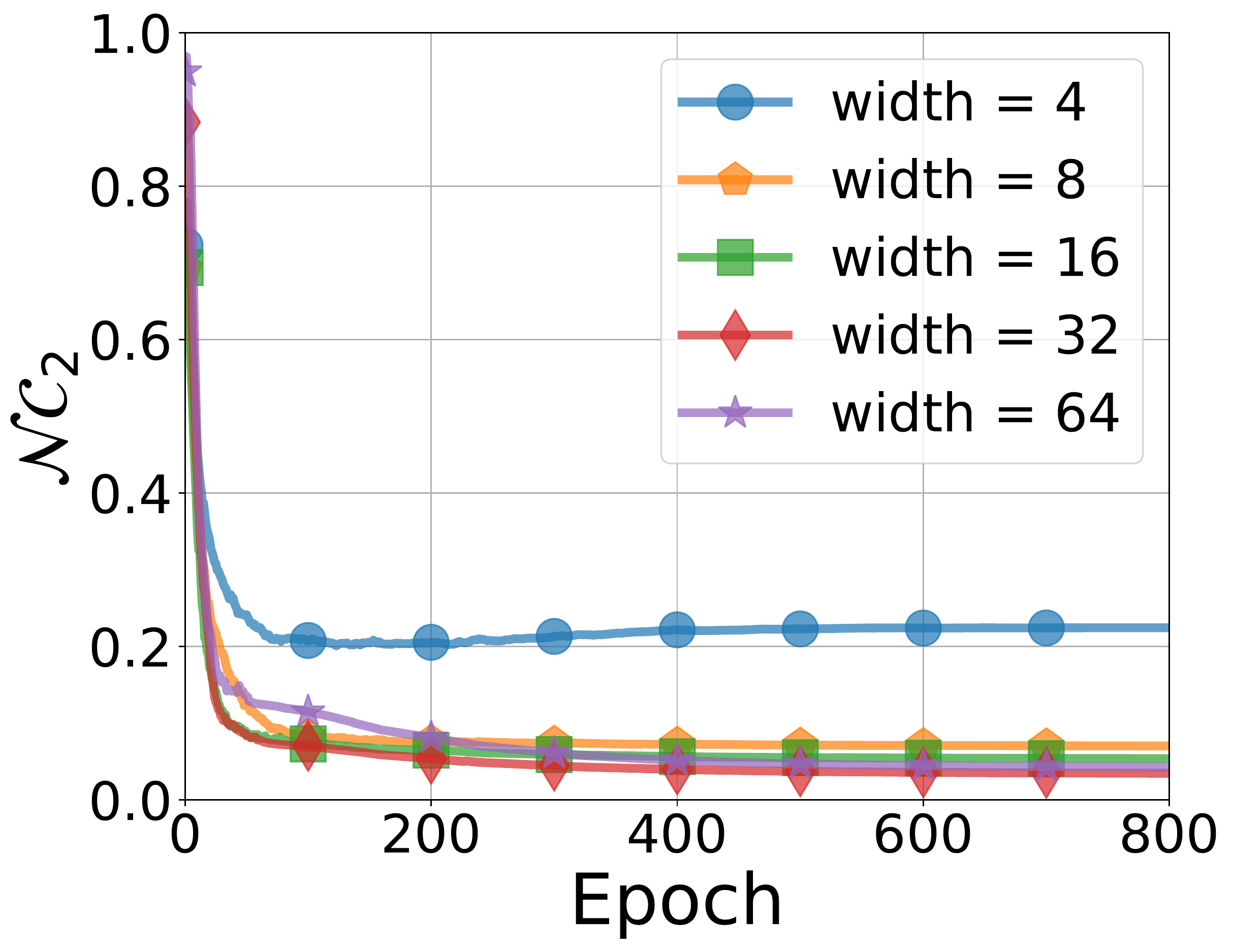} \
        \includegraphics[width=0.24\textwidth]{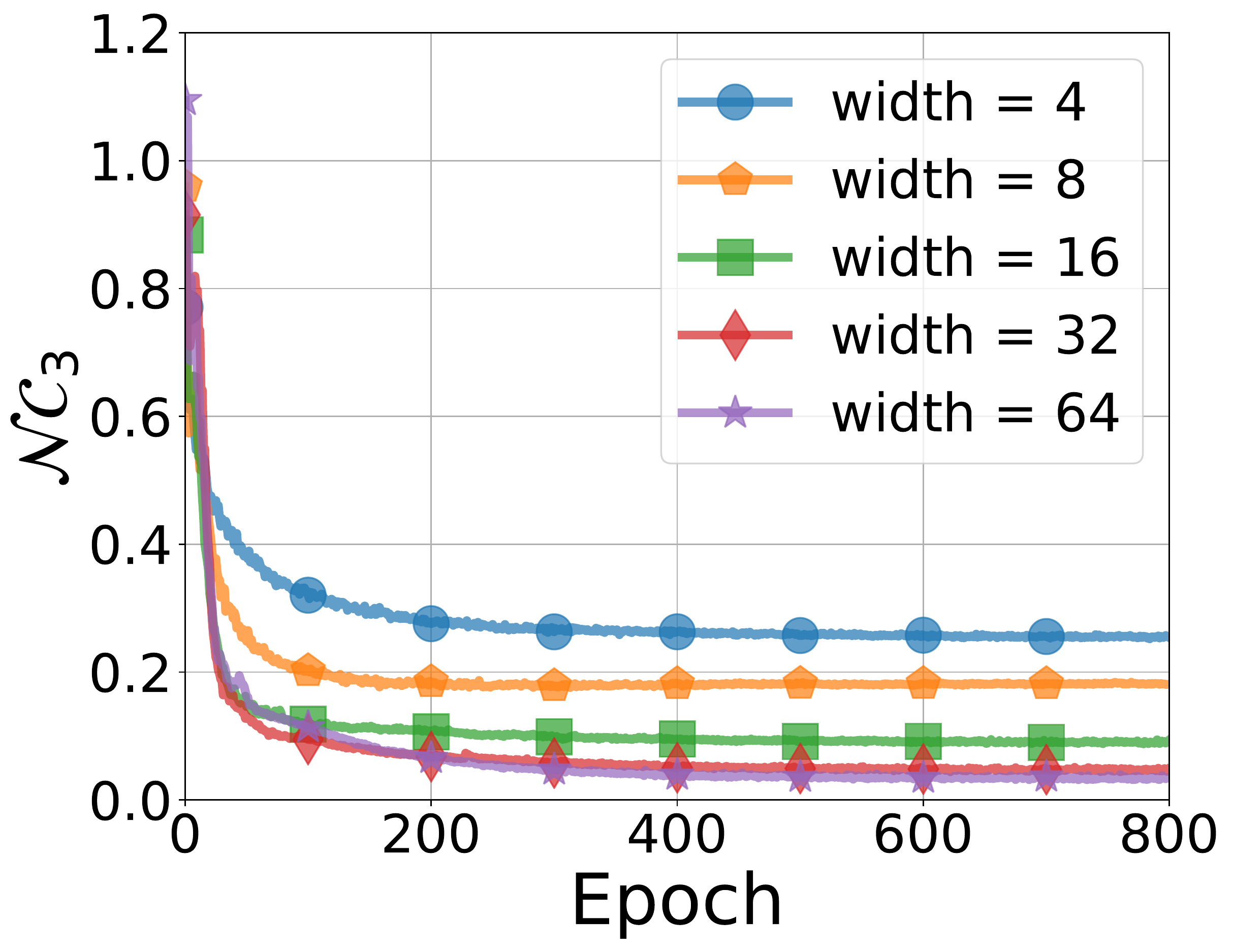} \
        \includegraphics[width=0.24\textwidth]{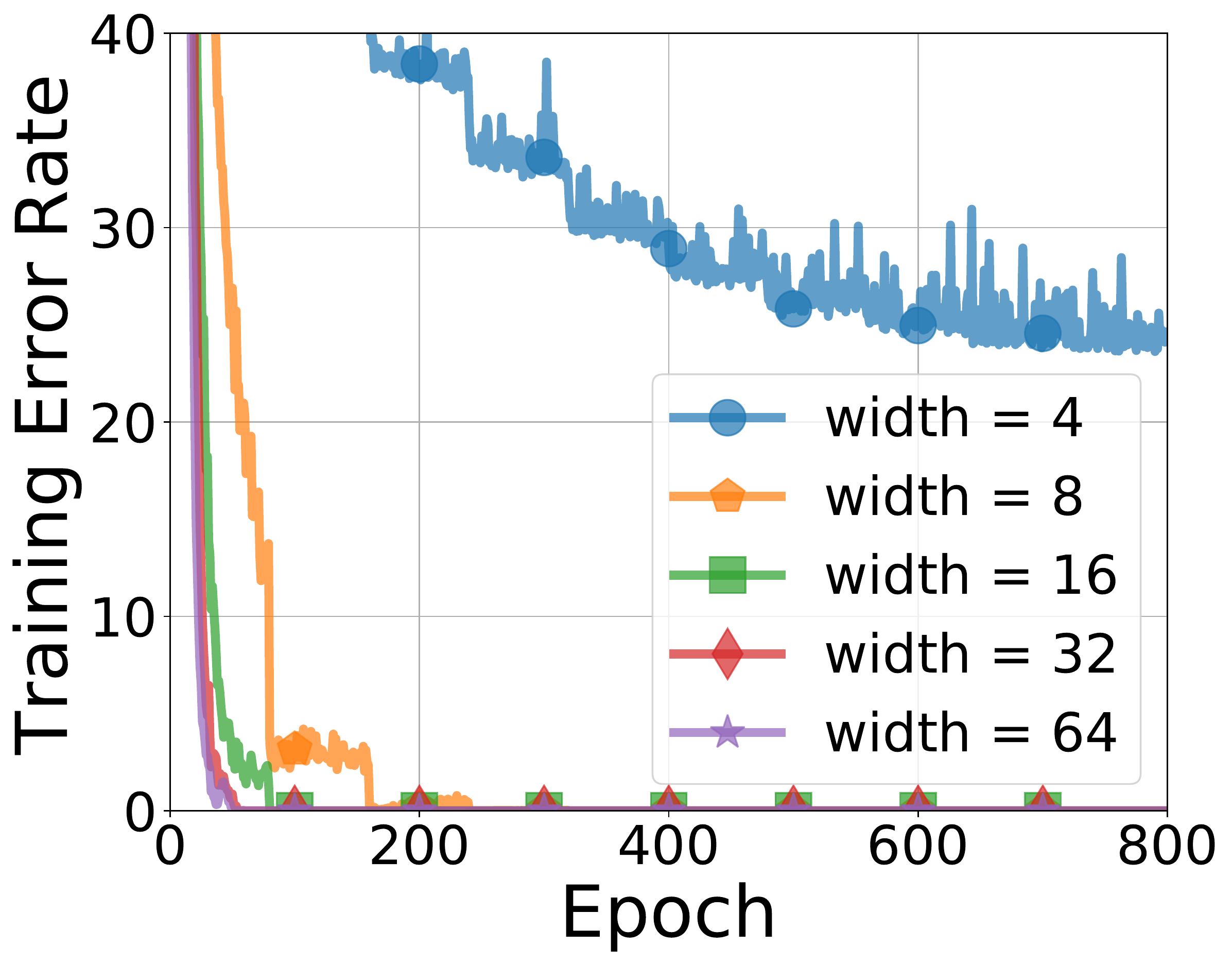}} \
    \caption{\textbf{Training results of CIFAR10 with completely random label.} Multilayer perceptron (MLP) of a fixed depth of $4$ (top) and ResNet18 (bottom) models are used with various feature width. Note that the right column is the misclassification percentage of all samples in training. %\zz{Change the Labels for the first two figures. Also the fond for the Yticks can be bigger. Maybe we can plot training error and then we can easily observe similar curves among these three figures.}
    }
    \label{fig:random_label}
\end{figure}

\paragraph{Comparison of Weight Decay on the Network Parameter $\mb \Theta$ vs. on the Features $\mb H$ in \eqref{eq:obj}.} 
In comparison to typical training protocols of deep networks which enforce weight decay on all the network weights $\mb \Theta$, our problem formulation \eqref{eq:obj} based on the unconstrained feature model replaced $\mb \Theta$ by penalizing the feature $\mb H$ produced by the $L-1$ ``peeled-off'' layers. To check the practicality of such formulation, we empirically run experiments using ResNet18 on MNIST and CIFAR10. \Cref{fig:NC-WD} shows the \NC\ evolution for both the classical formulation and our ``peeled'' formulation, we notice that the \NC\ behavior happens in both scenarios comparably. We also point that without extensive hyper-parameter tuning, the models trained under the ``peeled'' set-up could already achieve test accuracy of $99.57\%$ and $77.92\%$ on MNIST and CIFAR10 respectively. We note that such performances are on-par with the test accuracy of the 
classical formulation \eqref{eqn:dl-ce-loss}, with test accuracy of $99.60\%$ and $78.42\%$ on MNIST and CIFAR10 as reported in \Cref{fig:Accuracy-MNIST-CIFAR10}.
%In \textbf{xx}, we demonstrate that such a difference leads to comparable performance in practice -- the performance of the trained network remains similar, and \NC\ still occurs. 

%While the practice of deep learning adopts weight decay on the parameters $\mb \theta$ of the $L-1$ ``peeled-off'' layers, the unconstrained feature model proposes to instead penalize the norm of the features $\mb H$ that are produced by the ``peeled-off'' layers. In the following, we demonstrate that such a difference is negligible in practice: even if we replace the weight decay on $\mb \theta$ -- as in a typical training protocol of deep networks-- with weight decay on $\mb H$, t

%\zz{Any results yet? We may add this later.}

\begin{figure}[t]
    \centering
    \subfloat[$\mc {NC}_1$ (MNIST)]{\includegraphics[width=0.24\textwidth]{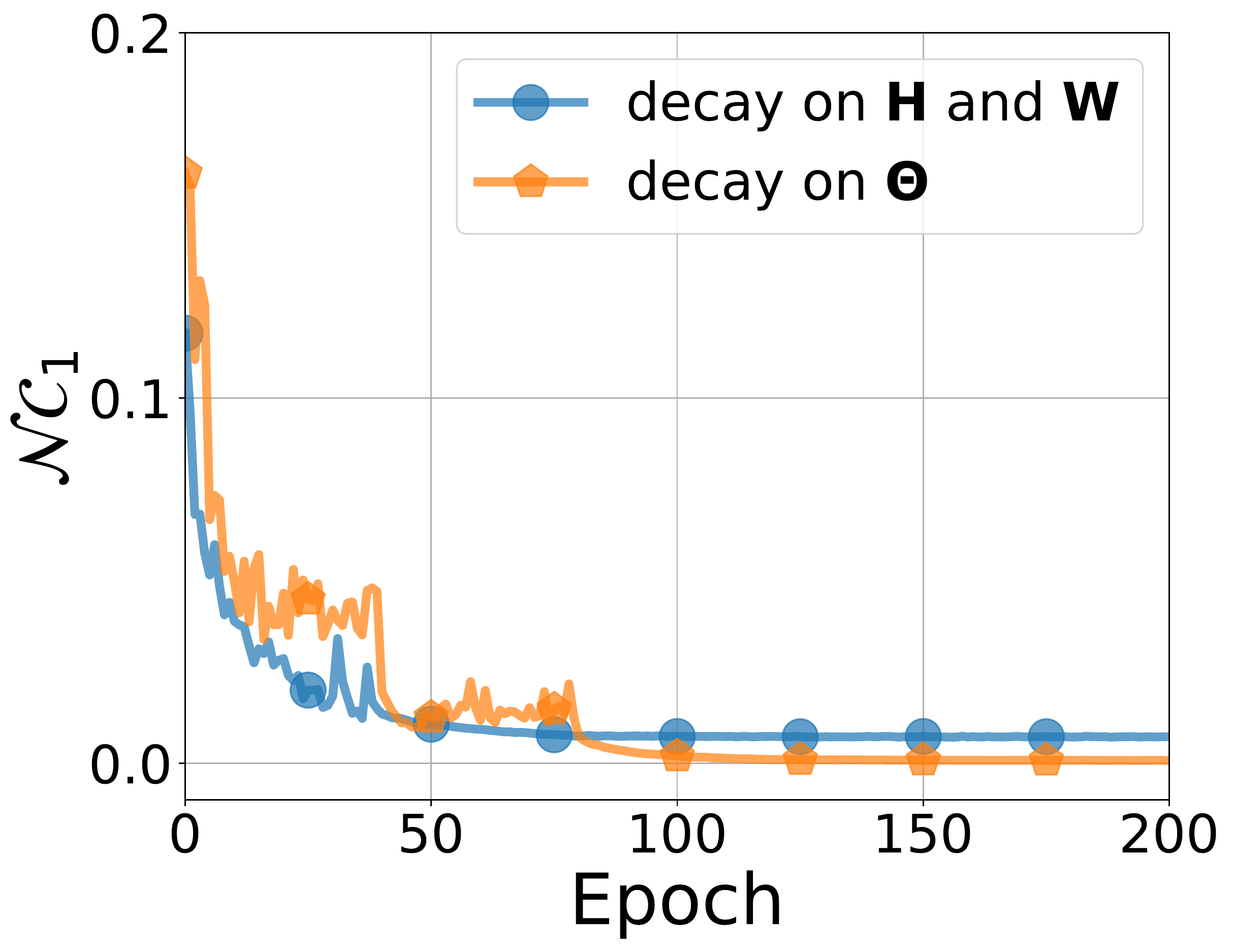}} \
    \subfloat[$\mc {NC}_2$ (MNIST)]{\includegraphics[width=0.24\textwidth]{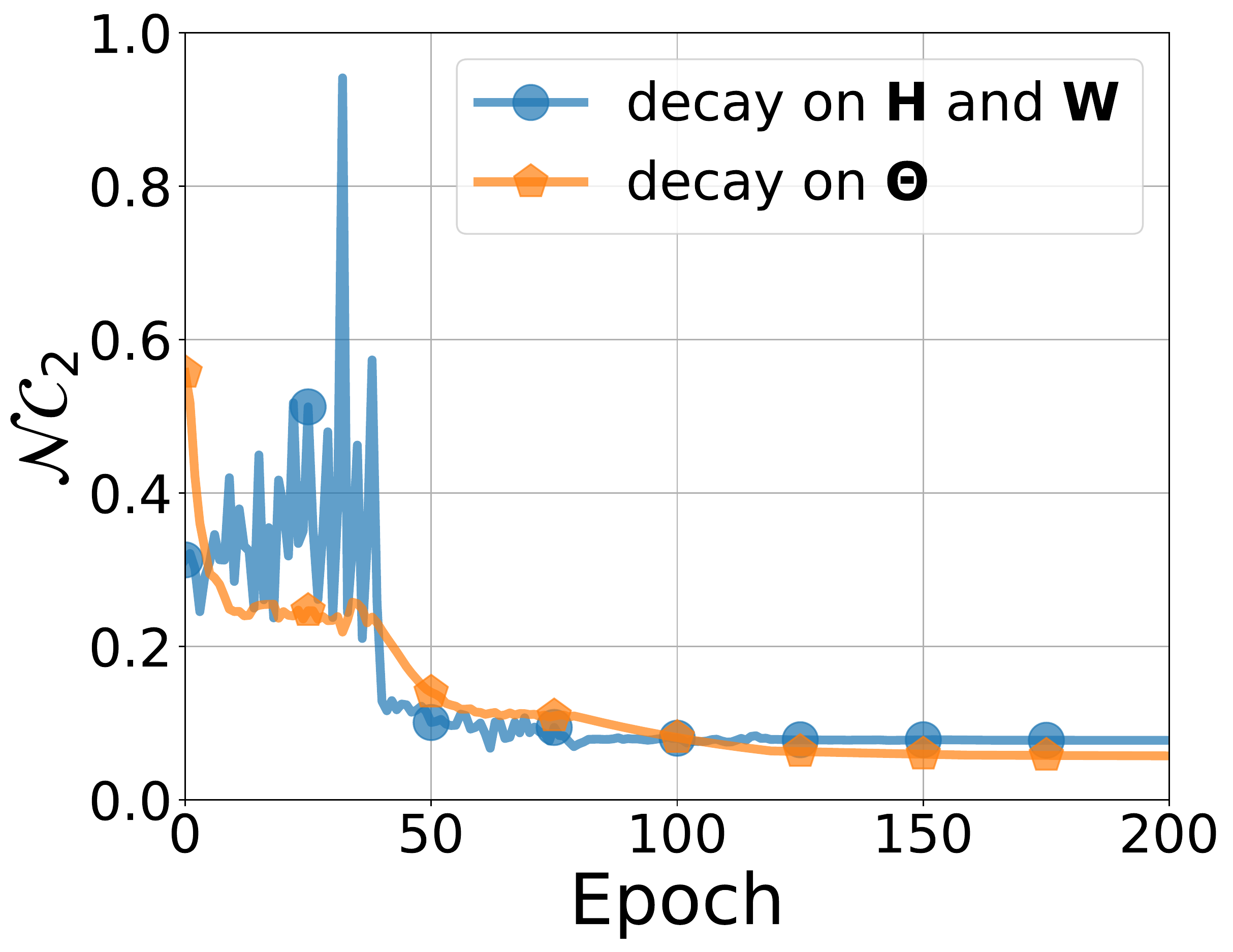}} \
    \subfloat[$\mc {NC}_3$ (MNIST)]{\includegraphics[width=0.24\textwidth]{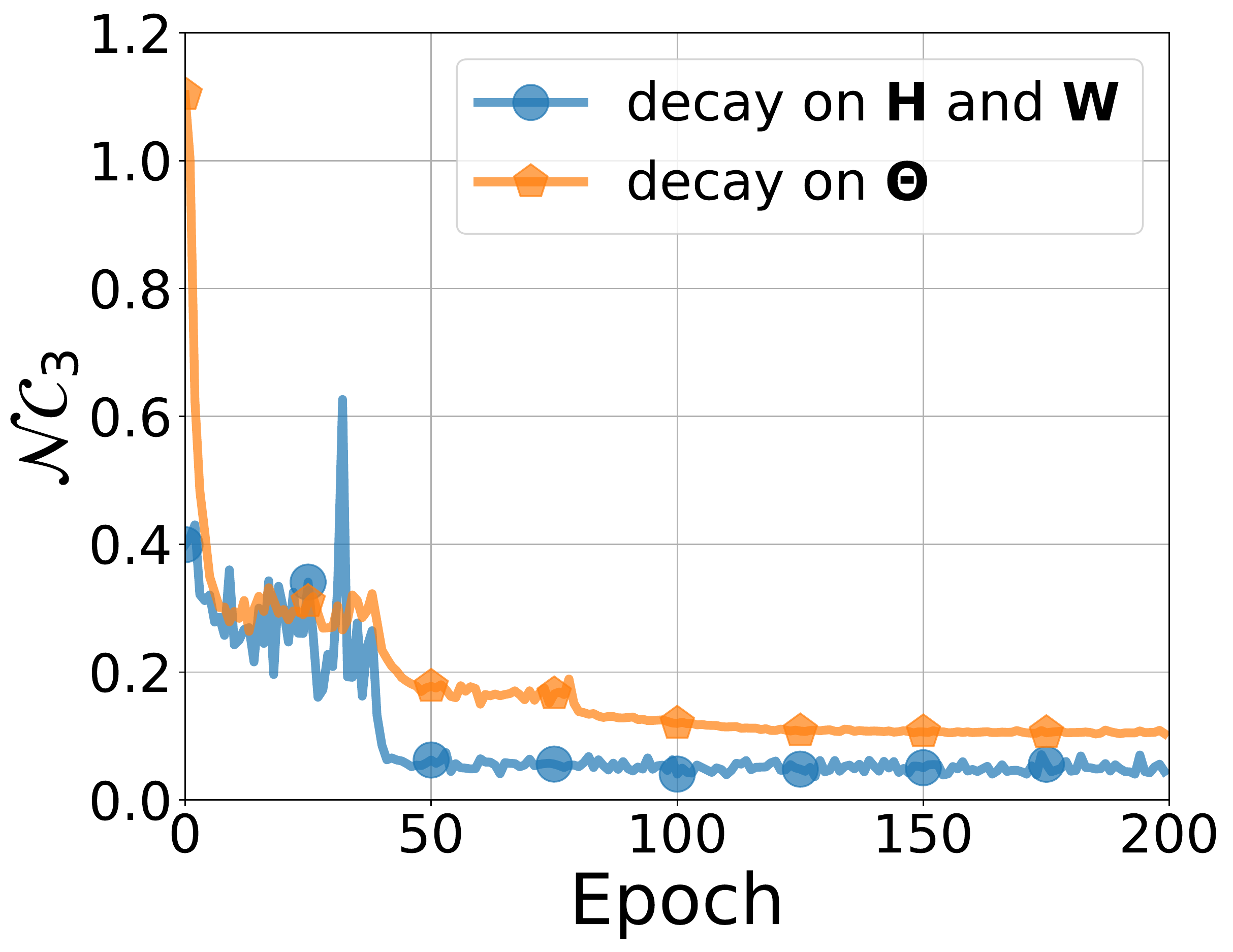}} \
    \subfloat[$\mc {NC}_4$ (MNIST)]{\includegraphics[width=0.23\textwidth]{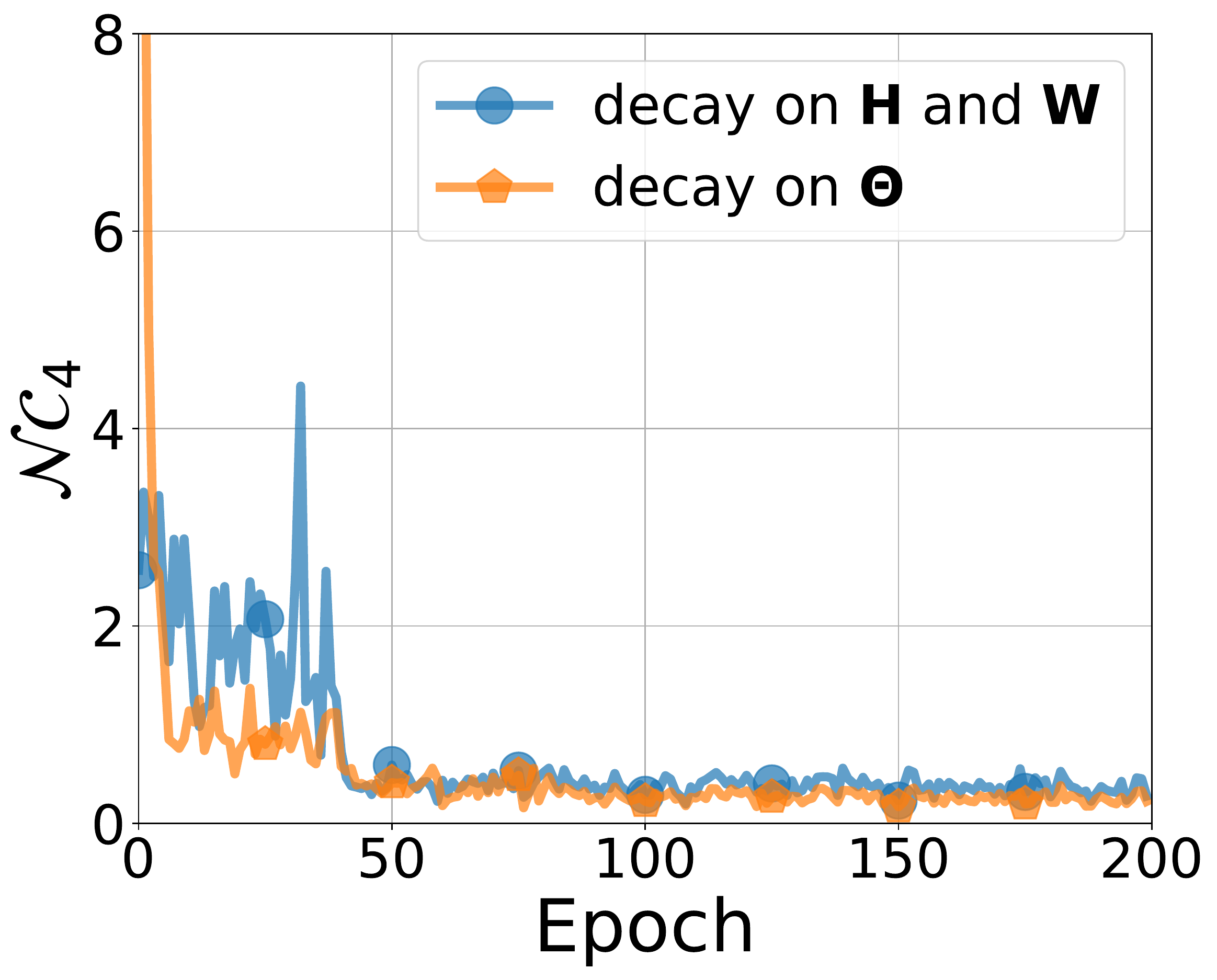}} \\
    
    \subfloat[$\mc {NC}_1$ (CIFAR10)]{\includegraphics[width=0.24\textwidth]{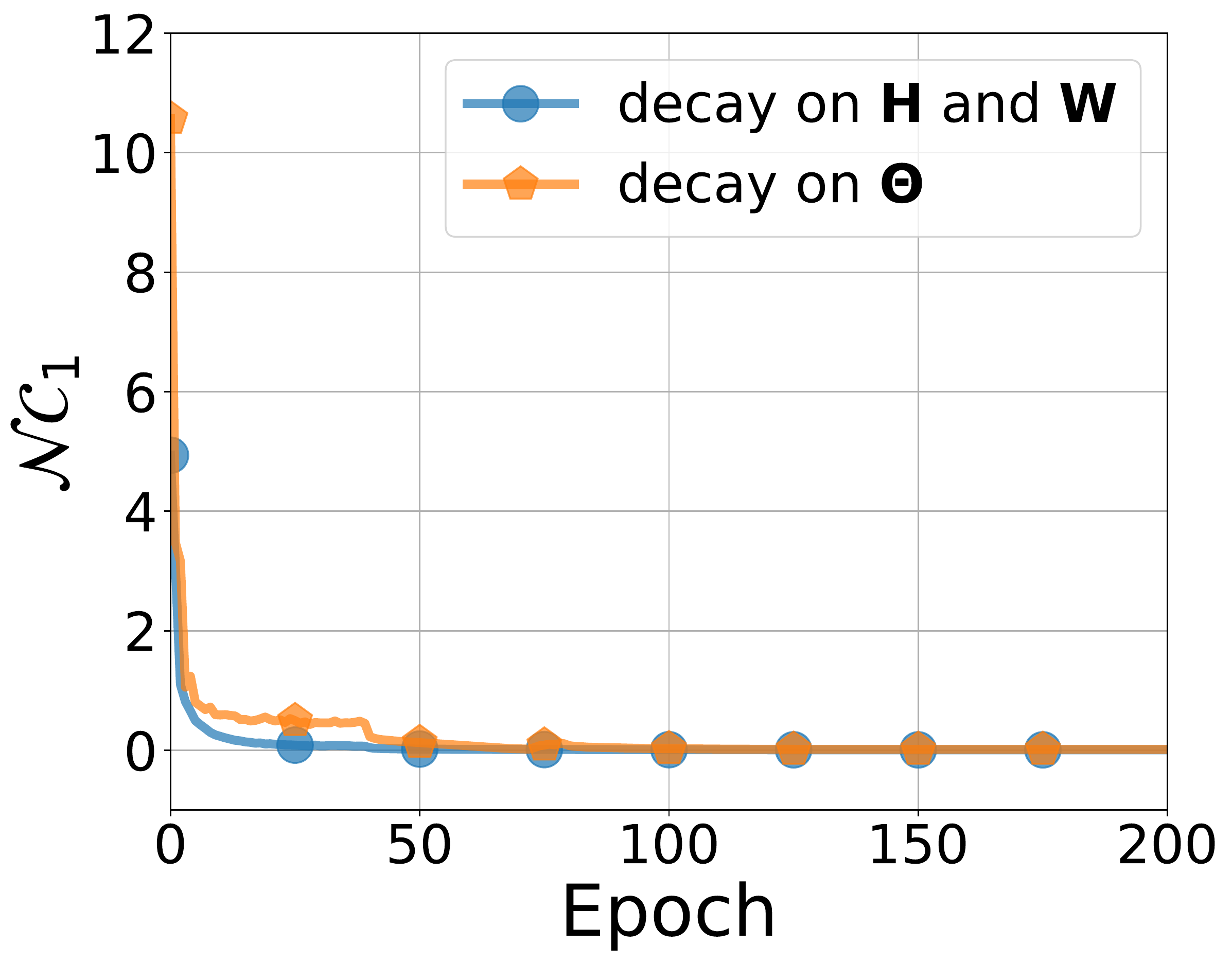}} \
    \subfloat[$\mc {NC}_2$ (CIFAR10)]{\includegraphics[width=0.24\textwidth]{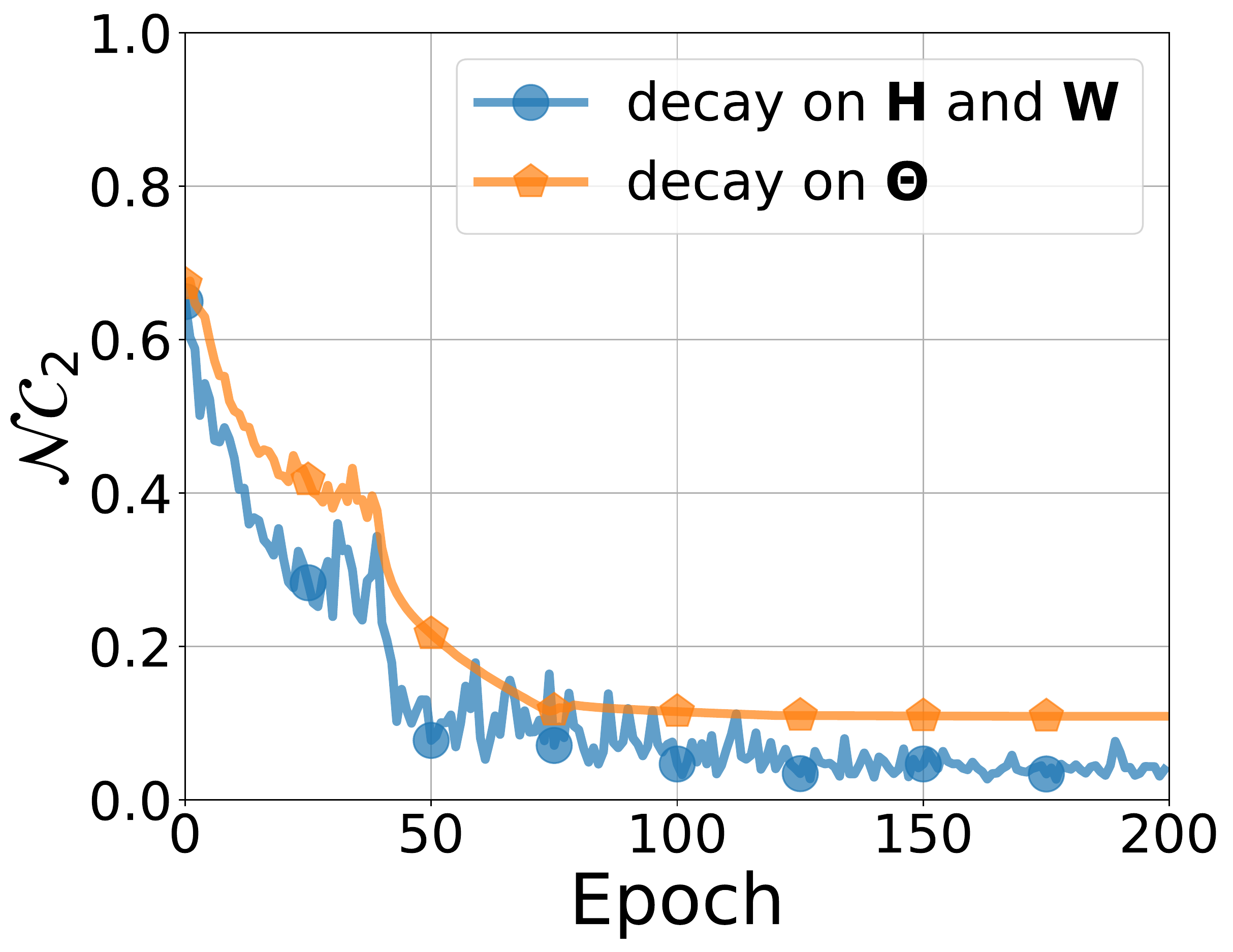}} \
    \subfloat[$\mc {NC}_3$ (CIFAR10)]{\includegraphics[width=0.24\textwidth]{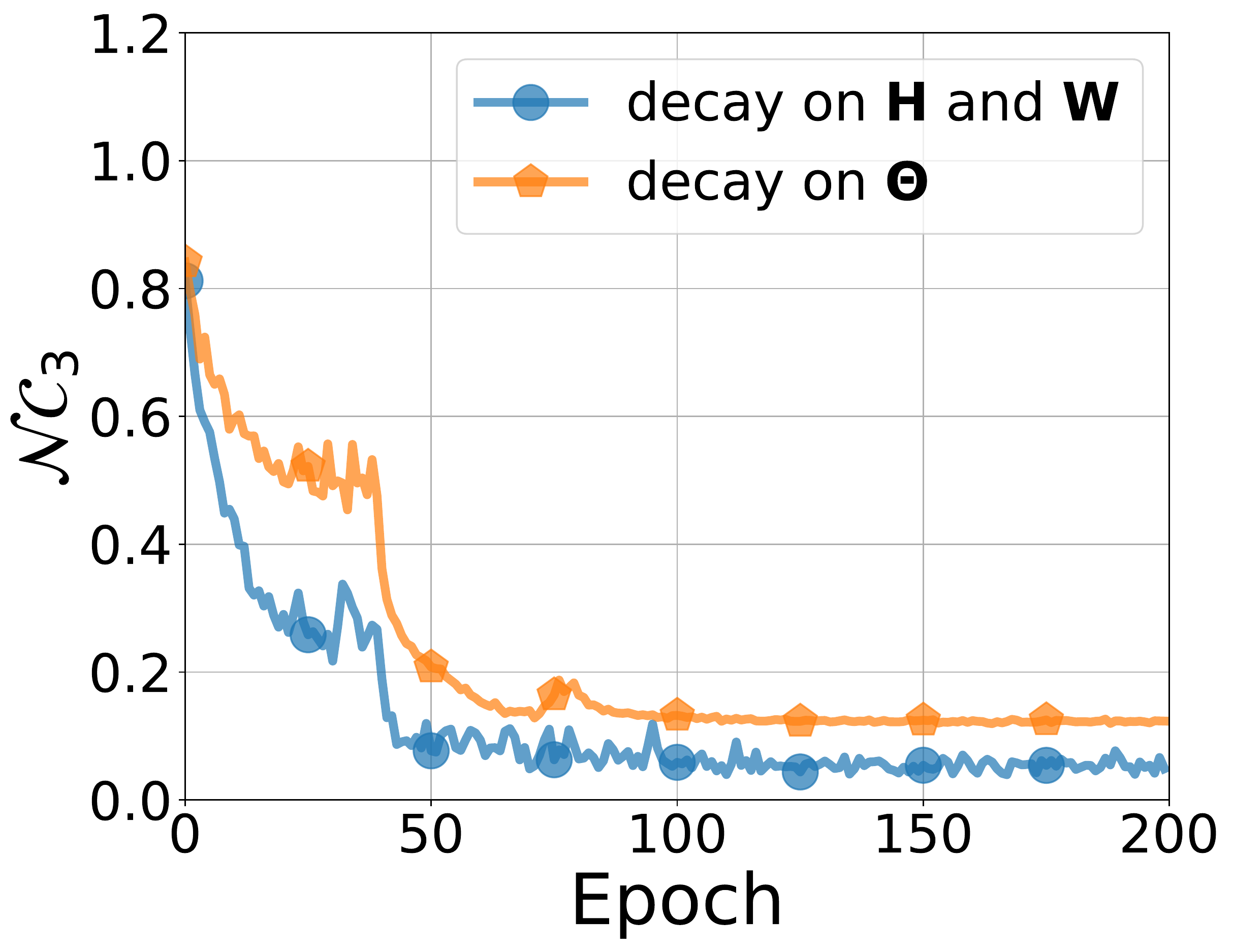}} \
    \subfloat[$\mc {NC}_4$ (CIFAR10)]{\includegraphics[width=0.23\textwidth]{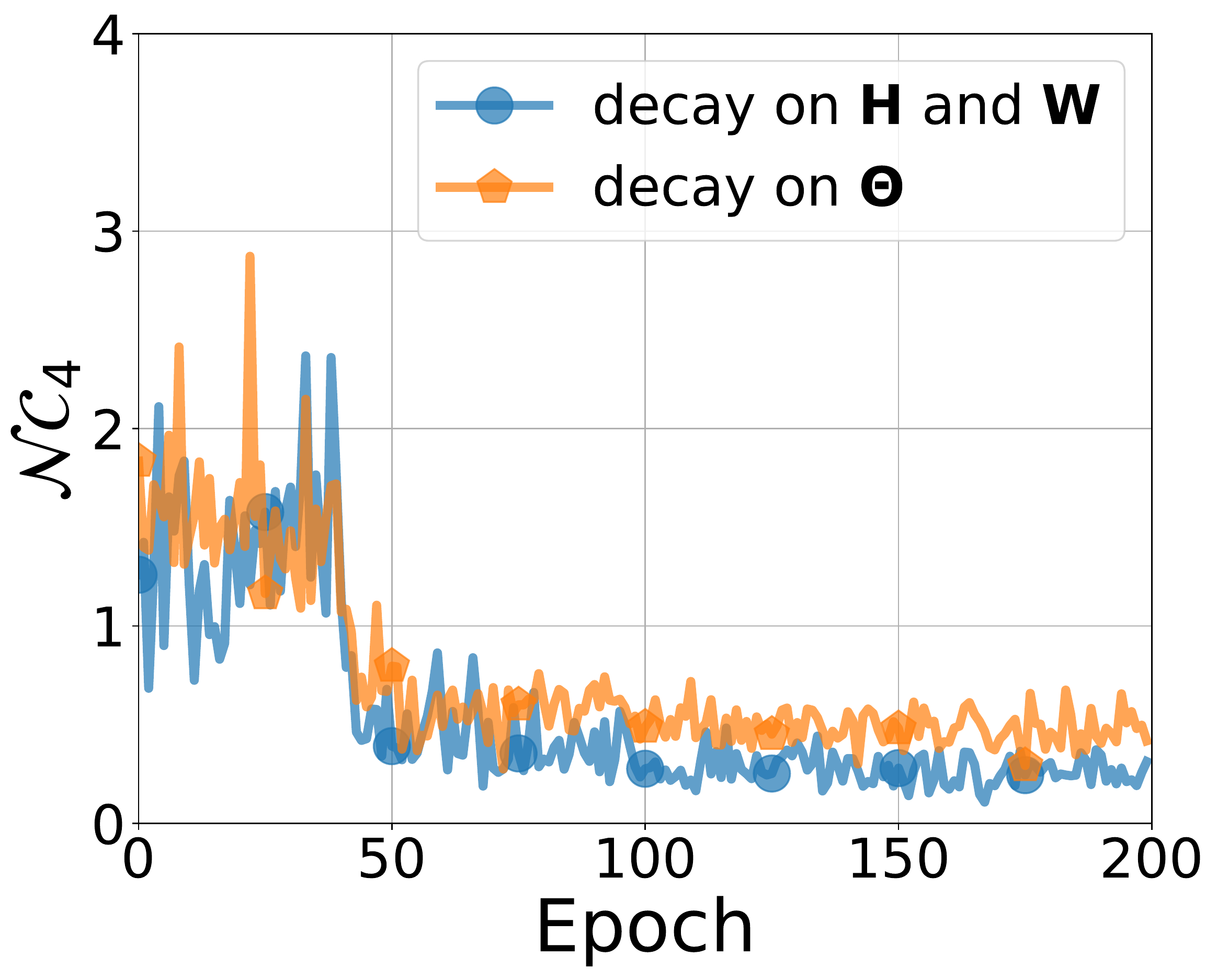}}
    \caption{\textbf{Comparison of \NC\;behavior for weight decay on $\mb \Theta$ vs. on $(\mb H,\mb W)$.}  For the latter set up, we choose $\lambda_{\mb W} = \lambda_{\mb b} = 0.01$ and $\lambda_{\mb H} = 0.00001$.
    }
    \label{fig:NC-WD}
\end{figure}

\subsection{Insights from \NC\ for Improving Network Designs}
\label{sec:exp-fix-classifier}

Finally, we conduct two exploratory experiments to demonstrate the practical benefits of \NC\ phenomenon. The universality of \NC\;implies that the final classifier (i.e. the $L$-th layer) of a neural network always converges to a Simplex ETF, which is fully determined up to an arbitrary rotation and happens when $K\leq d$. Thus, based on the understandings of the last-layer features and classifiers, we show that we can substantially improve the cost efficiency on network architecture design without the sacrifice of performance, by \emph{(i)} fixing the last-layer classifier as a Simplex ETF, and \emph{(ii)} reducing the feature dimension $d=K$. Here, to demonstrate our method can achieve state-of-the-art performance, it should be noted that for CIFAR10 dataset we also run our experiments on a modified ResNet50 architecture \cite{res50} with data augmentation which achieves around 95\% test accuracy.

\begin{figure}[t]
   \centering
    % \subfloat[$\mc {NC}_1$ (MNIST)]{\includegraphics[width=0.24\textwidth]{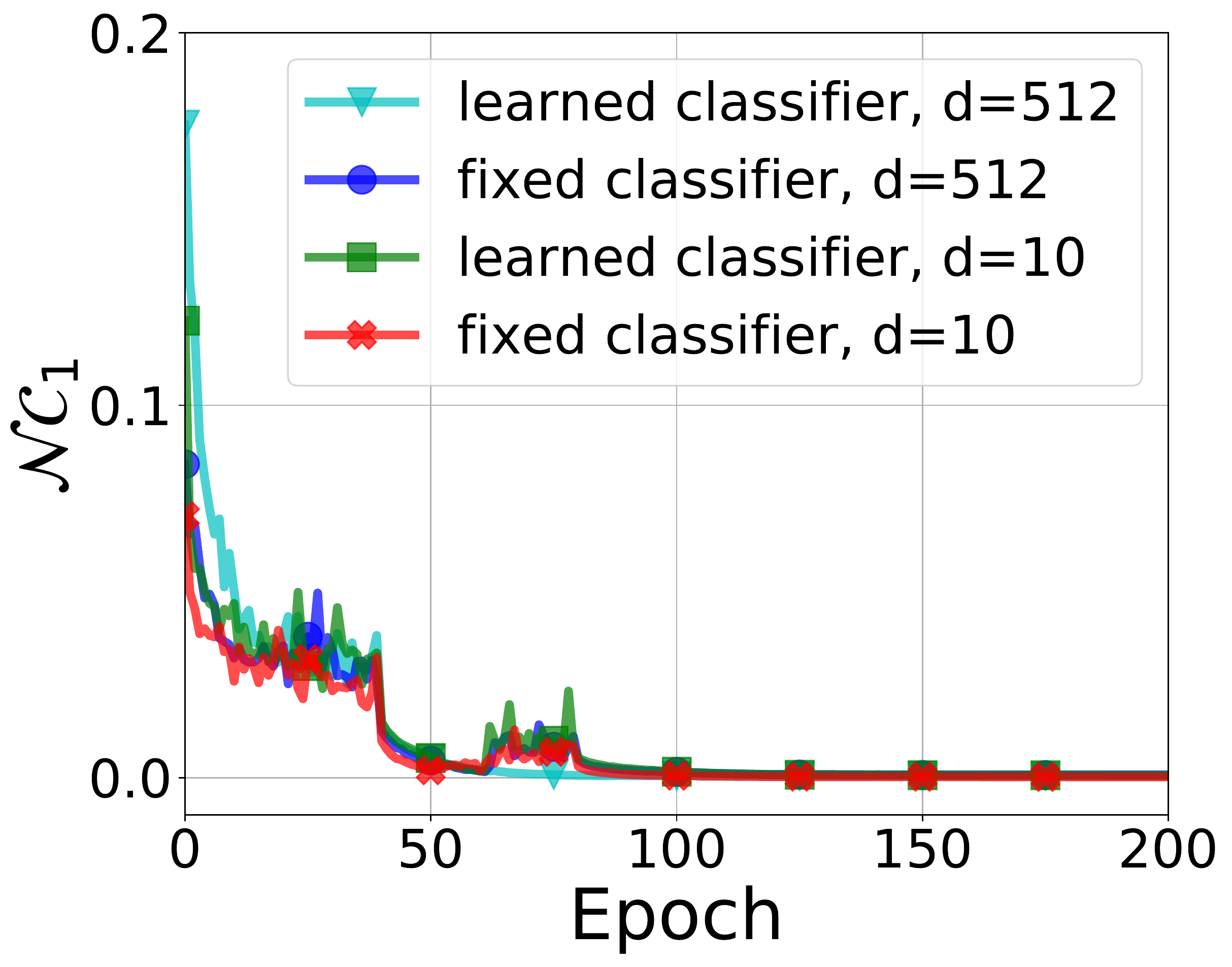}} \
    % % \subfloat[$\mc {NC}_2$ (MNIST)]{\includegraphics[width=0.32\textwidth]{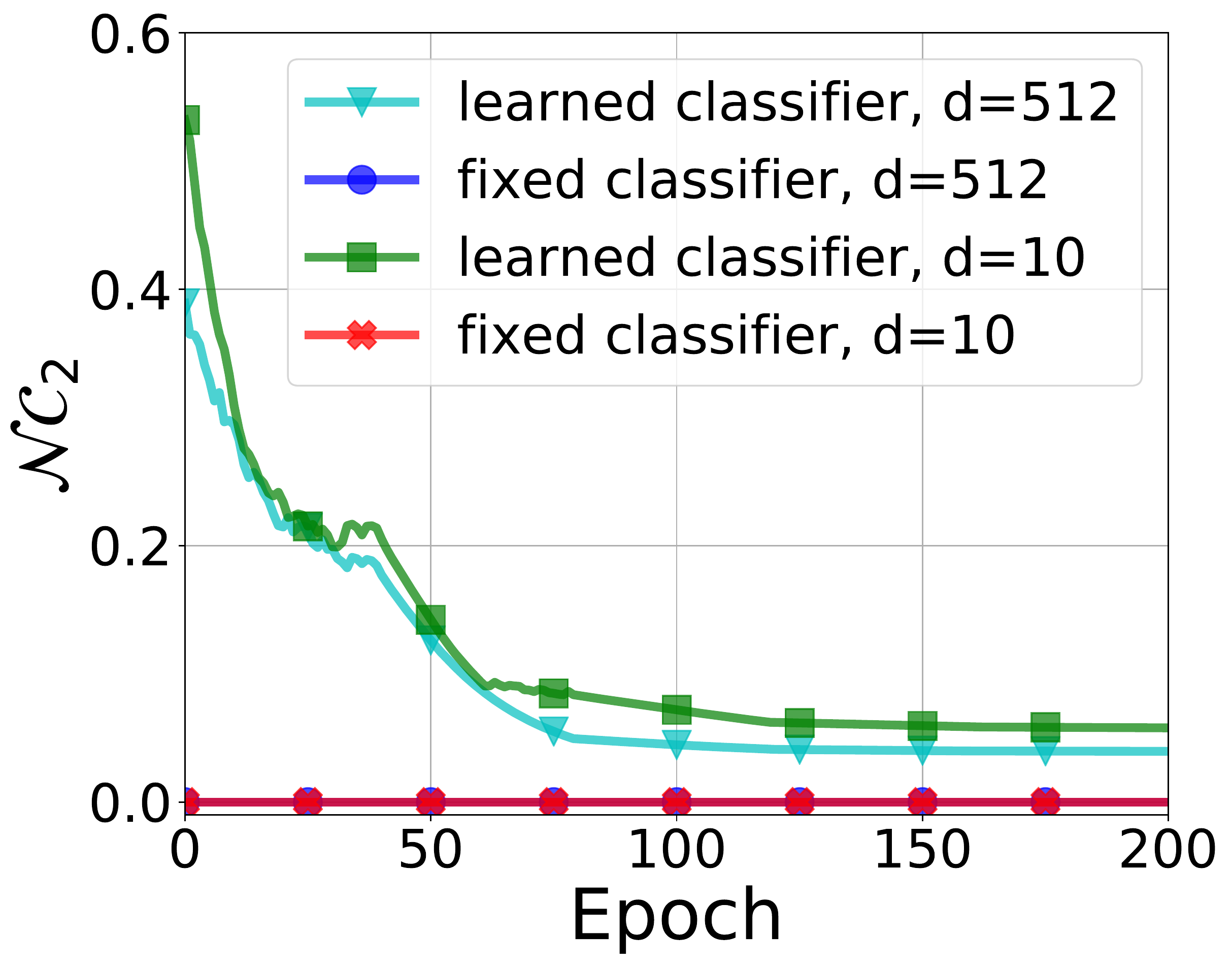}} \
    % \subfloat[$\mc {NC}_3$ (MNIST)]{\includegraphics[width=0.24\textwidth]{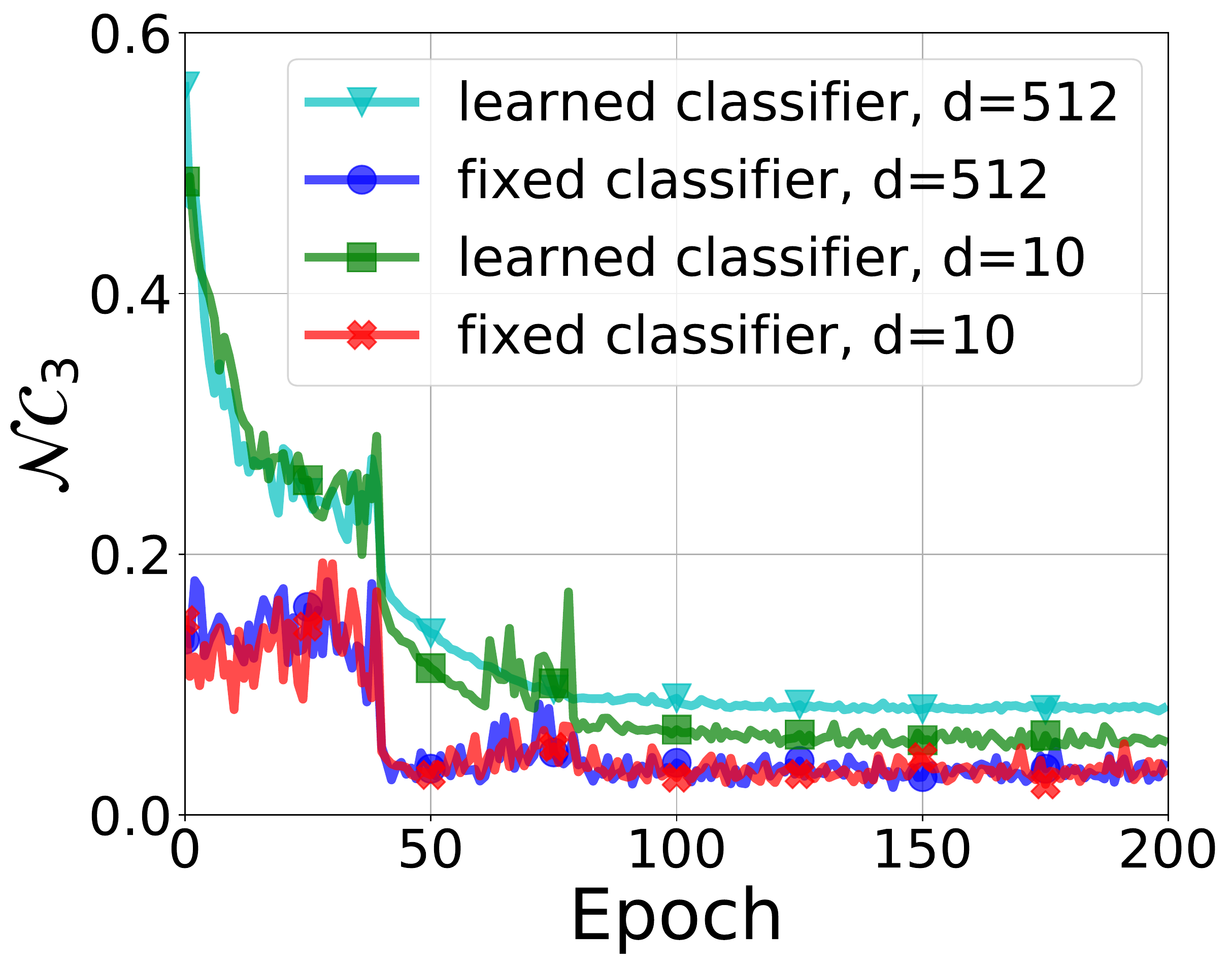}} \
    % % \subfloat[$\mc {NC}_4$ (MNIST)]{\includegraphics[width=0.32\textwidth]{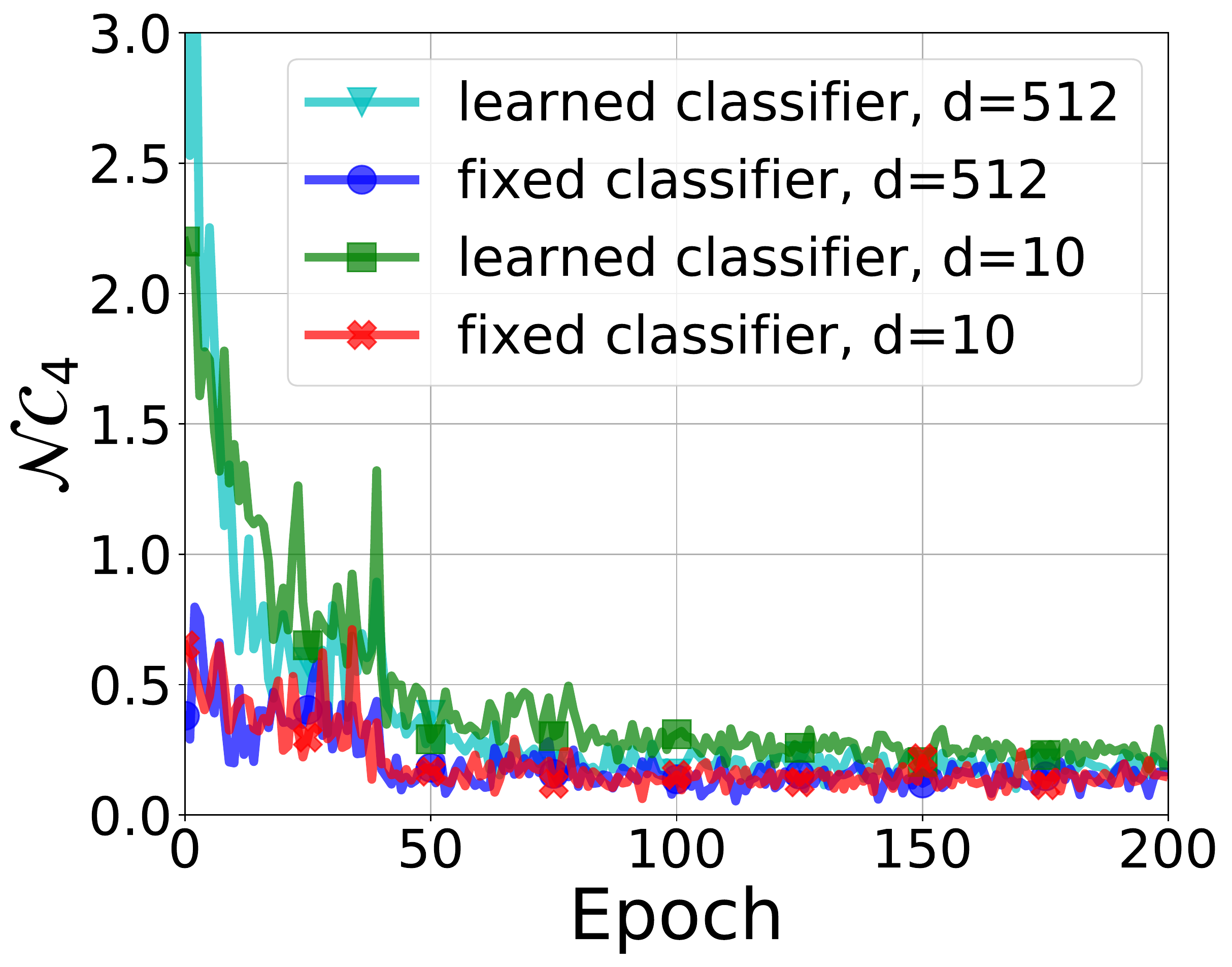}} \
    % \subfloat[Training Error]{\includegraphics[width=0.24\textwidth]{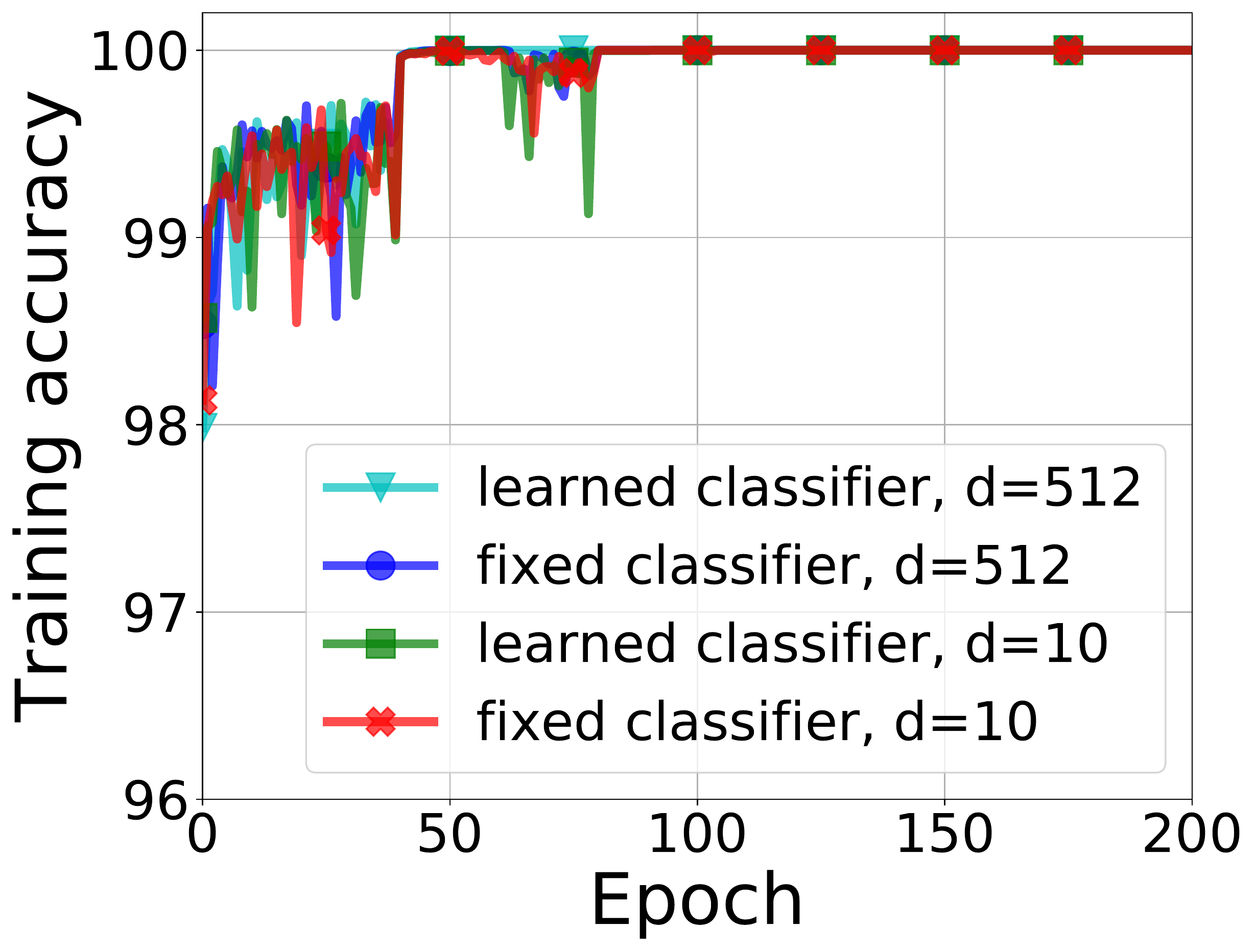}} \
    % \subfloat[Test Error]{\includegraphics[width=0.24\textwidth]{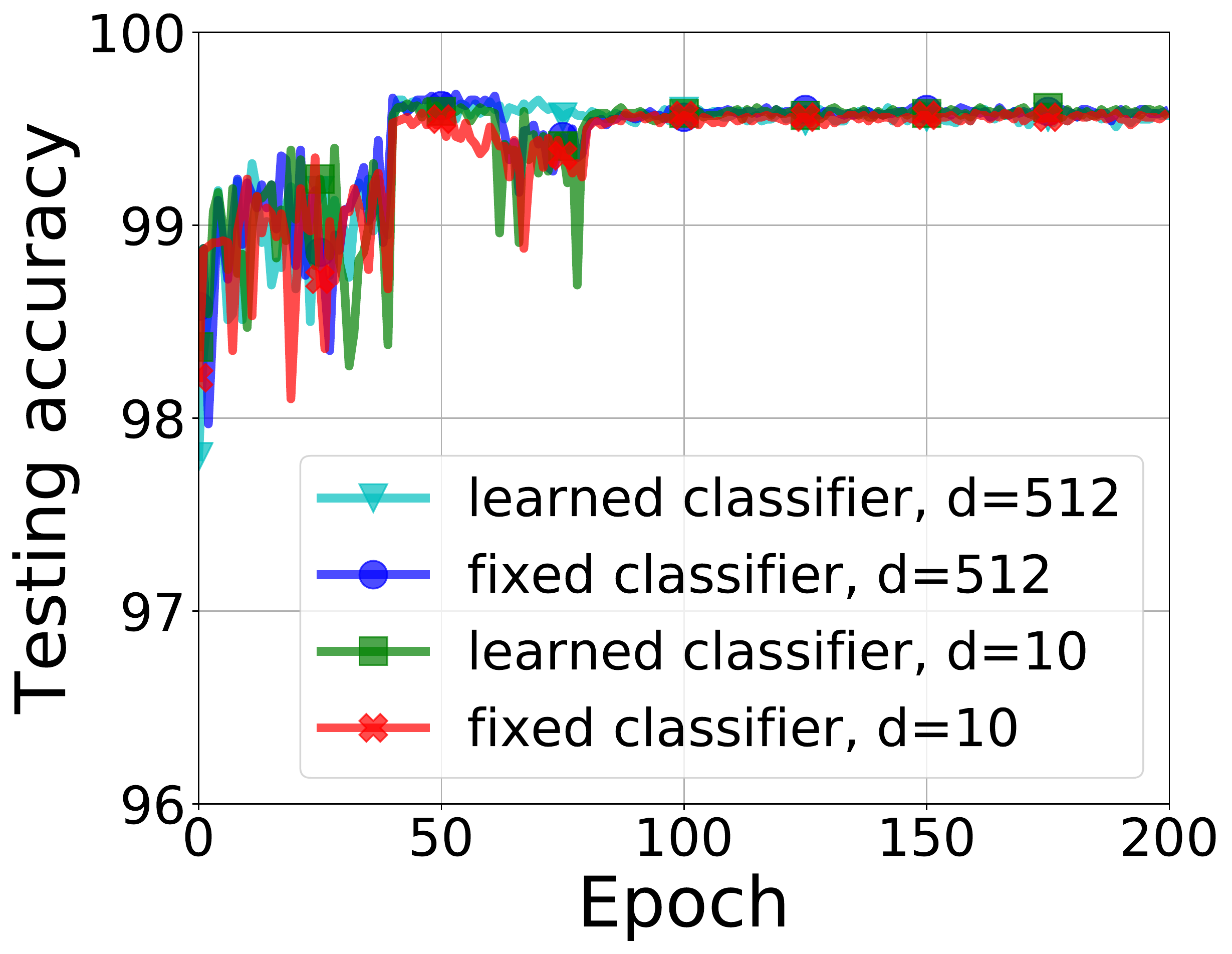}} \\
        \subfloat[MNIST-ResNet18 (from left to right): $\mc {NC}_1$, $\mc {NC}_3$, Training Accuracy, Testing Accuracy]{\includegraphics[width=0.24\textwidth]{figs/resnet18/mnist-SGD-ETF-fixdim/mnist-resnet18-NC1.pdf} 
    % \subfloat[$\mc {NC}_2$ (MNIST)]{\includegraphics[width=0.32\textwidth]{figs/resnet18/mnist-SGD-ETF-fixdim/mnist-resnet18-NC2.pdf}} \
    \includegraphics[width=0.24\textwidth]{figs/resnet18/mnist-SGD-ETF-fixdim/mnist-resnet18-NC3.pdf} \
    % \subfloat[$\mc {NC}_4$ (MNIST)]{\includegraphics[width=0.32\textwidth]{figs/resnet18/mnist-SGD-ETF-fixdim/mnist-resnet18-NC4.pdf}} \
    \includegraphics[width=0.24\textwidth]{figs/resnet18/mnist-SGD-ETF-fixdim/mnist-resnet18-train-acc.pdf} \
   \includegraphics[width=0.24\textwidth]{figs/resnet18/mnist-SGD-ETF-fixdim/mnist-resnet18-test-acc.pdf}} \\
    \subfloat[CIFAR10-ResNet18 (from left to right): $\mc {NC}_1$, $\mc {NC}_3$, Training Accuracy, Testing Accuracy]{\includegraphics[width=0.24\textwidth]{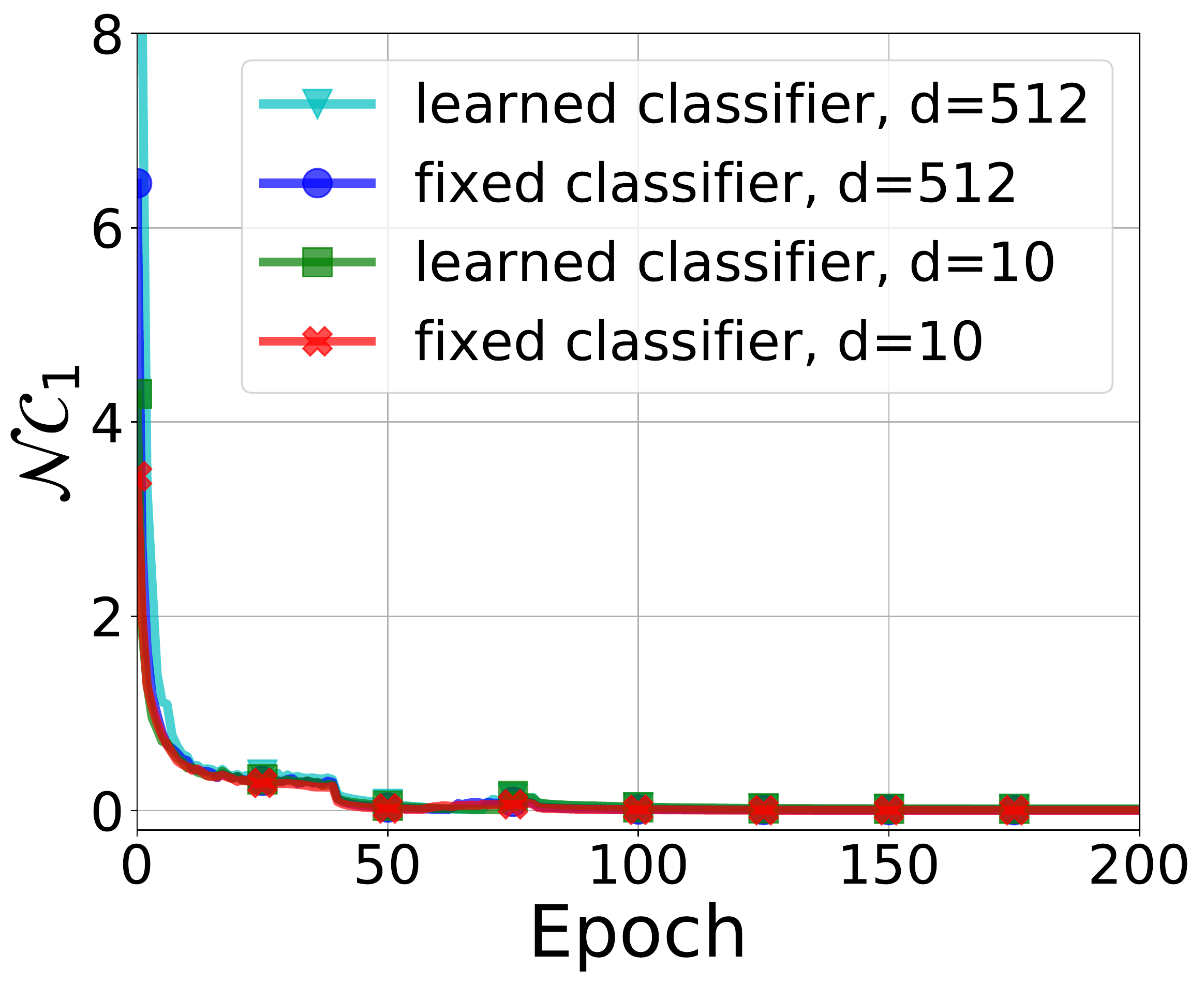} 
    % \subfloat[$\mc {NC}_2$ (MNIST)]{\includegraphics[width=0.32\textwidth]{figs/resnet18/mnist-SGD-ETF-fixdim/mnist-resnet18-NC2.pdf}} \
    \includegraphics[width=0.24\textwidth]{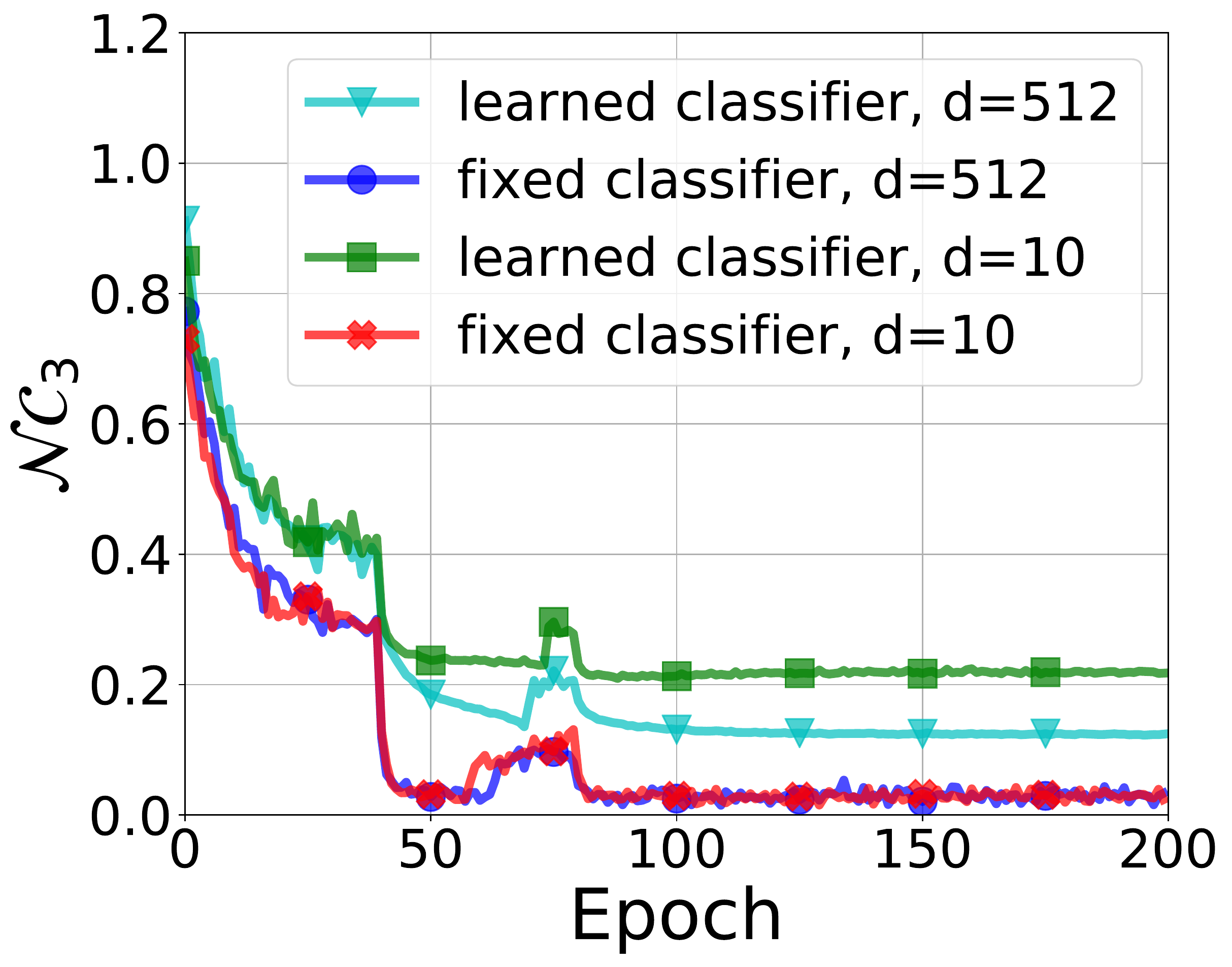} \
    % \subfloat[$\mc {NC}_4$ (MNIST)]{\includegraphics[width=0.32\textwidth]{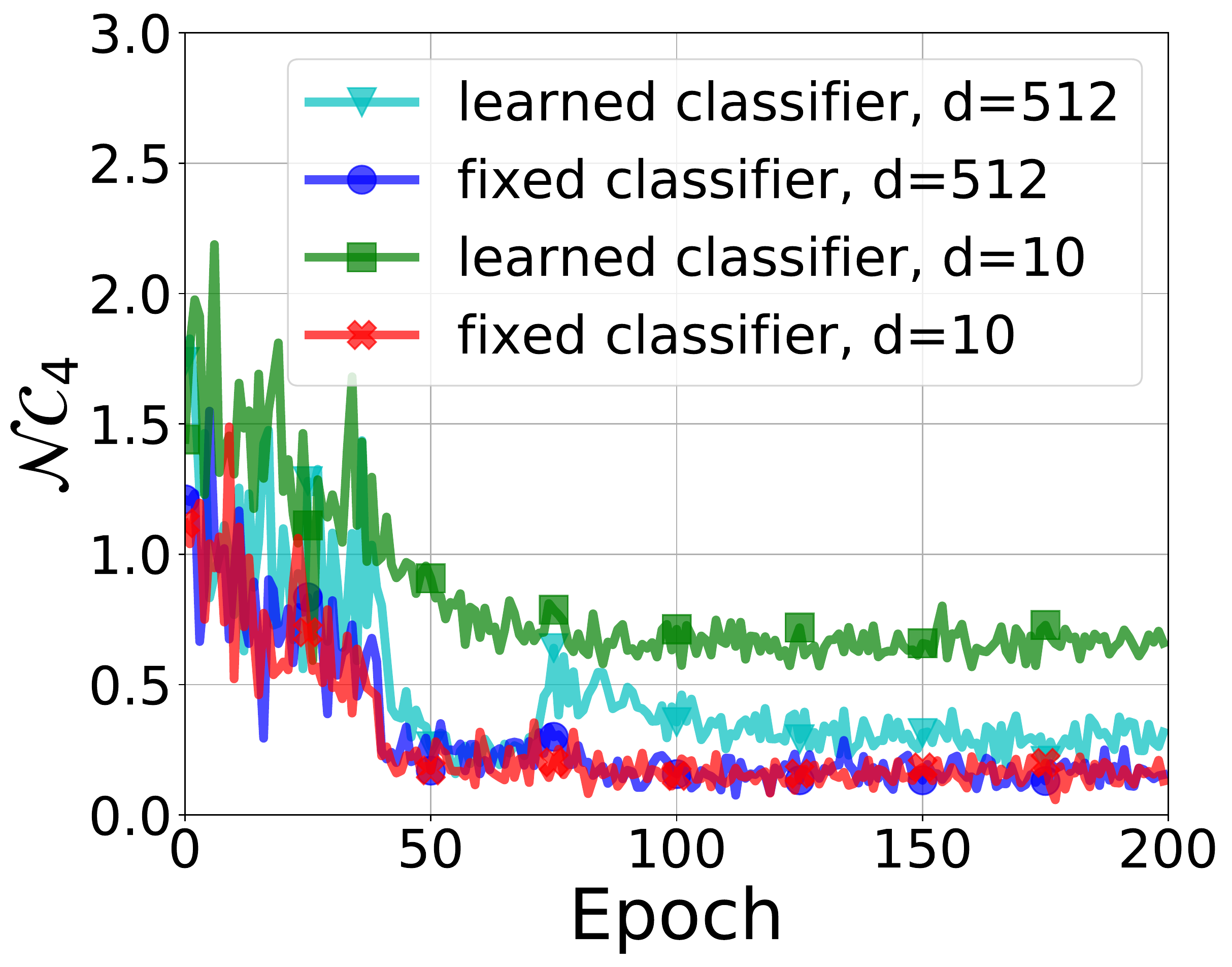}} \
    \includegraphics[width=0.24\textwidth]{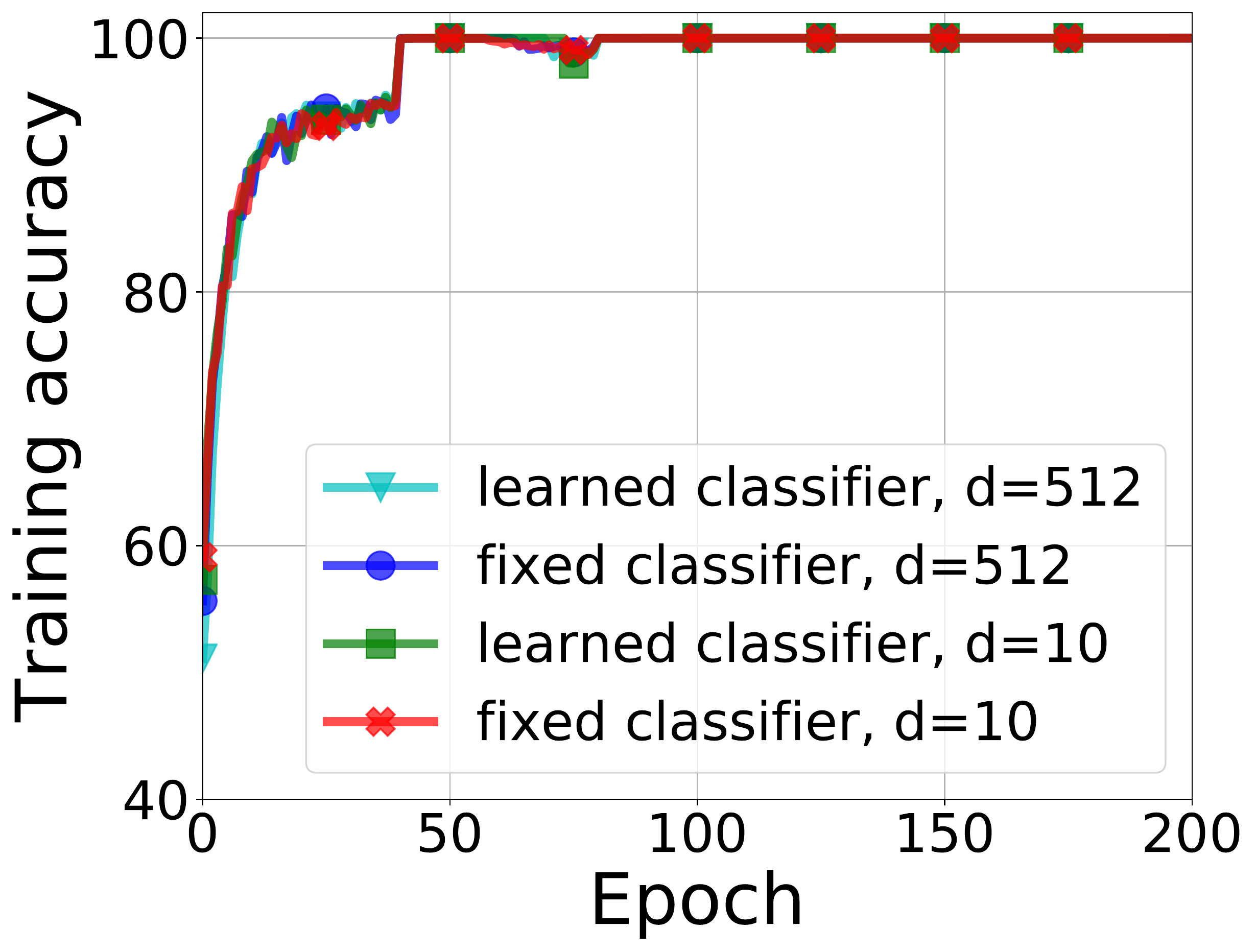} \
   \includegraphics[width=0.24\textwidth]{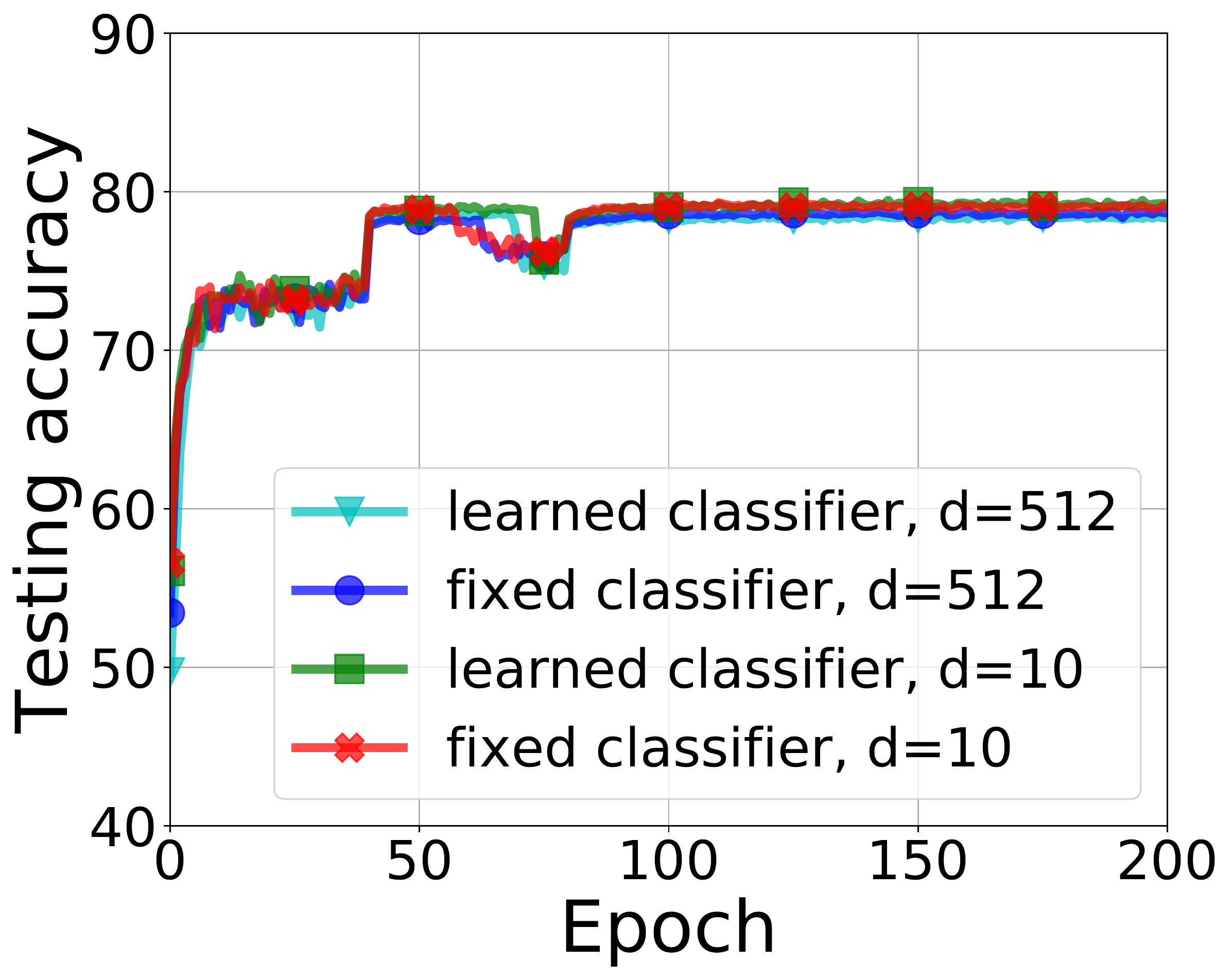}}\\
   \subfloat[CIFAR10-ResNet50 (from left to right): $\mc {NC}_1$, $\mc {NC}_3$, Training Accuracy, Testing Accuracy]{\includegraphics[width=0.24\textwidth]{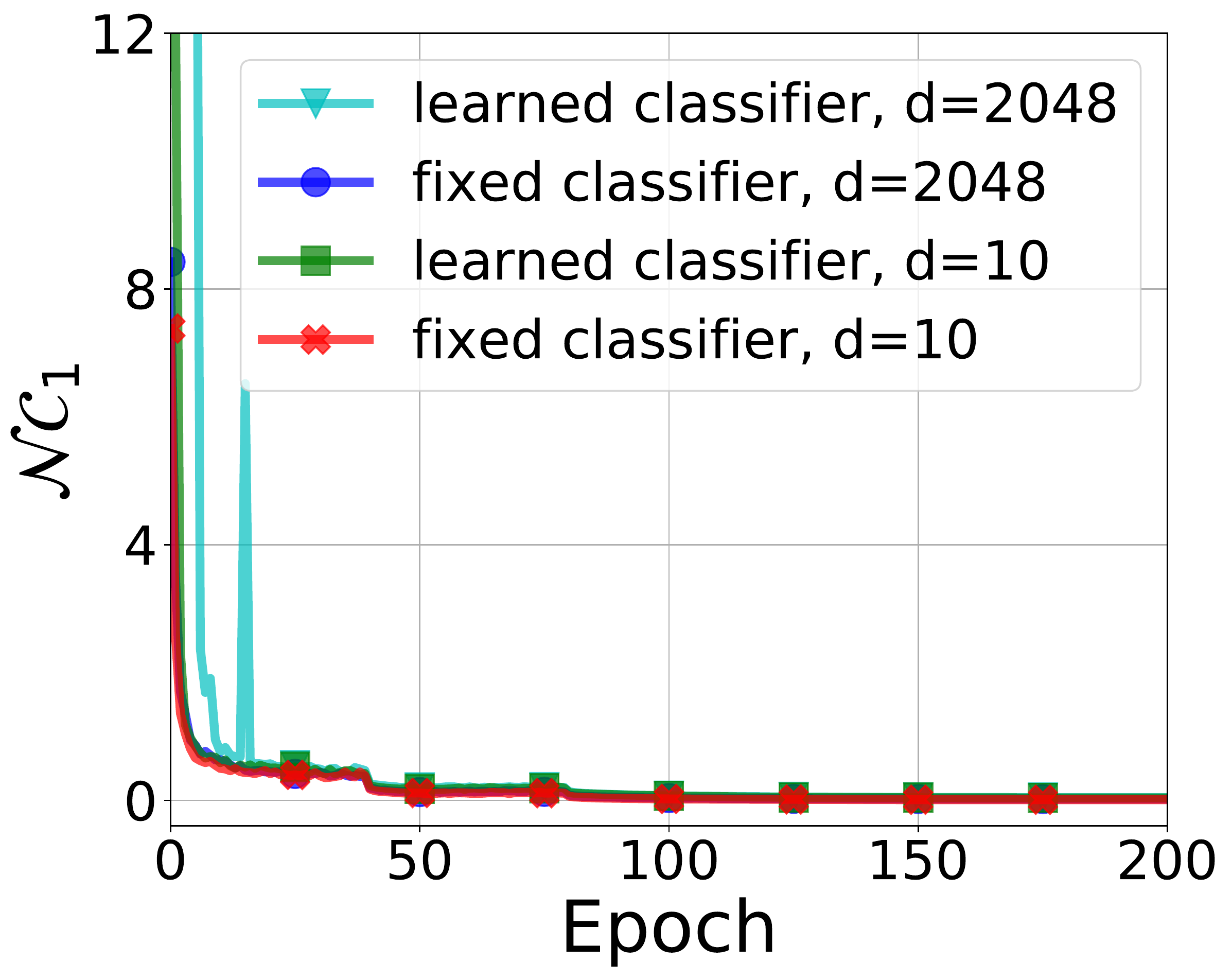} 
    % \subfloat[$\mc {NC}_2$ (MNIST)]{\includegraphics[width=0.32\textwidth]{figs/resnet50/cifar10-SGD-ETF-fixdim/mnist-resnet50-NC2.pdf}} \
    \includegraphics[width=0.24\textwidth]{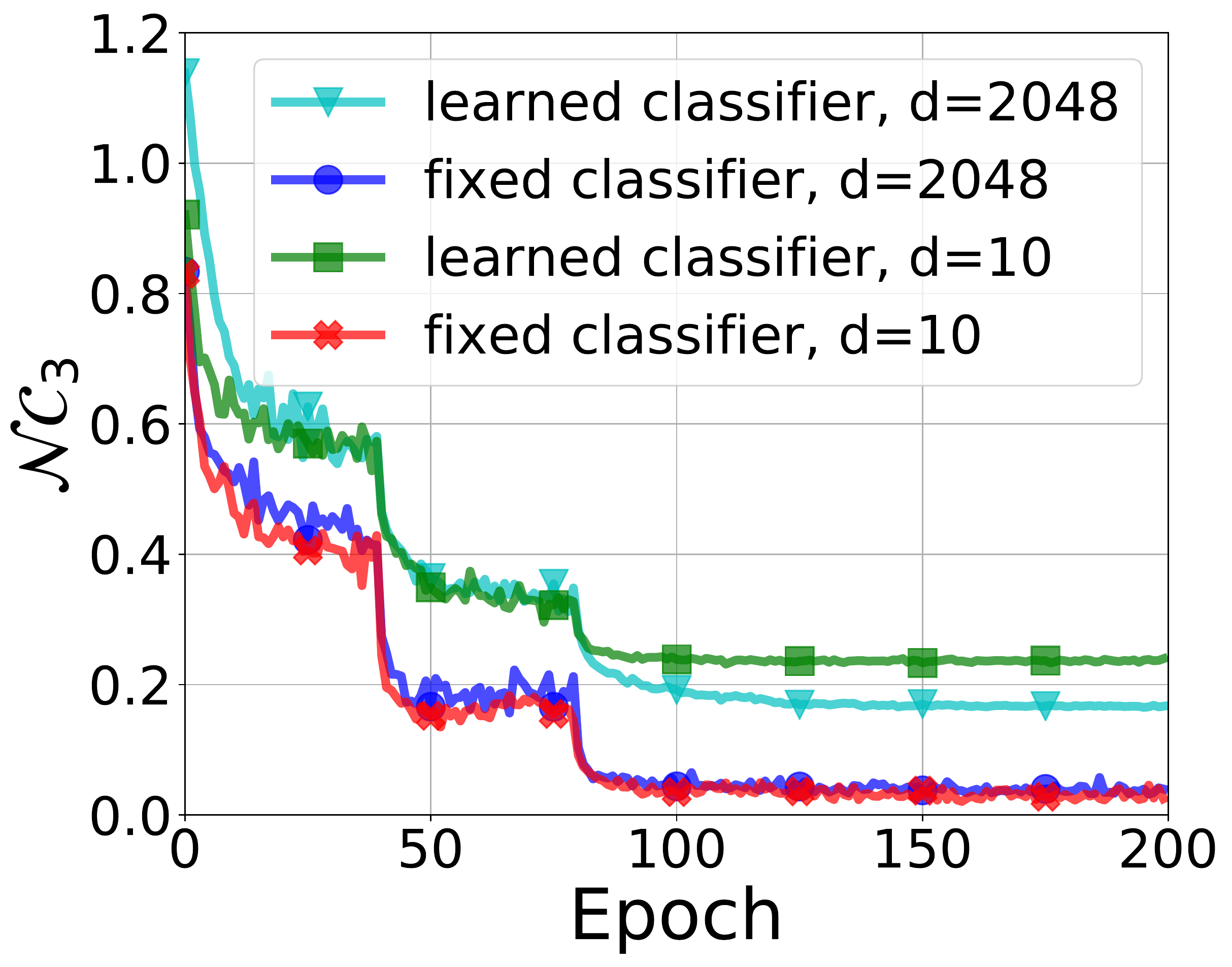} \
    % \subfloat[$\mc {NC}_4$ (MNIST)]{\includegraphics[width=0.32\textwidth]{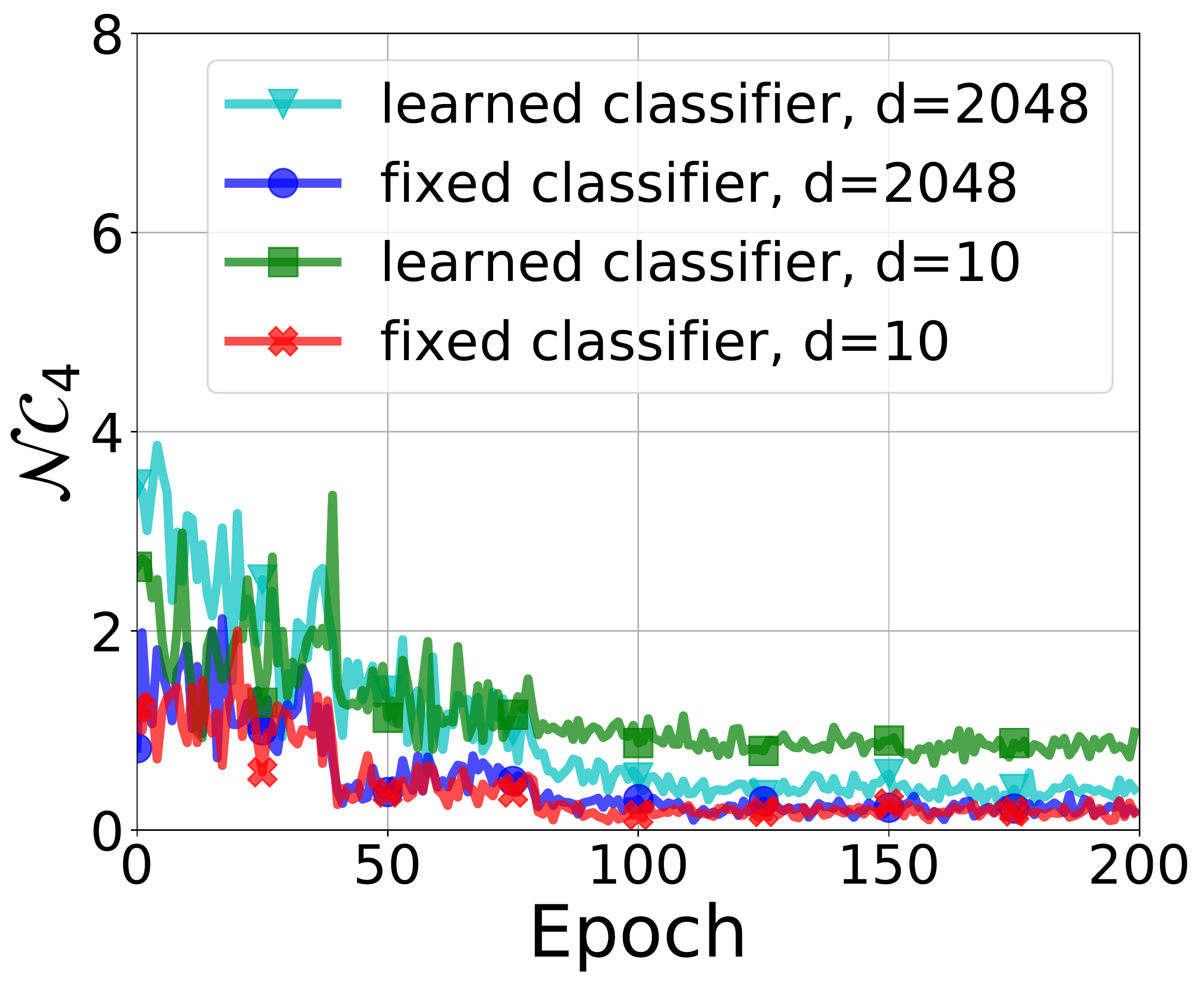}} \
    \includegraphics[width=0.24\textwidth]{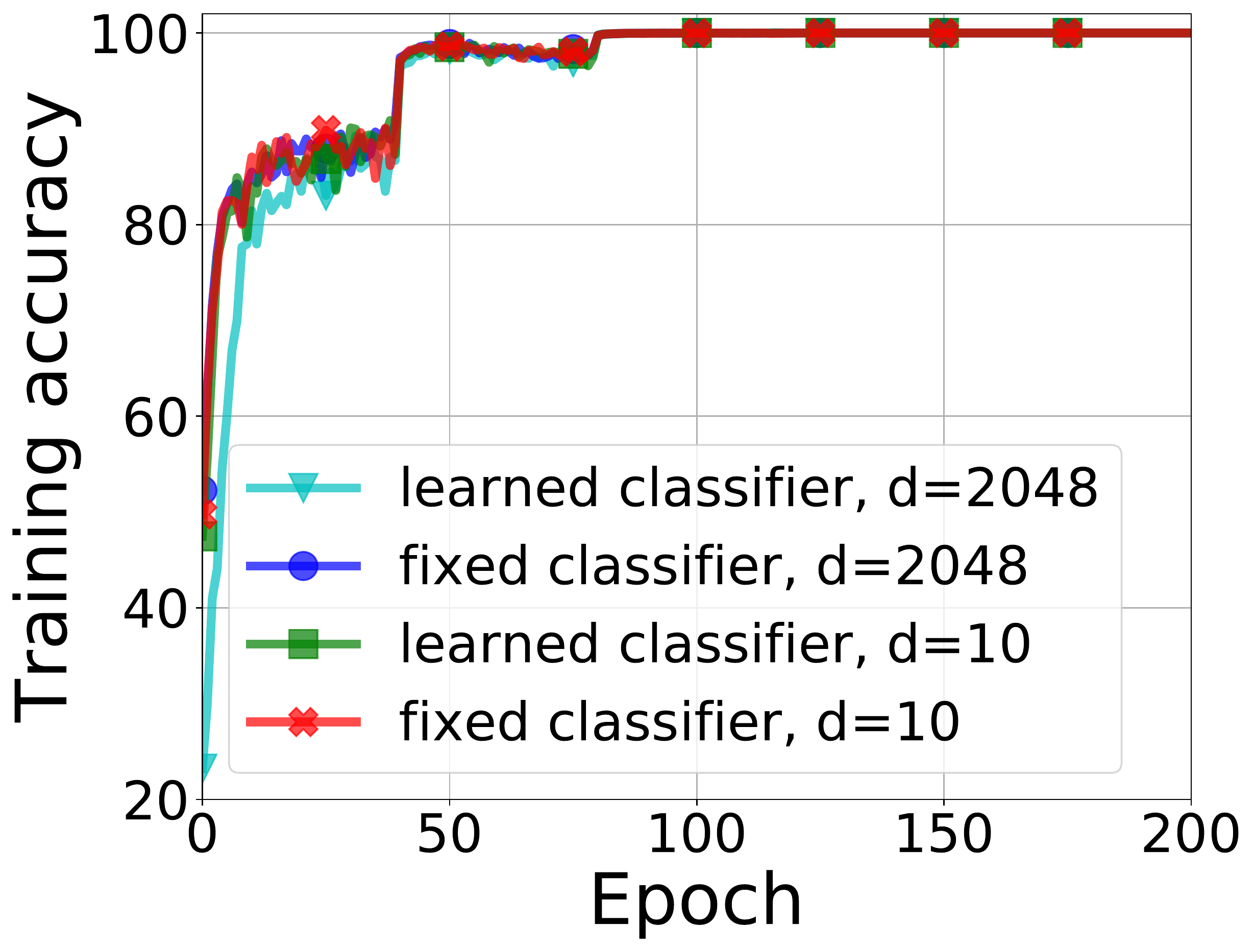} \
   \includegraphics[width=0.24\textwidth]{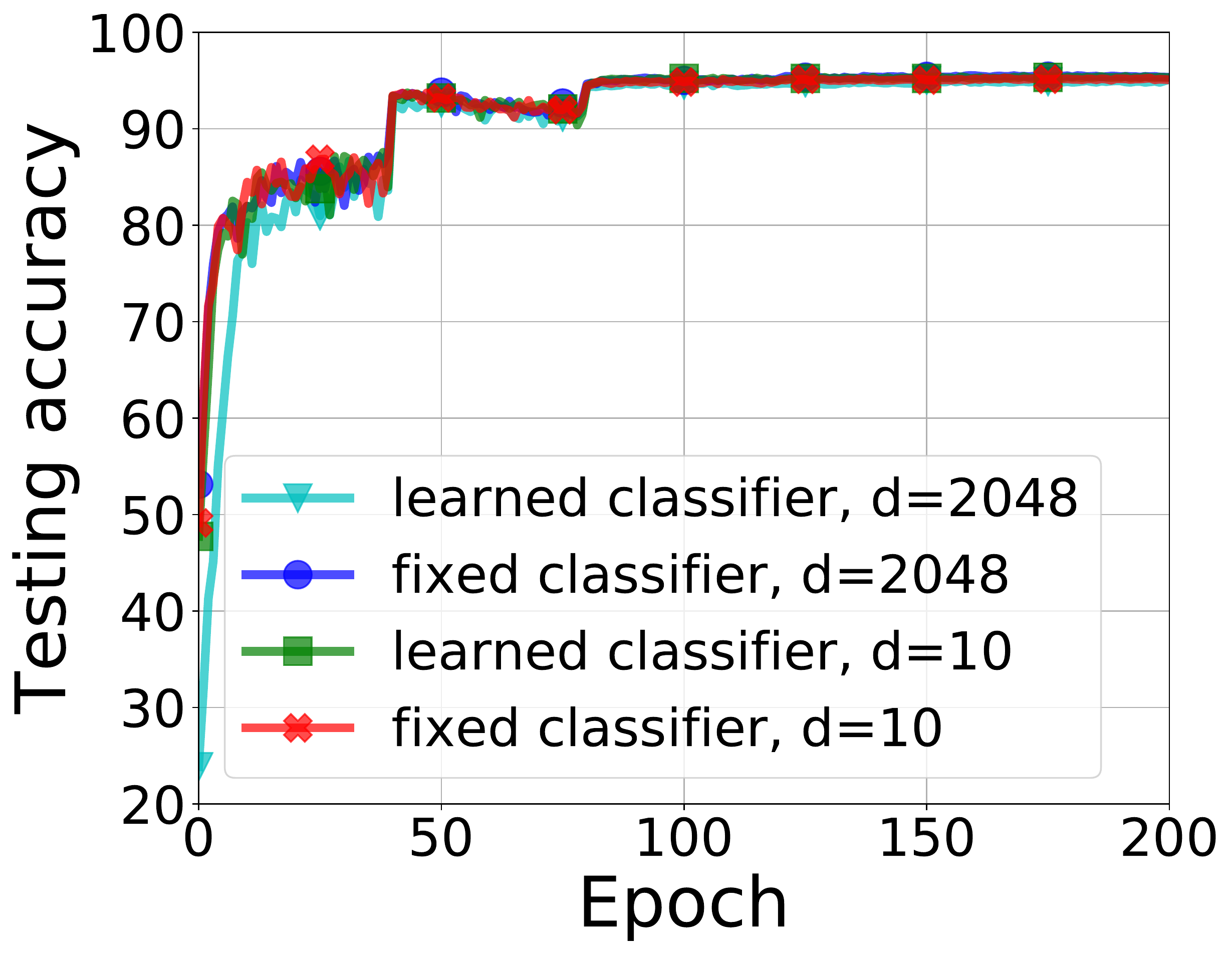}}
    \caption{\textbf{Comparison of the performances on learned vs. fixed last-layer classifiers.} We compare within-class variation collapse $\mc {NC}_1$, self-duality $\mc {NC}_3$, training accuracy, and test accuracy, on fixed and learned classifier on MNIST-ResNet18 (Top), CIFAR10-ResNet50 (Middle) and CIFAR10-ResNet50 (Bottom). Data augmentation is only used for CIFAR10-ResNet50 (Bottom). All networks are trained by SGD optimizer. %\zz{If $d=K$ has similar results, then we can just show the results $d = K$ and remove Figure 7. If $d=K$ leads to slightly different results, then we can also show here. Maybe there is no need to show neural collapse here.} %\zz{We can just show the results on SGD, plot the results for fixed and learned classifier in the same figure with legend "fixed classifer" and "learned classifer". Not sure if it's better to plot the error; see Fig.2 in \cite{hoffer2018fix}.}
  %\zz{Does this case have the ReLu at the end of the penultimate layer? If yes, for the bias term, could you also plot $\|\vb + \mW\valpha\|_2^2$, where $\valpha$ is the global mean of the entire features.}
    }
    \label{fig:fxied-classifier}
\end{figure}

\paragraph{Fixing the Last-layer Classifier as a Simplex ETF.} 
Since we know that the last-layer classifiers and features exhibit \NC, there is no need to train the layer -- one can fix the classifier $\mb W$ as a Simplex ETF throughout the training process. In the following, we demonstrate that such a strategy achieves on-par performance with classical fully training protocols and substantially reduces the number of training parameters. More specifically, we train a ResNet18 and fix the weights in the last layer as a Simplex ETF\footnote{Specifically, we set $\mW^\top = \sqrt{\frac{K}{K-1}} \mP \paren{ \mb I_K - \frac{1}{K} \mb 1_K \mb 1_K^\top}$ where $\mP\in\R^{d\times K}$ contains the first $K$ columns of a $d\times d$ identity matrix, which lifts an $K\times K$ ETF to $d\times K$ matrix. %\qq{what is $\mb P$ used for?} \zz{Use to lift the an $K\times K$ ETF to $d\times K$} matrix.
}
throughout training for MNIST dataset. For CIFAR10 dataset, we train a ResNet18 and also a ResNet50 \cite{res50} (so to match the state-of-the-art performance on CIFAR10) and fix the weights in the last layer as a Simplex ETF. We use the data augmentation introduced in \cite{shah2016deep} during the training phase for ResNet50.\footnote{More concretely, the type of data augmentations we use are: \emph{(i)} Add  for 4 pixels on each side for the color-normalized images, \emph{(ii)} do a random $32 \times 32$ crop from the padded images, and then \emph{(iii)} do a random horizontal flip with probability $0.5$. Thus, during the testing phase, the inputs to the network are only $32\times32$ color-normalized images.} We then learn the rest of the parameters of the model\footnote{For simplicity, we also learn the bias term in the last layer, though our result indicates that it can be set as $\mW\overline\vh$, where $\mW$ the classifier and $\overline\vh$ is the global mean of the features.} using SGD with the same settings as before for both MNIST and CIFAR10 experiments. \Cref{fig:fxied-classifier} presents a comparison of learned and fixed classifiers in terms of within-class variation collapse ($\mc {NC}_1$), self-duality ($\mc {NC}_3$), training accuracy, and test accuracy. These results imply that the fixed classifier exhibits the same within-class variation collapse for the feature $\mb H$, and achieves the same classification accuracy as the \emph{fully-trained} classifier. On the other hand, fixing the classifier can reduce the number of parameters and the computational complexity for training. The number of parameters in the classifier can be significant for tasks with a large number of classes and large feature dimensions. For example, for ImageNet, a dataset with $K=1000$ classes, fixing the classifier can reduce 8.01\%, 11.76\%, and 52.56\% of total learning parameters for ResNet50, DenseNet169 \cite{huang2017densely}, and ShuffleNet \cite{zhang2018shufflenet}, respectively.
%\js{I love this observation :)}

Finally, we note that our result also provides a theoretical justification for the work in \cite{hoffer2018fix} that fixes the classifier as orthonormal matrices. Indeed, these are close to simplex ETFs, particularly when the number of classes is large. Specifically, for a simplex ETF of size $d\times K$, the inner product between each pair of columns is $-\frac{1}{K-1}$, which is close to $0$ (and thus, the matrix is close to an orthonormal matrix) when $K$ is large. Therefore, for a dataset with many classes, such as ImageNet, setting the last layer classifier as an orthonormal matrix or a Simplex ETF is almost identical.

%We will now demonstrate that such a strategy does not hurt the performance of practical neural networks and, since it reduces the number of training parameters, has a potential computation benefit.

\noindent{\bf Feature Dimension Reduction for $\mb H \in \bb R^{d \times nK} $ by Choosing $d = K$.}\footnote{Although \Cref{thm:global-geometry} only holds for $d>K$, we conjecture it holds for any $d\geq K$ as we have discussed in \Cref{subsec:main-geometry}.}
In many classification problems, the practice of deep learning typically uses a feature dimension $d$ that is much larger than the number of classes $K$. In contrast, \NC\ implies that there is no need to choose a $d$ that is much larger than the number of classes $K$. Reducing the dimension $d$ can lead to substantial reductions in memory and computation cost. %\qq{need to rephrase here} 
As shown in \Cref{fig:fxied-classifier}, we also train all the weights of ResNet18 on MNIST and ResNet50 on CIFAR10 using SGD with $d=K$, respectively. The results demonstrate that \NC\ persists even when we choose $d=K$, and the network achieves on-par performance with networks of large $d$, in terms of training and test accuracy. This implies that when the number of classes $K$ is small, we can choose a small feature dimension $d \in O(K)$ instead of using a large universal $d$ to reduce the computation and memory costs for training. By setting $d=K$, this reduces the amount of parameters and hence the memory cost in ResNet18 and ResNet50 by 20.70\% and 4.45\% respectively.

\section{Conclusion}\label{sec:conclusion}

%\subsection{Relationship to Prior Arts}

%\subsection{Future Directions}

In this work, we have provided an in-depth analysis to demystify the \NC\;phenomenon, which usually happens during the terminal phase of training deep networks in classification problems. Based on the unconstrained feature model \cite{fang2021layer,lu2020neural,weinan2020emergence}, we proved that Simplex ETFs are the only global minimizers of the cross-entropy training loss with weight decay and bias. Moreover, we showed that the loss function is a strict saddle function with respect to the last-layer features and classifiers, with no other spurious local minimizers. In contrast to existing landscape analyses for deep neural networks, which mostly focus on the optimization perspective and is often disconnected from practice, our simplified analysis not only characterizes the features are learned in the last layer, but it also explains why they can be efficiently optimized. This provides support for empirical observations in practical deep network architectures. Moreover, the study of last-layer features could have profound implications for optimization, generalization, and robustness of broad interests, which we discuss in the following.

\paragraph{Investigating Deeper Models.} Our analysis has so far been focused on the last-layer features, treating them as free optimization variables thanks to the overparameterized nature of the network. A natural extension of these ideas is to further investigate features learned in shallower layers. For example, given that we know that the last-layer classifier $\mb W_L$ is a Simplex ETF, we can fix the last layer classifier $\mb W_L$ and study the following 2-layer unconstrained feature model, % Peeled Model,\zz{Change the name of layer peeled model}
\begin{align*}
    \psi_{\mb \Theta}(\mb x) \;=\;  \mb W_L \sigma (\mb W_{L-1} \underbrace{ \sigma(  \mb W_{L-2}\cdots \sigma \paren{\mb W_1 \mb x + \mb b_1}}_{\phi_{\vtheta'}'(\vx)} ) + \mb b_{L-1} )  + \mb b_L,
\end{align*}
where we treat $\mb \xi = \phi_{\vtheta'}'(\vx)$ as an optimization variable. Recent empirical evidence \cite[Figure 8]{papyan2020traces} suggests that shallower layers exhibit less severe variability collapse than deeper layers, alas in a progressive manner. Based on the analytical framework that we laid out in this work, it is of interest to investigate the patterns of learned features and the corresponding optimization landscape across shallower layers recursively.

\paragraph{Study of the Relationship Between \NC\;and Robustness.} 

It has been observed that the \NC\; phenomenon present during the terminal phase of training improves the adversarial robustness of the learned network \cite{papyan2020prevalence}. Based on our analytical framework, it would be interesting to theoretically investigate the optimization landscape and representations learned by adversarial training. Indeed, the study of the interplay of the learned features and the final classifier enables precise characterization of the adversarial robustness of the learned model \cite{sulam2020adversarial}. On the other hand, another line of recent work \cite{yu2020learning,chan2020deep} empirically showed and argued that mapping each class to a linearly separable subspace with maximum dimension (instead of collapsing them to a vertex of Simplex ETF) can improve robustness against random data corruptions such as label noise. Further empirical and theoretical investigations are needed to clarify our understandings of potential benefits and full implications of \NC\;for robustness.

%preventing \NC\; could improve . 
%It is known that modern neural networks trained in a standard procedure are vulnerable to adversarial attacks. To some extend, this can be explained by neural collapse which indicates standardtraining procedure learns simple representation which may be less robust than more complex ones.It is of interest to investigate the representations learned by adversarial training. Related work [30]Recent work [Yi Ma etc] shows that preventing nerual collapse could improve robustness againstrandom input corruptions. More empirical and theoretical investigation to clarify the understandingon this question.

\paragraph{Study of the Relationship Between \NC\;and Generalization.}
Our results on the benign landscape of unconstrained feature models imply that \NC\;is almost universal and agnostic to the choice of optimization algorithms. This seems to be delivering an exciting message that neural networks always converge to a perfect linear classifier in which the peeled-off layers learn maximally separated features and the final layer learns the corresponding maximal margin linear classifier. It is important to note, however, that our result concerns only the training data and does not apply to test data. Since the network is highly overparameterized, there can be infinitely many neural networks with \NC\;for a particular training dataset with very different generalization. This was also empirically demonstrated in our experiments, showing that different optimization algorithms lead to models that have notably different generalization performances, although all of them exhibit \NC. Therefore, the universality of \NC\; in fact implies that such a phenomenon cannot fully explain network generalization. A pertinent study of generalization will necessarily require scrutiny into the peeled-off layers, where different optimization methods impose different implicit biases on the learned network parameters.

%the behavior of which is obscured in the unconstrained feature model, as they determine the specific \NC\;solution that the network converges to. \cy{A preliminary writeup for your reference, may need to mention discussions in existing works of neural collapse as well}

% \paragraph{Study of the Relationship Between \NC\;and Neural Network Architectural Design.} \cy{I feel that neural collapse is at odds with residual learning and isometric learning in general. In particular from an optimization perspective it can be shown with both empirical and theoretical evidences that each layer should be close to an identity / isometric mapping, but such mappings are not very suited for producing neural collapse solutions. Could be another good place to mention MCR if we want to, if not having this discussion in this paper then could be good for the proposal.}

\paragraph{Study of the Relationship Between \NC\;and Network Training.} While our analysis assumes that the neural network can produce arbitrary features before the final layer due to over-parameterization, one should be reminded that this implicitly requires a good optimization landscape in training the peeled-off layers. On the other hand, it is well-known that deep models can be notoriously difficult to train, due to issues like vanishing and exploding gradients, which leads to complicated landscapes. Efficient optimization techniques for deep learning as of today often rely on residual learning \cite{he2016deep}, and isometric learning \cite{qi2020deep}, which are crucial for effective training of deep networks beyond a few layers.
Pieces of evidence come not only from extensive empirical work that improves network training via enforcing isometry \cite{xiao2018dynamical,liu2021orthogonal,liu2021convolutional} but also theoretical work that shows benign optimization landscape of residual and isometric networks \cite{hardt2016identity,hu2019provable}. %\js{i don't understand this last sentence; evidence of what?}
We note that the principles of residual learning and isometric learning, which employs an identity and isometric transformation that preserves distances (and hence, the structure of the data) at each layer, is at odds with \NC, which neglects intrinsic data structures and compresses each class into a vector of Simplex ETF. Resolving this conflict may require us to rethink the cross-entropy loss as a surrogate for the risk objective and design new objective functions that respect the intrinsic structure of the data \cite{yu2020learning}.
%\qq{maybe move some of this paragraph into the introduction instead?}
%\cy{This paragraph might be going too far; I don't know any existing evidence that backs up the argument here so may not be suited to put into intro for motivating the work. We could totally remove it here. Your call. Meanwhile, I do believe this points to an interesting future direction that worth some further investigation.}

\paragraph{Dealing with a Large Number of Classes $K\gg d$.}
Finally, it should be noted that our current analysis of the cross-entropy loss for classification focuses on cases where the number of classes, $K$, is smaller than the feature dimension $d$. This condition is crucial for showing negative curvatures of the critical saddle points. In many applications, such as recommendation systems \cite{covington2016deep} and document retrieval \cite{chang2019pre}, the number of classes can be huge and one cannot afford to design a model with feature dimension $d \ge K$. In such cases, our results do not apply; in fact, it is impossible for $K$ features in a $d$-dimensional space to form a Simplex ETF if $d < K$. This is also the case for contrastive learning \cite{chen2020simple,he2020momentum} in self-supervised feature learning \cite{jing2020self}, where augmented views of every sample are treated as a ``class'' so that the total number of classes grows with the size of the dataset. Nonetheless, recent studies in contrastive learning \cite{wang2020understanding,chen2020intriguing} showed that the features learned through the contrastive loss, a variant of cross-entropy loss \eqref{eqn:ce}, exhibit properties analogous to \NC. More precisely, samples from the same class are mapped to nearby features, referred to as \emph{alignment}, and feature vectors from different classes are as separated as mas as possible in feature space, referred to as \emph{uniformity}.
Hence, the \emph{alignment} and \emph{uniformity} are reminiscent of \emph{variability collapse} and \emph{convergence to Simplex ETF} properties of \NC, respectively, where Simplex ETF can be viewed as an instance of the uniformity property under the special case of $d \ge K$. A precise characterization of the \emph{uniformity} for the case $d \ll K$, in terms of landscape analysis and global optimality, could be an important question for further investigation.
%has been absent in the literature, leaving the global optimality and global landscape as open problems. 
%\cy{A preliminary writeup for your reference} \qq{very well written, little to change :)}

%============

%Could one envision similar tendencies in deep neural
%networks handling regression or synthesis tasks? Indeed, what is
%the parallel of the ideal classification to this breed of networks?
%These are important open questions that should be addressed, in
%our quest to demystify neural network solutions of inverse problems, generative adversarial networks, and more

%\zz{Not sure for language processing, but I don't see similar patterns in regression problems.}

%\paragraph{Optimization}

%\paragraph{Generalization}

%\paragraph{Robustness}

% \cy{Contrastive learning operates in the regime of $d \ll K$, where the features cannot form a Simplex ETF. Nonetheless, the features possess some ``uniformity'' properties that seem to generalize the notion of ETF \cite{wang2020understanding,chen2020intriguing}. May be of interest to generalize the study to those cases as future work. } 
% \qq{very good point}

\section*{Acknowledgment} ZZ acknowledges support from NSF grant CCF 2008460. 
%CY acknowledges support from Tsinghua-Berkeley Shenzhen Institute Research Fund. 
XL and QQ acknowledge support from NSF grant DMS 2009752. JS acknowledges support from NSF grant CCF 2007649. We would like to thank %\js{for consistency, I would remove these "Dr." and "Prof" from the acknowledges, and simply have their names.} Dr. 
Qinqing Zheng (Facebook AI Research), Vardan Papyan (U. Toronto), and Felix Yu (Google Research) for timely pointing us to some important references and valuable feedback on the final draft. We thank Christina Baek (UC Berkeley) and Sam Buchanan (Columbia U.) for fruitful discussions during various stages of the work. We also thank Zhexin Wu (U. Michigan) for proofreading and pointing out several typos in the draft.
%QQ also acknowledges support of Moore-Sloan fellowship, 

\newpage 

{\small 
\bibliographystyle{unsrt}
\bibliography{biblio/optimization,biblio/learning}
}
%\section*{Appendix}
%\newpage 

%\bibliographystyle{ieeetr}
%\pagenumbering{arabic}

\newpage 

\appendices

\paragraph{Notations and Organizations.}

To begin, we first briefly introduce some notations used throughout the appendix. For a scalar function $f(\mZ)$ with a variable $\mZ\in\R^{K\times N}$, its gradient is a $K\times N$ matrix whose $(i,j)$-th entry is $[\nabla f(\mZ)]_{ij} = \frac{\partial f(\mZ)}{z_{ij}}$ for all $i\in[K],j\in[N]$, where $z_{ij}$ represents the $(i,j)$-th entry of $\mZ$. The Hessian of $f(\mZ)$ can be viewed as an $KN\times KN$ matrix by vectorizing the matrix $\mZ$. An alternative way to present the Hessian is by a bilinear form defined via $[\nabla^2 f(\mZ)](\mA,\mB) = \sum_{i,j,k,\ell} \frac{\partial^2 f(\mZ)}{\partial z_{ij}z_{k\ell}} a_{ij}b_{k\ell}$ for any $\mA,\mB\in\R^{K\times N}$, which avoids the procedure of vectorizing the variable $\mZ$. We will use the bilinear form for the Hessian in \Cref{sec:appendix-prf-global-geometry}. 

The appendix is organized as follows. In Appendix \ref{app:basics}, we introduce the basic definitions and inequalities used throughout the appendices. In Appendix \ref{app:thm-global}, we provide a detailed proof for \Cref{thm:global-minima}, showing that the Simplex ETFs are the \emph{only} global minimizers to our regularized cross-entropy loss. Finally, in Appendix \ref{sec:appendix-prf-global-geometry}, we present the whole proof for \Cref{thm:global-geometry} that the function is a strict saddle function and no spurious local minimizers exist, which is one of the major contributions of the work.

\section{Basics}\label{app:basics}

%\subsection{Notations}
%\paragraph{Notations.} To begin, we first briefly introduce some notations used throughout the appendix. For a scalar function $f(\mZ)$ with a variable $\mZ\in\R^{K\times N}$, its gradient is a $K\times N$ matrix whose $(i,j)$-th entry is $[\nabla f(\mZ)]_{ij} = \frac{\partial f(\mZ)}{z_{ij}}$ for all $i\in[K],j\in[N]$, where $z_{ij}$ represents the $(i,j)$-th entry of $\mZ$. The Hessian of $f(\mZ)$ can be viewed as an $KN\times KN$ matrix by vectorizing the matrix $\mZ$. An alternative way to present the Hessian is by a bilinear form defined via $[\nabla^2 f(\mZ)](\mA,\mB) = \sum_{i,j,k,\ell} \frac{\partial^2 f(\mZ)}{\partial z_{ij}z_{k\ell}} a_{ij}b_{k\ell}$ for any $\mA,\mB\in\R^{K\times N}$, which avoids the procedure of vectorizing the variable $\mZ$. We will use the bilinear form for the Hessian in \Cref{sec:appendix-prf-global-geometry}. 

%\subsection{Basics}

\begin{definition}[$K$-Simplex ETF]\label{def:simplex-ETF}
A standard Simplex ETF is a collection of points in $\bb R^K$ specified by the columns of
\begin{align*}
    \mb M \;=\;  \sqrt{\frac{K}{K-1}}  \paren{ \mb I_K - \frac{1}{K} \mb 1_K \mb 1_K^\top },
\end{align*}
where $\mb I_K \in \bb R^{K \times K}$ is the identity matrix, and $\mb 1_K \in \bb R^K$ is the all ones vector. In the other words, we also have
\begin{align*}
    \mb M^\top \mb M \;=\; \mb M \mb M^\top \;=\; \frac{K}{K-1}  \paren{ \mb I_K - \frac{1}{K} \mb 1_K \mb 1_K^\top }.
\end{align*}

As in \cite{papyan2020prevalence,fang2021layer}, in this paper we consider general Simplex ETF as a collection of points in $\R^d$ specified by the columns of $\sqrt{\frac{K}{K-1}} \mP \paren{ \mb I_K - \frac{1}{K} \mb 1_K \mb 1_K^\top }$, where $\mP\in\R^{d\times K} (d\ge K)$ is an orthonormal matrix, i.e., $\mP^\top \mP = \mb I_K$. 

\end{definition}

\begin{lemma}[Young's Inequality]\label{lem:young-inequality}
Let $p,q$ be positive real numbers satisfying $\frac{1}{p}+\frac{1}{q} = 1$. Then for any $a,b \in \bb R$, we have
\begin{align*}
    \abs{ab} \; \leq\; \frac{ \abs{a}^p }{p}\;+\; \frac{ \abs{b}^q }{q},
\end{align*}
where the equality holds if and only if $\abs{a}^p=\abs{b}^q$. The case $p=q=2$ is just the AM-GM inequality for $a^2,\;b^2$: $\abs{ab}\leq \frac{1}{2}\paren{a^2+b^2}$, where the equality holds if and only if $\abs{a} = \abs{b}$.
\end{lemma}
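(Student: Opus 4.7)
The plan is to give the standard proof of Young's inequality via the concavity of the logarithm (equivalently, the convexity of the exponential), treating the equality case as a consequence of strict concavity. I would begin by handling the trivial case: if $a=0$ or $b=0$, then $|ab|=0\le |a|^p/p + |b|^q/q$, with equality forcing $|a|^p=|b|^q=0$. So it suffices to prove the inequality for $|a|,|b|>0$, where we may work with the positive reals $x:=|a|$ and $y:=|b|$ and show $xy \le x^p/p + y^q/q$.

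Next, I would invoke the fact that $\log$ is strictly concave on $(0,\infty)$ together with the hypothesis $1/p + 1/q = 1$, which says that $(1/p, 1/q)$ is a convex combination. Applying Jensen's inequality to the two positive numbers $x^p$ and $y^q$ with weights $1/p$ and $1/q$ gives
\begin{equation*}
\log\!\left(\tfrac{1}{p}x^p + \tfrac{1}{q}y^q\right) \;\ge\; \tfrac{1}{p}\log(x^p) + \tfrac{1}{q}\log(y^q) \;=\; \log x + \log y \;=\; \log(xy).
\end{equation*}
Because $\log$ is strictly increasing, this yields $x^p/p + y^q/q \ge xy = |ab|$, which is exactly the desired inequality.

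For the equality characterization, strict concavity of $\log$ implies that equality in Jensen's inequality holds if and only if the two points being averaged coincide, i.e.\ $x^p = y^q$, which in the original variables is $|a|^p = |b|^q$. Combining with the boundary case handled above gives the full equality condition. Finally, specializing to $p=q=2$ recovers $|ab|\le \tfrac{1}{2}(a^2+b^2)$, with equality iff $|a|=|b|$, as stated.

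I do not anticipate a real obstacle here: the only subtlety is making sure the equality condition is handled cleanly, which follows from the strict concavity of $\log$. An equally short alternative, if one prefers to avoid invoking Jensen, would be to set $x=e^{s/p}$, $y=e^{t/q}$ and apply the convexity inequality $e^{s/p + t/q}\le e^s/p + e^t/q$ directly; I would mention this as a one-line variant only if space permits.
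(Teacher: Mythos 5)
Your proof is correct and complete: the reduction to $|a|,|b|>0$, the application of Jensen's inequality to the strictly concave $\log$ with weights $1/p,1/q$ (which are automatically in $(0,1)$ since $1/p+1/q=1$ with $p,q>0$ forces $p,q>1$), and the equality characterization via strict concavity all go through. Note that the paper simply states this lemma as a standard fact in its Basics appendix without proof, so there is no paper argument to compare against; your Jensen-based derivation is the standard one and fills that gap cleanly.
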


The following Lemma extends the standard variational form of the nuclear norm.

\begin{lemma} \label{lem:nuclear-norm}
For any fixed $\mZ\in\R^{K\times N}$ and $\alpha>0$, we have
%\js{and for any $\alpha$?} we have
\begin{align}\label{eqn:nuclear-norm-eq}
    \norm{\mb Z}{*} \;=\; \min_{ \mb Z = \mb W \mb H } \; \frac{1}{ 2 \sqrt{ \alpha } } \paren{ \norm{\mb W}{F}^2 + \alpha \norm{\mb H}{F}^2  }.
\end{align}
Here, $\norm{\mZ}{*}$ denotes the nuclear norm of $\mb Z$:
\begin{align*}
    \norm{\mb Z}{*} \;:=\; \sum_{k=1}^{ \min\Brac{K,N}} \sigma_{k}(\mb Z) = \trace\paren{\mb \Sigma},\quad \text{with}\quad \mb Z \;=\; \mb U \mb \Sigma \mb V^\top,
\end{align*}
where $\Brac{\sigma_{k}}_{k=1}^{ \min\Brac{K,N} }$ denotes the singular values of $\mb Z$, and $\mb Z =\mb U \mb \Sigma \mb V^\top$ is the singular value decomposition (SVD) of $\mb Z$.
\end{lemma}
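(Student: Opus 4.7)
The plan is to prove the identity by establishing the two inequalities $(\ge)$ and $(\le)$, both by classical arguments. The right-hand side in \eqref{eqn:nuclear-norm-eq} is a weighted version of the standard variational characterization $\|\mZ\|_* = \min_{\mZ=\mW\mH}\tfrac{1}{2}(\|\mW\|_F^2+\|\mH\|_F^2)$, and the weight $\alpha$ is absorbed by a simple rescaling, so this lemma is essentially a reparametrization of the unweighted identity.

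First I would prove the direction $\|\mZ\|_*\le \tfrac{1}{2\sqrt\alpha}(\|\mW\|_F^2+\alpha\|\mH\|_F^2)$ for every factorization $\mZ=\mW\mH$. Apply the AM--GM form of Young's inequality (\Cref{lem:young-inequality} with $p=q=2$) to the pair $\|\mW\|_F$ and $\sqrt{\alpha}\,\|\mH\|_F$, obtaining
\begin{equation*}
\|\mW\|_F^2 + \alpha\|\mH\|_F^2 \;\ge\; 2\sqrt{\alpha}\,\|\mW\|_F\|\mH\|_F.
\end{equation*}
Then bound $\|\mW\|_F\|\mH\|_F \ge \|\mW\mH\|_*=\|\mZ\|_*$ using the standard inequality $\|\mA\mB\|_* \le \|\mA\|_F\|\mB\|_F$, which follows from the duality $\|\cdot\|_*=(\|\cdot\|_{\mathrm{op}})^{*}$ together with Cauchy--Schwarz in the Frobenius inner product: for any $\mU$ with $\|\mU\|_{\mathrm{op}}\le 1$, one has $\langle \mU,\mW\mH\rangle = \langle \mU^{\top}\mW,\mH\rangle \le \|\mU^{\top}\mW\|_F\|\mH\|_F \le \|\mW\|_F\|\mH\|_F$; taking the supremum over $\mU$ gives the claim.

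Next I would show this lower bound is attained by an explicit factorization. Let $\mZ=\mU\mSigma\mV^{\top}$ be the thin SVD of $\mZ$ and set
\begin{equation*}
\mW^{\star} \;=\; \alpha^{1/4}\,\mU\mSigma^{1/2}, \qquad \mH^{\star} \;=\; \alpha^{-1/4}\,\mSigma^{1/2}\mV^{\top}.
\end{equation*}
Then $\mW^\star\mH^\star = \mZ$, and direct computation gives $\|\mW^\star\|_F^2 = \sqrt{\alpha}\,\trace(\mSigma) = \sqrt\alpha\,\|\mZ\|_*$ while $\alpha\|\mH^\star\|_F^2 = \sqrt\alpha\,\trace(\mSigma) = \sqrt\alpha\,\|\mZ\|_*$, so the right-hand side of \eqref{eqn:nuclear-norm-eq} equals exactly $\|\mZ\|_*$. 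The scaling $\alpha^{1/4}$ is precisely what makes the two terms $\|\mW\|_F^2$ and $\alpha\|\mH\|_F^2$ equal, which is the equality case of AM--GM; this is the only subtlety in the proof.

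The main (minor) obstacle is choosing the balancing constant $\alpha^{1/4}$ correctly and verifying that equality holds simultaneously in both the AM--GM step and the $\|\mW\mH\|_*\le\|\mW\|_F\|\mH\|_F$ step. The second equality holds here because $\mU^{\top}\mW^{\star}$ and $\mH^{\star}$ are (up to scaling) proportional as singular-vector-aligned objects, so Cauchy--Schwarz is tight. Combining the two bounds yields the stated identity.
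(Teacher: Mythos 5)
Your proposal is correct and follows essentially the same route as the paper: the explicit factorization $\mb W^\star=\alpha^{1/4}\mb U\mb\Sigma^{1/2}$, $\mb H^\star=\alpha^{-1/4}\mb\Sigma^{1/2}\mb V^\top$ for attainment, and Young's/AM--GM inequality combined with orthogonal invariance for the lower bound (the paper applies Young's inequality directly to $\trace(\mb U^\top\mb W\mb H\mb V)$ rather than routing through $\norm{\mb W\mb H}{*}\le\norm{\mb W}{F}\norm{\mb H}{F}$, but this is a cosmetic rearrangement). The only nit is a transpose slip in your duality step: it should read $\inner{\mb U,\mb W\mb H}=\inner{\mb W^\top\mb U,\mb H}$, since $\mb U^\top\mb W$ does not have the dimensions of $\mb H$.
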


\begin{proof}[Proof of Lemma \ref{lem:nuclear-norm}] 
Let $\mb Z = \mb U \mb \Sigma \mb V^\top$ be the SVD of $\mb Z$. First of all, by the facts that $\mb U^\top \mb U = \mb I$, $\mb V^\top\mb V = \mb I$, $\trace{\mb A^\top \mb A} = \norm{\mb A}{F}^2$ for any $\mb A \in \bb R^{n_1 \times n_2} $, and  cyclic permutation invariance of $\trace\paren{\cdot}$, we have
\begin{align*}
    \norm{\mb Z}{*} \;=\; \trace\paren{ \mb \Sigma } \;&=\; \frac{1}{2\sqrt{\alpha}} \trace\paren{ \sqrt{\alpha}\mb U^\top \mb U \mb \Sigma   } + \frac{\sqrt{\alpha}}{2} \trace\paren{ \frac{1}{\sqrt{\alpha}} \mb \Sigma \mb V^\top\mb V  } \\
    \;&=\; \frac{1}{2\sqrt{\alpha}}  \paren{  \norm{ \alpha^{1/4} \mb U \mb \Sigma^{1/2}  }{F}^2 + \alpha \norm{ \alpha^{-1/4} \mb \Sigma^{1/2} \mb V^\top   }{F}^2 }.
\end{align*}
This implies that there exists some $\mb W =  \alpha^{1/4} \mb U \mb \Sigma^{1/2}$ and $\mb H = \alpha^{-1/4} \mb \Sigma^{1/2} \mb V^\top $, such  that $\norm{\mb Z}{*} = \frac{1}{ 2 \sqrt{ \alpha } } \paren{ \norm{\mb W}{F}^2 + \alpha \norm{\mb H}{F}^2  }$. This equality further implies that 
\begin{align}\label{eqn:lower-bound-Z}
    \norm{\mb Z}{*}\;\geq \; \min_{ \mb Z = \mb W \mb H } \; \frac{1}{ 2 \sqrt{ \alpha } } \paren{ \norm{\mb W}{F}^2 + \alpha \norm{\mb H}{F}^2  }.
\end{align}
On the other hand, for any $\mb W\mb H = \mb Z$, we have
\begin{align*}
    \norm{\mb Z}{*} \;&=\; \trace \paren{ \mb \Sigma } \;=\; \trace \paren{ \mb U^\top \mb Z \mb V } \;=\; \trace\paren{ \mb U^\top \mb W \mb H \mb V } \\
    \;&\leq\; \frac{1}{2\sqrt{\alpha}} \norm{\mb U^\top \mb W}{F}^2 + \frac{ \sqrt{\alpha} }{2} \norm{ \mb H \mb V }{F}^2 \;\leq\; \frac{1}{2\sqrt{\alpha}} \paren{ \norm{\mb W}{F}^2 + \alpha \norm{\mb H}{F}^2 }, 
\end{align*}
where the first inequality utilize the Young's inequality in Lemma \ref{lem:young-inequality} that $\abs{\trace(\mA\mB)}\le \frac{1}{2c}\norm{\mA}{F}^2 + \frac{c}{2}\norm{\mB}{F}^2$ for any $c>0$ and $\mA,\mB$ of appropriate dimensions, %\js{This is not CW. This is just expanding the binomial, isn't it?}\zz{Again, it's Young's inequality which I was always confused with the CW.} 
and the last inequality follows because $\norm{\mb U}{}= 1$ and $\norm{\mb V}{}= 1$. 
%\js{isn't $\|\mb U\|=\|\mb V\|=1$?}\zz{yes, corrected.}. 
Therefore, we have 
\begin{align}\label{eqn:upper-bound-Z}
    \norm{\mb Z}{*} \;\leq\; \min_{ \mb Z = \mb W \mb H } \; \frac{1}{ 2 \sqrt{ \alpha } } \paren{ \norm{\mb W}{F}^2 + \alpha \norm{\mb H}{F}^2  }.
\end{align}
Combining the results in \eqref{eqn:lower-bound-Z} and \eqref{eqn:upper-bound-Z}, we complete the proof.
\end{proof}

\begin{lemma}[Theorem 7.2.6 of \cite{horn2012matrix}]\label{lem:matrix-analysis-horn}
Let $\mb A \in \bb R^{n \times n}$ be a symmetric positive semidefinite matrix.
Then for any fixed $k \in \Brac{2,3,\cdots}$, there exists a unique real symmetric positive semidefinte matrix $\mb B$ such that $\mb B^k = \mb A$.
\end{lemma}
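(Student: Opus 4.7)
The plan is to prove existence via an explicit construction through the spectral theorem, and then to handle uniqueness by showing that any symmetric PSD $k$-th root of $\mb A$ must in fact share the eigenspace decomposition of $\mb A$, so that its entries on each eigenspace are forced.

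For existence, I would invoke the real spectral theorem on the symmetric PSD matrix $\mb A$ to write $\mb A = \mb U \mb \Lambda \mb U^\top$ with $\mb U \in \R^{n\times n}$ orthogonal and $\mb \Lambda = \diag(\lambda_1,\dots,\lambda_n)$, where $\lambda_i \ge 0$. Since each $\lambda_i \ge 0$, the scalar $k$-th root $\lambda_i^{1/k}$ is a well-defined nonnegative real. Then I define
\begin{equation*}
\mb B \;:=\; \mb U \,\diag(\lambda_1^{1/k},\dots,\lambda_n^{1/k})\, \mb U^\top.
\end{equation*}
A direct calculation using $\mb U^\top \mb U = \mb I_n$ shows $\mb B^k = \mb U \,\diag(\lambda_1,\dots,\lambda_n)\, \mb U^\top = \mb A$. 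By construction $\mb B$ is symmetric, and its eigenvalues $\lambda_i^{1/k}\ge 0$ make it PSD. This yields existence.

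For uniqueness, suppose $\mb B_1, \mb B_2$ are symmetric PSD with $\mb B_1^k = \mb B_2^k = \mb A$. Let $\mb B_j = \mb V_j \mb D_j \mb V_j^\top$ be spectral decompositions with $\mb D_j$ nonnegative diagonal. Then $\mb A = \mb V_j \mb D_j^k \mb V_j^\top$ for $j=1,2$, so the distinct eigenvalues of $\mb A$ are precisely $\{\mu^k : \mu \in \mathrm{spec}(\mb B_j)\}$. Because $t \mapsto t^k$ is a strict bijection on $[0,\infty)$, every eigenvalue of $\mb B_j$ is uniquely determined as $\mu = \lambda^{1/k}$ for the corresponding eigenvalue $\lambda$ of $\mb A$. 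Next, for each distinct eigenvalue $\lambda$ of $\mb A$ with eigenspace $E_\lambda \subset \R^n$, I would show $E_\lambda$ is exactly the eigenspace of $\mb B_j$ for eigenvalue $\lambda^{1/k}$: indeed if $\mb B_j \vv = \mu \vv$ with $\mu^k = \lambda$, then $\mb A\vv = \mb B_j^k \vv = \lambda \vv$, so the $\mu$-eigenspace of $\mb B_j$ is contained in $E_\lambda$; by counting dimensions over all distinct $\lambda$ (both $\mb B_j$ and $\mb A$ have $n$ eigenvalues with multiplicity and the correspondence $\mu \leftrightarrow \mu^k$ is a bijection), these inclusions are equalities. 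Hence $\mb B_j$ acts on each $E_\lambda$ as the scalar $\lambda^{1/k}$, and therefore $\mb B_1 = \mb B_2 = \sum_\lambda \lambda^{1/k} \mb P_{E_\lambda}$, where $\mb P_{E_\lambda}$ is the orthogonal projector onto $E_\lambda$.

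The main subtlety is the uniqueness step rather than existence: one must be careful that positive semidefiniteness of the $k$-th root is really what forces the scalar root to be nonnegative and hence unique. Without the PSD constraint, matrices like $-\mb I$ would also satisfy $(-\mb I)^k = \mb I$ for even $k$, and there exist exotic non-symmetric square roots as well. The clean argument above exploits (i) nonnegativity of eigenvalues, which makes the scalar $k$-th root unique, and (ii) symmetry, which gives an orthogonal eigenbasis and reduces equality of matrices to equality of their actions on each eigenspace. Since this lemma is quoted verbatim from Horn--Johnson (Theorem 7.2.6 of \cite{horn2012matrix}), in the paper I would simply cite that reference rather than reproduce the argument in full.
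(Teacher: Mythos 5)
Your proof is correct: existence via the spectral decomposition $\mb A = \mb U \mb \Lambda \mb U^\top$ and the entrywise nonnegative $k$-th root, and uniqueness via matching the eigenspaces of any symmetric PSD root to those of $\mb A$ through the bijection $t \mapsto t^k$ on $[0,\infty)$, is exactly the standard argument behind Theorem 7.2.6 of Horn--Johnson. The paper itself supplies no proof of this lemma and simply cites that reference, which is also what you conclude you would do, so there is nothing further to reconcile.
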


\section{Proof of \Cref{thm:global-minima}}\label{app:thm-global}

In this part of appendices, we prove \Cref{thm:global-minima} in \Cref{sec:main} that we restate as follows.

\begin{theorem}[Global Optimality Condition]\label{thm:global-minima-app}
	Assume that the feature dimension $d$ is larger than the number of classes $K$, i.e., $d> K$. Then any global minimizer $(\mW^\star, \mH^\star,\vb^\star)$ of 
	\begin{align}\label{eq:obj-app}
     \min_{\mb W , \mb H,\mb b  } \; f(\mb W,\mb H,\mb b) \;:=\; g(\mW\mH + \vb\vone^\top) \;+\; \frac{\lambda_{\mb W} }{2} \norm{\mb W}{F}^2 + \frac{\lambda_{\mb H} }{2} \norm{\mb H}{F}^2 + \frac{\lambda_{\mb b} }{2} \norm{\mb b}{2}^2
\end{align}
with 
\begin{align}\label{eqn:g-lce-app}
    g(\mW\mH + \vb\vone^\top) := \frac{1}{N}\sum_{k=1}^K\sum_{i=1}^n \Lce(\mW\vh_{k,i} + \vb,\vy_k),
\end{align}
obeys the following 
\begin{align*}
    & \norm{\mb w^\star}{2}\;=\; \norm{\mb w^{\star 1} }{2} \;=\; \norm{\mb w^{\star 2}}{2} \;=\; \cdots \;=\; \norm{\mb w^{\star K} }{2}, \quad \text{and}\quad \mb b^\star = b^\star \mb 1, \\ 
    & \vh_{k,i}^\star \;=\;  \sqrt{ \frac{ \lambda_{\mb W}  }{ \lambda_{\mb H} n } } \vw^{\star k} ,\quad \forall \; k\in[K],\; i\in[n],  \quad \text{and} \quad  \ol{\mb h}_{i}^\star \;:=\; \frac{1}{K} \sum_{j=1}^K \mb h_{j,i}^\star \;=\; \mb 0, \quad \forall \; i \in [n],
\end{align*}
where either $b^\star = 0$ or $\lambdab=0$, and the matrix $\frac{1}{\norm{\mb w^\star}{2}}  \mW^{\star\top} $ forms a $K$-simplex ETF defined in Definition \ref{def:simplex-ETF} in the sense that 
\begin{align*}
    \frac{1}{\norm{\mb w^\star}{2}^2}  \mW^{\star\top} \mW^{\star}\;=\; \frac{K}{K-1}  \paren{ \mb I_K - \frac{1}{K} \mb 1_K \mb 1_K^\top }.
\end{align*}
\end{theorem}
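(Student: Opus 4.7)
The plan is to derive a lower bound for $f(\mW,\mH,\vb)$ that depends on only a handful of scalar quantities, and then characterize the configurations that saturate every inequality in the chain. First I would apply Lemma~\ref{lem:nuclear-norm} (multiplied through by $\lambda_W$) with parameter $\alpha = \lambda_H/\lambda_W$ to obtain
\begin{equation*}
\tfrac{\lambda_W}{2}\|\mW\|_F^2 + \tfrac{\lambda_H}{2}\|\mH\|_F^2 \;\geq\; \sqrt{\lambda_W\lambda_H}\,\|\mW\mH\|_*,
\end{equation*}
with equality precisely when $\mW$ and $\mH$ are balanced SVD factors. Setting $\mZ = \mW\mH + \vb\vone^\top$ and using that $d\geq K$ allows \emph{every} matrix in $\R^{K\times N}$ to be realized as $\mW\mH$ with $\mW\in\R^{K\times d}, \mH\in\R^{d\times N}$, the original objective is lower bounded by the jointly convex surrogate
\begin{equation*}
\min_{\mZ,\vb}\; g(\mZ) + \sqrt{\lambda_W\lambda_H}\,\|\mZ - \vb\vone^\top\|_* + \tfrac{\lambda_b}{2}\|\vb\|_2^2,
\end{equation*}
whose optimum coincides with $\min f$.

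Next I would exploit two invariances of this surrogate: the $S_K$-symmetry $(\mZ,\vb)\mapsto (\mb P\mZ\mb Q_{\mb P},\,\mb P\vb)$, where $\mb P$ permutes class indices and $\mb Q_{\mb P}$ is the induced permutation of the $K$ column blocks of size $n$, and the shift invariance $\Lce(\vz+c\vone,\vy_k)=\Lce(\vz,\vy_k)$. By Jensen's inequality applied to the jointly convex objective, averaging any minimizer over its $S_K$-orbit yields another minimizer that is $S_K$-invariant; this forces $\vb^\star = b^\star\vone$, and the within-class columns of $\mZ^\star$ to coincide (variability collapse). After this symmetrization, $\mZ^\star = \tilde\mZ \otimes \vone_n^\top$ for some $\tilde\mZ \in \R^{K\times K}$ taking only two distinct values, $z_+$ on the diagonal and $z_-$ off the diagonal; the cross-entropy collapses to $\log(1+(K-1)e^{-(z_+-z_-)})$, and $\tilde\mZ - b^\star\vone\vone^\top$ has eigenvalues $(z_+-z_-)$ with multiplicity $K-1$ and $z_+ + (K-1)z_- - Kb^\star$ with multiplicity one, so its nuclear norm collapses to an explicit scalar expression.

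I would then solve the resulting three-scalar problem in $(z_+,z_-,b^\star)$. The first-order condition in $b^\star$ (when $\lambdab>0$) drives the second eigenvalue to zero, i.e.\ $z_+ + (K-1)z_- = Kb^\star = 0$, which is precisely the row-sum-zero pattern of a Simplex ETF; the condition in $\alpha := z_+-z_-$ reduces to a one-dimensional convex equation with a unique positive solution. Tracing back through the equality conditions of Lemma~\ref{lem:nuclear-norm} (which require $\mH$ to be a balanced SVD factor of $\mW\mH$, so under variability collapse $\vh_{k,i}=c\,\vw^{k}$ with $c=\sqrt{\lambda_W/(\lambda_H n)}$) then yields all the claimed identities: equal row norms of $\mW^\star$, the centering $\ol{\mb h}_i^\star=\mb 0$, and $\mW^\star\mW^{\star\top} = \kappa\bigl(\mb I_K - \tfrac{1}{K}\vone\vone^\top\bigr)$ for some $\kappa>0$, which after normalization gives \eqref{eqn:simplex-ETF-closeness}.

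The main obstacle is the symmetrization step. Because $f$ itself is not jointly convex in $(\mW,\mH,\vb)$ (due to the bilinear term $\mW\mH$), Jensen's inequality cannot be applied directly to the original objective; the detour through the convex surrogate via Lemma~\ref{lem:nuclear-norm} is essential, and I would need to verify that the symmetric optimum of the surrogate is actually realizable by some $(\mW,\mH,\vb)$ achieving equality at every step, which follows from $d\geq K$ and the explicit equality characterization in Lemma~\ref{lem:nuclear-norm}. A secondary subtlety is the borderline case $\lambdab=0$, in which $b^\star$ is underdetermined by the first-order conditions; here the shift invariance of cross-entropy lets one choose $\vb^\star = b^\star\vone$ for any scalar $b^\star$, matching the theorem's disjunction ``either $b^\star=0$ or $\lambdab=0$''.
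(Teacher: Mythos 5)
Your route is genuinely different from the paper's. The paper proves this theorem by \emph{direct} lower bounding: it exploits the critical-point balance condition $\lambda_{\mW}\mW^\top\mW = \lambda_{\mH}\mH\mH^\top$ (Lemma~\ref{lem:critical-balance}), applies a Jensen-style bound to each cross-entropy term (Lemma~\ref{lem:lce-lower-bound}), and uses AM-GM on the resulting bilinear term $\langle \ol{\vh}_i - \vh_{k,i}, \vw^k\rangle$ to obtain a scalar lower bound $\xi(\rho;\lambdaW,\lambdaH)$ together with a complete list of equality conditions (Lemma~\ref{lem:lower-bound-equality-cond}). The nuclear-norm convexification you propose \emph{is} in the paper, but it is the workhorse of \Cref{thm:global-geometry-app} (the landscape theorem in Appendix~\ref{sec:appendix-prf-global-geometry}), not of the present global-optimality theorem. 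Your instinct to reuse it here is reasonable, and the reduction to a two-parameter scalar problem $(z_+,z_-)$ is a clean way to compute the optimal value; but as written there are two gaps.

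First, the $S_K$-symmetrization alone does not give within-class variability collapse: permuting class labels constrains how the $K$ column blocks $\mZ_k \in \R^{K\times n}$ are related to one another, but says nothing about whether the $n$ columns \emph{inside} a block agree. To make ``within-class columns coincide'' you must also average over the $S_n^K$ subgroup of within-class sample permutations (the full group is $S_n \wr S_K$). This is easily repaired but is not the argument you stated.

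Second, and more seriously, group-averaging a minimizer of the convex surrogate $\wt f$ produces one symmetric minimizer, but \Cref{thm:global-minima-app} is an \emph{exhaustiveness} statement: \emph{every} global minimizer $(\mW^\star,\mH^\star,\vb^\star)$ of $f$ must have the stated form. Your chain of inequalities does show that any global minimizer of $f$ yields a minimizer $(\mZ^\star,\vb^\star) = (\mW^\star\mH^\star, \vb^\star)$ of $\wt f$ with the nuclear-norm equality saturated, so the missing piece is \emph{uniqueness} of the optimal $(\mZ^\star,\vb^\star)$ in the convex program (modulo the freedoms accounted for in the theorem). This is not free: $\|\cdot\|_*$ is not strictly convex, and $g$ has a nontrivial kernel $\{\vone\vd^\top\}$ from the softmax shift invariance, so you would need to argue that the strict convexity of $g$ on the complementary subspace, together with the $\lambda_{\vb}$-penalty (or, when $\lambdab=0$, the interaction with $\|\cdot\|_*$), pins $(\mZ^\star,\vb^\star)$ down exactly. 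Until that is established, your argument only shows that the Simplex ETF configuration \emph{is} a global minimizer, not that it is the \emph{only} one; the paper's approach avoids this issue entirely because its equality conditions (Lemma~\ref{lem:lower-bound-equality-cond}) are stated as a biconditional for an arbitrary critical point.
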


\subsection{Main Proof}

Similar to the proofs in \cite{lu2020neural,fang2021layer}, we prove the theorem by directly showing that $f(\mW,\mH,\vb)> f(\mW^\star,\mH^\star,\vb^\star)$ for any $(\mW,\mH, \vb)$ not in the form as shown in \Cref{thm:global-minima-app}.

\vspace{0.1in}

\begin{proof}[Proof of \Cref{thm:global-minima-app}] 
First note that the objective function $f$ is \emph{coercive}\footnote{A function $f:\bb R^n \mapsto \bb R$ is coercive if $f(\mb x) \rightarrow +\infty$ as $\norm{\mb x}{2} \rightarrow +\infty $.}  due to the weight decay regularizers and the fact that the CE loss is always non-negative. %\js{I don't think so: if data is separable, the minimizer diverges as $g\to 0$. $f$ is coercive because the regularizers ($\|\cdot\|_F$) are coercive.} \zz{Correct and add the weight decay regularizers.}. 
This implies that the global minimizer of $f(\mb W,\mb H,\mb b)$ in \eqref{eq:obj-app} is always finite. By Lemma \ref{lem:critical-balance}, we know that any critical point $(\mb W,\mb H,\mb b)$ of $f$ in \eqref{eq:obj-app} satisfies 
\begin{align*}
    \mW^\top\mW = \frac{\lambda_{\mH}}{\lambda_{\mW}}\mH\mH^\top. 
\end{align*}
For the rest of the proof, to simplify the notations, let $\norm{\mW}{F}^2 = \rho$, and thus $\norm{\mH}{F}^2 = \frac{\lambda_{\mH}}{\lambda_{\mW}}\rho$. 

%\paragraph{The global solutions can be attained by the $K$-Simplex ETF.} 
We will first provide a lower bound for the cross-entropy term $g(\mW\mH + \vb\vone^\top)$ for any $\mW$ with energy $\rho$, and then show that the lower bound is attained if and only if the parameters are in the form described in \Cref{thm:global-minima-app}. Now, by Lemma \ref{lem:lower-bound-g}, we know that for any $c_1,c_3 >0$,
\begin{align*}
   g(\mb W \mb H + \mb b \mb 1^\top) \;\geq\; - \frac{\rho }{(1+c_1)(K-1)} \sqrt{ \frac{ \lambda_{\mb W}  }{ \lambda_{\mb H} n } } \;+\; c_2
\end{align*}
with $c_2 = \frac{1}{1+c_1} \log\paren{ (1+c_1)(K-1) } + \frac{c_1}{1+c_1} \log\paren{ \frac{1+c_1}{c_1} } $. Therefore, we have
\begin{align*}
    f(\mb W,\mb H,\mb b)\;&=\;  g(\mb W \mb H + \mb b \mb 1^\top) \;+\; \frac{\lambda_{\mb W} }{2} \norm{\mb W}{F}^2 + \frac{\lambda_{\mb H} }{2} \norm{\mb H}{F}^2 + \frac{\lambda_{\mb b} }{2} \norm{\mb b}{2}^2 \\
%    \;&\geq\; \underbrace{ - \frac{ \rho }{ (1+c_1)(K-1) } \sqrt{ \frac{ \lambda_{\mb W} }{ \lambda_{\mb H}  n } } + c_2 + \frac{\lambda_{\mb W}}{2} \rho + \frac{\lambda_{\mb H}^2 }{2\lambda_{\mb W}} \rho }_{ \xi\paren{ \rho, \lambda_{\mb W},\lambda_{\mb H} } }  + \frac{\lambda_{\mb b} }{2} \norm{\mb b}{2}^2 \\
 \;&\geq\; \underbrace{ - \frac{ \rho }{ (1+c_1)(K-1) } \sqrt{ \frac{ \lambda_{\mb W} }{ \lambda_{\mb H}  n } } + c_2 + \lambda_{\mb W} \rho  }_{ \xi\paren{ \rho, \lambda_{\mb W},\lambda_{\mb H} } }  + \frac{\lambda_{\mb b} }{2} \norm{\mb b}{2}^2 \\
    \;&\geq\; \xi\paren{ \rho, \lambda_{\mb W},\lambda_{\mb H} },
\end{align*}
where the last inequality becomes an equality whenever either $\lambdab = 0$ or $\vb = \vzero$. %\js{By the way, typically one never regularizers the bias terms (i.e. $\lambda_b=0$). I'm wondering if we should include this term in $f$ at all.}\zz{Good point. I am not sure if the bias term is regularized or not in practice. Tianyu told me the default implementation in Pytorch does include the regularizer for the bias term. But I am also thinking about whether including the bias term in $f$. On one hand, our result covers the case without the regularizer for the bias term. But on the other hand, our result indicate the bias term in the last layer has no much effect, which maybe a little bit surprising. The results in CIFAR10 and MNIST indeed show that there is no difference in terms of both training and testing without the bias term}. 
Furthermore, by Lemma \ref{lem:lower-bound-equality-cond}, we know that the inequality $f(\mb W,\mb H,\mb b)\;\geq\; \xi\paren{ \rho, \lambda_{\mb W},\lambda_{\mb H} }$ becomes an equality \emph{if and only if} $(\mb W,\mb H,\mb b)$ satisfy the following
\begin{itemize}
    \item[(a)] $\norm{\mb w }{2} \;=\; \norm{\mb w^1}{2} \;=\; \norm{\mb w^2}{2} \;=\; \cdots \;=\; \norm{\mb w^K}{2}$;
    \item[(b)] $\mb b = b \mb 1$, where either $b = 0$ or $\lambdab = 0$;
    \item[(c)] $\ol{\mb h}_{i} \;:=\; \frac{1}{K} \sum_{j=1}^K \mb h_{j,i} \;=\; \mb 0, \quad \forall \; i \in [n]$, and $\sqrt{ \frac{ \lambda_{\mb W}  }{ \lambda_{\mb H} n } } \vw^k \;=\; \vh_{k,i},\quad \forall \; k\in[K],\; i\in[n]$;
    \item[(d)] $\mb W \mb W^\top \;=\; \frac{\rho }{K-1} \paren{ \mb I_K - \frac{1 }{K} \mb 1_K \mb 1_K^\top }$;
    \item[(e)] $c_1 \;=\; \brac{(K-1)\exp\bracket{-\frac{\rho}{K-1}\sqrt{\frac{\lambdaW}{\lambdaH n}}}}^{-1}$.
\end{itemize}
To finish the proof, we only need to show that $\rho = \norm{\mb W}{F}^2$ must be finite for any fixed $\lambdaW,\lambdaH>0$. From (e), we know that $c_1 \;=\; \brac{(K-1)\exp\bracket{-\frac{\rho}{K-1}\sqrt{\frac{\lambdaW}{\lambdaH n}}}}^{-1}$ is an \emph{increasing} function in terms of $\rho$, and $c_2 = \frac{1}{1+c_1}\log\bracket{(1+c_1)(K-1)} +  \frac{c_1}{1+c_1}\log\bracket{\frac{1+c_1}{c_1}}$ is a \emph{decreasing} function in terms of $c_1$. Therefore, we observe the following:
\begin{itemize}
    \item When $\rho\rightarrow 0$, we have $c_1 \rightarrow \frac{1}{K-1}$ and $c_2 \rightarrow \log K$, so that 
    \begin{align*}
        \lim_{\rho \rightarrow 0} \xi (\rho;\lambdaW,\lambdaH) \;=\;  \lim_{\rho \rightarrow 0} c_2(\rho) \;=\; \log K. 
    \end{align*}
    \item On the other hand, when $\rho\rightarrow +\infty$, $c_1 \rightarrow +\infty$ and $c_2 \rightarrow 0$, so that $ \xi(\rho;\lambdaW,\lambdaH) \rightarrow +\infty$ as $\rho\rightarrow +\infty$.
\end{itemize}
\begin{figure}[htp!]
	\centering
    \includegraphics[width = 0.5\textwidth]{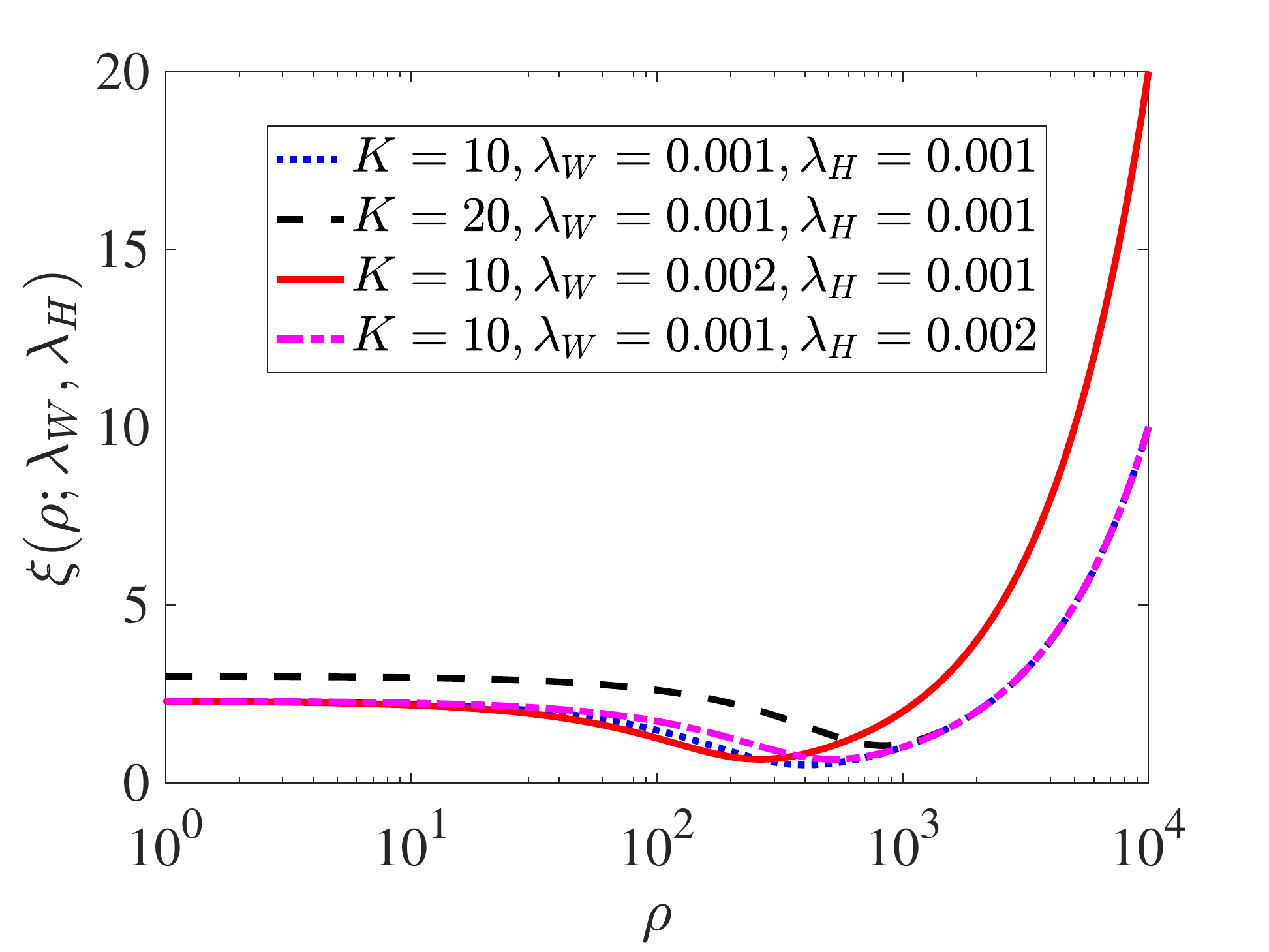}
	\caption{Plot of $\xi(\rho;\lambdaW,\lambdaH)$ in terms of $\rho$ for $n = 100$ and different $K,\lambdaW,\lambdaH$.}
	\label{fig:obj-lower-bound}
\end{figure}
Since $\xi(\rho;\lambdaW,\lambdaH)$ %\js{why do we need the $\to +\infty$ here?}\zz{rephrased this sentence.} 
is a continuous function of $\rho \in [0,+\infty)$ and $\xi(\rho;\lambdaW,\lambdaH)\rightarrow +\infty$ and $\rho \rightarrow +\infty$, these further imply that $\xi(\rho;\lambdaW,\lambdaH)$ achieves its minimum at a finite $\rho$ (see \Cref{fig:obj-lower-bound} for an example. %\qq{it could be better to plot more instances in one plot)}). 
This finishes the proof.
\end{proof}
%$(ii)$ $c_1  \rightarrow \frac{1}{K-1}$ and $c_2 \rightarrow \log K$ when $\rho\rightarrow 0$, and $(iii)$ $c_1  \rightarrow \infty$ and $c_2 \rightarrow 0$ when $\rho\rightarrow +\infty$. Thus, as plotted in \Cref{fig:obj-lower-bound}, $\xi(\rho;\lambdaW,\lambdaH) \rightarrow \log K$ when $\rho\rightarrow 0$ and $\xi(\rho;\lambdaW,\lambdaH) \rightarrow \infty$ when $\rho\rightarrow \infty$, and $\xi(\rho;\lambdaW,\lambdaH)$ achieves its minimum at a finite $\rho$.  One can verify that all the inequalities in \eqref{eq:cross-entropy-lower-bound} and \eqref{eq:all-cross-entropy-lower-bound} (and hence \eqref{eq:obj-lower-boound}) achieve equality when $(\mW,\mH,\vb)$ have the form as in \Cref{thm:global-minima}. 

\subsection{Supporting Lemmas}

We first characterize the following balance property between $\mW$ and $\mH$ for any critical point  $(\mW,\mH,\vb)$ of our loss function:

\begin{lemma}\label{lem:critical-balance} 
Let $\rho = \norm{\mb W }{F}^2$. Any critical point $(\mW,\mH,\vb)$ of \eqref{eq:obj-app} obeys
\begin{align}\label{eq:critical-balance}
    \mW^\top\mW \;=\; \frac{\lambda_{\mH}}{\lambda_{\mW}}\mH\mH^\top\quad \text{and}\quad \rho \;=\; \norm{\mb W}{F}^2 \;=\; \frac{\lambda_{\mH}}{\lambda_{\mW}} \norm{\mb H}{F}^2.
\end{align}

\end{lemma}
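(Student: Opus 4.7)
The statement is a classical "balance" relation between the two factors at stationarity, familiar from the matrix factorization and deep linear network literature. My plan is to exploit the first-order optimality conditions with respect to $\mW$ and $\mH$ separately, and then to extract the shared quantity $\mW^\top G \mH^\top$ in two different ways, where $G$ denotes the gradient of the data-fitting term evaluated at the current point.

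Concretely, I would first apply the chain rule to the composition $g(\mW\mH + \vb\vone^\top)$ to obtain
\begin{equation*}
\nabla_\mW f(\mW,\mH,\vb) = G\mH^\top + \lambdaW\mW, \qquad \nabla_\mH f(\mW,\mH,\vb) = \mW^\top G + \lambdaH\mH,
\end{equation*}
where $G := \nabla g(\mW\mH + \vb\vone^\top) \in \R^{K\times N}$. At a critical point both vanish, which yields the two key identities $G\mH^\top = -\lambdaW\mW$ and $\mW^\top G = -\lambdaH\mH$.

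Next I would compute $\mW^\top G \mH^\top$ in two ways. Left-multiplying the first identity by $\mW^\top$ gives $\mW^\top G\mH^\top = -\lambdaW\mW^\top\mW$, while right-multiplying the second by $\mH^\top$ gives $\mW^\top G\mH^\top = -\lambdaH\mH\mH^\top$. Equating the two expressions and dividing by $-\lambdaW$ produces the matrix identity
\begin{equation*}
\mW^\top\mW = \frac{\lambdaH}{\lambdaW}\mH\mH^\top,
\end{equation*}
as claimed. Taking the trace of both sides, together with $\trace(\mW^\top\mW) = \|\mW\|_F^2$ and $\trace(\mH\mH^\top) = \|\mH\|_F^2$, immediately yields the scalar form $\lambdaW\|\mW\|_F^2 = \lambdaH\|\mH\|_F^2$, i.e.\ $\rho = \|\mW\|_F^2 = \frac{\lambdaH}{\lambdaW}\|\mH\|_F^2$. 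There is no real obstacle here: the entire argument is a direct manipulation of the two stationarity equations, and the only subtlety is being careful that $G$ is shared between them since both gradients are evaluated at the \emph{same} critical point.
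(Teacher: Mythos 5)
Your proposal is correct and follows exactly the same route as the paper's proof: differentiate with respect to $\mW$ and $\mH$ separately, left-multiply the first stationarity equation by $\mW^\top$ and right-multiply the second by $\mH^\top$ to equate the two expressions for $\mW^\top G\mH^\top$, then take the trace for the scalar identity.
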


\begin{proof}[Proof of Lemma \ref{lem:critical-balance}]
By definition, any critical point $(\mW,\mH,\vb)$ of \eqref{eq:obj-app} satisfies the following:
\begin{align}
    	\nabla_{\mW}f(\mW,\mH,\vb) \;&=\; \nabla_{\mb Z = \mb W\mb H}\; g(\mW\mH + \vb \vone^\top)\mH^\top + \lambdaW \mW \;=\; \vzero,\label{eqn:W-crtical} \\
	\nabla_{\mH}f(\mW,\mH,\vb) \;&=\;  \mW^\top \nabla_{\mb Z = \mb W\mb H}\; g(\mW\mH + \vb\vone^\top) + \lambdaH \mH \;=\; \vzero. \label{eqn:H-crtical}
\end{align}
Left multiply the first equation by $\mb W^\top$ on both sides and then right multiply second equation by $\mH^\top$ on both sides, it gives
\begin{align*}
    \mb W^\top \nabla_{\mb Z = \mb W\mb H}\; g(\mW\mH + \vb\vone^\top)\mH^\top \;&=\; - \lambdaW \mb W^\top \mW , \\
     \mb W^\top \nabla_{\mb Z = \mb W\mb H}\; g(\mW\mH + \vb\vone^\top)\mH^\top  \;&=\; - \lambdaH \mb H^\top \mH.
\end{align*}
Therefore, combining the equations above, we obtain
\begin{align*}
    \lambdaW \mW^\top \mW \;=\; \lambdaH \mH \mH^\top.
\end{align*}
Moreover, we have
\begin{align*}
    \rho\;=\; \norm{\mb W}{F}^2 \;=\; \trace \paren{ \mb W^\top \mb W } \;=\;  \frac{ \lambda_{\mb H} }{ \lambda_{\mb W} } \trace \paren{ \mb H \mb H^\top } \;=\; \frac{ \lambda_{\mb H} }{ \lambda_{\mb W} } \trace \paren{ \mb H^\top\mb H  } \;=\; \frac{ \lambda_{\mb H} }{ \lambda_{\mb W} } \norm{\mb H}{F}^2,
\end{align*}
as desired.
\end{proof}

\begin{lemma}\label{lem:lower-bound-g}
Let $\mb W = \begin{bmatrix} (\mb w^1)^\top \\ \vdots \\ (\mb w^K)^\top 
\end{bmatrix}\in \bb R^{K \times d}$, $\mH = \begin{bmatrix}\vh_{1,1} \cdots \vh_{K,n} \end{bmatrix}\in \bb R^{d \times N}$, $N= nK$, and $\rho = \norm{\mb W}{F}^2$. Given  $g(\mW\mH + \vb\vone^\top)$ defined in \eqref{eqn:g-lce-app}, for any critical point $(\mW,\mH,\vb)$ of \eqref{eq:obj-app} and $c_1>0$, we have
\begin{align}\label{eqn:lower-bound-g}
   g(\mb W \mb H + \mb b \mb 1^\top) \;\geq\; - \frac{\rho }{(1+c_1)(K-1)} \sqrt{ \frac{ \lambda_{\mb W}  }{ \lambda_{\mb H} n } } \;+\; c_2,
\end{align}
with $c_2 = \frac{1}{1+c_1} \log\paren{ (1+c_1)(K-1) } + \frac{c_1}{1+c_1} \log\paren{ \frac{1+c_1}{c_1} } $. 
\end{lemma}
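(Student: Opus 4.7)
The plan is to establish the lower bound in three steps: a pointwise lower bound on each cross-entropy summand, aggregation over the $N = nK$ training samples, and a Cauchy--Schwarz argument that couples the remaining linear form back to $\norm{\mb W}{F}^2 = \rho$ via the balance identity from Lemma \ref{lem:critical-balance}.

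\textbf{Step 1 (pointwise lower bound).} Fix a sample $(k,i)$ and set $\vz = \mW\vh_{k,i}+\vb$, so that $\Lce(\vz,\vy_k) = -z_k + \log(\exp(z_k) + \sum_{j\neq k}\exp(z_j))$. I would decompose the argument of the logarithm as
\begin{equation*}
\exp(z_k) + \sum_{j\neq k}\exp(z_j) \;=\; \tfrac{c_1}{1+c_1}\cdot\tfrac{1+c_1}{c_1}\exp(z_k) \;+\; \tfrac{1}{1+c_1}\cdot(1+c_1)\!\sum_{j\neq k}\exp(z_j),
\end{equation*}
apply Jensen's inequality to the concave $\log$, and then invoke AM--GM in the form $\sum_{j\neq k}\exp(z_j)\geq (K-1)\exp(\tfrac{1}{K-1}\sum_{j\neq k}z_j)$. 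After rearrangement using the identity $z_k-\tfrac{1}{K-1}\sum_{j\neq k}z_j = \tfrac{K}{K-1}(z_k-\bar z)$ with $\bar z = \tfrac{1}{K}\sum_j z_j$, this produces
\begin{equation*}
\Lce(\vz,\vy_k) \;\geq\; -\tfrac{K}{(1+c_1)(K-1)}(z_k-\bar z) \;+\; c_2.
\end{equation*}

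\textbf{Step 2 (aggregation).} Summing over all $(k,i)$ and dividing by $N$, I would substitute $z_k-\bar z = (\vw^k-\bar\vw)^\top\vh_{k,i} + (b_k-\bar b)$ where $\bar\vw = \tfrac{1}{K}\sum_j\vw^j$ and $\bar b = \tfrac{1}{K}\sum_j b_j$. The bias centering cancels once summed over $k$, and setting $\bar\vh_k = \tfrac{1}{n}\sum_i\vh_{k,i}$ and $\mW_c = \mW - \vone\bar\vw^\top$, the aggregated inequality becomes
\begin{equation*}
g(\mW\mH+\vb\vone^\top) \;\geq\; -\tfrac{1}{(1+c_1)(K-1)}\,\trace\!\bracket{\mW_c\,\bar\mH} \;+\; c_2,
\end{equation*}
where $\bar\mH = \bmat \bar\vh_1 & \cdots & \bar\vh_K \emat \in\R^{d\times K}$.

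\textbf{Step 3 (Cauchy--Schwarz and the balance identity).} Finally I would bound $\trace(\mW_c\bar\mH)\leq \norm{\mW_c}{F}\norm{\bar\mH}{F}$ by Cauchy--Schwarz. The first factor obeys $\norm{\mW_c}{F}^2 = \rho - K\norm{\bar\vw}{2}^2 \leq \rho$, and Jensen's inequality applied column-wise gives $\norm{\bar\mH}{F}^2 = \sum_k\norm{\tfrac{1}{n}\sum_i\vh_{k,i}}{2}^2 \leq \tfrac{1}{n}\norm{\mH}{F}^2$. Invoking Lemma \ref{lem:critical-balance} to replace $\norm{\mH}{F}^2$ with $\tfrac{\lambda_\mW}{\lambda_\mH}\rho$, the product becomes $\rho\sqrt{\lambda_\mW/(n\lambda_\mH)}$, and the desired inequality follows directly.

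The main bookkeeping challenge is verifying that the chosen Jensen weighting $(\tfrac{c_1}{1+c_1},\tfrac{1}{1+c_1})$ together with the $\log(K-1)$ term arising from AM--GM regroups exactly into the stated constant $c_2 = \tfrac{1}{1+c_1}\log((1+c_1)(K-1)) + \tfrac{c_1}{1+c_1}\log(\tfrac{1+c_1}{c_1})$; once the weighting is fixed the rest of the argument is transparent, and no nontrivial obstacle arises in Steps 2--3.
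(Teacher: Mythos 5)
Your argument is correct and reaches the stated bound, but Steps~2 and~3 take a genuinely different route from the paper's proof. Step~1 is the same pointwise bound as the paper's Lemma~\ref{lem:lce-lower-bound}, merely phrased in terms of $z_k - \bar z$ (you apply the concavity-of-$\log$ split before the AM--GM on $\sum_{j\neq k}\exp(z_j)$, whereas the paper does it in the opposite order; the two compositions commute here). In Steps~2--3 the paper instead keeps the aggregation indexed by sample, rewrites the linear form as $\frac{1}{n}\sum_{i}\sum_k(\ol{\vh}_i - \vh_{k,i})^\top\vw^k$ with $\ol{\vh}_i = \frac{1}{K}\sum_j\vh_{j,i}$, applies Young's inequality pointwise with a free parameter $c_3>0$, and finally optimizes over $c_3$ after substituting $\norm{\mH}{F}^2 = \frac{\lambda_\mW}{\lambda_\mH}\rho$. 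You instead center the rows of $\mW$, collapse everything into a single trace $\trace(\mW_c\,\bar\mH)$ of class-centered matrices, and invoke Cauchy--Schwarz together with a column-wise Jensen bound $\norm{\bar\mH}{F}^2 \leq \frac{1}{n}\norm{\mH}{F}^2$. Algebraically the intermediate linear forms agree (one checks that $\sum_{i,k}(\ol{\vh}_i-\vh_{k,i})^\top\vw^k = -\sum_{i,k}(\vw^k-\bar\vw)^\top\vh_{k,i}$), so the two arguments diverge only in how they convert that linear form into $\rho\sqrt{\lambda_\mW/(n\lambda_\mH)}$. Your route is cleaner in that it sidesteps the auxiliary parameter $c_3$ and directly exposes the class-mean matrix $\bar\mH$, which is the more natural quantity; the paper's route is slightly more hands-on but makes the equality conditions used in the subsequent Lemma~\ref{lem:lower-bound-equality-cond} drop out explicitly (equality in Young's at each $(k,i)$ forces $c_3\vw^k = \vh_{k,i}-\ol{\vh}_i$, and the dropped $K\sum_i\norm{\ol{\vh}_i}{2}^2$ term forces $\ol{\vh}_i = \vzero$). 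If you intended to continue to the equality characterization, you would need to unpack the equality cases of your Cauchy--Schwarz and Jensen steps, which is a touch less direct; for the lower bound alone, your argument is complete and correct.
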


\begin{proof}[Proof of Lemma \ref{lem:lower-bound-g}]
By Lemma \ref{lem:lce-lower-bound} with $\mb z_{k,i} = \mW\vh_{k,i} + \vb$, since the scalar $c_1>0$ can be arbitrary, we choose the same $c_1$ and $c_2$ for all $i\in[n]$ and $k \in [K]$, we have the following lower bound for $ g(\mb W \mb H + \mb b \mb 1^\top)$ as 
\begin{align}
    & (1+c_1)(K-1) \brac{g(\mb W \mb H + \mb b \mb 1^\top) -c_2  }  \nonumber  \\
    \;=\;&(1+c_1)(K-1)\brac{ \frac{1}{N}\sum_{k=1}^K\sum_{i=1}^n \Lce(\mW\vh_{k,i} + \vb,\vy_k) - c_2  }  \nonumber \\
     \;\geq\;& \frac{1}{ N} \sum_{k=1}^K\sum_{i=1}^n \Bigg[ \sum_{j=1}^K \paren{  \mb h_{k,i}^\top \mb w^j +b_j} - K \paren{  \mb h_{k,i}^\top \mb w^k + b_k } \Bigg]  \label{eqn:g-lower-1} \\
     \;=\;&  \frac{1}{ N} \sum_{i=1}^n\Bigg[  \paren{\sum_{k=1}^K \sum_{j=1}^K \mb h_{k,i}^\top \mb w^j - K \sum_{k=1}^K  \mb h_{k,i}^\top \mb w^k } \;+\; \underbrace{\sum_{k=1}^K \sum_{j=1}^K\paren{  b_j- b_k } }_{=0}\Bigg]  \nonumber \\
     \;=\;& \frac{1}{ N} \sum_{i=1}^n  \paren{\sum_{k=1}^K \sum_{j=1}^K  \mb h_{j,i}^\top  \mb w^k - K \sum_{k=1}^K  \mb h_{k,i}^\top \mb w^k }  \nonumber \\
     \;=\;& \frac{K}{ N} \sum_{i=1}^n \sum_{k=1}^K  \brac{ \bigg( \frac{1}{K} \sum_{j=1}^K (\mb h_{j,i} - \mb h_{k,i}) \bigg)^\top \mb w^k } \;=\; \frac{1}{ n} \sum_{i=1}^n \sum_{k=1}^K  \paren{ \ol{\mb h}_{i} - \mb h_{k,i} }^\top \mb w^k,  \nonumber
\end{align}
where for the last equality we let $\ol{\mb h}_{i} = \frac{1}{K} \sum_{j=1}^K \mb h_{j,i}$. Furthermore, from the AM-GM inequality in Lemma \ref{lem:young-inequality}, we know that for any $\mb u,\mb v \in \bb R^K$ and any $c_3>0$,
\begin{align*}
    \mb u^\top \mb v \;\leq\; \frac{c_3}{2} \norm{\mb u}{2}^2 \;+\; \frac{1}{2c_3} \norm{\mb v}{2}^2,
\end{align*}
where the inequality becomes an equality when $c_3\vu= \vv$. Thus, we further have
\begin{align*}
     &(1+c_1)(K-1) \brac{g(\mb W \mb H + \mb b \mb 1^\top) -c_2  } \\
     \;\geq\;& - \frac{c_3}{ 2} \sum_{k=1}^K \norm{\mb w^k}{2}^2 \;-\; \frac{1}{2c_3n} \sum_{i=1}^n \sum_{k=1}^K \norm{ \ol{\mb h}_i - \mb h_{k,i} }{2}^2 \\
     \;=\;& - \frac{c_3}{ 2} \sum_{k=1}^K \norm{\mb w^k}{2}^2 \;-\ \frac{1}{2c_3n} \sum_{i=1}^n \brac{ \paren{\sum_{k=1}^K \norm{ \mb h_{k,i} }{2}^2} - K \norm{ \ol{\mb h}_i }{2}^2 } \\
     \;=\;& - \frac{c_3}{ 2} \norm{\mb W}{F}^2 \;-\ \frac{1}{2c_3n}  \paren{ \norm{\mb H}{F}^2 - K  \sum_{i=1}^n \norm{ \ol{\mb h}_i }{2}^2 },  
\end{align*}
where the first inequality becomes an equality if and only if 
\begin{align}\label{eqn:lower-bound-attained-1}
    c_3\vw^k = (\vh_{k,i} - \overline\vh_i),\quad \forall\; k\in[K],\;i\in[n].
\end{align}
Let $\rho = \norm{\mb W}{F}^2$. Now, by using Lemma \ref{lem:critical-balance}, we have $\mb W^\top \mb W = \frac{ \lambda_{\mb H} }{ \lambda_{\mb W} } \mb H \mb H^\top  $, so that $\norm{\mb H}{F}^2 = \trace\paren{ \mb H \mb H^\top } =  \frac{ \lambda_{\mb W} }{ \lambda_{\mb H} } \trace \paren {\mb W^\top \mb W} =  \frac{ \lambda_{\mb W} }{ \lambda_{\mb H} } \rho $. Therefore, we have
\begin{align}\label{eqn:g-lower-2}
   g(\mb W \mb H + \mb b \mb 1^\top) \geq - \frac{\rho }{2(1+c_1)(K-1)} \paren { c_3  + \frac{ \lambda_{\mb W} }{ \lambda_{\mb H} }  \frac{1}{c_3 n }   } + c_2,
\end{align}
as desired. The last inequality achieves its equality if and only if 
\begin{align}\label{eqn:lower-bound-attained-2}
    \overline \vh_i = \vzero,\quad \forall \; i\in[n].
\end{align}

Plugging this into \eqref{eqn:lower-bound-attained-1}, we have
\begin{align*}
    c_3\vw^k \;=\; \vh_{k,i} \Rightarrow c_3^2 \;=\; \frac{ \sum_{i=1}^n\sum_{k=1}^K\norm{\vh_{k,i}}{2}^2 }{ n \sum_{k=1}^K\norm{\vw^k}{2}^2 } = \frac{\norm{\mH}{F}^2}{n\norm{\mW}{F}^2} = \frac{\lambdaW}{n\lambdaH}.
\end{align*}
This together with the lower bound in \eqref{eqn:g-lower-2} gives
\begin{align*}
   g(\mb W \mb H + \mb b \mb 1^\top) \geq - \frac{\rho }{(1+c_1)(K-1)} \sqrt{ \frac{ \lambda_{\mb W}  }{ \lambda_{\mb H} n } }  + c_2,
\end{align*}
as suggested in \eqref{eqn:lower-bound-g}.
\end{proof}

Next, we show that the lower bound in \eqref{eqn:lower-bound-g} is attained if and only if $(\mb W, \mb H,\mb b)$ satisfies the following conditions.  
\begin{lemma}\label{lem:lower-bound-equality-cond}
Under the same assumptions of Lemma \ref{lem:lower-bound-g},
the lower bound in \eqref{eqn:lower-bound-g} is attained for any critical point $(\mb W,\mb H,\mb b)$ of \eqref{eq:obj-app}
if and only if the following hold
\begin{align*}
    & \norm{\mb w^1}{2} \;=\; \norm{\mb w^2}{2} \;=\; \cdots \;=\; \norm{\mb w^K}{2}, \quad \text{and}\quad \mb b = b \mb 1, \\ 
    &\ol{\mb h}_{i} \;:=\; \frac{1}{K} \sum_{j=1}^K \mb h_{j,i} \;=\; \mb 0, \quad \forall \; i \in [n], \quad \text{and} \quad \sqrt{ \frac{ \lambda_{\mb W}  }{ \lambda_{\mb H} n } } \vw^k \;=\; \vh_{k,i},\quad \forall \; k\in[K],\; i\in[n], \\
    &\mb W \mb W^\top \;=\; \frac{\rho }{K-1} \paren{ \mb I_K - \frac{1 }{K} \mb 1_K \mb 1_K^\top }, \;\text{and}\; c_1 \;=\; \brac{(K-1)\exp\bracket{-\frac{\rho}{K-1}\sqrt{\frac{\lambdaW}{\lambdaH n}}}}^{-1}.
\end{align*}
\end{lemma}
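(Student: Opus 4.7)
The plan is to carefully track the equality conditions of every inequality that appeared in the chain establishing Lemma \ref{lem:lower-bound-g}, and show that the listed five conditions are necessary and sufficient for the lower bound to be attained. The sufficiency direction is essentially a plug-and-verify computation: once the stated structure is assumed, every intermediate inequality collapses to an equality by construction. So the substance of the proof is in the necessity direction, which I will focus on.

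First I would revisit the derivation of Lemma \ref{lem:lower-bound-g} and identify the three inequality steps where slack may be introduced: (a) the cross-entropy lower bound from Lemma \ref{lem:lce-lower-bound}, applied uniformly across samples with a common pair $(c_1,c_2)$; (b) the Young/AM--GM step $\vu^\top\vv\ge -\tfrac{c_3}{2}\|\vu\|^2-\tfrac{1}{2c_3}\|\vv\|^2$ used with $\vu=\overline{\vh}_i-\vh_{k,i}$ and $\vv=\vw^k$; and (c) the step that discards the nonnegative quantity $K\sum_i\|\overline{\vh}_i\|^2$ when bounding $\|\mH\|_F^2-K\sum_i\|\overline\vh_i\|^2$ from above by $\|\mH\|_F^2$. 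Additionally, the value of $c_1$ is a free parameter, and the lower bound is tightest at the minimizer of the resulting univariate expression, so attaining the bound stated with the specific $c_1$ in condition (e) is a further constraint.

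Step 1 (from (c)): Equality forces $\overline{\vh}_i=\vzero$ for all $i\in[n]$. Step 2 (from (b)): Equality in the Young inequality forces $c_3\vw^k=\vh_{k,i}-\overline{\vh}_i$ for every $k,i$, which combined with Step 1 yields $\vh_{k,i}=c_3\vw^k$, independent of $i$. Plugging this into the balance relation $\|\mH\|_F^2=\tfrac{\lambdaW}{\lambdaH}\|\mW\|_F^2$ from \Cref{lem:critical-balance} pins down $c_3=\sqrt{\lambdaW/(\lambdaH n)}$, giving the stated relation between $\mH$ and $\mW$. Step 3 (from (a)): The equality condition in Lemma \ref{lem:lce-lower-bound} should enforce that the logits $\vz_{k,i}=\mW\vh_{k,i}+\vb$ for each sample satisfy $z_{k,i,j}$ being identical across $j\ne k$ with a prescribed gap to $z_{k,i,k}$; substituting $\vh_{k,i}=c_3\vw^k$ this reads as ``$\vw^{k\top}\vw^j+b_j/c_3$ is the same for all $j\ne k$, and so is $\vw^{k\top}\vw^k+b_k/c_3$ across $k$.'' Using Step 1 ($\overline{\vh}_i=\vzero$) and $\vh_{k,i}=c_3\vw^k$, I get $\sum_k \vw^k=\vzero$. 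This symmetry, together with the per-class equality constraints coming from Lemma \ref{lem:lce-lower-bound}, will force the bias to be a multiple of $\vone$, i.e.\ $\vb=b\vone$, and force all $\|\vw^k\|_2$ to be equal.

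Step 4: With $\|\vw^k\|_2^2=\rho/K$ and $\sum_k\vw^k=\vzero$, plus the constraint that $\vw^{k\top}\vw^j$ is constant across all pairs $j\ne k$ (from the CE equality), the Gram matrix $\mW\mW^\top$ has constant diagonal $\rho/K$ and constant off-diagonal. Solving $\mW\mW^\top \vone=\vzero$ (which follows from $\sum_k\vw^k=\vzero$) pins the off-diagonal value to $-\rho/(K(K-1))$, yielding $\mW\mW^\top=\tfrac{\rho}{K-1}\paren{\mI_K-\tfrac{1}{K}\vone\vone^\top}$. Finally, the optimal $c_1$ in condition (e) is obtained by differentiating the univariate function $c_1\mapsto -\tfrac{\rho}{(1+c_1)(K-1)}\sqrt{\lambdaW/(\lambdaH n)}+c_2(c_1)$, which at the minimizer yields the stated closed form. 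The main obstacle is Step 3: extracting from the equality case of Lemma \ref{lem:lce-lower-bound} precisely those structural constraints (equal norms of $\vw^k$, equal off-diagonals of $\mW\mW^\top$, and $\vb=b\vone$), since the CE bound is the only place where the class index enters asymmetrically and one needs its sharpness condition stated cleanly; I would first isolate and prove an equality-case companion to Lemma \ref{lem:lce-lower-bound} (which amounts to the equality case of log-sum-exp/Jensen), then feed it into the chain above.
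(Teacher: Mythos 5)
Your plan follows the same chain of equality tracking as the paper: the discard step forces $\overline{\vh}_i = \vzero$, Young's equality together with the balance relation from \Cref{lem:critical-balance} pins $\vh_{k,i} = \sqrt{\lambdaW/(\lambdaH n)}\,\vw^k$, and the equality case of \Cref{lem:lce-lower-bound} combined with $\sum_k\vw^k=\vzero$ yields the constraints on $\vb$ and on $\mW\mW^\top$. The only presentational departure is that the paper reads the value of $c_1$ directly off the equality condition \eqref{eqn:lce-lower-bound-equality} rather than differentiating the lower-bound expression (your variant does give the same $c_1$, though the stationary point is a maximizer, not a minimizer, of that expression), and the paper carries out the algebraic step deducing $\vb = b\vone$ and equal $\norm{\vw^k}{2}$ that you leave at sketch level.
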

The proof of Lemma \ref{lem:lower-bound-equality-cond} utilizes the conditions in Lemma \ref{lem:lce-lower-bound}, and the conditions \eqref{eqn:lower-bound-attained-1} and \eqref{eqn:lower-bound-attained-2} during the proof of Lemma \ref{lem:lower-bound-g}. 

\begin{proof}[Proof of Lemma \ref{lem:lower-bound-equality-cond}]
From the proof of \ref{lem:lower-bound-g}, if we want to attain the lower bound, we know that we need at least \eqref{eqn:lower-bound-attained-1} and \eqref{eqn:lower-bound-attained-2} to hold, which is equivalent to the following:
\begin{align}\label{eqn:lower-bound-attained-3}
    \ol{\vh}_i \;=\;\frac{1}{K} \sum_{j=1}^K \mb h_{j,i} \;=\; \vzero,\quad \forall i\in [n], \quad \text{and}\quad 
    \sqrt{ \frac{ \lambda_{\mb W}  }{ \lambda_{\mb H} n } }  \vw^k \;=\;  \vh_{k,i},\quad \forall \; k\in [K],\;i\in [n],
\end{align}
which further implies that
\begin{align}\label{eqn:w-sum-zero}
    \sum_{k=1}^K \mb w^k \;=\; \mb 0.
\end{align}
Next, under the condition \eqref{eqn:lower-bound-attained-3}, if we want \eqref{eqn:lower-bound-g} to become an equality, we only need \eqref{eqn:g-lower-1} to become an equality, which is true if and only if the condition \eqref{eqn:lce-lower-bound-equality} in Lemma \ref{lem:lce-lower-bound} holds for $\mb z_{k,i} = \mW\vh_{k,i} + \vb$ for all $i\in[n]$ and $k \in [K]$. First, let $[\mb z_{k,i}]_j = \mb h_{k,i}^\top \mb w^j + b_j $, where we have
\begin{align}
    \sum_{j=1}^K [\mb z_{k,i}]_j \;=\;  \mb h_{k,i}^\top \sum_{j=1}^K \mb w^j  + \sum_{j=1}^K b_j \;&=\; \sqrt{ \frac{ \lambda_{\mb H} n }{ \lambda_{\mb W} } }  \mb h_{k,i}^\top  \sum_{j=1}^K \mb h_{j,i}  + \sum_{j=1}^K b_j \nonumber  \\
    \;&=\;  \sqrt{ \frac{ \lambda_{\mb H} n }{ \lambda_{\mb W} } } K  \mb h_{k,i}^\top  \ol{\mb h}_i + \sum_{j=1}^K b_j \;=\; K \ol{b} \label{eqn:g-lower-3}
\end{align}
with $\ol{b} = \frac{1}{K} \sum_{i=1}^K b_i$, and 
\begin{align}
    K[\mb z_{k,i}]_k \;=\; K \mb h_{k,i}^\top  \mb w^k  + Kb_k \;=\; \sqrt{ \frac{ \lambda_{\mb W}  }{ \lambda_{\mb H} n } } \paren{K\norm{\mb w^k}{2}^2} + Kb_k. \label{eqn:g-lower-4}
\end{align}
Based on \eqref{eqn:g-lower-3}, \eqref{eqn:g-lower-4}, and \eqref{eqn:lce-lower-bound-equality} from Lemma \ref{lem:lce-lower-bound}, we have 
\begin{align}
    c_1 \;&=\; \brac{(K-1) \exp \paren{  \frac{ \paren{\sum_{j=1}^K [\mb z_{k,i}]_j} - K [\mb z_{k,i}]_k}{K-1} }  }^{-1}  \nonumber   \\
    \;&=\; \brac{(K-1)\exp\bracket{\frac{K}{K-1} \paren{ \ol{b} -  \sqrt{ \frac{ \lambda_{\mb W}  }{ \lambda_{\mb H} n } } \norm{\mb w^k}{2}^2 - b_k  } }}^{-1}. \label{eqn:c-form-1}
\end{align}
Since the scalar $c_1>0$ is chosen to be the same for all $k \in [K]$, we have
\begin{align}\label{eqn:g-lower-5}
    \sqrt{ \frac{ \lambda_{\mb W}  }{ \lambda_{\mb H} n } } \norm{\mb w^k}{2}^2 + b_k \;=\; \sqrt{ \frac{ \lambda_{\mb W}  }{ \lambda_{\mb H} n } } \norm{\mb w^\ell}{2}^2 + b_\ell, \quad \forall \; \ell \not =k. 
\end{align}
Second, since $[\mb z_{k,i}]_j = [\mb z_{k,i}]_{\ell}$ for all $\forall j,\ell \neq k,\; k\in[K]$, from \eqref{eqn:lower-bound-attained-3} we have
\begin{align}
     &\mb h_{k,i}^\top \mb w^j  + b_j \;=\;  \mb h_{k,i}^\top \mb w^\ell+ b_\ell, \quad \; \forall j,\ell \neq k,\; k\in[K] \nonumber  \\
     \Longleftrightarrow \quad & \sqrt{ \frac{ \lambda_{\mb W}  }{ \lambda_{\mb H} n } } (\mb w^k)^\top \mb w^j + b_j \;=\; \sqrt{ \frac{ \lambda_{\mb W}  }{ \lambda_{\mb H} n } }  (\mb w^k)^\top \mb w^\ell + b_\ell,\quad  \forall j,\ell \neq k,\; k\in[K]. \label{eqn:g-lower-7}
\end{align}
Based on this and \eqref{eqn:w-sum-zero}, we have
\begin{align}
    \sqrt{ \frac{ \lambda_{\mb W}  }{ \lambda_{\mb H} n } }\norm{\mb w^k}{2}^2 + b_k \;&=\; - \sqrt{ \frac{ \lambda_{\mb W}  }{ \lambda_{\mb H} n } } \sum_{j\not = k} (\mb w^j)^\top \mb w^k +   b_k \nonumber  \\ 
    \;&=\; -(K-1)\sqrt{ \frac{ \lambda_{\mb W}  }{ \lambda_{\mb H} n } }\underbrace{ (\mb w^{\ell})^\top \mb w^k}_{ \ell \not= k, \ell \in [K] } +  \paren{ b_k + \sum_{j\not= \ell, k} \paren{b_\ell - b_j } }  \nonumber  \\
    \;&=\; -(K-1) \sqrt{ \frac{ \lambda_{\mb W}  }{ \lambda_{\mb H} n } } (\mb w^{\ell})^\top \mb w^k + \brac{ 2b_k + (K-1) b_\ell - K \ol{b} } \label{eqn:g-lower-6}
\end{align}
for all $\ell \neq k$. Combining \eqref{eqn:g-lower-5} and \eqref{eqn:g-lower-6}, for all $k,\ell \in [K]$ with $k \not = \ell $ we have 
\begin{align*}
     2b_k + (K-1) b_\ell - K \ol{b}  \;=\; 2b_\ell  + (K-1) b_k - K \ol{b}\quad \Longleftrightarrow \quad b_k \;=\; b_\ell, \; \forall \; k\not = \ell.  
\end{align*}
Therefore, we can write $\mb b = b \mb 1_K$ for some $b>0$. Moreover, since $b_k = b_\ell$ for all $k \not = \ell $, \eqref{eqn:g-lower-5} and \eqref{eqn:g-lower-7}, and \eqref{eqn:g-lower-6} further imply that
\begin{align}
    &\norm{\mb w^1}{2}\;=\; \norm{\mb w^2}{2}\;=\; \cdots \;=\; \norm{\mb w^K}{2},\quad \text{and}\quad \norm{\mb w^k}{2}^2 \;=\; \frac{1}{K} \norm{\mb W}{F}^2 \;=\; \frac{\rho}{K}, \label{eqn:g-lower-8} \\
    & (\mb w^j)^\top \mb w^k \;=\; (\mb w^\ell)^\top \mb w^k \;=\; - \frac{1}{K-1} \norm{\mb w^k}{2}^2\;=\; - \frac{\rho}{K(K-1)}, \quad  \forall j,\ell \neq k,\; k\in[K],\label{eqn:g-lower-9}
\end{align}
where \eqref{eqn:g-lower-9} is equivalent to
\begin{align*}
    \mb W \mb W^\top \;=\; \frac{\rho }{K-1} \paren{ \mb I_K - \frac{1 }{K} \mb 1_K \mb 1_K^\top }.
\end{align*}
Finally, plugging the results in \eqref{eqn:g-lower-8} and \eqref{eqn:g-lower-9} into \eqref{eqn:c-form-1}, we have
\begin{align*}
    c_1  \;=\; \brac{(K-1)\exp\paren{  -  \frac{\rho}{K-1} \sqrt{ \frac{ \lambda_{\mb W}  }{ \lambda_{\mb H} n } }  } }^{-1},
\end{align*}
as desired.
\end{proof}

\begin{lemma}\label{lem:lce-lower-bound}
Let $\mb y_k\in \bb R^K$ be an one-hot vector with the $k$th entry equalling $1$ for some $k\in [K]$. For any vector $\vz\in\R^{K}$ and $c_1>0$, the cross-entropy loss $\Lce (\vz,\vy_k)$ with $\vy_k$ can be lower bounded by
\begin{align}\label{eqn:lce-lower-bound}
     \Lce (\vz,\vy_k) \;\geq \;  \frac{1}{1+c_1}\frac{ \paren{\sum_{i=1}^K z_i} - K z_k }{K-1} + c_2,
\end{align}
where $c_2 = \frac{1}{1+c_1} \log\paren{ (1+c_1)(K-1) } + \frac{c_1}{1+c_1} \log\paren{ \frac{1+c_1}{c_1} } $. The inequality becomes an equality when
\begin{align}\label{eqn:lce-lower-bound-equality}
    z_i \;=\; z_j,\quad \forall i,j \not = k, \quad \text{and}\quad c_1 \;=\; \brac{(K-1) \exp \paren{  \frac{ \paren{\sum_{i=1}^K z_i} - K z_k}{K-1} }  }^{-1}.
\end{align}
\end{lemma}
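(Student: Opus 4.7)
\textbf{Proof proposal for \Cref{lem:lce-lower-bound}.} The plan is to reduce the multivariate cross-entropy bound to a scalar convexity inequality via Jensen's inequality, and then pick the best affine lower bound for a one-variable log-sum-exp function via tangent-line construction.

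First I would rewrite the loss in a more convenient form. Using the definition \eqref{eqn:ce}, we have
\begin{equation*}
\Lce(\vz,\vy_k) \;=\; -z_k + \log\paren{\sum_{i=1}^K \exp(z_i)} \;=\; \log\paren{1 + \sum_{i\neq k}\exp(z_i - z_k)}.
\end{equation*}
Setting $u_i := z_i - z_k$ for $i\neq k$, and noting $\sum_{i\neq k} u_i = (\sum_{i=1}^K z_i) - K z_k$, the desired inequality becomes
\begin{equation*}
\log\paren{1 + \sum_{i\neq k}\exp(u_i)} \;\geq\; \frac{1}{(1+c_1)(K-1)}\sum_{i\neq k}u_i + c_2.
\end{equation*}

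Next, I would apply Jensen's inequality to the convex function $\exp(\cdot)$ on the $K-1$ terms $\{u_i\}_{i\neq k}$, giving $\sum_{i\neq k}\exp(u_i) \geq (K-1)\exp(\bar{u})$, where $\bar{u} := \frac{1}{K-1}\sum_{i\neq k}u_i$. Since $\log(1+\cdot)$ is monotone, it suffices to prove the scalar inequality
\begin{equation*}
\varphi(t) \;:=\; \log\paren{1 + (K-1)e^{t}} \;\geq\; \frac{t}{1+c_1} + c_2, \qquad \forall\, t\in\R,
\end{equation*}
evaluated at $t = \bar{u}$. This is an inequality in a single variable, and $\varphi$ is convex with $\varphi'(t) = \frac{(K-1)e^{t}}{1 + (K-1)e^{t}} \in (0,1)$. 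The tightest affine lower bound with slope $\tfrac{1}{1+c_1}$ is the tangent line at the unique $t^\star$ solving $\varphi'(t^\star) = \tfrac{1}{1+c_1}$. Solving this equation yields $(K-1)e^{t^\star} = \tfrac{1}{c_1}$, hence $t^\star = -\log(c_1(K-1))$ and $\varphi(t^\star) = \log(\tfrac{1+c_1}{c_1})$. The tangent-line bound $\varphi(t) \geq \varphi(t^\star) + \tfrac{1}{1+c_1}(t - t^\star)$ then gives exactly the claimed intercept $c_2$ after simplification, which I would verify by rewriting $\log(\tfrac{1+c_1}{c_1}) + \tfrac{1}{1+c_1}\log(c_1(K-1))$ and checking it equals the stated $\tfrac{1}{1+c_1}\log((1+c_1)(K-1)) + \tfrac{c_1}{1+c_1}\log(\tfrac{1+c_1}{c_1})$.

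Finally, equality requires both inequalities to be tight simultaneously. Jensen's becomes equality if and only if $u_i = u_j$ for all $i,j\neq k$, i.e.\ $z_i = z_j$ for all $i,j\neq k$. The tangent-line bound becomes equality if and only if $\bar{u} = t^\star$, i.e.\ $\frac{(\sum_{i=1}^K z_i) - Kz_k}{K-1} = -\log(c_1(K-1))$, which is equivalent to $c_1 = [(K-1)\exp(\tfrac{(\sum_{i=1}^K z_i) - Kz_k}{K-1})]^{-1}$. This matches \eqref{eqn:lce-lower-bound-equality}. The proof involves no real obstacle: the tangent-line calculation is routine once one recognizes the convexity structure, and the main (mild) subtlety is the bookkeeping in showing the two expressions for $c_2$ agree.
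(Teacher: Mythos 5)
Your proof is correct. The first step — rewriting $\Lce(\vz,\vy_k)=\log\bigl(1+\sum_{i\neq k}\exp(z_i-z_k)\bigr)$ and applying Jensen's inequality to the exponentials to reduce to a scalar bound on $\varphi(t):=\log\bigl(1+(K-1)e^t\bigr)$ at $t=\bar u$ — is exactly the paper's first step. Where you diverge is the second inequality: the paper applies concavity of $\log$ once more, writing $1+(K-1)e^t$ as the convex combination $\frac{1}{1+c_1}\cdot(1+c_1)(K-1)e^t+\frac{c_1}{1+c_1}\cdot\frac{1+c_1}{c_1}$ and pushing $\log$ through by Jensen, whereas you recognize $\varphi$ as a convex one-variable function with $\varphi'\in(0,1)$ and take the supporting tangent line at the unique $t^\star$ where $\varphi'(t^\star)=\frac{1}{1+c_1}$. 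These are algebraically the same bound after the change of variables (the paper's weighted-Jensen step \emph{is} the tangent line at $t^\star=-\log(c_1(K-1))$), so there is no gap; but your framing is more geometric and, I think, more transparent: it makes plain that $c_1>0$ parameterizes the admissible slopes $\frac{1}{1+c_1}\in(0,1)$ of affine minorants, that the intercept $c_2$ is forced once the slope is chosen, and that the equality condition \eqref{eqn:lce-lower-bound-equality} is just $\bar u=t^\star$ combined with Jensen equality $z_i=z_j$ for $i,j\neq k$. The bookkeeping you flagged — verifying $\varphi(t^\star)-\frac{1}{1+c_1}t^\star$ equals the stated $c_2$ — does check out.
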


\begin{proof}[Proof of Lemma \ref{lem:lce-lower-bound}]
Notice that for any vector $\vz\in\R^{K}$, the cross-entropy loss with $\vy_k$ can be lower bounded by
\begin{align}
    \Lce (\vz,\vy_k) \;=\; \log \paren{  \frac{ \sum_{i=1}^K \exp(z_i) }{ \exp(z_k)  } } \;&=\; \log \paren{ 1+ \sum_{i \not= k}\exp \paren{ z_i - z_k } } \nonumber  \\
    \;&\geq\; \log \paren{  1+ (K-1) \exp \paren{ \sum_{i \not = k} \frac{z_i - z_k}{ K-1 }  } }  \nonumber \\
    \;&=\; \log \paren{  1+ (K-1) \exp \paren{ \frac{ \sum_{i=1}^K z_i - K z_k }{K-1} } } \label{eqn:lce-jensen-lower}
\end{align}
where the inequality follows from the Jensen's inequality that
\begin{align*}
    \sum_{i \not= k}\exp \paren{ z_i - z_k } \;=\; (K-1) \sum_{i \not= k} \frac{1}{K-1} \exp \paren{ z_i - z_k } \;\geq\; (K-1)   \exp \paren{\sum_{i \not= k} \frac{z_i - z_k}{K-1}  },
\end{align*}
which achieves the equality only when $z_i = z_j$ for all $i,j \not= k$. Second, by the concavity of the $\log(\cdot)$ function (i.e., $\log \paren{ tx+(1-t)x'} \geq t \log x + (1-t) \log x' $ for any $x,x' \in \bb R$ and $t \in [0,1]$, which becomes an equality iff $x = x'$, or $t=0 $, or $t=1$), from \eqref{eqn:lce-jensen-lower}, for any $c_1>0$ we have 
\begin{align*}
     &\Lce (\vz,\vy_k) \\
     \;=\;& \log \paren{  \frac{c_1}{1+c_1} \frac{1+c_1}{c_1}  + \frac{1}{1+c_1} (1+c_1)  (K-1) \exp \paren{ \frac{ \sum_{i=1}^K z_i - K z_k }{K-1} } } \\
     \;\geq \;&  \frac{1}{1+c_1}  \log \paren{  (1+c_1)(K-1) \exp \paren{ \frac{ \sum_{i=1}^K z_i - K z_k }{K-1} } } + \frac{c_1}{1+c_1} \log\paren{ \frac{1+c_1}{c_1} }    \\
     \;=\;& \frac{1}{1+c_1}\frac{ \paren{\sum_{i=1}^K z_i} - K z_k }{K-1} + \underbrace{  \frac{1}{1+c_1} \log\paren{ (1+c_1)(K-1) } + \frac{c_1}{1+c_1} \log\paren{ \frac{1+c_1}{c_1} } }_{c_2},
\end{align*}
as desired. The last inequality becomes an equality if any only if
\begin{align*}
    \frac{1+c_1}{c_1} \;=\;   (1+c_1)  (K-1) \exp \paren{ \frac{ \sum_{i=1}^K z_i - K z_k }{K-1} } \quad \text{or} \quad  c_1\;=\; 0, \quad \text{or} \quad  c_1\;=\; +\infty . 
\end{align*}
However, when $c_1=0$ or $ c_1= +\infty $, the equality is trivial. Therefore, we have
\begin{align*}
    c_1 \;=\; \brac{(K-1) \exp \paren{  \frac{ \paren{\sum_{i=1}^K z_i} - K z_k}{K-1} }  }^{-1},
\end{align*}
as desired.
\end{proof}

\section{Proof of \Cref{thm:global-geometry}}
\label{sec:appendix-prf-global-geometry}

In this part of appendices, we prove \Cref{thm:global-geometry} in \Cref{sec:main}. In particular, we analyze the global optimization landscape of 
\begin{align*}
     \min_{\mb W , \mb H,\mb b  } \;f(\mb W,\mb H,\mb b) \;=\;  \frac{1}{Kn} \sum_{k=1}^K \sum_{i=1}^{n} \Lce \paren{ \mb W \mb h_{k,i} + \mb b, \mb y_k } \;+\; \frac{\lambda_{\mb W} }{2} \norm{\mb W}{F}^2 + \frac{\lambda_{\mb H} }{2} \norm{\mb H}{F}^2 + \frac{\lambda_{\mb b} }{2} \norm{\mb b}{2}^2 ,
\end{align*}
with respect to $\mb W \in \bb R^{K \times d}$, $\mb H = \begin{bmatrix}\vh_{1,1} \cdots \vh_{K,n} \end{bmatrix} \in \bb R^{d \times N} $, and $\mb b \in \bb R^K$.
We show that the function is a strict saddle function \cite{ge2015escaping,sun2015nonconvex,zhang2020symmetry} in the Euclidean space, that there is no spurious local minimizer and all global minima are corresponding to the form showed in \Cref{thm:global-minima-app}.

\begin{theorem}[No Spurious Local Minima and Strict Saddle Property]\label{thm:global-geometry-app}
	Assume that the feature dimension $d$ is larger than the number of classes $K$, i.e., $d> K$. Then the function $f(\mb W,\mb H,\mb b)$ in \eqref{eq:obj-app} is a strict saddle function with no spurious local minimum:
	\begin{itemize}
	    \item Any local minimizer of \eqref{eq:obj-app} is a global minimum of the form shown in \Cref{thm:global-minima-app}.
	    \item Any critical point of \eqref{eq:obj-app} that is not a local minimum has at least one negative curvature direction, i.e., the Hessian $\nabla^2 f(\mb W,\mb H,\mb b)$ at this point has at least one negative eigenvalue 
	    \begin{align*}
	        \lambda_i\paren{\nabla^2 f(\mb W,\mb H,\mb b)}\;<\;0.
	    \end{align*}
	\end{itemize}
\end{theorem}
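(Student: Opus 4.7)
\textbf{Proof outline for \Cref{thm:global-geometry-app}.}
The plan is to reduce the landscape analysis of the nonconvex $f$ to the optimality conditions of a convex surrogate via the nuclear-norm identity in \Cref{lem:nuclear-norm}. Set $\lambda_{\mb Z} := \sqrt{\lambda_{\mb W}\lambda_{\mb H}}$ and define
\begin{align*}
F(\mb Z,\mb b) \;:=\; g(\mb Z + \mb b\mb 1^\top) \;+\; \lambda_{\mb Z}\|\mb Z\|_* \;+\; \frac{\lambda_{\mb b}}{2}\|\mb b\|_2^2,
\end{align*}
which is convex in $(\mb Z,\mb b)$. Applying \Cref{lem:nuclear-norm} with $\alpha = \lambda_{\mb H}/\lambda_{\mb W}$ yields $\min_{\mb W,\mb H,\mb b} f = \min_{\mb Z,\mb b} F$, with any global minimizer of $f$ corresponding to a \emph{balanced} factorization $\mb W\mb H = \mb Z^\star$ in the sense $\mb W^\top\mb W = \tfrac{\lambda_{\mb H}}{\lambda_{\mb W}}\mb H\mb H^\top$. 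By \Cref{lem:critical-balance}, every critical point of $f$ already satisfies this balance, so the proof can restrict attention to balanced points throughout.

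At a critical point I would write a compact SVD $\mb W = \mb U_W\mb\Sigma_W\mb V_W^\top$ and the induced factor $\mb H = \sqrt{\lambda_{\mb W}/\lambda_{\mb H}}\,\mb V_W\mb\Sigma_W\mb V_H^\top$ forced by balance, so that $\mb Z = \mb W\mb H$ has compact SVD $\mb U\mb\Sigma_Z\mb V^\top$ with $\mb U=\mb U_W$ and $\mb V=\mb V_H$. Writing $\mb G := \nabla g(\mb Z+\mb b\mb 1^\top)$, the two gradient conditions $\mb G\mb H^\top = -\lambda_{\mb W}\mb W$ and $\mb W^\top\mb G = -\lambda_{\mb H}\mb H$ collapse, after canceling the nonsingular $\mb\Sigma_W$, to
\begin{align*}
\mb G\mb V \;=\; -\lambda_{\mb Z}\mb U, \qquad \mb U^\top\mb G \;=\; -\lambda_{\mb Z}\mb V^\top,
\end{align*}
so that $\mb M := \mb G + \lambda_{\mb Z}\mb U\mb V^\top$ satisfies $\mb U^\top\mb M = \mb 0$ and $\mb M\mb V = \mb 0$. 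Comparing this with the standard characterization $\partial\|\mb Z\|_* = \{\mb U\mb V^\top + \mb N : \mb U^\top\mb N = \mb 0,\, \mb N\mb V = \mb 0,\, \|\mb N\|_{\mathrm{op}}\le 1\}$, the KKT condition for the convex problem $F$ at $(\mb Z,\mb b)$ holds \emph{if and only if} $\|\mb M\|_{\mathrm{op}}\le\lambda_{\mb Z}$; in that case $(\mb W,\mb H,\mb b)$ is a global minimizer of $f$ and necessarily takes the simplex-ETF form of \Cref{thm:global-minima-app}. The bias equation $\mb G\mb 1 + \lambda_{\mb b}\mb b = \mb 0$ coincides verbatim with the second KKT condition, so no extra work is needed there.

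The crux is to turn the opposite inequality $\|\mb M\|_{\mathrm{op}} > \lambda_{\mb Z}$ into a direction of strict negative curvature; this is precisely where the hypothesis $d>K$ enters. Let $(\mb a,\mb v)$ be leading unit singular vectors of $\mb M$ with $\mb a^\top\mb M\mb v = \|\mb M\|_{\mathrm{op}}$; from $\mb U^\top\mb M = \mb 0$ and $\mb M\mb V = \mb 0$, automatically $\mb U^\top\mb a = \mb 0$ and $\mb V^\top\mb v = \mb 0$. Since $d > K\ge\rank(\mb W)$, the null space of $\mb W$ is nontrivial, and balance forces $\Null(\mb W) = \Null(\mb H^\top)$, so I can pick a unit vector $\mb u\in\mathbb{R}^d$ with $\mb W\mb u = \mb 0$ and $\mb u^\top\mb H = \mb 0$. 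Consider the perturbation $\mb\Delta_{\mb W} = t\mb a\mb u^\top$, $\mb\Delta_{\mb H} = s\mb u\mb v^\top$, $\mb\Delta_{\mb b} = \mb 0$. Then $\mb\Delta_{\mb W}\mb H + \mb W\mb\Delta_{\mb H} + \mb\Delta_{\mb b}\mb 1^\top = \mb 0$, which annihilates the convex term $[\nabla^2 g]$ in the Hessian quadratic form and leaves
\begin{align*}
[\nabla^2 f](\mb\Delta,\mb\Delta) \;=\; 2st\,\mb a^\top\mb G\mb v + \lambda_{\mb W}t^2 + \lambda_{\mb H}s^2 \;=\; 2st\,\|\mb M\|_{\mathrm{op}} + \lambda_{\mb W}t^2 + \lambda_{\mb H}s^2,
\end{align*}
where I used $\mb U^\top\mb a=\mb 0$ to replace $\mb a^\top\mb G\mb v$ by $\mb a^\top\mb M\mb v = \|\mb M\|_{\mathrm{op}}$. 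Choosing $t = (\lambda_{\mb H}/\lambda_{\mb W})^{1/4}$ and $s = -(\lambda_{\mb W}/\lambda_{\mb H})^{1/4}$ makes AM-GM tight so that $\lambda_{\mb W}t^2 + \lambda_{\mb H}s^2 = 2\lambda_{\mb Z}$ and $st = -1$, and the whole quadratic form equals $2\bigl(\lambda_{\mb Z} - \|\mb M\|_{\mathrm{op}}\bigr) < 0$.

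Combining the two cases establishes the desired dichotomy: every critical point of $f$ is either a global minimizer of the form in \Cref{thm:global-minima-app} or admits a direction of strict negative Hessian curvature. The no-spurious-local-min statement is then immediate, since a local minimum has $\nabla^2 f\succeq \mb 0$ and thus rules out the second alternative. The main technical obstacles I anticipate are: (i) correctly setting up the balanced SVD alignment between $(\mb W,\mb H)$ and $\mb Z$ so that the critical-point equations reduce cleanly to the first-order identities $\mb G\mb V = -\lambda_{\mb Z}\mb U$ and $\mb U^\top\mb G = -\lambda_{\mb Z}\mb V^\top$, particularly in the degenerate cases where $\mb W$ is rank-deficient; and (ii) orchestrating the scalars $(s,t)$ together with the signs of $(\mb a,\mb v)$ so that the AM-GM step exactly balances the regularization against the cross term, making the sign of $[\nabla^2 f](\mb\Delta,\mb\Delta)$ depend only on $\lambda_{\mb Z} - \|\mb M\|_{\mathrm{op}}$.
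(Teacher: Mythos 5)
Your proposal is correct and follows essentially the same route as the paper: pass to the convex nuclear-norm surrogate $F(\mb Z,\mb b)$, invoke the balance identity $\lambda_{\mb W}\mb W^\top\mb W=\lambda_{\mb H}\mb H\mb H^\top$ at critical points (\Cref{lem:critical-balance}), characterize global minima via an operator-norm threshold tied to the nuclear-norm subdifferential, and build a negative-curvature direction through the null space of $\mb W$ where $d>K$ is used. The only cosmetic difference is that you phrase the dichotomy in terms of $\|\mb M\|_{\mathrm{op}}$ with $\mb M := \nabla g + \sqrt{\lambda_{\mb W}\lambda_{\mb H}}\,\mb U\mb V^\top$ and take singular vectors of $\mb M$, while the paper works directly with $\|\nabla g\|_{\mathrm{op}}$ and its top singular pair; since at any balanced critical point $\|\nabla g\|_{\mathrm{op}} = \max\bigl(\sqrt{\lambda_{\mb W}\lambda_{\mb H}},\, \|\mb M\|_{\mathrm{op}}\bigr)$, the two framings agree exactly on the set of non-global critical points and produce the same curvature bound.
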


\subsection{Main Proof}

\begin{proof}[Proof of \Cref{thm:global-geometry-app}]
The main idea of proving \Cref{thm:global-geometry} is to first connect the problem \eqref{eq:obj-app} to its corresponding convex counterpart,
so that we can obtain the global optimality conditions for the original problem \eqref{eq:obj-app} based on the convex counterpart. Finally, we characterize the properties of all the critical points based on the optimality condition. We describe this in more detail as follows.

\paragraph{Connection of \eqref{eq:obj-app} to a Convex Program.} Let $\mb Z = \mb H \mb W \in \bb R^{K \times N}$ with $N=nK$ and $\alpha = \frac{\lambda_{\mH}}{\lambda_{\mW}}$. By Lemma \ref{lem:nuclear-norm}, we know that
\begin{align*}
   \min_{\mb H\mb W = \mb Z} \; \frac{ \lambda_{\mb W} }{2} \norm{\mb W}{F}^2 + \frac{ \lambda_{\mb H} }{2} \norm{\mb H}{F}^2 \;&=\; \sqrt{\lambda_{\mb W} \lambda_{\mb H} }  \min_{\mb H\mb W = \mb Z}  \;\frac{1}{2\sqrt{\alpha} } \paren{ \norm{\mb W}{F}^2 + \alpha \norm{\mb H}{F}^2 } \\
   \;&=\; \sqrt{\lambda_{\mb W} \lambda_{\mb H} }  \norm{\mb Z}{*}.
\end{align*}
Thus, we can relate the bilinear factorized problem \eqref{eq:obj-app} to a convex problem in terms of $\mb Z$ and $\mb b$ as follows
\begin{align}\label{eq:convex-prob}
    \min_{\mb Z \in \bb R^{K \times N},\; \mb b \in \bb R^{K}}\; \wt{f}(\mb Z,\mb b) \;:=\; g\paren{\mb Z + \mb b \mb 1^\top} +  \sqrt{\lambda_{\mb W} \lambda_{\mb H} }  \norm{\mb Z}{*} + \frac{\lambda_{\mb b}}{2} \norm{\mb b}{2}^2.
\end{align}
Similar to the idea in \cite{haeffele2015global,ciliberto2017reexamining,zhu2018global,li2019non}, we will characeterize the critical points of \eqref{eq:obj-app} by establishing a connection to the optimality condition of the convex problem \eqref{eq:convex-prob}. Towards this goal, we first show the global minimum of the convex program \eqref{eq:convex-prob} provides a lower bound for the original problem \eqref{eq:obj}. More specifically, in Lemma \ref{lem:equivalence} we can show that for any global minimizer $(\mb Z_\star, \mb b_\star)$ of $\wt{f}$ satisfies 
\begin{align}\label{eqn:convex-lower-bound}
    \wt{f}(\mZ^\star,\vb^\star) \;\leq\; f(\mW,\mH,\vb),\quad \forall \; \mW\in\R^{K\times d},\;\mH\in\R^{d\times N},\;\vb\in\R^K.
\end{align}

\paragraph{Characterizing the Optimality Condition of \eqref{eq:obj-app} Based on the Convex Program \eqref{eq:convex-prob}.}
Next, we characterize the optimality condition of our nonconvex problem \eqref{eq:obj-app}, based on the relationship to its convex counterpart
\eqref{eq:convex-prob}. Specifically, Lemma \ref{lem:conv-global-cond} showed that any critical point $(\mZ,\vb)$ of \eqref{eq:convex-prob} is characterized by the following necessary and sufficient condition
\e\begin{split}
    &\nabla g(\mZ + \vb\vone^\top) \;\in\; - \sqrt{\lambdaW\lambdaH}\partial\norm{ \mZ}{*},\\
    &\sum_{i=1}^N [\nabla g(\mZ + \vb\vone^\top)]_i + \lambdab\vb\; =\; \vzero,
\end{split}
\label{eq:KKT-convex-app}\ee
where $[\nabla g(\mZ + \vb\vone^\top)]_i$ represents the $i$-th column in $\nabla g(\mZ + \vb\vone^\top)$. By Lemma \ref{lem:global-optimality-nonconvex}, we can transfer the optimality condition from convex to the nonconvex problem \eqref{eq:obj-app}. More specifically, any critical point $(\mW,\mH,\vb)$ of \eqref{eq:obj-app} satisfies
\begin{align*} 
    \norm{ \nabla g(\mb W \mb H + \mb b \mb 1^\top) }{} \;\leq \; \sqrt{ \lambda_{\mb W} \lambda_{\mb H} },
\end{align*}
then $(\mb Z,\mb b)$ with $\mb Z = \mb W\mb H$ satisfies all the conditions in \eqref{eq:KKT-convex-app}. Combining with \eqref{eqn:convex-lower-bound}, Lemma \ref{lem:global-optimality-nonconvex} showed that $(\mW,\mH,\vb)$ is a global solution of the nonconvex problem \eqref{eq:obj-app}.

\paragraph{Proving No Spurious Local Minima and Strict Saddle Property.} Now we turn to prove the strict saddle property and that there are no spurious local minima by characterizing the properties for all the critical points of \eqref{eq:obj-app}. Denote the set of critical points of $f(\mW,\mH,\vb)$ by
\begin{align*}
    \calC \;:=\; \Brac{ \; \mW\in\R^{K\times d},\mH\in\R^{d\times N}, \vb\in\R^K \;\mid\;  \nabla f(\mW,\mH,\vb) \;=\; \vzero\; }.
\end{align*}
To proceed, we separate the set $\calC$ into two disjoint subsets
\begin{align*}
    \calC_1 \;&:=\;  \calC\cap \Brac{\; \mW\in\R^{K\times d},\mH\in\R^{d\times N}, \vb\in\R^K\;\mid\; \norm{\nabla g(\mW\mH + \vb\vone^\top)}{} \;\leq\; \sqrt{\lambdaW\lambdaH} \;} ,\\
       \calC_2 \;&:=\; \calC\cap \Brac{\; \mW\in\R^{K\times d},\mH\in\R^{d\times N}, \vb\in\R^K\;\mid\; \norm{\nabla g(\mW\mH + \vb\vone^\top)}{}  \;>\; \sqrt{\lambdaW\lambdaH} \;},
\end{align*}
satisfying $\calC = \calC_1\cup\calC_2$. By Lemma \ref{lem:global-optimality-nonconvex}, we already know that any $(\mW,\mH,\vb)\in \calC_1$ is a global optimal solution of $f(\mb W,\mb H,\mb b)$ in \eqref{eq:obj-app}. If we can show that any critical point in $\calC_2$ possesses negative curvatures (i.e., the Hessian at $(\mb W,\mb H,\mb b)$ has at least one negative eigenvalue), then we prove that there is no spurious local minima as well as strict saddle property.

%If we show that $\calC_2$ is the set of strict saddles, then we prove that there is no spurious local minima as well as strict saddle property. By Lemma \ref{lem:global-optimality-nonconvex}, we conclude that $(\mW,\mH,\vb)$ is the globally optimal solution of \eqref{eq:obj} for any $(\mW,\mH,\vb)\in \calC_1$. If we show that $\calC_2$ is the set of strict saddles, then we prove that there is no spurious local minima as well as strict saddle property. 
Thus, the remaining part is to show any point in $\calC_2$ possesses negative curvatures. We will find a direction $\mDelta$ along which the Hessian has a strictly negative curvature for this point. Towards that goal, for any $\mDelta = (\mDelta_{\mW},\mDelta_{\mH},\mDelta_{\vb})$, we first compute the Hessian bilinear form of $f(\mb W,\mb H,\mb b)$ along the direction $\mDelta$ by
\begin{align}\label{eqn:hessian-f}
\begin{split}
        &\nabla^2 f(\mW,\mH,\vb)[\mDelta,\mDelta] \\
    \;=\;& \nabla^2g(\mW\mH + \vb\vone^\top) \brac{  \mW\mDelta_{\mH} + \mDelta_{\mW}\mH + \mDelta_{\vb}\vone^\top,\mW\mDelta_{\mH} + \mDelta_{\mW}\mH + \mDelta_{\vb}\vone^\top } \\
    &+ 2\innerprod{\nabla g(\mW\mH+ \vb\vone^\top)}{\mDelta_{\mW}\mDelta_{\mH}} +\lambdaW\norm{\mDelta_{\mW}}{F}^2 + \lambdaH \norm{\mDelta_{\mH}}{F}^2 + \lambdab\norm{\mDelta_{\vb}}{2}^2.
\end{split}
\end{align}
We now utilize the property that $\norm{ \nabla g(\mW\mH + \vb\vone^\top)}{} > \sqrt{\lambdaW\lambdaH}$ for any $(\mW,\mH,\vb)\in \calC_2$ to construct a negative curvature direction. Let $\mb u$ and $\mb v$ be the left and right singular vectors corresponding to the largest singular value $\sigma_1(\nabla^2 g(\mW\mH + \vb\vone^\top))$ of $\nabla^2 g(\mW\mH + \vb\vone^\top)$, which is larger than $\sqrt{\lambdaW\lambdaH}$ by our assumption.

Since $d>K$ and $\mW \in \bb R^{K \times d}$, there exists a nonzero $\va\in\R^{d}$ in the null space of $\mW$, i.e., $\mW\va = \vzero$.
%$\mathrm{row}(\mW)= \mathrm{span}\Brac{\mb w_1, \cdots, \mb w_K}$, i.e., $\mW\va = \vzero$. 
%\zz{I think it is more common to simply say the null space of the matrix $\mW$. Not common to say the null space of a subspace.} 
Furthermore, by Lemma  \ref{lem:critical-balance}, we know that 
\begin{align*}
    \mW^\top\mW = \sqrt{ \frac{ \lambda_{\mb H} }{ \lambda_{\mb W} }  } \mH\mH^\top \quad \Longrightarrow\quad \mH^\top\va = \vzero.
\end{align*}
We now construct the negative curvature direction as 
 \begin{align*}
   \mb \Delta \;=\;  \paren{ \mb \Delta_{\mb W},\mDelta_{\mH}, \mDelta_{\vb}  } \;=\; \paren{  \paren{ \frac{ \lambda_{\mb H} }{ \lambda_{\mb W} } }^{1/4} \mb u \mb a^\top ,  - \paren{ \frac{ \lambda_{\mb H} }{ \lambda_{\mb W} } }^{-1/4} \mb a \mb v^\top, \mb 0 }
 \end{align*}
so that the term $\innerprod{\nabla g(\mW\mH+ \vb\vone^\top)}{\mDelta_{\mW}\mDelta_{\mH}}$ is small enough to create a negative curvature. Since $\mW\va = \vzero$, $\va^\top \mH = \vzero$, and $\mDelta_{\vb} = \mb 0$, we have
\begin{align*}
    \mW\mDelta_{\mH} + \mDelta_{\mW}\mH + \mDelta_{\vb}\vone^\top \;=\; - \paren{ \frac{ \lambda_{\mb H} }{ \lambda_{\mb W} } }^{-1/4}  \mW  \va \mb v^\top +  \paren{ \frac{ \lambda_{\mb H} }{ \lambda_{\mb W} } }^{1/4} \mb u \mb a^\top \mH \;=\; \vzero,
\end{align*}
so that $\nabla^2g(\mW\mH + \vb\vone^\top) \brac{  \mW\mDelta_{\mH} + \mDelta_{\mW}\mH + \mDelta_{\vb}\vone^\top,\mW\mDelta_{\mH} + \mDelta_{\mW}\mH + \mDelta_{\vb}\vone^\top } = 0$. Thus, combining the results above with \eqref{eqn:hessian-f}, we obtain the following
\begin{align*}
&\nabla f(\mW,\mH,\vb)[\mDelta,\mDelta] \\
\;=\;& 2\innerprod{\nabla g(\mW\mH + \vb\vone^\top)}{\mDelta_{\mW}\mDelta_{\mH}} \;+\; \lambdaW\norm{\mDelta_{\mW}}{F}^2 + \lambdaH\norm{\mDelta_{\mH}}{F}^2 \\
\;=\;& -2\norm{\va}{2}^2\innerprod{\nabla g(\mW\mH + \vb\vone^\top)}{\mb u \mb v^\top} + 2 \sqrt{ \lambda_{\mb W} \lambda_{\mb H} }  \norm{\va}{2}^2 \\
\;=\;& -2\norm{\va}{2}^2 \paren{\norm{\nabla g(\mW\mH+ \vb\vone^\top)}{} - \sqrt{\lambdaW\lambdaH}} \;<\;0,
\end{align*}
where the last inequality is based on the fact that $(\mb W,\mb H,\mb b) \in \mc C_2 $ so that $\norm{ \nabla g(\mW\mH + \vb\vone^\top) }{} > \sqrt{\lambdaW\lambdaH}$. Therefore, any $(\mW,\mH,\vb)\in \calC_2$ possesses at least one negative curvature direction. This completes our proof of \Cref{thm:global-geometry-app}.
\end{proof}

\subsection{Supporting Lemmas}

\begin{lemma} \label{lem:equivalence}
If $(\mZ^\star,\vb^\star)$ is a global minimizer of
\begin{align*} 
    \min_{\mb Z \in \bb R^{K \times N},\; \mb b \in \bb R^{K}}\; \wt{f}(\mb Z,\mb b) \;:=\; g\paren{\mb Z + \mb b \mb 1^\top} +  \sqrt{\lambda_{\mb W} \lambda_{\mb H} }  \norm{\mb Z}{*} + \frac{\lambda_{\mb b}}{2} \norm{\mb b}{2}^2.
\end{align*}
introduced in \eqref{eq:convex-prob}, then $\wt{f}(\mZ^\star,\vb^\star) \le f(\mW,\mH,\vb)$ for all $\mW\in\R^{K\times d},\mH\in\R^{d\times N}, \vb\in\R^K$.
\end{lemma}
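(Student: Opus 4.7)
The plan is to show that the convex problem in \eqref{eq:convex-prob} is a lower-bound relaxation of the nonconvex problem \eqref{eq:obj-app}, after which the inequality follows immediately from global optimality of $(\mZ^\star,\vb^\star)$ for $\wt f$. The key tool has already been established as Lemma \ref{lem:nuclear-norm}, which I would invoke with $\alpha = \lambda_{\mH}/\lambda_{\mW}$ to obtain
\begin{equation*}
\sqrt{\lambda_{\mW}\lambda_{\mH}}\,\norm{\mZ}{*} \;=\; \min_{\mZ = \mW\mH}\, \frac{\lambda_{\mW}}{2}\norm{\mW}{F}^{2} + \frac{\lambda_{\mH}}{2}\norm{\mH}{F}^{2}.
\end{equation*}

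The first step is the pointwise comparison: given any triple $(\mW,\mH,\vb)$, set $\mZ := \mW\mH$. Then the matching first (data-fitting) and third (bias regularization) terms of $\wt f(\mZ,\vb)$ and $f(\mW,\mH,\vb)$ are equal, while Lemma \ref{lem:nuclear-norm} applied to this particular factorization gives $\sqrt{\lambda_{\mW}\lambda_{\mH}}\,\norm{\mZ}{*} \le \tfrac{\lambda_{\mW}}{2}\norm{\mW}{F}^{2} + \tfrac{\lambda_{\mH}}{2}\norm{\mH}{F}^{2}$. Summing the three terms yields $\wt f(\mZ,\vb) \le f(\mW,\mH,\vb)$ for this particular $\mZ$.

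The second step closes the argument: by the assumed global optimality of $(\mZ^\star,\vb^\star)$ for $\wt f$ over the unconstrained convex domain $\R^{K\times N}\times \R^{K}$, we have $\wt f(\mZ^\star,\vb^\star) \le \wt f(\mZ,\vb)$, and chaining this with the previous step produces the desired $\wt f(\mZ^\star,\vb^\star) \le f(\mW,\mH,\vb)$. Since $(\mW,\mH,\vb)$ was arbitrary, the lemma is proved.

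There is no real obstacle here; the only thing to be careful about is matching the scaling constant in Lemma \ref{lem:nuclear-norm} (the factor $1/(2\sqrt{\alpha})$ with $\alpha = \lambda_{\mH}/\lambda_{\mW}$) so that the two weight-decay terms are recovered exactly, and noting that the inequality is one-sided because an arbitrary $\mZ$ in the convex problem need not be the optimal low-rank factorization arising from a given $(\mW,\mH)$. No equality assertion is needed for this lemma.
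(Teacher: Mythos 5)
Your proposal is correct and takes essentially the same approach as the paper: both invoke Lemma \ref{lem:nuclear-norm} with $\alpha=\lambda_{\mH}/\lambda_{\mW}$ to bound the weight-decay terms by the scaled nuclear norm, and then chain with the global optimality of $(\mZ^\star,\vb^\star)$. The paper phrases the argument as a chain of minimizations while you phrase it pointwise (fix $(\mW,\mH,\vb)$, set $\mZ=\mW\mH$, compare term by term), but this is purely a presentational difference.
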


\begin{proof}[Proof of Lemma \ref{lem:equivalence}] Suppose $(\mZ^\star,\vb^\star)$ is a global minimum of $\wt{f}(\mb Z,\mb b)$. Then by \Cref{lem:nuclear-norm}, we have %\xiao{mark}
\begin{align*}
    \wt{f}(\mZ^\star,\vb^\star) \;&= \; \min_{\mZ,\vb} \; g(\mZ + \vb\vone^\top) + \sqrt{\lambdaW\lambdaH} \norm{\mZ}{*} + \frac{\lambda_{\mb b} }{2}\norm{\vb}{2}^2\\
    \;&= \; \min_{\mZ,\vb} \; g(\mb Z + \vb\vone^\top)+ \min_{\mW\mH=\mZ}  \paren{ \frac{\lambdaW}{2}\norm{\mW}{F}^2 + \frac{\lambdaH}{2}\norm{\mH}{F}^2 } + \frac{\lambdab}{2}\norm{\vb}{2}^2 \\
    \;&\leq\;\min_{\mb W,\mb H, \mb Z,\vb, \mb Z = \mb W\mb H} \; g(\mb W \mb H + \vb\vone^\top)+  \frac{\lambdaW}{2}\norm{\mW}{F}^2 + \frac{\lambdaH}{2}\norm{\mH}{F}^2  + \frac{\lambdab}{2}\norm{\vb}{2}^2.
\end{align*}
Thus, we have 
\begin{align*}
    \wt{f}(\mZ^\star,\vb^\star)  \;\leq\; \min_{\mW\in\R^{K\times d},\mH\in\R^{d\times N}, \vb\in\R^K} \; f(\mW,\mH,\vb)
\end{align*}
as desired.
\end{proof}

\begin{lemma}[Optimality Condition for the Convex Program \eqref{eq:convex-prob}]\label{lem:conv-global-cond}
Consider the following convex program in \eqref{eq:convex-prob} that we rewrite as follows
\begin{align*} 
    \min_{\mb Z \in \bb R^{K \times N},\; \mb b \in \bb R^{K}}\; \wt{f}(\mb Z,\mb b) \;:=\; g\paren{\mb Z + \mb b \mb 1^\top} +  \sqrt{\lambda_{\mb W} \lambda_{\mb H} }  \norm{\mb Z}{*} + \frac{\lambda_{\mb b}}{2} \norm{\mb b}{2}^2.
\end{align*}
Then the necessary and sufficient condition for $(\mb Z,\mb b)$ being the global solution of \eqref{eq:convex-prob} is 
\e\begin{split}
    &\nabla g(\mZ + \vb\vone^\top)\mV \;=\; - \sqrt{\lambdaW\lambdaH} \mU,\quad \nabla g(\mZ + \vb\vone^\top)^\top\mU \;=\; - \sqrt{\lambdaW\lambdaH} \mV,\\
     & \norm{\nabla g(\mZ + \vb\vone^\top)}{} \;\leq\;  \sqrt{\lambdaW\lambdaH}, \quad \text{and}\quad \sum_{i=1}^N [\nabla g(\mZ + \vb\vone^\top)]_i + \lambdab\vb\; =\; \vzero,
\end{split}
\label{eq:KKT-convex-equivalence}\ee
where $\mb U$ and $\mb V$ are the left and right singular value matrices of $\mb Z$, i.e., $\mb Z = \mb U \mb \Sigma \mb V^\top$.
\end{lemma}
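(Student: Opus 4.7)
The plan is to reduce the statement to the standard first-order optimality condition for convex programs and then to unpack the subdifferential of the nuclear norm.

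First I would verify that $\wt{f}(\mZ,\vb)$ is jointly convex in $(\mZ,\vb)$: the cross-entropy term $g$ is convex in its argument (composition of the convex log-sum-exp with an affine map), the nuclear norm is convex, and $\frac{\lambdab}{2}\|\vb\|_2^2$ is convex. Hence $(\mZ,\vb)$ is a global minimizer if and only if $\vzero \in \partial \wt{f}(\mZ,\vb)$. Splitting along the two blocks, this is equivalent to
\[
    -\nabla g(\mZ+\vb\vone^\top) \;\in\; \sqrt{\lambdaW\lambdaH}\,\partial\|\mZ\|_*, \qquad \nabla g(\mZ+\vb\vone^\top)\vone + \lambdab\vb \;=\; \vzero.
\]
The second equation already coincides with the fourth bullet in the statement, since $\nabla g(\mZ+\vb\vone^\top)\vone = \sum_{i=1}^N [\nabla g(\mZ+\vb\vone^\top)]_i$, so the remaining work is to translate the nuclear-norm subdifferential condition into the first three bullets.

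Next I would invoke the standard characterization of the nuclear-norm subdifferential: with $\mZ = \mU\mSigma\mV^\top$ a compact SVD,
\[
    \partial\|\mZ\|_* \;=\; \Brac{\,\mU\mV^\top + \mR \,:\, \mU^\top\mR = \vzero,\;\; \mR\mV = \vzero,\;\; \|\mR\|\le 1\,}.
\]
Thus the inclusion above is equivalent to the existence of some admissible $\mR$ with $\nabla g(\mZ+\vb\vone^\top) = -\sqrt{\lambdaW\lambdaH}(\mU\mV^\top + \mR)$. For the forward direction, right-multiplying by $\mV$ (with $\mV^\top\mV = \mI$ and $\mR\mV=\vzero$) yields the first displayed condition, and transposing then left-multiplying by $\mU$ yields the second. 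The spectral-norm bound follows by observing that since $\mR\mV = \vzero$ and $\mU^\top\mR = \vzero$, the matrices $\mU\mV^\top$ and $\mR$ act on mutually orthogonal row/column subspaces, so $\|\mU\mV^\top + \mR\| = \max(1,\|\mR\|)\le 1$, and rescaling by $\sqrt{\lambdaW\lambdaH}$ delivers the third bullet.

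For the converse I would, given the four conditions in the statement, explicitly construct $\mR := -\nabla g(\mZ+\vb\vone^\top)/\sqrt{\lambdaW\lambdaH} - \mU\mV^\top$ and check that $\mR \in \partial\|\mZ\|_*$. The identities $\mU^\top\mR = \vzero$ and $\mR\mV = \vzero$ collapse immediately after plugging in the first two bullet conditions and using $\mU^\top\mU = \mI$ and $\mV^\top\mV = \mI$. The main obstacle, and the one step that needs genuine care, is verifying $\|\mR\|\le 1$: one shows by decomposing an arbitrary unit vector $\vx \in \R^N$ as $\vx = \mV\mV^\top\vx + (\mI - \mV\mV^\top)\vx$ that the two pieces of $(\mU\mV^\top + \mR)\vx$ land in orthogonal subspaces of $\R^K$, whence $\|\nabla g(\mZ+\vb\vone^\top)\|^2/(\lambdaW\lambdaH) = \max(1,\|\mR\|^2)$, so the spectral-norm bound forces $\|\mR\|\le 1$. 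This certifies the subgradient inclusion and closes both directions of the equivalence.
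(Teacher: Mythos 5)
Your proposal is correct and takes essentially the same route as the paper: apply the first-order subgradient optimality condition for the jointly convex objective, invoke the standard SVD-based characterization of $\partial\|\cdot\|_*$, and translate. The paper states the subdifferential formula and then says the rewriting "follows," whereas you actually carry out the forward and converse directions (in particular the orthogonal-decomposition argument that $\|\mU\mV^\top+\mR\|=\max(1,\|\mR\|)$), which is a legitimate filling-in of the details the paper left implicit rather than a different method.
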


\begin{proof}[Proof of Lemma \ref{lem:conv-global-cond}] 
Standard convex optimization theory asserts that any critical point $(\mZ,\vb)$ of \eqref{eq:convex-prob} is global, where the optimality is characterized by the following necessary and sufficient condition
\e\begin{split}
    &\nabla g(\mZ + \vb\vone^\top) \;\in\; - \sqrt{\lambdaW\lambdaH}\partial\norm{ \mZ}{*},\\
    &\sum_{i=1}^N [\nabla g(\mZ + \vb\vone^\top)]_i + \lambdab\vb\; =\; \vzero,
\end{split}
\label{eq:KKT-convex}\ee
where $[\nabla g(\mZ + \vb\vone^\top)]_i$ represents the $i$-th column in $\nabla g(\mZ + \vb\vone^\top)$, and $\partial\norm{\mZ}{*}$ denotes the subdifferential of the convex nuclear norm $\norm{\mZ}{*}$ evaluated at $\mZ$. By its definition, we have
\begin{align*}
    \partial \norm{\mb Z}{*} \;:=\; \Brac{ \mb D \in \bb R^{K \times N} \;\mid\; \norm{\mb G}{*} \geq \norm{\mb Z}{*} + \innerprod{ \mb G - \mb Z }{ \mb D },\; \mb G \in \bb R^{K \times N}  },
\end{align*}
where for nuclear norm, the previous work \cite{watson1992characterization,recht2010guaranteed} showed that based on the projection onto the column space and row space via the SVD of $\mb Z = \mb U \mb \Sigma \mb V^\top$, this is equivalent to  
\begin{align*}
   \partial \norm{\mb Z}{*} \;=\; \Brac{ \mb U \mb V^\top + \mb W, \mb W \in \bb R^{K \times N} \;\mid\; \mb U^\top \mb W = \mb 0,\; \mb W \mb V = \mb 0,\; \norm{\mb W}{}\leq 1 },
\end{align*}
where $\mb U$ and $\mb V$ are the left and right singular value matrices of $\mb Z$. Using the result above, we can now rewrite the optimality condition in \eqref{eq:KKT-convex} as suggested in Lemma \ref{lem:conv-global-cond}.
\end{proof}

\begin{lemma}[Optimality Condition for the Nonconvex Program \eqref{eq:obj-app}] \label{lem:global-optimality-nonconvex}
If a critical point $(\mW,\mH,\vb)$ of \eqref{eq:obj-app} satisfies
\begin{align}\label{eqn:global-cond-WH}
    \norm{ \nabla g(\mb W \mb H + \mb b \mb 1^\top) }{} \;\leq \; \sqrt{ \lambda_{\mb W} \lambda_{\mb H} },
\end{align}
then it is a global minimum of \eqref{eq:obj-app}.	

\end{lemma}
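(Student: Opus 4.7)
The plan is to leverage the convex reformulation \eqref{eq:convex-prob} and show that if $(\mW,\mH,\vb)$ is a critical point of $f$ satisfying $\norm{\nabla g(\mW\mH+\vb\vone^\top)}{}\le\sqrt{\lambdaW\lambdaH}$, then the pair $(\mZ,\vb)$ with $\mZ = \mW\mH$ satisfies the KKT system \eqref{eq:KKT-convex-equivalence} of the convex program. Once this is established, Lemma \ref{lem:equivalence} together with the equality case of Lemma \ref{lem:nuclear-norm} will force $f(\mW,\mH,\vb)=\wt f(\mZ,\vb)$, and therefore $(\mW,\mH,\vb)$ attains the global minimum of $f$.

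First I would write out the three criticality equations for $f$: $\nabla g\cdot \mH^\top = -\lambdaW\mW$, $\mW^\top\nabla g = -\lambdaH\mH$, and $\sum_{i=1}^N[\nabla g]_i + \lambdab\vb = \vzero$, where $\nabla g$ abbreviates $\nabla g(\mW\mH+\vb\vone^\top)$. The third equation is exactly the bias condition in \eqref{eq:KKT-convex-equivalence}. By Lemma \ref{lem:critical-balance} we also have the balance identity $\lambdaW\norm{\mW}{F}^2 = \lambdaH\norm{\mH}{F}^2$.

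Next I would verify the subgradient membership $\nabla g \in -\sqrt{\lambdaW\lambdaH}\,\partial\norm{\mZ}{*}$ by checking the equivalent pair of conditions $\norm{\mb D}{}\le 1$ and $\inprod{\mb D}{\mZ} = \norm{\mZ}{*}$, where $\mb D := -\frac{1}{\sqrt{\lambdaW\lambdaH}}\nabla g$. The first is precisely the hypothesis \eqref{eqn:global-cond-WH}. For the second, I would compute $\inprod{\nabla g}{\mW\mH}$ two different ways using the criticality equations: contracting with $\mW^\top\nabla g = -\lambdaH\mH$ gives $\inprod{\nabla g}{\mW\mH} = -\lambdaH\norm{\mH}{F}^2$, and contracting with $\nabla g\cdot\mH^\top = -\lambdaW\mW$ gives $\inprod{\nabla g}{\mW\mH} = -\lambdaW\norm{\mW}{F}^2$. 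Both expressions coincide by the balance identity, yielding $\inprod{\mb D}{\mZ} = \sqrt{\lambdaW/\lambdaH}\,\norm{\mW}{F}^2$. Then Lemma \ref{lem:nuclear-norm} with $\alpha=\lambdaH/\lambdaW$ combined with the balance identity gives $\norm{\mZ}{*}\le \sqrt{\lambdaW/\lambdaH}\,\norm{\mW}{F}^2 = \inprod{\mb D}{\mZ}$; but the reverse inequality $\inprod{\mb D}{\mZ}\le\norm{\mb D}{}\norm{\mZ}{*}\le\norm{\mZ}{*}$ holds from $\norm{\mb D}{}\le 1$, so equality is forced, which confirms the subgradient condition and simultaneously shows that the Lemma \ref{lem:nuclear-norm} bound is tight at this factorization.

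Finally, tightness in Lemma \ref{lem:nuclear-norm} gives $\sqrt{\lambdaW\lambdaH}\,\norm{\mZ}{*} = \tfrac{\lambdaW}{2}\norm{\mW}{F}^2 + \tfrac{\lambdaH}{2}\norm{\mH}{F}^2$, so $\wt f(\mZ,\vb) = f(\mW,\mH,\vb)$. Since $(\mZ,\vb)$ satisfies the KKT system of the convex problem, it is a global minimizer of $\wt f$, and then Lemma \ref{lem:equivalence} yields $f(\mW,\mH,\vb) = \wt f(\mZ,\vb) \le f(\mW',\mH',\vb')$ for all feasible $(\mW',\mH',\vb')$. I expect the only delicate step is the ``both conditions'' characterization of $\partial\norm{\cdot}{*}$ — I will quote the standard fact that $\mb D\in\partial\norm{\mZ}{*}$ iff $\norm{\mb D}{}\le 1$ and $\inprod{\mb D}{\mZ}=\norm{\mZ}{*}$, avoiding any explicit SVD computation, so the proof is short and sidesteps the technical form of the subdifferential in \eqref{eq:KKT-convex-equivalence}.
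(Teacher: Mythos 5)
Your proposal is correct, and it takes a genuinely different (and somewhat cleaner) route than the paper for the key subgradient verification. The paper verifies the SVD-based form of the optimality condition from Lemma~\ref{lem:conv-global-cond}: it uses Lemma~\ref{lem:critical-balance} to get $\mW^\top\mW = \frac{\lambdaH}{\lambdaW}\mH\mH^\top$, then invokes the uniqueness of positive semidefinite square roots (Lemma~\ref{lem:matrix-analysis-horn}) to conclude $\mH^\top\mH = \sqrt{\lambdaW/\lambdaH}\,\mV\mSigma\mV^\top$, from which the relations $\nabla g\,\mV = -\sqrt{\lambdaW\lambdaH}\,\mU$ and $(\nabla g)^\top\mU = -\sqrt{\lambdaW\lambdaH}\,\mV$ are extracted algebraically. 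You instead bypass the SVD structure altogether by using the dual-norm characterization $\partial\norm{\mZ}{*} = \{\mb D : \norm{\mb D}{}\le 1,\ \inprod{\mb D}{\mZ}=\norm{\mZ}{*}\}$ and closing the squeeze $\norm{\mZ}{*}\le\inprod{\mb D}{\mZ}\le\norm{\mb D}{}\norm{\mZ}{*}\le\norm{\mZ}{*}$. This avoids Lemma~\ref{lem:matrix-analysis-horn} entirely and, as a bonus, the forced equality in the first inequality of the squeeze explicitly establishes the tightness $\sqrt{\lambdaW\lambdaH}\norm{\mZ}{*}=\frac{\lambdaW}{2}\norm{\mW}{F}^2+\frac{\lambdaH}{2}\norm{\mH}{F}^2$, which the paper's final invocation of Lemma~\ref{lem:equivalence} (claiming $f(\mW,\mH,\vb)=\wt f(\mZ,\vb)$) leaves somewhat implicit. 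Your two trace contractions $\inprod{\nabla g}{\mW\mH}=-\lambdaH\norm{\mH}{F}^2=-\lambdaW\norm{\mW}{F}^2$ and the resulting computation of $\inprod{\mb D}{\mZ}$ are correct. What the paper's approach buys is that it produces the explicit singular-vector relations of the optimal $\mZ$, which match the ETF structure in Theorem~\ref{thm:global-minima-app}; what your approach buys is a shorter proof with one fewer dependency and a fully transparent handling of the equality $f=\wt f$ at the critical point.
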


\begin{proof}[Proof of Lemma \ref{lem:global-optimality-nonconvex}] 
Suppose $(\mW^\star,\mH^\star,\vb^\star)$ is a critical point of \eqref{eq:obj-app} satisfying \eqref{eqn:global-cond-WH}, we will show that $(\mb Z^\star,\vb^\star)$ with $\mb Z^\star=\mW^\star\mH^\star$ is a global minimizer of \eqref{eq:convex-prob} by showing that $(\mb W^\star\mb H^\star,\vb^\star)$ satisfies the optimality condition for the convex program in \eqref{eq:KKT-convex-equivalence} (Lemma \ref{lem:conv-global-cond}). First of all, it is easy to check that
\begin{align*}
    \nabla_{\vb}f(\mW^\star,\mH^\star,\vb^\star) \;&=\;  \sum_{i=1}^N [\nabla g(\mW^\star\mH^\star + \vb^\star\vone^\top)]_i + \lambdab\vb^\star \;=\; \mb 0 \\
      \;&\Longrightarrow \; \sum_{i=1}^N [\nabla g(\mb Z^\star + \vb^\star\vone^\top)]_i + \lambdab\vb^\star\;=\; \vzero.
\end{align*}
Second, let $\mb Z^\star = \mW^\star\mH^\star = \mU\mSigma\mV^\top$ be the compact SVD of $\mb Z^\star= \mW^\star\mH^\star$. By Lemma \ref{lem:critical-balance}, we have
\begin{align*}
\mW^{\star\top}\mW^\star \;=\; \frac{\lambda_{\mH}}{\lambda_{\mW}}\mH^\star\mH^{\star\top}\;  \Longrightarrow\;  \mH^{\star\top} \mH^\star \mH^{\star\top} \mH^\star \;=\; \frac{\lambdaW}{\lambdaH} \mH^{\star\top}\mW^{\star\top}\mW^\star\mH^\star \;=\; \frac{\lambdaW}{\lambdaH} \mV\mSigma^2\mV^\top.
\end{align*}	
Now by utilizing Lemma \ref{lem:matrix-analysis-horn}, from the above equation we obtain the following
\begin{align*}
    \mb H^{\star\top} \mb H^\star \;=\; \sqrt{\frac{\lambdaW}{\lambdaH}} \mV\mSigma\mV^\top.
\end{align*}
This together with the first equation in \eqref{eqn:W-crtical} gives 
\begin{align*}
    \nabla g(\mW^\star\mH^\star + \vb^\star\vone^\top)\mH^{\star\top} \;=\; - \lambdaW \mW^\star\;\; &\Longrightarrow\;\; \nabla g(\mW^\star\mH^\star + \vb^\star\vone^\top)\mH^{\star\top}\mH^\star = - \lambdaW \mW^\star\mH^\star\\
    &\Longrightarrow \;\;\nabla g(\mW^\star\mH^\star + \vb\vone^\top)\sqrt{\frac{\lambdaW}{\lambdaH}} \mV\mSigma\mV^\top = - \lambdaW \mU\mSigma\mV^\top\\
    &\Longrightarrow \;\;  \nabla g(\mZ^\star + \vb^\star\vone^\top)\mV = - \sqrt{\lambdaW\lambdaH} \mU.
\end{align*}
Similarly, we can also get 
\begin{align*}
    \nabla g(\mZ^\star + \vb^\star\vone^\top)^\top\mU \;=\; - \sqrt{\lambdaW\lambdaH} \mV.
\end{align*}
 Thus, together with \eqref{eqn:global-cond-WH}, $(\mb Z^\star,\vb^\star)$ with $\mb Z^\star=\mb W^\star\mb H^\star$ satisfies the optimality condition \eqref{eq:KKT-convex-equivalence}, and hence is a global minimizer of $\wt{f}(\mb Z^\star,\mb b^\star)$ in \eqref{eq:convex-prob}. 
 
Finally, we complete the proof by invoking Lemma \ref{lem:equivalence}. By Lemma \ref{lem:equivalence} with $\mb Z^\star= \mb W^\star\mb H^\star$, we know that $f(\mb W^\star,\mb H^\star,\mb b^\star) = \wt{f}(\mb Z^\star,\mb b^\star) \leq f(\mb W,\mb H,\mb b)$ for all $\mW\in\R^{K\times d},\mH\in\R^{d\times N}, \vb\in\R^K$. Therefore, we must have $(\mb W^\star,\mb H^\star,\mb b^\star)$ to be the global solution of $f(\mb W,\mb H,\mb b)$ in \eqref{eq:obj-app}.
\end{proof}

%\eqref{eq:KKT-convex} and 

%By Lemma \ref{lem:equivalence}, we showed that $(\mW,\mH,\vb)$ is a global minimizer of \eqref{eq:obj-app} if $(\mb Z,\vb)$ with $\mb Z = \mb W \mb H$ is a global minimizer of the convex program \eqref{eq:convex-prob}.

%First note that any critical point $(\mW,\mH,\vb)$ of \eqref{eq:obj-app} satisfies \eqref{eq:critical-balance} (from Lemma \ref{lem:critical-balance}) and
%By assumption \eqref{eqn:global-cond-WH}, . To prove this,

\end{document}